\def\safedef#1{%
   \ifx#1\undefined
      \expandafter\def\expandafter#1%
   \else
      \errmessage{The \string#1 is defined already}%
      \expandafter\def\expandafter\tmp
   \fi
}
\newcommand{\kj}[1]{{\color{RedOrange}[#1]}}
\newcommand{\gray}[1]{{ \color[rgb]{.6,.6,.6} #1 }}
\definecolor{kjgray}{rgb}{.7,.7,.7}
\newtheoremstyle{kjstyle}
{1ex} 
{\topsep} 
{\itshape} 
{} 
{\bfseries} 
{.} 
{.5em} 
{} 
\newtheoremstyle{kjstyle2}
{.0em} 
{.0em} 
{\itshape} 
{} 
{\bfseries} 
{.} 
{.5em} 
{} 
\newtheoremstyle{kjstylenoitalic}
{1ex} 
{\topsep} 
{} 
{} 
{\bfseries} 
{.} 
{.5em} 
{} 
\newtheorem{theorem}{Theorem}
\theoremstyle{kjstyle}
\theoremstyle{kjstyle}
\theoremstyle{kjstyle}
\theoremstyle{kjstyle}
\theoremstyle{kjstyle}
\theoremstyle{kjstyle}
\theoremstyle{kjstyle}
\theoremstyle{kjstyle}
\theoremstyle{kjstylenoitalic}\newtheorem{remark}{Remark}
\theoremstyle{kjstylenoitalic}
\theoremstyle{kjstylenoitalic}
\theoremstyle{kjstylenoitalic}
\definecolor{kjgray}{rgb}{.7,.7,.7}
\renewcommand{\paragraph}{%
  \@startsection{paragraph}{4}%
  {\z@}{0.50ex \@plus 1ex \@minus .2ex}{-1em}%
  {\normalfont\normalsize\bfseries}%
}
\newcolumntype{P}[1]{>{\centering\arraybackslash}p{#1}}
\newcolumntype{M}[1]{>{\centering\arraybackslash}m{#1}}
\def\ddefloop#1{\ifx\ddefloop#1\else\ddef{#1}\expandafter\ddefloop\fi}
\def\ddef#1{\expandafter\def\csname #1#1\endcsname{\ensuremath{\mathbb{#1}}}}
\def\ddef#1{\expandafter\def\csname c#1\endcsname{\ensuremath{\mathcal{#1}}}}
\def\ddef#1{\expandafter\def\csname b#1\endcsname{\ensuremath{{\mathbf{#1}}}}}
\def\ddef#1{\expandafter\def\csname b#1\endcsname{\ensuremath{{\boldsymbol{#1}}}}}
\def\ddef#1{\expandafter\def\csname h#1\endcsname{\ensuremath{\hat{#1}}}}
\def\ddef#1{\expandafter\def\csname hc#1\endcsname{\ensuremath{\hat{\mathcal{#1}}}}}
\def\ddef#1{\expandafter\def\csname hb#1\endcsname{\ensuremath{\hat{\mathbf{#1}}}}}
\def\ddef#1{\expandafter\def\csname hb#1\endcsname{\ensuremath{\hat{\boldsymbol{#1}}}}}
\def\ddef#1{\expandafter\def\csname t#1\endcsname{\ensuremath{\tilde{#1}}}}
\def\ddef#1{\expandafter\def\csname tc#1\endcsname{\ensuremath{\tilde{\mathcal{#1}}}}}
\def\ddef#1{\expandafter\def\csname tb#1\endcsname{\ensuremath{\tilde{\mathbf{#1}}}}}
\def\ddef#1{\expandafter\def\csname tb#1\endcsname{\ensuremath{\tilde{\boldsymbol{#1}}}}}
\def\ddef#1{\expandafter\def\csname bar#1\endcsname{\ensuremath{\bar{#1}}}}
\def\ddef#1{\expandafter\def\csname barc#1\endcsname{\ensuremath{\bar{\mathcal{#1}}}}}
\def\ddef#1{\expandafter\def\csname barb#1\endcsname{\ensuremath{\bar{\mathbf{#1}}}}}
\def\ddef#1{\expandafter\def\csname barb#1\endcsname{\ensuremath{\bar{\boldsymbol{#1}}}}}
\def\ddef#1{\expandafter\def\csname war#1\endcsname{\ensuremath{\overline{#1}}}}
\def\ddef#1{\expandafter\def\csname warc#1\endcsname{\ensuremath{\overline{\mathcal{#1}}}}}
\def\ddef#1{\expandafter\def\csname warb#1\endcsname{\ensuremath{\overline{\mathbf{#1}}}}}
\def\ddef#1{\expandafter\def\csname warb#1\endcsname{\ensuremath{\overline{\boldsymbol{#1}}}}}
\def\sig{\sigma}
\def\dt{\delta}
\def\gam{\gamma}
\def\lam{\lambda}
\def\eps{\varepsilon}
\def\epsilon{\varepsilon}
\def\th{\theta}
\def\Dt{\Delta}
\def\Th{\Theta}
\def\greeksymbols{alpha,beta,gamma,gam,delta,dt,eps,epsilon,zeta,eta,theta,th,iota,kappa,kap,lambda,lam,mu,nu,xi,pi,rho,sigma,sig,tau,phi,chi,psi,omega,om,Gamma,Gam,Delta,Dt,Theta,Th,Lambda,Lam,Pi,Sigma,Sig,Phi,Psi,Omega,Om}
\def\greeksymbolsnoeta{alpha,beta,gamma,gam,delta,dt,eps,epsilon,zeta,theta,th,iota,kappa,kap,lambda,lam,mu,nu,xi,pi,rho,sigma,sig,tau,phi,chi,psi,omega,om,Gamma,Gam,Delta,Dt,Theta,Th,Lambda,Lam,Pi,Sigma,Sig,Phi,Psi,Omega,Om} 
\xdef\csname barb\x\endcsname{\noexpand\ensuremath{\noexpand\bar{\noexpand\boldsymbol{ \csname \x\endcsname}}}}
\providecommand{\normz}[2][-1]{
\ensuremath{\mathinner{
\ifthenelse{\equal{#1}{-1}}{ 
\!\left\|#2\right\|}{}
\ifthenelse{\equal{#1}{0}}{ 
\|#2\|}{}
\ifthenelse{\equal{#1}{1}}{ 
\bigl\|#2\bigr\|}{}
\ifthenelse{\equal{#1}{2}}{ 
\Bigl\|#2\Bigr\|}{}
\ifthenelse{\equal{#1}{3}}{ 
\biggl\|#2\biggr\|}{}
\ifthenelse{\equal{#1}{4}}{ 
\Biggl\|#2\Biggr\|}{}
}} 
}  
\providecommand{\floor}[2][-1]{
\ensuremath{\mathinner{
\ifthenelse{\equal{#1}{-1}}{ 
\!\left\lfloor#2\right\rfloor}{}
\ifthenelse{\equal{#1}{0}}{ 
\lfloor#2\rfloor}{}
\ifthenelse{\equal{#1}{1}}{ 
\!\bigl\lfloor#2\bigr\rfloor}{}
\ifthenelse{\equal{#1}{2}}{ 
\!\Bigl\lfloor#2\Bigr\rfloor}{}
\ifthenelse{\equal{#1}{3}}{ 
\!\biggl\lfloor#2\biggr\rfloor}{}
\ifthenelse{\equal{#1}{4}}{ 
\!\Biggl\lfloor#2\Biggr\rfloor}{}
}} 
}
\providecommand{\ceil}[2][-1]{
\ensuremath{\mathinner{
\ifthenelse{\equal{#1}{-1}}{ 
\!\left\lceil#2\right\rceil}{}
\ifthenelse{\equal{#1}{0}}{ 
\lceil#2\rceil}{}
\ifthenelse{\equal{#1}{1}}{ 
\!\bigl\lceil#2\bigr\rceil}{}
\ifthenelse{\equal{#1}{2}}{ 
\!\Bigl\lceil#2\Bigr\rceil}{}
\ifthenelse{\equal{#1}{3}}{ 
\!\biggl\lceil#2\biggr\rceil}{}
\ifthenelse{\equal{#1}{4}}{ 
\!\Biggl\lceil#2\Biggr\rceil}{}
}} 
}
\newcommand{\fr}[2]{ { \frac{#1}{#2} }}
\newcommand{\T}{\top}
\def\cd{\cdot}
\def\la{\langle}
\def\ra{\rangle}
\def\larrow{\ensuremath{\leftarrow}}
\definecolor{mygrn}{rgb}{0,.8,0}
\definecolor{myred}{rgb}{.8,0,0}
\DeclareMathOperator{\EE}{\mathbb{E}} 
\DeclareMathOperator{\PP}{\mathbb{P}}
\DeclareMathOperator*{\argmax}{arg~max}
\DeclareMathOperator*{\argmin}{arg~min}
\DeclareMathOperator{\supp}{{\mathrm{supp}}}
\DeclareMathOperator{\one}{\mathds{1}\hspace{-.1em}}
\providecommand{\onec}[2][-1]{
\ensuremath{\mathinner{
\one\cbr[#1]{#2}
} 
}  
}
\DeclarePairedDelimiterX{\inp}[2]{\langle}{\rangle}{#1, #2}
\newcommand\declareop[3]{%
  \newcommand#1{%
    \mskip\muexpr\medmuskip*#2\relax
    {#3}%
    \mskip\muexpr\medmuskip*#2\relax
}}
\declareop\capprox{1}{{\sr{\const}{\approx}}} 
\declareop\logapprox{1}{{\sr{\mathsf{log}}{\approx}}} 
\def\opt{{\mathsf{opt}}}
\def\const{\mathsf{const}}
\def\poly{\operatorname{poly}}
\def\polylog{{\normalfont\text{polylog}}}
\def\Reg{{\mathsf{Reg}}}
\newcommand{\sr}{\stackrel}
\newcommand{\vast}{\bBigg@{3}}
\newcommand{\Vast}{\bBigg@{4}}
\newcommand{\edit}[2]{{\xspace\textcolor{blue}{\sout{#1}}}{ \textcolor{red}{#2}}}
\newenvironment{talign*}
 {\csname align*\endcsname}
 {\endalign}
\def\chrulefill{\leavevmode\leaders\hrule height 0.7ex depth \dimexpr0.4pt-0.7ex\hfill\kern0pt}
\def \AII {6Bd  \log \left( 1 + \frac{2}{\lambda}\right) }
\def \BII { n\cdot B \cdot 2^{-L} \cdot \onec{B\cdot2^{-L} >\Delta}  }
\def \DII { \frac{12dB}{2^{-L}\eps} \log \left( 1 + \frac{2}{\lambda 2^{-2L}\eps } \right) + 6dB\log\left( 1 + \frac{2}{\lambda}\right)}
\def \FIFI {4B n\exp\left(- \frac{B^2}{ 16\eps \cdot  \beta_{{n}}(\delta_n)  }\right)}
\def \FIFII {\frac{1}{\alpha_{\mathrm{emp}}} \left(\DII \right)}
\def \FI {\frac{1}{\alpha_{\mathrm{emp}}}\cdot \FIFI + \frac{1}{\alpha^2_{\mathrm{emp}}} \left(\DII \right)  }
\def \FIIFII {B \log (n+1)}
\def \FIIFI {\frac{192  \beta^{*}_{{n}}(\delta_n)d}{B \cdot 2^{-L}} \log\left( 1 + \frac{32\beta^{*}_{{n}}(\delta_n)}{\lambda B^2 \cdot 2^{-2L}}\right) + 6 dB  \log \left( 1 + \frac{2}{\lambda}\right)}
\def \FII {\FIIFII + \FIIFI}
\def \FIIIFII {\FIIFII}
\def \FIIIFI {\frac{512 H_{\mathrm{max}}\cdot C_{\mathrm{opt}}\cdot d\log(d)}{\alpha_{\mathrm{opt}} B \cdot 2^{-L}} \left( \frac{\lambda (S^*)^2}{2} + \sigma_*^2 d \log \left( 1 + \frac{n}{d\lambda}\right) \right)}
\def \FIII {\FIIIFII + \FIIIFI}
\def \REGLEMMAFOUR { 6dB \left(3 + \frac{1}{\alpha_{\mathrm{emp}}^2}\right)\log \left(1 + \frac{2}{\lambda}\right) + 2B \log (n+1) +   \frac{192\beta_{n}(\delta)  \log (n) d}{2^{-L}B}  \left(1 +  \frac{1}{\alpha_{\mathrm{emp}}^2}\right) \log \left( 1 + \frac{32 \beta_{n}(\delta)  \log (n) }{\lambda 2^{-2L} B^2 }\right)\\
	&\spacex + \frac{192  \beta^{*}_{{n}}(\delta_n)d}{B \cdot 2^{-L}} \log\left( 1 + \frac{32\beta^{*}_{{n}}(\delta_n)}{\lambda B^2 \cdot 2^{-2L}}\right) + \frac{512 H_{\mathrm{max}}\cdot C_{\mathrm{opt}}\cdot d \log (d)}{\alpha_{\mathrm{opt}} B \cdot 2^{-L}} \left( \frac{\lambda (S^*)^2}{2} + \sigma_*^2 d \log \left( 1 + \frac{n}{d\lambda}\right) \right)\\
	&\spacex + \BII + \frac{ 4B}{\alpha_{\mathrm{emp}}} }
\setlist{itemsep=.1em}
\def\ApproxDesign{{\mathrm{ApproxDesign}}} 
\def\ApproxDesignAugmented{{\mathrm{ApproxDesignAugmented}}} 
\def\opt{\mathrm{opt}}
\newcommand{\spacex}{\;\;\;\;}
\definecolor{bg}{rgb}{0.91, 0.91, 0.91}
\def\Sq{\mathrm{Sq}}
\newcommand{\stkout}[1]{\ifmmode\text{\sout{\ensuremath{#1}}}\else\sout{#1}\fi}
\def\Reg{{\mathrm{Reg}}}
\newcommand{\kb}[1]{{\color{RubineRed}[#1]}} 
\newif\ifFINAL
    \def\guide#1{}
    \def\gray#1{}
    \def\kj#1{}
    \def\kb#1{}
\newtheorem{theorem}{Theorem}
\newtheorem{lemma}{Lemma}
\newtheorem{corollary}[theorem]{Corollary}
\newtheorem{claim}{Claim}
\newtheorem{assumption}{Assumption}
\def\mygapin{1}
\def\mygapnew{0.5}
\begin{document}

\textfloatsep=0.5em
\setlength{\abovedisplayskip}{3pt}%
\setlength{\belowdisplayskip}{4pt}%
\setlength{\abovedisplayshortskip}{3pt}%
\setlength{\belowdisplayshortskip}{4pt}

%

%

\doparttoc 
\faketableofcontents 

\runningtitle{Minimum Empirical Divergence for Sub-Gaussian Linear Bandits}

\twocolumn[

\aistatstitle{Minimum Empirical Divergence for Sub-Gaussian Linear Bandits}

\aistatsauthor{ Kapilan Balagopalan  \And Kwang-Sung Jun  }

\aistatsaddress{ University of Arizona \\ kapilanbgp@arizona.edu \And University of Arizona \\ kjun@cs.arizona.edu } ]

\begin{abstract}
\vspace{-1em}
	We propose a novel linear bandit algorithm called LinMED (Linear Minimum Empirical Divergence), which is a linear extension of the MED algorithm that was originally designed for multi-armed bandits.
	LinMED is a randomized algorithm that admits a closed-form computation of the arm sampling probabilities, unlike the popular randomized algorithm called linear Thompson sampling.
	Such a feature proves useful for off-policy evaluation where the unbiased evaluation requires accurately computing the sampling probability.
	We prove that LinMED enjoys a near-optimal regret bound of $d\sqrt{n}$ up to logarithmic factors where $d$ is the dimension and $n$ is the time horizon.
	We further show that LinMED enjoys a $\frac{d^2}{\Delta}\left(\log^2(n)\right)\log\left(\log(n)\right)$ problem-dependent regret where $\Delta$ is the smallest sub-optimality gap.
	Our empirical study shows that LinMED has a competitive performance with the state-of-the-art algorithms.
\end{abstract}

\section{INTRODUCTION}\label{main-section:intro-section}

The multi-armed bandit problem represents a stateless reinforcement learning framework with numerous real-world applications. One of its most prominent applications is in recommendation systems, which are extensively employed by e-commerce platforms~\citep{elena21survey}, digital streaming services~\citep{elena21survey,mary15bandits}, news portals~\citep{li10acontextual}, and a variety of other platforms experiencing significant economic growth. The multi-armed bandit problem has spawned several important variants, including stochastic linear bandits, adversarial bandits, and best-arm identification, all of which share a common underlying structure but are adapted to different environments to achieve distinct goals. This has fostered a rich and robust area of research.

In bandit problems, the main objective is to minimize cumulative regret by learning to select optimal arms over time. 
A key challenge is maintaining a balance between exploration (gathering information about the mean rewards of various arms) and exploitation (leveraging gathered information to take arms with large estimated rewards). Focusing exclusively on either strategy would be sub-optimal for minimizing cumulative regret.

Stochastic linear bandits, in particular, generalize the classical multi-armed bandit framework.
At each time step $t\in\{1,2,\ldots\}$, the learner selects an arm $A_t$ from a set of arms $\mathcal{A}_t \subset \mathbb{R}^d$, and observes a reward given by $Y_t = \langle A_t, \theta^* \rangle + \eta_t$, where $\th^*\in\RR^d$ is an unknown parameter and $\eta_t$ is a zero-mean noise.
The learner's objective is to minimize the cumulative (pseudo-)regret over the time horizon $n$, which is defined by:
\begin{align}\label{eq:def-regret}
	\Reg_n :=  \sum_{t=1}^{n} \max_{a \in \mathcal{A}_t}  \langle a, \theta^*\rangle - \langle A_t, \theta^*\rangle~.
\end{align}

Since linear bandits generalize multi-armed bandits, many linear bandit algorithms have been derived by adapting algorithmic principles from multi-armed bandits. For instance, LinUCB~\citep{dani08stochastic,ay11improved} extends the optimism principle from UCB~\citep{auer02finite}, linear Thompson sampling~\citep{agrawal14thompson} extends Thompson sampling~\citep{thompson33onthelikelihood,agrawal17nearoptimal}, and LinIMED~\citep{bian24indexed} extends IMED~\citep{honda15non}.
Therefore, it is natural to explore the stochastic linear bandit version of the MED framework~\citep{honda11asymptotically}.
\looseness=-1

Evaluating bandit algorithms for recommendation systems typically requires running it live with the customers.
However, this severely costs the user experience if the algorithm has a poor performance.
Off-policy evaluation (OPE)~\citep{precup00eligibility} aims to address this issue by evaluating an algorithm (i.e., target policy) using the data collected by another algorithm (i.e., logging policy).
The standard method for OPE is inverse propensity weighting  (IPW)~\citep{horvitz52generalization}, which provides an unbiased estimator for the target policy's performance.
For this to work, the logging policy is required to satisfy two properties.
First, it must assign a nonzero sampling probability to every arm because otherwise we obtain no information on the zero-probability arms, disallowing counterfactual inference on their rewards.
This automatically excludes any algorithm that makes a deterministic arm selection conditioning on previous observations.
Second, the logging policy must allow accurate computation of the sampling probability.
This is because IPW uses inverse sampling probability as an importance weight to scale the observed rewards.
Thus, when the sampling probability is small, even a small error can be detrimental.
We remark that by accurate computation we mean the extra computation \textit{in addition to} running the algorithm itself.
For example, any algorithm that computes the assigned probability for each arm first and then samples an arm would require zero extra computation.
On the other hand, (linear) Thompson sampling~\citep{thompson33onthelikelihood,agrawal14thompson}  itself does not compute the sampling probability, so extra computation is needed.
We call algorithms that satisfy the two properties above to be \textit{OPE-friendly}.
\looseness=-1


In this paper, we propose a novel linear bandit algorithm called LinMED (Linear Minimum Empirical Divergence), which is a linear version of Minimum Empirical Divergence~\citep{honda11asymptotically}. 
LinMED has numerous merits.

First, LinMED is OPE-friendly.
This is in stark contrast to the popular randomized algorithm linear Thompson sampling (LinTS)~\citep{agrawal14thompson} that is not OPE-friendly.
LinTS does not have a known closed-form solution or efficient methods for computing arm sampling probabilities and may  assign zero probabilities to many arms.
Note that using Monte Carlo sampling for estimating the probability up to the target precision has the time complexity of $O(1/\sqrt{\mathrm{precision}})$ where precision is the desired floating point precision, which is quite large for a numerical approximation method (vs, say, $\log(1/\text{precision})$ of the bisection method)
In particular, as discussed above, the error in probability goes to the denominator and further amplifies the error for IPW.
We numerically verify such a phenomenon in Figure~\ref{main-fig:offline-eval-figure}. Specifically, we take the uniform policy, which assigns equal probabilities to each arm, as the target policy. We evaluate IPW based on two logging policies, LinMED and LinTS, respectively. 
Given the logged data \( (A_t, p_t(A_t), Y_t)_{t=0}^{n} \) where $p_t(A_t)$ is probability of sampling arm $A_t$ from the logging policy, the IPW score for the uniform policy is defined as
\[
\text{IPW score} = \frac{1}{n} \sum_{t=1}^{n} \frac{\frac{1}{|\mathcal{A}|}}{p_t(A_t)} \cdot Y_t
\]
	
As discussed for LinTS, the logged probability \( p_t(A_t) \) must be estimated via Monte Carlo sampling. We selected the arm set \( \mathcal{A} = \{a_1 = (1, 0)^\top, a_2 = (0.6, 0.8)^\top \} \) and \( \theta^* = (1, 0)^\top \). It is expected that an OPE-friendly algorithm would yield an IPW score as an unbiased estimator of the expected reward of the uniform policy, which is $0.8$. 
As shown in Figure \ref{main-fig:offline-eval-figure}, the mean IPW w.r.t. LinMED is almost identical to the true value 0.8 while that w.r.t. LinTS exhibits a significant bias.
\looseness=-1

	\begin{figure}[t]
		\centering
		\includegraphics[width=1\linewidth]{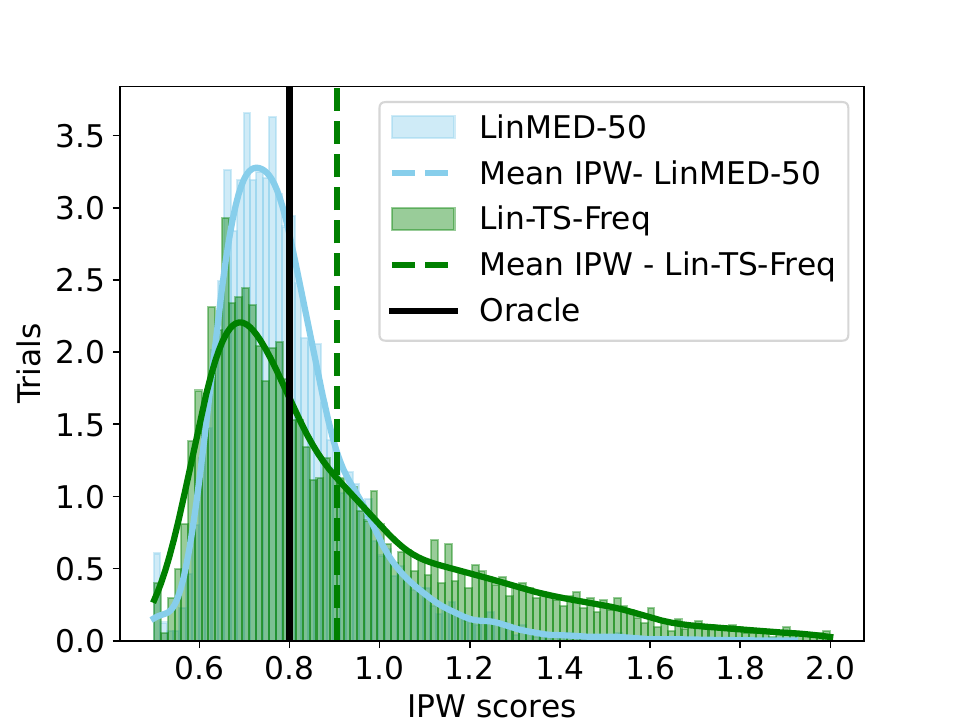}
		\vspace{-.8em}
		\caption{
			IPW scores of the uniform policy when the logging policy is LinMED and LinTS respectively.
				We used 1,000 Monte Carlo samples to estimate the sampling probabilities of LinTS.
				Oracle denotes the expected reward of the uniform policy.
				LinTS shows a nontrivial amount of bias, unlike LinMED (mean of LinMED is exactly aligned with the oracle, thus invisible in the plot).
				See Appendix \ref{app-subsection:offline-eval-exp-subsection} for details. }
		%
		\label{main-fig:offline-eval-figure}
	\end{figure}

Second, LinMED  achieves not only a near-optimal minimax regret bound of $\tO(d\sqrt{n})$ \citep{dani08stochastic}, where $\tO$ omits logarithmic factors, but also an instance-dependent regret bound of $O(\fr{d^2}{\Delta}\log^2(n))$ where $\Delta$ is the smallest gap as defined in \eqref{main-eq:smallest-gap-definition}.
To our knowledge, the only existing linear bandit algorithm with an nonasymptotic instance-dependent regret bound is OFUL~\citep{ay11improved}. 
LinMED stands out even more when compared against randomized algorithms that allow closed-form computation of sampling probability, namely SquareCB~\citep{foster20beyond}, EXP2 \citep{bubeck12towards}, and SpannerIGW~\citep{zhu22contextual}, because they provably have sub-optimal instance-dependent regret of $\Omega(\Delta\sqrt{n})$, as we show later in Theorems \ref{main-thm:exp2-inst-dep-bound-theorem} and \ref{main-thm:spannerigw-inst-dep-bound-theorem} and numerically confirm in Section \ref{main-section:emp-studies-section}.
We summarize the comparison of LinMED with other methods in Table~\ref{main-table:comparison-table}.

Third, our analysis reveals that LinMED enjoys sub-linear regret bounds even if the sub-Gaussian noise parameter $\sig_*^2$ is \textit{under}-specified in the algorithm, albeit with an extra factor that grows with the degree of under-specification.
This is in stark contrast to existing algorithms and their analyses that only provides a valid regret bound when the sub-Gaussian parameter is \textit{over}-specified~\citep{ay11improved,agrawal14thompson}.
A more detailed discussion is provided in Section \ref{main-section:thm-statemnt-section}.

Finally, LinMED demonstrates outstanding empirical performance across various challenging scenarios, including delayed reward settings (see Appendix~\ref{app-subsection:delayed-reward-exp-subsection}) and ``end of optimism'' instance. 
A comprehensive discussion of these results is provided in Section~\ref{main-section:emp-studies-section}.

\begin{table*}[t]
	\centering
	\begin{tabular}{ |>{\centering\arraybackslash}p{6cm}|>{\centering\arraybackslash}p{2.5cm}|>{\centering\arraybackslash}p{2cm}| >{\centering\arraybackslash}p{2cm}|>{\centering\arraybackslash}p{2cm}| } 
		\hline
		Algorithms & Minimax regret & Instance-dependent regret & Efficiently computable probability & Probability assigned for all arms\\
		\hline
		OFUL \citep{ay11improved} & $\tilde{O}(d\sqrt{n})$ &$O(\frac{d^2}{\Delta}\log^3n)$  & N/A & No\\
		LinIMED \citep{bian24indexed}& $\tilde{O}(d\sqrt{n})$ & Unknown  & N/A & No\\
		LinTS \citep{agrawal14thompson}& $\tilde{O}(d\sqrt{dn})$ & Unknown  & No  & No\\
		RandUCB \citep{vaswani20olddog} &  $\tilde{O}(d\sqrt{n})$ &Unknown & No & No \\
		SquareCB \citep{foster20beyond} & $\tilde{O}(\sqrt{Kdn})$ & Unknown & Yes & No  \\
		E2D \citep{foster23tight} &  $\tilde{O}(d\sqrt{n})$ &Unknown& Yes & No* \\
		SpannerIGW \citep{zhu22contextual} & $\tilde{O}(d\sqrt{n})$ &$\Omega(\Delta\sqrt{n})$ & Yes & No*\\
		EXP2 \citep{bubeck12towards} & $O(\sqrt{dn\log K })$ &$\Omega(\Delta\sqrt{n})$ & Yes & Yes \\
		LinMED (ours) & $\tilde{O}(d\sqrt{n})$ &$O(\frac{d^2}{\Delta}\log^2n)$ & Yes & Yes \\
		\hline
	\end{tabular}
	\caption{
		Comparison of linear bandit algorithms. `No*' means that the algorithm can be modified to assign a nonzero probability to every arm. The term ``efficiently computable probability'' refers to the efficiency in extra computation in addition to running the algorithm. 
	}
	\label{main-table:comparison-table}
 \vspace{-1em}
\end{table*}

\textbf{Organization.}
In Section \ref{problemstatement}, we introduce the problem formulation and key notations. This is followed by the presentation of a warm up version of LinMED in Section~\ref{main-section:warm-up} where we also provide a brief discussion on its connection to Maillard sampling~\citep{bian22maillard} and  SpannerIGW~\citep{zhu22contextual} highlighting the importance of optimal experimental design for large arm sets. This is followed by the presentation of LinMED algorithm in Section \ref{main-section:algo-section}. Next, we move to the main results in Section \ref{main-section:thm-statemnt-section} where we establish the regret bounds of our algorithm. Additionally, in Section \ref{main-section:inst-dep-low-bound-spannerigw-exp2-section}, we discuss the instance-dependent lower bounds for SpannerIGW and EXP2. Finally, Section \ref{main-section:emp-studies-section} presents empirical studies to support our theoretical findings.

\section{PRELIMINARIES} \label{main-section:prob-def-prem-section}
\label{problemstatement}
\vspace{-1em}

\textbf{Notations.} For any $d$ dimensional vector $x \in \mathbb{R}^d$ and a $d \times d$ positive definite matrix $A$, we use $\lVert x \rVert_A$ to denote the Mahalanobis norm $\sqrt{x^\T A x}$ and we use $\lVert x \rVert$ to denote the Euclidean norm. We use $a \wedge b$ (resp. $a \vee b$) to denote the minimum (resp. maximum) of two real numbers $a$ and $b$. 
For a set $\cB \subset \RR^d$, denote by $\Delta(\cB)$ the set of all probability measures on $\cB$. 
The notation $\tilde{O}(\cdot)$ omits the logarithmic factors from the standard big-O notation $O(\cd)$.
For example , $A \log B = \tilde{O}(A)$.
For any event $\mathcal{E}$, the complement of the event is denoted by $\overline{\mathcal{E}}$.
We denote $a_i, a_{i+1}, \ldots, a_j$ by $a_{i:j}$.
\looseness=-1

\textbf{The Stochastic Linear Bandit Model.} 
In the stochastic linear bandit model, the learner chooses an arm $A_t$ in each round $t$ from the arm set $\mathcal{A}_t  \subset \mathbb{R}^d$.
After choosing arm $A_t$, the environment reveals a reward 
\begin{align*}
	Y_t = \langle \theta^*, A_t\rangle + \eta_t
\end{align*}
to the learner where $\theta^* \in \mathbb{R}^d$ is an unknown coefficient of the linear model, $\eta_t$ is a $\sigma_*^2$-sub-Gaussian noise conditioned on $A_{1:t}$ and $Y_{1:t-1}$.
That is, for any $\lambda \in \mathbb{R}$, almost surely,
\begin{align*}
	\EE\sbr{\exp{\left(\lambda\eta_t\right)} \mid A_{1:t},Y_{1:t-1}} \leq \exp\left( \frac{\lambda^2 \sigma_*^2}{2}\right).
\end{align*}
Further, denote by $a^*_t := \argmax_{a \in \mathcal{A}_t} \langle \theta^*, a \rangle$ the arm with the largest mean reward at time $t$. The goal of the learner is to minimize the cumulative (pseudo-)regret over the horizon $n$, which is precisely defined in~\eqref{eq:def-regret}.
Throughout the paper, we focus on analyzing the expected (pseudo-)regret $\EE \Reg_n$ rather than a high probability bound. We also assume the following,
\begin{assumption}\label{main-assump:env-assumption}
For all $ t \geq 1$, every arm $a\in \cA_t$ satisfies $\|a\|_2\le1$. Furthermore, for some constant $B$, $\forall t \geq 1$ $\Delta_{a,t}:=  \langle \theta^*, a^*_t \rangle - \langle \theta^*, a \rangle   \leq B, \forall a \in \mathcal{A}_t$.
\end{assumption}

Note that prior linear bandit studies make the assumption of knowing the value of $\sigma$ and $S$ such that $\sigma^2_*\le \sigma^2$ and $\|\theta^*\|\le S$, which  accounts for the case of over-specification but not under-specification\footnote{
  \citet{gales22norm} adapt to the unknown norm $\|\th^*\|$ but not the sub-Gaussian parameter $\sig_*^2$.
}.
Instead, we analyze the regret of our proposed algorithm for arbitrarily given $\sigma$ and $S$ as guesses about $\sigma_*$ and $\|\theta^*\|$, accounting for both over- and under-specification.

\section{WARMUP: A LINEAR EXTENSION OF MINIMUM EMPIRICAL DIVERGENCE} \label{main-section:warm-up}

In multi-armed bandits, we are given $K$ arms and required to repeatedly choose an arm $A_t\in[K]$ to pull and observe its stochastic reward to maximize the cumulative reward.
MED~\citep{honda11asymptotically} is a randomized multi-armed bandit algorithm that is optimized for bounded rewards (and achieved improved regret bounds compared to those that are optimized for sub-Gaussian rewards, which is a larger class of reward distributions) where the algorithm principle can be instantiated for (sub-)Gaussian rewards, which appeared first in \citet{maillard13apprentissage} and further analyzed in \citet{bian22maillard}.
Maillard sampling, or sub-Gaussian MED, samples arm from the following distribution:
\begin{align*}
    p_{t,a} &= \frac{\exp\left( - \frac{N_{t-1,a}}{2} \hat{\Delta}_{a,t-1}^2\right)}{\sum_{ b  \in \mathcal{A}} \exp\left( - \frac{N_{t-1,b}}{2} \hat{\Delta}_{b,t-1}^2\right)}
\end{align*}
where $\hat{\Delta}_{a,t-1} = \max_{a^{'} \in  [K]} \hat{\mu}_{t-1,a^{'}} - \hat{\mu}_{t-1,a}$ is the estimated reward gap at $t$, and $\hmu_{t,a}$ and $N_{t,a}$ are the empirical mean reward of arm $a$ based on past rewards from arm $a$ and the pull count of arm $a$, respectively. 
Throughout, we simply say MED for this instance and refer both of them interchangeably since they are the same in spirit.

Towards a linear extension of MED, one may consider the following counterparts for the linear model:
\begin{align}\label{main-eq:ms-to-linmed}
    \hat{\Delta}_{a,t} \!\rightarrow\!  \max_{a' \in  \mathcal{A}_t}  \langle \hat{\theta}_{t-1}, a' - a \rangle   \text{ and } N_{t-1,a} \!\rightarrow\! \frac{1}{\lVert a \rVert^2_{V_{t-1}^{-1}}}.
\end{align}
The second term in \eqref{main-eq:ms-to-linmed} is justified since the leverage score  ($\lVert a \rVert^2_{V_{t-1}^{-1}}$) decreases with amount of exploration performed in the direction of $a$. 
This leads to an algorithm that we call LinMEDNOPT (\textbf{Lin}ear \textbf{M}inimum \textbf{E}mpirical \textbf{D}ivergence with \textbf{N}o \textbf{OPT}imal design of experiment) with the sampling distribution given by
\begin{align}\label{main-eq:prb-linmednopt}
    p_t^{\text{LinMEDNOPT }}(a)= \frac{f_{t}(a)}{\sum_{b \in \mathcal{A}_t}  f_{t}(b)},
\end{align}
where $f(t)$ is defined in \eqref{eq:main-f_tm1}.
Our attempts to analyze the regret of this algorithm resulted in a polynomial dependence on $K$, which is undesirable since the strength of linear bandits is the ability to handle a large or even an infinite number of arms.
\looseness=-1

Indeed, one can find a problem where the regret scales with $K$ as follows.
Specifically, consider a 2-dimensional problem where the best arm and $\theta^*$ are both $(1,0)\in \RR^2$. 
The rest of the $K-1$ arms are all $(0,1)\in \RR^2$; i.e., all the sub-optimal arms share the same feature representation. 
In the beginning, after a few arm pulls, LinMEDNOPT could misjudge one of the sub-optimal arms as the best arm with a constant probability (imagine $\hat\theta$ being around $(-1,0)$).
Then, it assigns the same constant probability for choosing one of the sub-optimal arms in the next time step. 
Since there are $K-1$ such sub-optimal arms, the total probability assigned to them will be high.
Consequently, this significantly reduces the probability assigned to the true optimal arm (at most $1/K$), resulting in not exploring in the direction of the optimal arm. 
Since pulling an arm in the direction of the suboptimal arm $(0,1)$ provides zero information on the best arm $(1,0)$, it will be not until the algorithm pulls the optimal arm a few times that it can recover from this undesirable state.
The waiting time for this is $\Omega(K)$ during which we suffer a linear regret.
This indeed happens and leads to an order $K$ regret numerically as can be seen in Figure~\ref{main-fig:k-dependency-figure}.

\begin{figure}[t]
	\centering
	\includegraphics[width=1\linewidth]{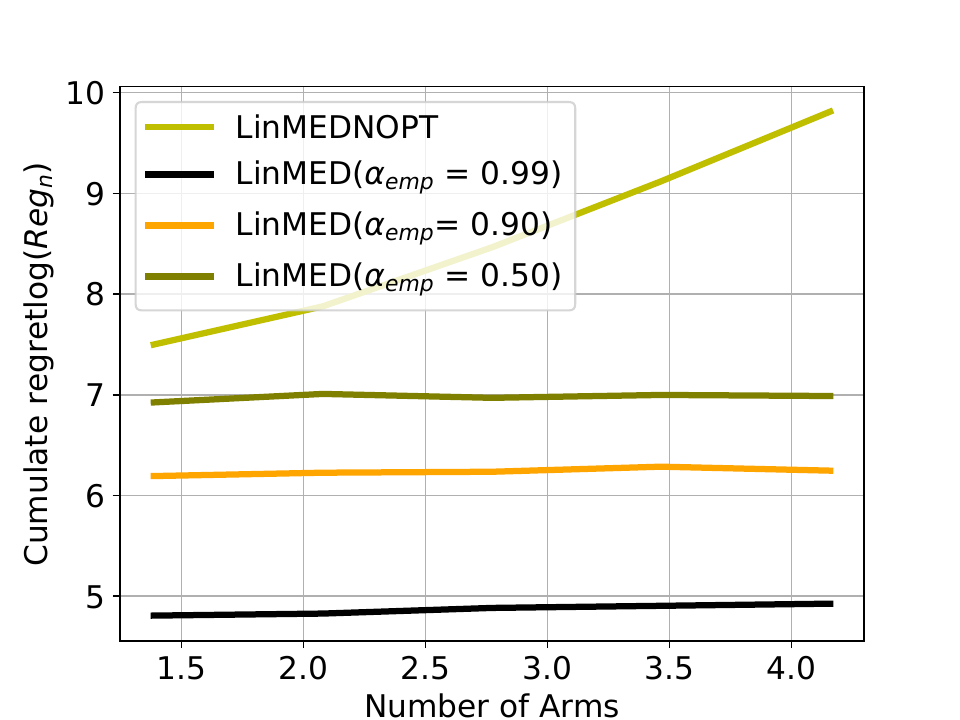}
	\caption{ LinMED vs LinMEDNOPT, with $\sigma^2 = \sigma_*^2 = 3$ for $K \in \{4,8,16,32,64\}$, and  $(\alpha_{\text{emp}},\alpha_{\text{opt}})  \in \{(0.99,0.005),(0.90,0.05),(0.5,0.25)\}$, $n = 20000$.  }
	\label{main-fig:k-dependency-figure}
\end{figure}

Inspired by SpannerIGW \citep{zhu22contextual}, we leverage the G-optimal design to avoid the dependence on $K$ and propose an algorithm called LinMED in the next section.
The key idea is that G-optimal design assigns probabilities to arms such that it will be informative for the linear model structure.
Specifically, in the example above, G-optimal design will assign probabilities as if there are only two arms $(1,0)$ and $(0,1)$.
This way, the probability will be assigned to these two arms almost equally at the beginning, ensuring that the waiting time to recover from the bad state discussed above is $\Th(1)$ with respect to $K$ rather than $\Th(K)$.
Our proposed algorithm LinMED will have a hyper-parameter $\alpha_{\text{opt}} \in (0,1)$ that controls how much we rely on the optimal design.
Figure~\ref{main-fig:k-dependency-figure} shows that LinMED with various choices of $\alpha_\opt$ results in regret independent of $K$.
\looseness=-1

\section{LINEAR MINIMUM EMPIRICAL DIVERGENCE (LINMED)} \label{main-section:algo-section}

In this section, we describe our proposed algorithm Linear Minimum Empirical Divergence (LinMED; Algorithm~\ref{main-algo:LinMED-algo}).
LinMED takes in guesses $\sigma^2$ and $S$ on the unknown problem parameters $\sigma_*^2$ and $\|\th^*\|$, but we do not require that these guesses are over-specified respectively, as we discussed in Section~\ref{main-section:prob-def-prem-section}.
At each time step $t$, the algorithm has maintained a ridge regression estimator $\hth_{t-1}$ computed with a ridge parameter $\lambda$ based on the samples collected up to time step $t-1$; see Algorithm~\ref{main-algo:LinMED-algo} for their precise definitions. 
Let
\begin{align}\label{eq:beta}
    \beta_{t}(\delta_t) := \left(\sigma \sqrt{\log \left(\frac{\det V_{t}}{\det V_0} \right) + 2\log\frac{1}{\delta_t}} + \sqrt{\lambda}S\right)^2
\end{align}
where $V_{t} = \lambda I + \sum_{s=1}^{t} A_s A_s^T$.

\begin{algorithm}[h!]
\setlength{\abovedisplayskip}{3pt}%
\setlength{\belowdisplayskip}{3pt}%
\setlength{\abovedisplayshortskip}{3pt}%
\setlength{\belowdisplayshortskip}{3pt}

	\textbf{Input:} regularization $\lam$, failure rates $\{ \delta_t\}_{t=0}^{\infty}$,
	optimal design fraction $\alpha_{\text{opt}}$,
	empirical best fraction $\alpha_{\text{emp}}$, $\mathrm{ver} \in \{0,1\}$,
	$S$ (guess for $\lVert \theta^* \rVert_2$), and $\sigma^2$ (guess for $\sig_*^2$)
	\begin{algorithmic}[1] 
		\STATE  Initialize $\hat{\theta}_0 = 0$, $V_0 = \lambda I $.
		\FOR{$t=1,2,\ldots $}
		\STATE Observe arm set $\mathcal{A}_t$.
			\STATE Estimate  $\hat{a}_t = \max_{a^{'} \in \mathcal{A}_{t}}\langle \hat{\theta}_{t-1}, a^{'} \rangle$.
		\STATE Estimate $\hat{\Delta}_{a,t} :=  \langle \hat{\theta}_{t-1}, \hat{a}_t - a \rangle \spacex \forall a \in \mathcal{A}_t $. \label{aaa}
		\STATE Define $\forall a \in \mathcal{A}_t $
		\begin{align}\label{eq:main-f_tm1} 
			f_{t}(a) = \exp \del[4]{- \frac{\hat\Delta_{a,t}^2}{\beta_{t-1}(\delta_{t-1})  \lVert \hat{a}_{t} - a \rVert^2_{V_{t-1}^{-1}} }} 
		\end{align}
		where we take $\frac{0}{0} = 0$ and $\beta_t(\dt_t)$, defined in \eqref{eq:beta}, is a function of $S$ and $\sig^2$.
		\STATE Compute a design: \\ ~~~~$q^\opt_{t} = \mathrm{ApproxDesignAugmented}(\mathcal{A}_t, f_{t}, \mathrm{ver})$.\vspace{.5em} 
		\STATE  Let  $  \forall a \in \mathcal{A}_t $
		\begin{align}\label{main-eq:q_t}
			q_t(a) &= \alpha_{\text{opt}} \cdot q^\opt_{t}(a) + \alpha_{\text{emp}} \cdot \one\cbr{a = \hat{a}_{t}} \notag\\
			&\spacex + (1- \alpha_{\text{opt}} - \alpha_{\text{emp}}) \cdot \frac{1}{\lvert \mathcal{A}_t \rvert}.
		\end{align}\vspace{-.5em}
		\STATE Compute $p^{'}_{t}(a)$:
		\begin{align}\label{eq:p_tm1} 
			p^{'}_{t}(a) =  \frac{q_t(a) f_{t}(a)}{\sum_{b \in \mathcal{A}_t} q_t(b) f_{t}(b)}.
		\end{align}
		\STATE Define 
		\begin{align}\label{main-eq:event-b}
			\mathcal{B}_t = \{ a \in \mathcal{A}_t : \lVert a \rVert^2_{V_{t-1}^{-1} } > 1\}.
		\end{align}\vspace{-.5em}
		\IF {$\lvert \mathcal{B}_t \rvert > 0$} 
    \STATE $\forall a \in \mathcal{A}_t, \spacex p_t(a) = \frac{1}{2}p^{'}_t(a) + \frac{1}{2}\one\cbr{a = B_t}$ where $B_t$ is an arbitrarily chosen action $\in \mathcal{B}_t$.
		\ELSE 
		\STATE  
       $\forall a \in \mathcal{A}_t \spacex  p_t(a) = p^{'}_t(a)$.
		\ENDIF
		\STATE Take action $A_t \sim p_t$. 
		\STATE Observe the reward $ Y_t$ and update
      \[V_t = V_{t-1} + A_tA^\T_t \text{ and } \spacex \hat{\theta}_t = V_t^{-1} \sum_{s=1}^{t} A_sY_s. \]
		\ENDFOR
	\end{algorithmic}
	\caption{LinMED}
	\label{main-algo:LinMED-algo}
\end{algorithm}

LinMED first transforms the original arm set $\mathcal{A}_t$ into an augmented arm set $\overline{\mathcal{A}}_{(t)}$, see Algorithm \ref{main-algo:LinMED-sub-routine}. Although we present two different versions of LinMED, the version where the augmented arm set is generated by eliminating highly sub-optimal arms—while simpler to analyze—cannot be extended to cases where the true sub-Gaussian noise parameter is under-specified. Therefore, the main focus of this paper is the version 0, although detailed proofs for both version 0 and version 1 are provided in Appendix \ref{app-section:gmain-proof}. 
In version 0, the arms are rescaled as follows: 
\begin{align*}
	\overline{\mathcal{A}}_{(t)} = \{ \sqrt{f_{t}(a)} \cd a \mid a \in \mathcal{A}_t\}
\end{align*}
where $f_{t}(a)$ is an exponential weight defined in \eqref{eq:main-f_tm1}.


\begin{algorithm}[t]
\setlength{\abovedisplayskip}{3pt}%
\setlength{\belowdisplayskip}{3pt}%
\setlength{\abovedisplayshortskip}{3pt}%
\setlength{\belowdisplayshortskip}{3pt}
	\textbf{Input:} $\mathcal{A}_t$, $f_t$, $\mathrm{ver} \in \{0,1\}$
\begin{algorithmic}
	\IF{$\mathrm{ver} = 0$ }
	\STATE Re-scale the arms: $$\overline{\mathcal{A}}_{(t)} = \{ \sqrt{f_{t}(a)} \cd a \mid a \in \mathcal{A}_t\}$$
 \vspace{-.7em}
	\ELSE
	\STATE Eliminate highly sub-optimal arms:  $$\overline{\mathcal{A}}_{(t)} = \{a \in \mathcal{A}_t : f_{t}(a) \geq \frac{1}{e}\}$$.
 \vspace{-.7em}
	\ENDIF
	\STATE Compute $q_t^\opt = \ApproxDesign(\overline{\mathcal{A}}_{(t)})$.
        \RETURN $q_t^\opt$
\end{algorithmic}
\caption{ApproxDesignAugmented}
\label{main-algo:LinMED-sub-routine}
\end{algorithm}

In order to compute the arm sampling probability, we leverage the G-optimal design of experiments~\citep{kiefer60theequivalence}.
Specifically, we assume that we have access to a computation oracle denoted by
\begin{align*}
	\mathrm{ApproxDesign}(\cB)
\end{align*} that takes in a set of vectors $\cB$ and outputs a distribution over the set $\cB$.
We assume that $\ApproxDesign()$ satisfies the following two assumptions.
	%
		
		\begin{assumption} (The design optimality)\label{main-assump:opt-lev-scr-assumption} 
			Given a set of vectors $\cB \subset \RR^d$, the oracle $\ApproxDesign(\cB)$ returns a $C_{\mathrm{opt}}$-optimal design $q\in\Delta(\cB)$ for the set $\cB$; i.e.
			\begin{align*}
				\lVert b \rVert_{V^{-1}(q)}^2 &\leq C_\opt d \log(d), \forall b \in \cB~
			\end{align*}
		\end{assumption}
		where $V(q) := \sum_{b \in \cB} q_b b b^\T$ for $q \in \Delta(\cB)$.
		Furthermore, we assume that the support size of the design is small as follows:
		\begin{assumption}(Cardinality of design)\label{main-assump:opt-cardinality-assumption} 
			Given a set of vectors $\cB \subset \RR^d$, the oracle $\ApproxDesign(\cB)$ returns a design $q\in\Delta(\cB)$ for the set $\cB$ such that
			\begin{align*}
				\lvert \supp(q) \rvert &= \tilde{\mathcal{O}}(d) ~.
			\end{align*}
		\end{assumption}
		 Existence of such an oracle satisfying Assumptions \ref{main-assump:opt-lev-scr-assumption} and \ref{main-assump:opt-cardinality-assumption} is guaranteed by Kiefer–Wolfowitz~\citep{kiefer60theequivalence}, and there are efficient algorithms for solving it~\citep{todd16minimum}. We present one such $\mathrm{ApproxDesign}()$ algorithm in Appendix \ref{app-section:approx-design-section}.

        We compute $q_t^\opt = \ApproxDesignAugmented(\bar\cA_{(t)})$. \kj{[D ] we need to choose between $\ApproxDesign$ and $\text{ApproxDesignAugmented}$} Subsequently, $q_t$ is calculated as outlined in \eqref{main-eq:q_t}, wherein a weight of $\alpha_{\mathrm{opt}}$ is allocated to $q_t^{\mathrm{opt}}$, $\alpha_{\mathrm{emp}}$ is assigned to the empirical best arm, and the remaining weight is distributed among all the arms in $\mathcal{A}_t$. We then sample arm $A_t$ according to the distribution $p^{'}_t$ defined in \eqref{eq:p_tm1} whenever the set $\mathcal{B}_t$ defined in \eqref{main-eq:event-b} is empty, otherwise we delegate one half of the probability to an arbitrarily chosen arm from $\mathcal{B}_t$.
		Finally, we observe the reward and update the estimator $\hat{\theta}_t$ for the next round.

		\section{MAIN RESULTS} \label{main-section:thm-statemnt-section}

        We now provide regret guarantees of LinMED.
        For the instance-dependent regret bound, we will use the following assumption.
		\begin{assumption}(Lower bound for sub-optimality gap)\label{main-assump:osub-opt-assumption} 
            There exists a constant $\Dt>0$ such that
		\begin{align}\label{main-eq:smallest-gap-definition}
			\Delta \leq \min_{t \in [n], a \in\cA_t: \Delta_{a,t} > 0 } \Delta_{a,t}, \spacex \text{almost surely.}
		\end{align}
	\end{assumption}
		
        Furthermore, we define the true confidence radius
			\begin{align}\label{eq:beta-star}
				\beta^{*}_{t}(\delta_t) := \del[4]{\sigma_* \sqrt{\log \left(\frac{\det V_{t}}{\det V_0} \right) + 2\log\frac{1}{\delta_t}} + \sqrt{\lambda}S_*}^2
			\end{align}
		where $S_* := \|\th^*\|_2 $.
        We define
		\begin{align*}
			H_{\mathrm{max}} := \max_{t \in [n]}\exp \left(\frac{\beta^{*}_{t-1}(\delta_{t-1})}{\beta_{t-1}(\delta_{t-1})}\right).
		\end{align*}
		
	We first state two generic theorems guaranteeing the regret bound of LinMED for any input $\lambda$, $\sig^2$, and $S$, followed by a more concise results with a particular choices of $\lambda$ under the over-specification and under-specification (of $\sigma_*^2$ and $S_*$) cases respectively.

    Furthermore, for all upcoming instance-dependent results, we ignore all logarithmic factors except those related to $n$ and omit terms that do not involve $\polylog(n)$ or $\frac{1}{\Delta}$. Similarly, for all upcoming minimax results, we ignore logarithmic factors except those related to $n$ and omit terms that  do not involve $\poly(n)$.

		\begin{theorem}[Instance-dependent bound]\label{main-thm:inst-dep-reg-bound-theorem}
			Under Assumptions \ref{main-assump:env-assumption}, \ref{main-assump:opt-lev-scr-assumption}, and \ref{main-assump:opt-cardinality-assumption}, with $\delta_t = \frac{1}{t+1}$, LinMED satisfies, $\forall n \geq 1$,
			\begin{align*}
				&\EE\Reg_n =\\
				& O\bigg( \frac{1}{\Delta} d \log (n) \bigg(\left(\sigma^2 d\log(n) + \lambda S^2 \right)\log \left(\log n  \right) +  \\
				&\spacex  \left(\sigma_*^2 d\log(n) + \lambda S_*^2 \right) H_{\mathrm{max} } \bigg) \bigg)
			\end{align*}
			\normalsize
		\end{theorem}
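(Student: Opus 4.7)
I would follow a standard template for instance-dependent regret in linear bandits, adapted to LinMED's $f_t$-weighted randomized sampling: (i) define a high-probability clean event using the \emph{true} radius $\beta^*_{t-1}(\delta_{t-1})$; (ii) decompose the single-round expected regret according to whether each suboptimality gap $\Delta_{a,t}$ is small or large relative to $\sqrt{\beta^*_{t-1}(\delta_{t-1})}\|a-\hat a_t\|_{V_{t-1}^{-1}}$; (iii) bound each sampling probability $p_t(a)$ using the exponential suppression in $f_t$ together with the denominator lower bound $p_t(\hat a_t)\ge\alpha_{\mathrm{emp}}$; and (iv) sum across $t$ using the G-optimal design guarantee of Assumption~\ref{main-assump:opt-lev-scr-assumption} combined with an elliptical-potential-style argument.

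With $\delta_t=1/(t+1)$, the standard self-normalized concentration bound yields $\PP(\|\hat\theta_{t-1}-\theta^*\|_{V_{t-1}}>\sqrt{\beta^*_{t-1}(\delta_{t-1})})\le\delta_{t-1}$, so the bad-event contribution accumulates only $B\sum_t\delta_{t-1}=O(B\log n)$, a lower-order term. On the clean event, the key estimate $|\hat\Delta_{a,t}-\Delta_{a,t}|\le\sqrt{\beta^*_{t-1}(\delta_{t-1})}\|a-\hat a_t\|_{V_{t-1}^{-1}}$ partitions arms at each $t$ into a ``near'' set $\mathcal{S}_t^{\mathrm{near}}=\{a:\Delta_{a,t}\le 2\sqrt{\beta^*_{t-1}}\|a-\hat a_t\|_{V_{t-1}^{-1}}\}$ and its ``far'' complement. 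For a sampled near arm, the per-round regret is bounded by $2\sqrt{\beta^*_{t-1}}\|A_t-\hat a_t\|_{V_{t-1}^{-1}}$; invoking the gap-to-norm conversion $\|A_t-\hat a_t\|_{V_{t-1}^{-1}}\ge\Delta/(2\sqrt{\beta^*_{t-1}})$ whenever $A_t$ is suboptimal, together with the elliptical potential bound $\sum_t\|A_t-\hat a_t\|^2_{V_{t-1}^{-1}}=O(d\log n)$, produces the $\beta^*_n H_{\mathrm{max}}\, d\log n/\Delta$ contribution. The factor $H_{\mathrm{max}}$ enters here because $f_t$ is built using $\beta$ rather than $\beta^*$, so the effective distribution on near arms is inflated by at most $\exp(\beta^*_{t-1}/\beta_{t-1})$.

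For far arms, the clean event gives $\hat\Delta_{a,t}\ge\Delta_{a,t}/2$, so $f_t(a)\le\exp(-\Delta_{a,t}^2/(4\beta_{t-1}\|a-\hat a_t\|^2_{V_{t-1}^{-1}}))$, and $p_t(a)\le q_t(a)f_t(a)/\alpha_{\mathrm{emp}}$ since $f_t(\hat a_t)=1$ and $q_t(\hat a_t)\ge\alpha_{\mathrm{emp}}$. Summing $\sum_a q_t(a)f_t(a)\Delta_{a,t}\,\one\{a\in\mathcal{S}_t^{\mathrm{far}}\}$ across $t$ via a peeling over geometric scales of $\|a-\hat a_t\|_{V_{t-1}^{-1}}$ (needed because the exponent depends jointly on $\Delta_{a,t}$ and the leverage score) introduces the $\log\log n$ factor and yields the $\beta_n\, d\log n/\Delta$ type term.

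The main obstacle is step~(iv): Assumption~\ref{main-assump:opt-lev-scr-assumption} controls $\|b\|_{V^{-1}(q^{\mathrm{opt}}_t)}$ uniformly over $\bar{\mathcal{A}}_{(t)}$, but the regret analysis requires bounds in terms of $V_{t-1}^{-1}$. Bridging these calls for a Loewner/martingale argument showing $V_{t-1}\succeq\alpha_{\mathrm{opt}}\sum_{s<t}V(q^{\mathrm{opt}}_s)$ once enough rounds have elapsed; early rounds where this fails are precisely those with $\mathcal{B}_t\ne\emptyset$, and the $\tfrac12\one\{a=B_t\}$ kick in the algorithm forces $\mathcal{B}_t$ to empty in $O(d\log n)$ rounds (by the standard count of large-leverage rounds), contributing only a lower-order $B\cdot d\log n$ additive term. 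Assembling the near, far, $\mathcal{B}_t$-kick, and bad-event contributions, and bounding the $\log\det(V_n/V_0)$ terms inside $\beta_n$ and $\beta_n^*$ by $O(d\log n)$, yields the claimed bound.
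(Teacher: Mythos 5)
Your overall architecture (good event with the true radius $\beta^*_{t-1}$, exponential suppression of suboptimal arms via $f_t$ combined with the denominator bound $\sum_b q_t(b)f_t(b)\ge\alpha_{\mathrm{emp}}$, elliptical-potential counting for large leverage scores, and $\sum_t\delta_t=O(\log n)$ for the bad event) matches the paper's easy terms. But there are genuine gaps precisely where the paper's proof does its real work. First, the claimed clean-event estimate $|\hat\Delta_{a,t}-\Delta_{a,t}|\le\sqrt{\beta^*_{t-1}}\,\|a-\hat a_t\|_{V_{t-1}^{-1}}$ is false: $\hat\Delta_{a,t}$ is measured relative to $\hat a_t$ while $\Delta_{a,t}$ is relative to $a^*_t$, so the lower bound $\hat\Delta_{a,t}\ge\Delta_{a,t}-\sqrt{\beta^*_{t-1}}(\|a^*_t\|_{V_{t-1}^{-1}}+\|a\|_{V_{t-1}^{-1}})$ involves the leverage score of the \emph{true} optimal arm, which the cumulative Gram matrix does not control (nothing forces the algorithm to pull $a^*_t$). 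Your ``far'' case, which needs $\hat\Delta_{a,t}\ge\Delta_{a,t}/2$, therefore does not follow from the clean event alone. This is exactly the paper's $\overline{\mathcal W}_{t-1,\ell}$ term (the hardest one), which it handles not through $V_{t-1}$ but through the single-round expected covariance $V(p_t)$: the G-optimal design on the rescaled set $\{\sqrt{f_t(a)}\,a\}$ gives $\|a^*_t\|^2_{V(p_t)^{-1}}\le\frac{2}{\alpha_{\mathrm{opt}}}H_{\mathrm{max}}C_{\mathrm{opt}}d\log d$, and then Markov's inequality plus an online ridge-regression identity bounds $\EE\sum_t\langle\theta^*-\hat\theta_{t-1},A_t\rangle^2$ by $O(\lambda S_*^2+\sigma_*^2 d\log n)$. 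That is where the $(\sigma_*^2 d\log n+\lambda S_*^2)H_{\mathrm{max}}$ term actually comes from; your attribution of $H_{\mathrm{max}}$ to an ``inflated distribution on near arms'' attaches it to the wrong piece of the decomposition.

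Two further steps would fail as written. The bound $\sum_t\|A_t-\hat a_t\|^2_{V_{t-1}^{-1}}=O(d\log n)$ is not a consequence of the elliptical potential lemma, since $V_t$ accumulates $A_tA_t^\T$ rather than $(A_t-\hat a_t)(A_t-\hat a_t)^\T$; after the triangle inequality you must separately control $\sum_t\one\{\|\hat a_t\|^2_{V_{t-1}^{-1}}\ge\eps\}$, which the paper does by noting $\hat a_t$ is pulled with probability at least $\alpha_{\mathrm{emp}}$ and transferring the count to $A_t$ at a cost of $1/\alpha_{\mathrm{emp}}$ (this is the source of the $F_{11}/F_{12}$ split and the $1/\alpha_{\mathrm{emp}}^2$ factors). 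Finally, the proposed Loewner domination $V_{t-1}\succeq\alpha_{\mathrm{opt}}\sum_{s<t}V(q^{\mathrm{opt}}_s)$ is neither established by the $\mathcal B_t$ mechanism (which only caps individual leverage scores at $1$) nor needed: a matrix-martingale concentration argument would introduce its own failure events and constants, whereas the paper's route via $V(p_t)$ and Markov's inequality avoids comparing the realized Gram matrix to the expected design entirely. In short, the proposal is sound on the routine terms but does not close the argument on the $\overline{\mathcal W}$-type event, which is the step the theorem's $H_{\mathrm{max}}$ term hinges on.
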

		\begin{theorem}[Minimax bound]\label{main-thm:mini-max-reg-bound-theorem}
			Under Assumptions \ref{main-assump:env-assumption}, \ref{main-assump:opt-lev-scr-assumption}, and \ref{main-assump:opt-cardinality-assumption}, with $\delta_t = \frac{1}{t+1}$, LinMED satisfies, $\forall n \geq 1$, 
			\begin{align*}
				&\EE\Reg_n =
				O\bigg( \sqrt{n} \bigg( \log^{\frac{1}{2}}(n) \left(d \sigma \log(n) + \frac{\lambda S^2}{\sigma} \right) + \\
				&\spacex\hspace{12ex}  \frac{ H_{\mathrm{max}}  }{  \sigma \log^{\frac{3}{2}}(n) } \left( d \sigma_*^2 \log(n) + \lambda S_*^2 \right) \bigg)   \bigg).
			\end{align*}
		\end{theorem}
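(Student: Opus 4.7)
The plan is to derive the minimax bound by running the same concentration-plus-event analysis that underlies the instance-dependent bound, then partitioning arms by a threshold on the true gap and optimizing that threshold. The strategy proceeds in four steps.

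First (concentration), I would condition on the standard event $\cE_t = \{\|\hat\theta_{t-1} - \theta^*\|_{V_{t-1}} \le \sqrt{\beta^*_{t-1}(\delta_{t-1})}\}$, which holds with probability at least $1-\delta_{t-1}$ by the self-normalized concentration bound for ridge regression. With $\delta_t = 1/(t+1)$, the bad-event regret is $\sum_t B\delta_{t-1} = O(B\log n)$, a lower-order term that can be absorbed into the final bound. On $\cE_t$, Cauchy--Schwarz gives the two-way estimate $|\hat\Delta_{a,t} - \Delta_{a,t}| \le \sqrt{\beta^*_{t-1}(\delta_{t-1})}\,\|\hat a_t - a\|_{V_{t-1}^{-1}}$, allowing me to translate between $f_t(a)$ (built from $\hat\Delta$ and $\beta$) and the true gap.

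Second (gap thresholding and probability control), I would split the sub-optimal arms at each step by a threshold $\eps$: ``small-gap'' arms with $\Delta_{a,t}\le\eps$ contribute at most $n\eps$ to the total regret, while ``large-gap'' arms with $\Delta_{a,t}>\eps$ must be shown to occur rarely. For a large-gap arm, the first step converts $\Delta_{a,t}$ into a lower bound on $\hat\Delta_{a,t}$, and the exponential form of $f_t$ in \eqref{eq:main-f_tm1} then forces $f_t(a)\le\exp(-c\eps^2/(\beta_{t-1}(\delta_{t-1})\|\hat a_t - a\|_{V_{t-1}^{-1}}^2))$. Assumption~\ref{main-assump:opt-lev-scr-assumption} caps $\|\hat a_t-a\|_{V_{t-1}^{-1}}^2 \le C_{\mathrm{opt}} d\log d$ for $a\in\supp(q_t^{\mathrm{opt}})$, the uniform mixing in $q_t$ provides the base probability $(1-\alpha_{\mathrm{opt}}-\alpha_{\mathrm{emp}})/|\cA_t|$ for arms outside this support, and the guard set $\cB_t$ defined in \eqref{main-eq:event-b} handles the case in which some arm still has a large leverage score.

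Third (summation and balancing), I would convert the per-step bounds into a cumulative one. The large-gap term reduces to a sum involving $\|A_t\|_{V_{t-1}^{-1}}^2$-type quantities, controlled by the elliptical potential lemma $\sum_{t=1}^n \|A_t\|_{V_{t-1}^{-1}}^2 = O(d\log(n/(d\lambda)))$, and a counting argument (each large-gap pull costs at least $\eps$) gives an expected large-gap contribution of order $(\beta^*_n(\delta_n)+\beta_n(\delta_n))\,H_{\mathrm{max}}\,d\,\polylog(n)/\eps$. Balancing this with $n\eps$ yields $\eps \asymp \sqrt{(\beta^*_n+\beta_n)\,H_{\mathrm{max}}\,d\,\polylog(n)/n}$, and substituting the explicit forms $\beta_n = O(\sigma^2 d\log n + \lambda S^2)$ and $\beta^*_n = O(\sigma_*^2 d\log n + \lambda S_*^2)$ yields the stated bound with the $\log^{1/2}(n)$ and $\log^{-3/2}(n)$ factors arising from how the two confidence radii enter the balance asymmetrically.

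The main obstacle is the mismatch between the algorithmic confidence radius $\beta_{t-1}(\delta_{t-1})$ (using the inputs $\sigma^2$ and $S$) and the true radius $\beta^*_{t-1}(\delta_{t-1})$ (using $\sigma_*^2$ and $S_*$). Under noise under-specification $\sigma<\sigma_*$, the exponential weight $f_t(a)$ fails to decay as aggressively as the true gap warrants, and the resulting slack enters the final bound through $H_{\mathrm{max}}=\max_t \exp(\beta^*_{t-1}/\beta_{t-1})$; tracking this ratio carefully throughout the per-step analysis and the subsequent sum is the most delicate part. A secondary bookkeeping challenge is that the $G$-optimal design bound on $\|\hat a_t-a\|_{V_{t-1}^{-1}}^2$ only applies to $a\in\supp(q_t^{\mathrm{opt}})$, so arms outside this support must be routed through the uniform mixing and the safeguard $\cB_t$, and one must verify that the resulting $1/|\cA_t|$ factors cancel to preserve the $K$-independence that motivated using the optimal design in the first place.
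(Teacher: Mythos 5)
Your high-level skeleton (concentration on the good event, a gap threshold balanced against $\sqrt{n}$, and tracking the mismatch between $\beta_t$ and $\beta^*_t$ through $H_{\mathrm{max}}$) matches the paper's strategy, which proves a master lemma with a geometric peeling over gap levels and then tunes the final threshold $B\cdot 2^{-L} \asymp \sigma d\log^{3/2}(n)/\sqrt{n}$. However, there are two genuine problems. First, you invoke Assumption \ref{main-assump:opt-lev-scr-assumption} to cap $\lVert \hat a_t - a\rVert^2_{V_{t-1}^{-1}}$ by $C_{\mathrm{opt}} d\log d$. That assumption bounds leverage scores with respect to the \emph{design} covariance $V(q)=\sum_b q_b bb^{\T}$ of a single round's sampling distribution, not with respect to the cumulative regularized Gram matrix $V_{t-1}$; these are entirely different matrices, and $\lVert a\rVert^2_{V_{t-1}^{-1}}$ can be as large as $1/\lambda$. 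The paper controls the denominator of $f_t(a)$ instead through the events $\mathcal{U}_{t-1,\ell}$ and the guard set $\mathcal{B}_t$, paying for the complementary cases via the elliptical potential count. Your step two does not go through as written.

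Second, and more fundamentally, your argument is missing the mechanism for the hardest term: the event $\overline{\mathcal{W}}_{t-1,\ell}$ that the empirical best value $\max_{a'}\langle\hat\theta_{t-1},a'\rangle$ falls well below $\langle\theta^*,a^*_t\rangle$. A ``counting argument that each large-gap pull costs at least $\eps$'' cannot handle this, because the failure here is about the estimator being wrong in the direction of $a^*_t$ --- an arm the algorithm may never pull --- so neither the elliptical potential count (which only controls leverage of pulled arms) nor the good event alone (which leaves $\lVert a^*_t\rVert_{V_{t-1}^{-1}}$ uncontrolled) suffices. The paper resolves this by (i) lower-bounding the one-step covariance $V(p_t)\succeq\frac{\alpha_{\mathrm{opt}}}{2}\overline{V}(q_t^{\mathrm{opt}})$ so that the G-optimal design yields $\lVert a^*_t\rVert^2_{V(p_t)^{-1}}\le \frac{2}{\alpha_{\mathrm{opt}}}H_{\mathrm{max}}C_{\mathrm{opt}}d\log(d)$ (Lemma \ref{sketch-lemma:leverage-score-lemma}, where the rescaling $\sqrt{f_t(a)}\,a$ and the factor $1/f_t(a^*_t)$ are exactly where $H_{\mathrm{max}}$ enters), then (ii) applying Markov's inequality and an online-regression regret identity (Lemmas \ref{sketch-lemma:regret-equality-modified}--\ref{sketch-lemma:online-learning-modified-lemma}) to bound $\EE\sum_t(A_t^{\T}(\hat\theta_{t-1}-\theta^*))^2$ by $2\lambda\lVert\theta^*\rVert^2+4\sigma_*^2 d\log(1+n/(d\lambda))$. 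This is precisely where the true-noise scaling $\sigma_*^2, S_*$ in the second summand of the theorem comes from; without it your bound would depend on the input parameters $\sigma, S$ only and would not be valid under under-specification.
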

		It is important to emphasize that in general the learner does not have access to the true sub-Gaussian parameter ($\sigma_*^2$) of the noise and $S_*$. The input sub-Gaussian parameter ($\sigma^2$) and $S$ may either over-specified or under-specified with respect to their true values. Nevertheless, our algorithm provides a regret bound that remains valid across all such scenarios \kj{[D ] make it very clear}, a feature absent in the analysis of most of the state-of-the art algorithms such as OFUL, LinTS, and LinIMED. This constitutes one of the novel contributions of our analysis. It is noteworthy that, at first glance, one might be misled into believing that selecting smaller values for \( S \) and \( \sigma \) results in a smaller regret bound. However, this is not the case, as \( H_{\mathrm{max}} \) increases exponentially as \( S \) and \( \sigma \) decrease.
		
		Consider the case where the true sub-Gaussian parameter ($\sigma_*^2$) and $S_*$ are over-specified, $H_{\mathrm{max}}$ tends to be less than $\exp(1)$, leading to the following corollaries:
		
		\begin{corollary}[Instance-dependent bound]\label{main-cor:inst-dep-reg-bound-corollary}
			Under Assumptions \ref{main-assump:env-assumption}, \ref{main-assump:opt-lev-scr-assumption}, and \ref{main-assump:opt-cardinality-assumption}, assuming $\sigma^2 \geq \sigma_*^2$, $S \geq S_*$ with $\lambda = \frac{\sigma^2}{S^2}$ and $\delta_t = \frac{1}{t+1}$,  LinMED satisfies, $\forall n \geq 1$,
			\begin{align*}
				&\EE\Reg_n = O\left( \sigma^2 \frac{d^2}{\Delta}  \log^2 (n) \log \left(\log n\right)\right) .
			\end{align*}
			\normalsize
		\end{corollary}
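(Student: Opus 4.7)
The plan is to derive this corollary as a direct consequence of Theorem~\ref{main-thm:inst-dep-reg-bound-theorem} by simplifying each factor under the over-specification assumption $\sigma^2 \geq \sigma_*^2$ and $S \geq S_*$, together with the specific choice $\lambda = \sigma^2/S^2$. The work is essentially bookkeeping: no new analytic machinery is required, and the main (small) task is to verify that $H_{\mathrm{max}}$ becomes a harmless constant.

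First I would bound $H_{\mathrm{max}}$. Comparing the definitions \eqref{eq:beta} and \eqref{eq:beta-star} of $\beta_t(\delta_t)$ and $\beta_t^*(\delta_t)$ term-by-term, the over-specification $\sigma \geq \sigma_*$ and $S \geq S_*$ implies $\beta_t^*(\delta_t) \leq \beta_t(\delta_t)$ for every $t$, since both involve the same determinant term $\log(\det V_t/\det V_0)$ and the same $\log(1/\delta_t)$ but multiplied by a smaller $\sigma_*$ inside the square, and an additive $\sqrt{\lambda} S_*$ term that is smaller than $\sqrt{\lambda}S$. Hence
\[
H_{\mathrm{max}} \;=\; \max_{t \in [n]} \exp\!\left(\frac{\beta^*_{t-1}(\delta_{t-1})}{\beta_{t-1}(\delta_{t-1})}\right) \;\leq\; \exp(1) \;=\; O(1).
\]

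Second, I would simplify the $\lambda$-dependent terms using $\lambda = \sigma^2/S^2$. This gives $\lambda S^2 = \sigma^2$ and $\lambda S_*^2 = \sigma^2(S_*/S)^2 \leq \sigma^2$. Consequently, both of the bracketed noise-plus-regularization terms in Theorem~\ref{main-thm:inst-dep-reg-bound-theorem} satisfy
\[
\sigma^2 d \log(n) + \lambda S^2 \;=\; O\!\left(\sigma^2 d \log(n)\right), \qquad \sigma_*^2 d \log(n) + \lambda S_*^2 \;\leq\; \sigma^2 d \log(n) + \sigma^2 \;=\; O\!\left(\sigma^2 d \log(n)\right).
\]

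Finally, substituting these simplifications along with $H_{\mathrm{max}} = O(1)$ into the expression of Theorem~\ref{main-thm:inst-dep-reg-bound-theorem} yields
\[
\EE \Reg_n \;=\; O\!\left( \frac{d \log(n)}{\Delta}\,\bigl( \sigma^2 d \log(n) \log\log(n) + \sigma^2 d \log(n) \bigr) \right) \;=\; O\!\left( \sigma^2 \frac{d^2}{\Delta} \log^2(n) \log\log(n) \right),
\]
since the $\log\log(n)$ term dominates the constant. There is no genuine obstacle here; the only subtlety is being careful that in comparing $\beta_t^*$ to $\beta_t$ the determinant factor $\log(\det V_t/\det V_0)$ is identical in both (it depends only on realized actions, not on $\sigma$ or $S$), so the inequality $\beta_t^* \leq \beta_t$ is immediate from monotonicity in $\sigma$ and $S$.
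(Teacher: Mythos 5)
Your proposal is correct and follows exactly the paper's route: bound $H_{\mathrm{max}} \le \exp(1)$ via $\beta_t^* \le \beta_t$ under over-specification, substitute $\lambda = \sigma^2/S^2$ so that $\lambda S^2 = \sigma^2$ and $\lambda S_*^2 \le \sigma^2$, and observe that the first term of Theorem~\ref{main-thm:inst-dep-reg-bound-theorem} dominates. Your extra remark that the determinant factor in $\beta_t$ and $\beta_t^*$ is identical (so the comparison reduces to monotonicity in $\sigma$ and $S$) is a correct and slightly more explicit justification than the paper gives.
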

		\begin{corollary}[Minimax bound]\label{main-cor:mini-max-reg-bound-corollary}
		Under Assumptions \ref{main-assump:env-assumption}, \ref{main-assump:opt-lev-scr-assumption}, and \ref{main-assump:opt-cardinality-assumption}, assuming $\sigma^2 \geq \sigma_*^2$, $S \geq S_*$ and with $\lambda = \frac{\sigma^2}{S^2}$ and $\delta_t = \frac{1}{t+1}$, LinMED satisfies, $\forall n \geq 1$,
			\begin{align*}
				&\EE\Reg_n = O\left( \sig d\sqrt{n}\log^{\frac{3}{2}}(n)  \right).
			\end{align*}
		\end{corollary}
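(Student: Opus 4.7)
The plan is to derive this minimax bound as a direct specialization of Theorem \ref{main-thm:mini-max-reg-bound-theorem} under the over-specification hypotheses $\sigma \geq \sigma_*$ and $S \geq S_*$, followed by substituting $\lambda = \sigma^2/S^2$ into the generic expression.

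First I would establish that $H_{\mathrm{max}} \leq e$ under over-specification. Comparing the definitions in \eqref{eq:beta} and \eqref{eq:beta-star}, both $\beta_t(\delta_t)$ and $\beta_t^{*}(\delta_t)$ are of the form $(c_1 \sqrt{\log(\det V_t/\det V_0) + 2\log(1/\delta_t)} + \sqrt{\lambda}\, c_2)^2$, with $(c_1,c_2) = (\sigma, S)$ and $(\sigma_*, S_*)$ respectively. Since $\sigma \geq \sigma_*$ and $S \geq S_*$, each summand inside the square only increases when $(\sigma_*, S_*)$ is replaced by $(\sigma, S)$, so $\beta_{t-1}^{*}(\delta_{t-1}) \leq \beta_{t-1}(\delta_{t-1})$ for all $t$, and therefore $H_{\mathrm{max}} = \max_{t \in [n]} \exp\!\bigl(\beta_{t-1}^{*}(\delta_{t-1})/\beta_{t-1}(\delta_{t-1})\bigr) \leq e$.

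Next I would substitute $\lambda = \sigma^2/S^2$ into the bound of Theorem \ref{main-thm:mini-max-reg-bound-theorem}. This simplifies $\lambda S^2/\sigma = \sigma$, and since $S \geq S_*$ we also have $\lambda S_*^2 = \sigma^2 S_*^2/S^2 \leq \sigma^2$. The first bracketed term becomes $\log^{1/2}(n)\,(d\sigma \log(n) + \sigma) = O(d\sigma \log^{3/2}(n))$. The second becomes at most $\frac{e}{\sigma \log^{3/2}(n)}\,(d\sigma^2 \log(n) + \sigma^2) = O(d\sigma / \log^{1/2}(n))$, using $\sigma_*^2 \leq \sigma^2$ and the bound on $H_{\mathrm{max}}$ just established. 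The first term dominates, so multiplying the bracketed sum by $\sqrt{n}$ yields $\EE \Reg_n = O(\sigma d \sqrt{n} \log^{3/2}(n))$, matching the claimed bound.

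There is no substantive obstacle here since the corollary is essentially a bookkeeping specialization of Theorem \ref{main-thm:mini-max-reg-bound-theorem}; the only mildly subtle point is recognizing that over-specification of \emph{both} $\sigma$ and $S$ is exactly what is needed to collapse $H_{\mathrm{max}}$ to a universal constant, which is immediate from the monotonicity of $\beta_t(\delta_t)$ in $(\sigma, S)$.
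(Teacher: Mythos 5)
Your proposal is correct and follows essentially the same route as the paper: the appendix proof of this corollary likewise observes that over-specification of both $\sigma$ and $S$ forces $\beta^{*}_{t-1}(\delta_{t-1}) \leq \beta_{t-1}(\delta_{t-1})$ and hence $H_{\mathrm{max}} \leq e$, making the first term of Theorem~\ref{main-thm:mini-max-reg-bound-theorem} dominant, and then substitutes $\lambda = \sigma^2/S^2$. Your write-up simply fills in the bookkeeping (e.g., $\lambda S^2/\sigma = \sigma$ and $\lambda S_*^2 \leq \sigma^2$) that the paper leaves implicit.
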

	Instance-dependent bound of LinMED showcases a $\log(n)$ improvement over the instance-dependent bound of OFUL \citep{ay11improved} and LinMED guarantees an optimal minimax bound up to logarithmic factors. 

    Next,  we consider the case where the $\sigma_*^2$ is under-specified and $S_*$ is over-specified.
  
  \begin{corollary}[Minimax bound]\label{main-cor:mini-max-reg-bound-corollary-under-est}
  	Under Assumptions \ref{main-assump:env-assumption}, \ref{main-assump:opt-lev-scr-assumption}, and \ref{main-assump:opt-cardinality-assumption}, assuming $\sigma^2 < \sigma_*^2$, $S \geq S_*$ and with $\lambda = \frac{\sigma^2}{S^2}$ and $\delta_t = \frac{1}{t+1}$, $\forall n \geq 1$, LinMED satisfies
  	\begin{align*}
  		\EE\Reg_n =
  		O \del[4]{\frac{\sigma d  \sqrt{n} }{ \log^{\frac{1}{2}}(n) }\del[3]{ \log^{2}(n) + \frac{\sigma_{*}^2}{\sigma^2} \exp\del[2]{\frac{\sigma_*^2}{\sigma^2}} } }
  	\end{align*}
  \end{corollary}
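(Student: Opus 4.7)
The plan is to instantiate Theorem \ref{main-thm:mini-max-reg-bound-theorem} with $\lambda = \sigma^2/S^2$ while providing a tighter bound on $H_{\mathrm{max}}$ that remains meaningful when $\sigma^2 < \sigma_*^2$. Everything else in Theorem \ref{main-thm:mini-max-reg-bound-theorem} is a deterministic function of the inputs, so the only new piece of work is controlling $H_{\mathrm{max}}$ and then performing a clean algebraic simplification.

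First I would bound $H_{\mathrm{max}}$. Writing $x_t := \sqrt{\log(\det V_{t-1}/\det V_0) + 2\log(1/\delta_{t-1})}$, we have $\beta_{t-1}(\delta_{t-1}) = (\sigma x_t + \sqrt{\lambda}\, S)^2$ and $\beta^{*}_{t-1}(\delta_{t-1}) = (\sigma_* x_t + \sqrt{\lambda}\, S_*)^2$ by \eqref{eq:beta} and \eqref{eq:beta-star}. Since $S \geq S_*$ and $\sigma_* > \sigma$, we may replace $S$ by $S_*$ in the denominator to get
\[
\frac{\sigma_* x_t + \sqrt{\lambda}\, S_*}{\sigma x_t + \sqrt{\lambda}\, S} \;\le\; \frac{\sigma_* x_t + \sqrt{\lambda}\, S_*}{\sigma x_t + \sqrt{\lambda}\, S_*} \;\le\; \frac{\sigma_*}{\sigma},
\]
where the second inequality uses the monotonicity of $c \mapsto (ax + c)/(bx + c)$ being decreasing in $c \ge 0$ when $a > b > 0$. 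Squaring yields $\beta^{*}_{t-1}/\beta_{t-1} \le \sigma_*^2/\sigma^2$ for every $t$, hence $H_{\mathrm{max}} \le \exp(\sigma_*^2/\sigma^2)$.

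Next I would substitute $\lambda = \sigma^2/S^2$ into the bound of Theorem \ref{main-thm:mini-max-reg-bound-theorem}. This gives $\lambda S^2/\sigma = \sigma$ and $\lambda S_*^2 = \sigma^2 S_*^2/S^2 \le \sigma^2$ because $S \ge S_*$. Plugging in, the regret becomes
\[
O\!\left( \sqrt{n}\left( \log^{1/2}(n)\,\big(d\sigma\log(n) + \sigma\big) + \frac{H_{\mathrm{max}}}{\sigma \log^{3/2}(n)}\,\big(d\sigma_*^2\log(n) + \sigma^2\big) \right)\right).
\]
Factoring $d\sigma\sqrt{n}/\log^{1/2}(n)$ out and collecting terms, the first parenthetical produces the $\log^2(n)$ factor (with the lower-order $\log(n)$ piece absorbed into it), while the second parenthetical yields $(\sigma_*^2/\sigma^2)\,H_{\mathrm{max}}$ as its dominant contribution. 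Combined with $H_{\mathrm{max}}\le \exp(\sigma_*^2/\sigma^2)$, this exactly matches the bound claimed in the corollary.

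The main obstacle is the bound on $H_{\mathrm{max}}$: a naive split via $(a+b)^2 \le 2a^2 + 2b^2$ would produce a spurious factor of $2$ inside the exponential, giving $\exp(2\sigma_*^2/\sigma^2)$ instead of $\exp(\sigma_*^2/\sigma^2)$ and breaking the form of the statement. The monotonicity argument above, crucially using $S \ge S_*$ so that $\sqrt{\lambda}\, S_*$ appears on both sides of the ratio, is what keeps the constant tight. After that, the corollary is a direct algebraic specialization of Theorem \ref{main-thm:mini-max-reg-bound-theorem}.
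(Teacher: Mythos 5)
Your proposal is correct and follows essentially the same route as the paper: bound $H_{\mathrm{max}}$ by $\exp(\sigma_*^2/\sigma^2)$ using $S\ge S_*$, then substitute $\lambda=\sigma^2/S^2$ into Theorem~\ref{main-thm:mini-max-reg-bound-theorem} and identify the dominant terms. The only cosmetic difference is how the ratio $\beta^*_{t-1}/\beta_{t-1}$ is controlled — the paper substitutes $\sqrt{\lambda}S=\sigma$ and factors out $\sigma_*^2$, while you use a monotonicity argument in the additive constant; both yield the same tight constant inside the exponential.
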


	One can derive the instance-dependent bound and bounds for under-specified $S_*$ in a similar fashion.
		Proofs of the theorems and corollaries are deferred to the appendix. 
		
		\textbf{The key steps of the proof of Theorem~\ref{main-thm:inst-dep-reg-bound-theorem} and~\ref{main-thm:mini-max-reg-bound-theorem}}.
		Conceptually, our proof structure for Lemma \ref{app-lemma:regretlemma} closely follows the framework of the Maillard sampling proof by \citet{bian22maillard}. We define the following events:
		$\mathcal{U}_{t-1,\ell}(A_t) = \cbr[1]{ \lVert A_t \rVert^2_{V_{t-1}^{-1}} \geq \eps_{\ell}}$, $\mathcal{V}_{t-1}(A_t) = \cbr[1]{\hat{\Delta}_{A_t,t} \geq \frac{\Delta_{A_t,t}}{1+c}}$, $\mathcal{W}_{t-1,\ell}  = \cbr[1]{ \max_{a^{'} \in \mathcal{A}_{t}} \langle \hat{\theta}_{t-1}, a^{'} \rangle \geq \langle \theta^*, a^*_t \rangle - \eps_{2, \ell} }$, where $\ell, \eps_{\ell}$, $\eps_{2, \ell}$ are parameters to be tuned.
		At an abstract level, the regret can be decomposed as follows:
		\begin{align*}
			\mathrm{Reg}_n &= \EE\sbr[3]{\sum_{t=1}^{n} \Delta_{A_t,t}  } = \EE\sbr[3]{\sum_{t=1}^{n} \Delta_{A_t,t} \one \cbr{\mathcal{U}_{t-1,\ell}(A_t)} } \tag{Term 1}\\
			&\spacex + \EE\sbr[3]{\sum_{t=1}^{n} \Delta_{A_t,t} \one \cbr{\overline{\mathcal{U}}_{t-1,\ell}(A_t)} \one\cbr{\mathcal{V}_{t-1}(A_t)} } \tag{Term 2}\\
			& \spacex + \EE\sbr[3]{ \sum_{t=1}^{n} \Delta_{A_t,t}  \one\cbr{\overline{\mathcal{V}}_{t-1}(A_t)} \one\sbr{\mathcal{W}_{t-1,\ell}  } } \tag{Term 3} \\
			& \spacex + \EE\sbr[3]{ \sum_{t=1}^{n} \Delta_{A_t,t}  \one\sbr{\overline{\mathcal{W}}_{t-1,\ell}  } }. \tag{Term 4} \\
		\end{align*}
		We bound Term 1 and Term 3 using the elliptical potential count (EPC), as shown in Lemma \ref{app-lemma:epc-lemma}. Term 2 is bounded by noting that the probability of selecting sub-optimal arms is small when the events $ \one \cbr[1]{\overline{\mathcal{U}}_{t-1,\ell} (A_t)}$ and $\one\cbr{\mathcal{V}_{t-1}(A_t)} $ occur. 
    Term 4 is the most challenging one where drawing an analogue from Maillard sampling's proof is nontrivial.
    Detailed proof is presented in Appendix \ref{app-section:gmain-proof}.
		
		\section{INSTANCE-DEPENDENT LOWER BOUNDS FOR SPANNERIGW AND EXP2} \label{main-section:inst-dep-low-bound-spannerigw-exp2-section}

		In this section, we analyze the instance-dependent regrets for EXP2 and SpannerIGW.  We show that there are instances for which the above two algorithms have an instance-dependent bound of $\Omega(\Delta \sqrt{n})$. 
        Hence, LinMED stands out as a leading randomized algorithm with closed-form arm sampling probabilities, achieving a logarithmic instance-dependent regret bound.
		
		\begin{theorem}\label{main-thm:exp2-inst-dep-bound-theorem}
			There exists a linear bandit problem for which the EXP2 algorithm satisfies
			\begin{align*}
				\EE\Reg_n \geq \Omega(\Delta \sqrt{n}).
			\end{align*}
		\end{theorem}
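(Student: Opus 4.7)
My plan is to exhibit a simple $2$-dimensional $2$-armed stochastic linear bandit instance on which EXP2's minimax-tuned exploration forces $\Omega(\sqrt{n})$ sub-optimal pulls in expectation, regardless of the observed rewards. Take $d=2$, arm set $\cA_t=\{e_1,e_2\}$ for all $t$, $\theta^*=(1,\,1-\Delta)$ for some $\Delta\in(0,1/2]$, and bounded (hence sub-Gaussian) noise. Then $e_1$ is uniquely optimal and $e_2$ has sub-optimality gap exactly $\Delta$.

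Recall that EXP2 with John's exploration \cite{bubeck12towards} samples $A_t$ from
\begin{align*}
p_t(a) \;=\; (1-\gamma)\,\frac{\exp\left(\eta\,S_{t-1}(a)\right)}{\sum_{b\in\cA_t}\exp\left(\eta\,S_{t-1}(b)\right)} \,+\, \gamma\,q^*(a),
\end{align*}
where $q^*$ is the John's (equivalently, $G$-optimal) design over $\cA_t$, $S_{t-1}(a)$ is the cumulative importance-weighted reward estimator, and the parameters are tuned as $\eta=\Theta(\sqrt{\log K/(dn)})$ and $\gamma=\Theta(\sqrt{d\log K/n})$, which is what yields the $O(\sqrt{dn\log K})$ minimax bound. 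On our instance $d=K=2$ is constant, so $\gamma=\Theta(1/\sqrt n)$, and by symmetry of $\{e_1,e_2\}$ the design is uniform, giving $q^*(e_2)=1/2$.

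The key step is that $p_t(e_2)\ge \gamma\,q^*(e_2)\ge \gamma/2=\Omega(1/\sqrt n)$ \emph{deterministically}, for every $t$ and every realization of the history. Consequently
\begin{align*}
\EE\sbr[3]{\sum_{t=1}^n \one\cbr{A_t=e_2}} \;=\; \sum_{t=1}^n \EE\sbr[1]{p_t(e_2)} \;\ge\; n\cdot\Omega(1/\sqrt n) \;=\; \Omega(\sqrt n),
\end{align*}
and since each pull of $e_2$ contributes $\Delta$ to the pseudo-regret, we conclude $\EE\Reg_n\ge \Omega(\Delta\sqrt n)$, which is the claimed lower bound.

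The main obstacle I anticipate is handling variants of EXP2 that omit the explicit exploration mixture and rely on implicit exploration embedded in the exponential weights (e.g.\ when John's design is baked into the importance weights rather than added as a $\gamma$-mixture). My backup plan for those is to analyze the weight dynamics directly on the same instance: for $t\le T_0:=c/(\eta\Delta)=\Theta(\sqrt n/\Delta)$, the expected cumulative gap $\EE[\eta(S_{t-1}(e_1)-S_{t-1}(e_2))]=\eta t\Delta$ is $O(1)$ by unbiasedness, and controlling the variance of the importance-weighted estimators (using the minimal exploration that the John's-design structure enforces) gives $p_t(e_2)=\Omega(1)$ on a constant-probability event, hence $\EE\Reg_n\ge \Omega(\Delta\cdot\min(n,T_0))=\Omega(\min(n\Delta,\sqrt n))\ge \Omega(\Delta\sqrt n)$ since $\Delta\le 1$.
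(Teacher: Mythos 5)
Your proposal is correct and is essentially the same argument as the paper's: both lower-bound $p_t(e_2)$ by the forced-exploration term $\gamma\, q^*(e_2) = \gamma/2$ of the G-optimal (John's) design on the orthonormal two-arm instance, and then use the minimax tuning $\gamma = \Theta(1/\sqrt{n})$ to conclude $\EE\Reg_n \ge \gamma n \Delta/2 = \Omega(\Delta\sqrt{n})$. The only cosmetic difference is that the paper fixes $\theta^*=(1,0)$ (so $\Delta=1$) while you parameterize the gap via $\theta^*=(1,1-\Delta)$; your backup analysis for mixture-free variants is not needed for the version of EXP2 the paper considers.
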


		\begin{theorem}\label{main-thm:spannerigw-inst-dep-bound-theorem}
			There exists a linear bandit problem for which the SpannerIGW algorithm satisfies
			\begin{align*}
				\EE \Reg_n \geq \Omega(\Delta \sqrt{n}).
			\end{align*}
		\end{theorem}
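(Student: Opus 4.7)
\textbf{Proof proposal for Theorem~\ref{main-thm:spannerigw-inst-dep-bound-theorem}.}
The plan is to exhibit a concrete two-dimensional instance on which the inverse-gap-weighting (IGW) rule that drives SpannerIGW necessarily places mass $\Omega(1/\sqrt{t})$ on the suboptimal arm at every round $t$, so that the regret grows as $\Omega(\Delta\sqrt{n})$. Concretely, I would take $d=2$, a fixed arm set $\mathcal{A}_t \equiv \{e_1, e_2\}$, and $\theta^* = (1,\, 1-\Delta)$ for some $\Delta\in (0,1/2)$. Then $e_1$ is optimal with gap exactly $\Delta$ and Assumption~\ref{main-assump:env-assumption} holds with $B=1$. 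Because $\{e_1,e_2\}$ is already an orthonormal basis of $\RR^2$, every (barycentric, or $C$-approximate) spanner of $\mathcal{A}_t$ must include $e_2$ with constant weight, so the suboptimal arm is never dropped from the IGW support.

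The second step is to lower bound $p_t(e_2)$. SpannerIGW samples from the spanner by $p_t(a) \propto 1/(|S_t| + \gamma_t \,\hat\Delta_{a,t})$ for suboptimal $a$, with $\gamma_t$ tuned in \cite{zhu22contextual} to $\Theta(\sqrt{t/d})$ in order to secure the announced $\tilde O(\sqrt{dn})$ minimax bound. On our instance $|S_t|=d=2$ is constant, and $\hat\Delta_{e_2,t}\le B$ either by the algorithm's truncation step or by a short concentration argument for the ridge estimator on the $2$-arm instance. Plugging these into the IGW denominator gives
\begin{align*}
  p_t(e_2) \;\ge\; \frac{1}{|S_t|+\gamma_t B} \;\ge\; \frac{c}{\sqrt{t}}
\end{align*}
for a universal constant $c>0$, uniformly in all histories.

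Summing the per-round regret then finishes the proof: since only $e_2$ is suboptimal,
\begin{align*}
  \EE[\Delta_{A_t,t}] \;\ge\; \Delta\cdot p_t(e_2)\;\ge\;\frac{c\Delta}{\sqrt{t}},
\end{align*}
so
\begin{align*}
  \EE\Reg_n \;\ge\; c\Delta\sum_{t=1}^{n}\frac{1}{\sqrt{t}} \;=\; \Omega(\Delta\sqrt{n}),
\end{align*}
which is the claim.

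The main obstacle is making the lower bound $\gamma_t \le O(\sqrt{t})$ fully rigorous for the algorithm as stated in \cite{zhu22contextual}, since $\gamma_t$ there is expressed in terms of an online-regression error estimate; the clean way is to instantiate the tuning prescribed in that paper (which realizes the minimax rate) and observe that any tuning that grew faster than $\sqrt{t}$ would already violate the $\tilde O(\sqrt{dn})$ minimax guarantee, so the $\sqrt{t}$ scaling is in fact necessary. A secondary subtlety is bounding $\hat\Delta_{e_2,t}$ uniformly in $t$; on our instance $\hat\Delta_{e_2,t}$ is a bounded linear function of $\hat\theta_t$, and one can either rely on the algorithm's clipping of estimated gaps into $[0,B]$ or restrict the analysis to the high-probability event that $\|\hat\theta_t\|$ is bounded. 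The structure of the argument is identical in spirit to the analogous lower bound for EXP2 (Theorem~\ref{main-thm:exp2-inst-dep-bound-theorem}), where the exponential-weights distribution likewise places a baseline $\Omega(1/\sqrt{t})$ mass on every arm in the G-optimal support.
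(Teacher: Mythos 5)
Your proposal is correct and follows essentially the same route as the paper's proof: a two-armed orthonormal instance on which the spanner/optimal design necessarily keeps constant weight on the suboptimal arm, a lower bound of order $1/\gamma = \Omega(1/\sqrt{n})$ on the per-round IGW sampling probability of that arm, and a summation over rounds. The only differences are cosmetic — the paper fixes $\Delta=1$ and the horizon-dependent $\gamma \propto \sqrt{n/\log n}$ rather than an anytime $\gamma_t \propto \sqrt{t}$, and it spends a short stopping-time argument on the first few rounds to ensure the regression predictions are well controlled, which replaces your appeal to clipping of $\hat\Delta_{e_2,t}$.
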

		
		The proofs are deferred to the Appendix \ref{app-section:low-bound-args-section}.

			\begin{figure}[t]
				\centering
				\begin{subfigure}{0.5\textwidth}
					\centering
					\includegraphics[width=\mygapin\linewidth]{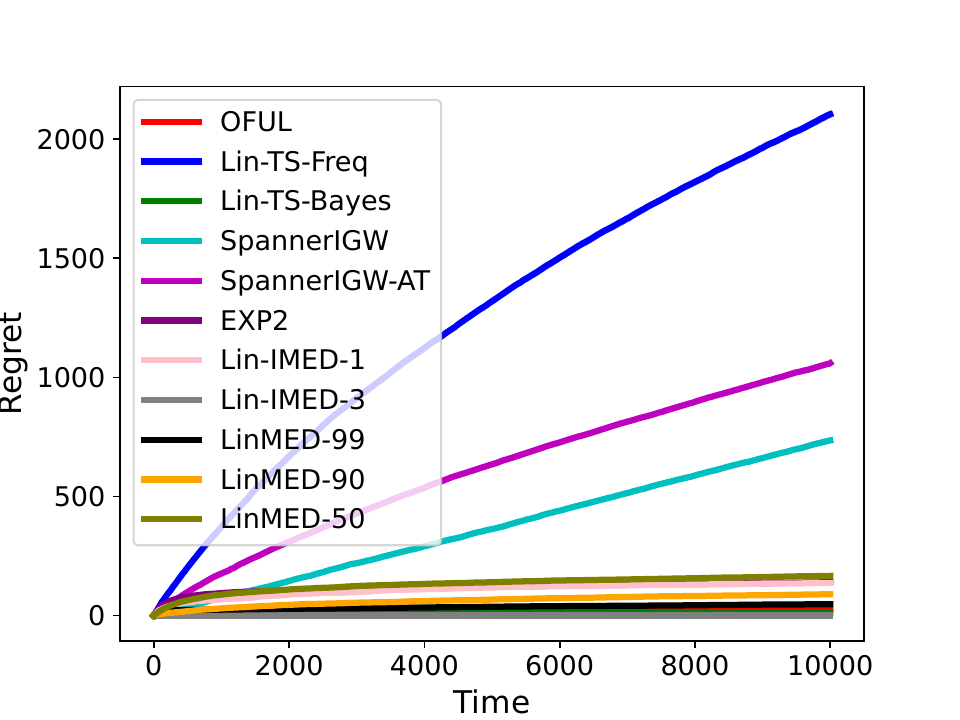}
					\caption{}
					\label{main-subfigure:inst-dep-sims-subfigure-1}
				\end{subfigure}%
				
				\begin{subfigure}{0.5\textwidth}
					\centering
					\includegraphics[width=\mygapin\linewidth]{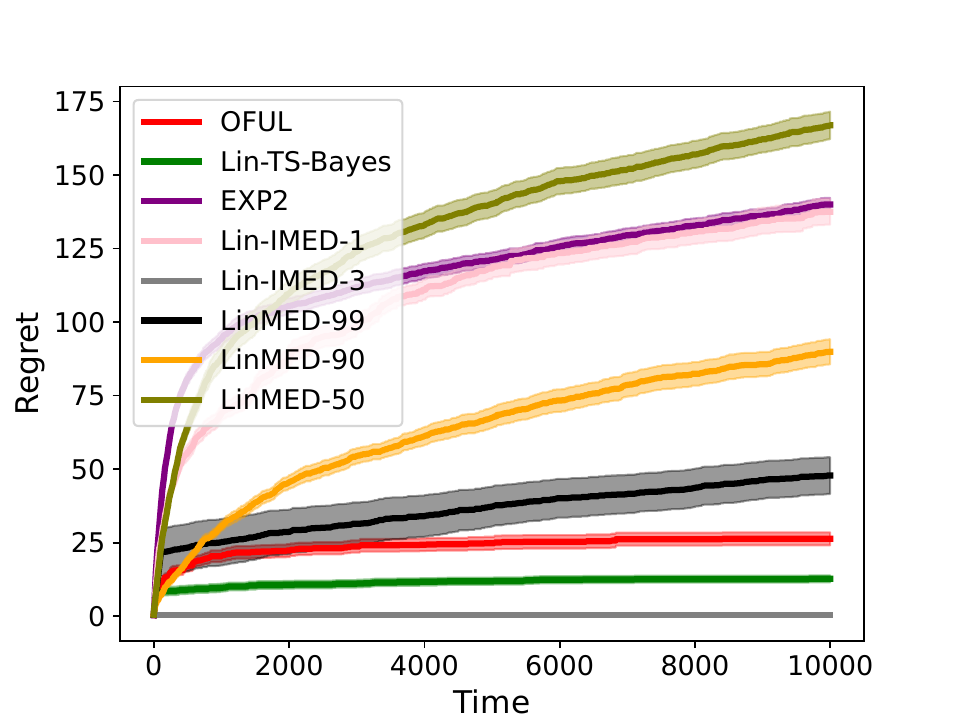}
					\caption{}
					\label{main-subfigure:inst-dep-sims-subfigure-2}
				\end{subfigure}
				\caption{Large gap instance experiments}
				\label{fig:fig1}
			\end{figure}

			\begin{figure*}[t!]
				\centering
				\begin{subfigure}[b]{0.5\textwidth}
					\centering
					\includegraphics[width=\mygapin\textwidth]{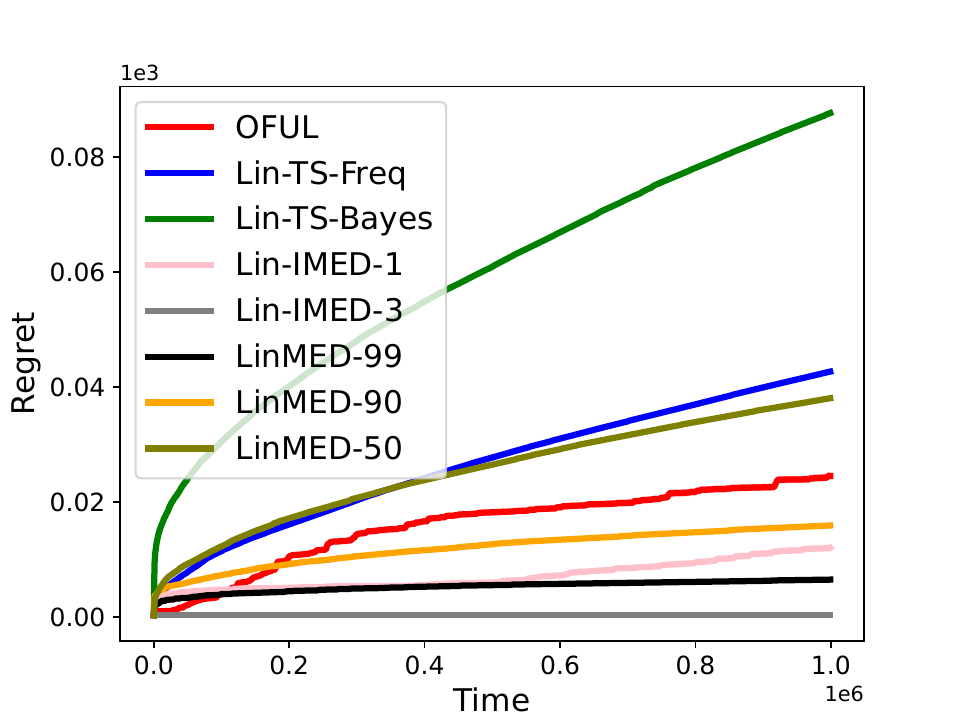}
					\caption{ $\eps = 0.005$, $\sigma^2 = \sigma^2_* $}
				\end{subfigure}%
				\begin{subfigure}[b]{0.5\textwidth}
					\centering
					\includegraphics[width=\mygapin\textwidth]{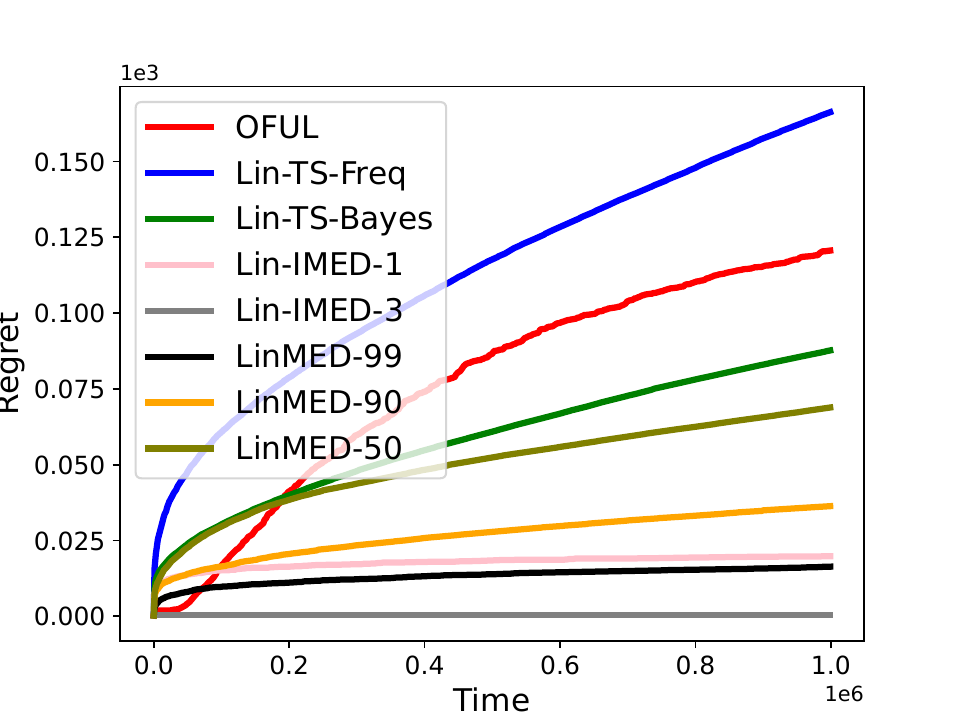}
					\caption{ $\eps = 0.005$, $\sigma^2 = 2\cdot \sigma^2_* $}
				\end{subfigure}
				
				\begin{subfigure}[b]{0.5\textwidth}
					\centering
					\includegraphics[width=\mygapin\textwidth]{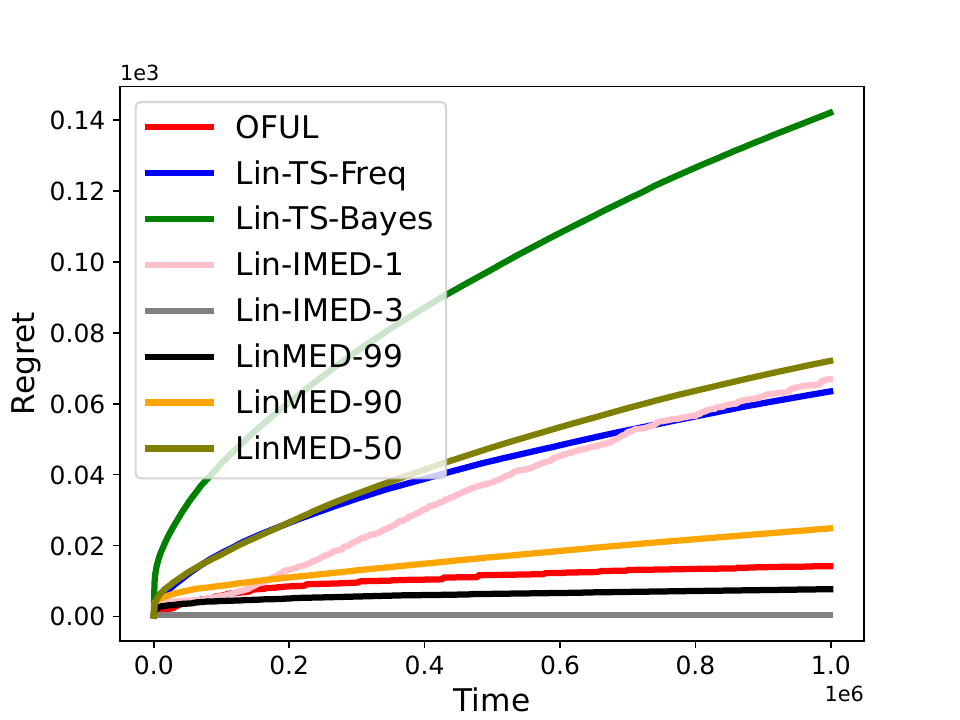}
					\caption{$\eps = 0.01$, $\sigma^2 = \sigma^2_* $}
				\end{subfigure}%
				\begin{subfigure}[b]{0.5\textwidth}
					\centering
					\includegraphics[width=\mygapin\textwidth]{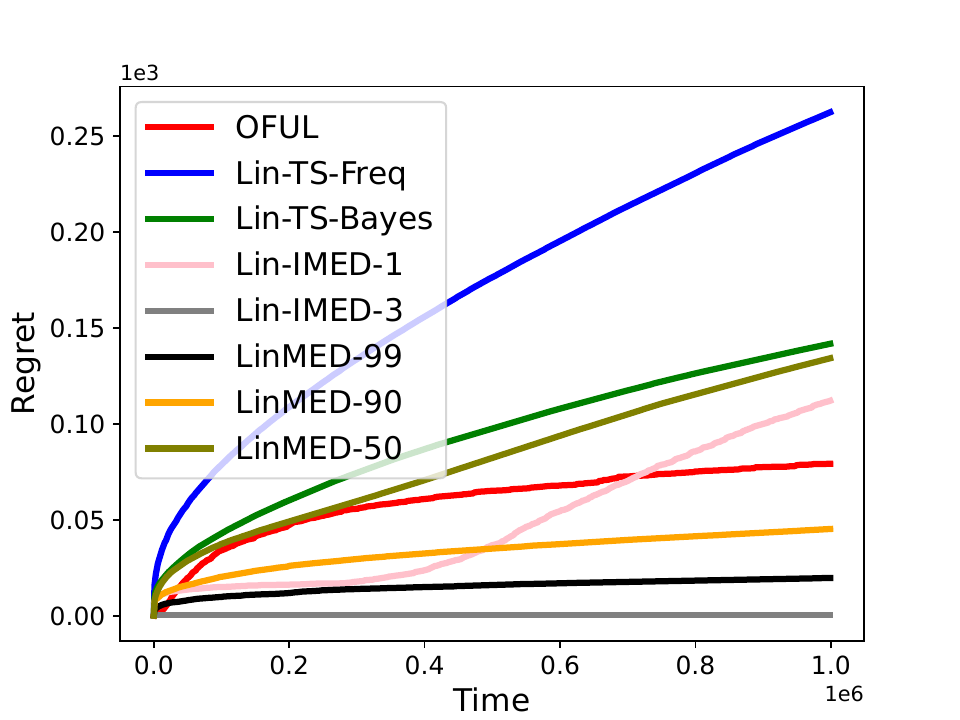}
					\caption{$\eps = 0.01$, $\sigma^2 = 2\cdot \sigma^2_* $}
				\end{subfigure}
				\caption{End of optimism experiments }
				\label{main-figure:end-of-opt-figure}
    \vspace{-1em}
			\end{figure*}
	  
		\section{EMPIRICAL STUDIES}\label{main-section:emp-studies-section}
		This section is dedicated to demonstrating the effectiveness of our algorithm in comparison to several well-known algorithms across various scenarios, each of which evaluates different aspects of algorithmic performance.
		Throughout our empirical studies, we fine-tune $(\alpha_{\text{emp}},\alpha_{\text{opt}})$ for LinMED algorithm using the following values: $(0.99,0.005)$,$(0.90,0.05)$, and $(0.5,0.25)$. We refer to the resulting variants as LinMED-99, LinMED-90, LinMED-50 respectively.
		
		SpannerIGW utilizes exploration parameters  $\gamma$ and $\eta$, which are dependent on the time horizon $n$ and remain fixed throughout all rounds. We modify these parameters to use $t$ in place of $n$ at each time step $t$, thereby deriving an anytime version of the algorithm, which we refer to as SpannerIGW-Anytime or SpannerIGW-AT. LinIMED has three variants, namely LinIMED-1, LinIMED-2, and LinIMED-3. 
        In our study, we use LinIMED-1 and LinIMED-3 only, as prior experiments in~\citet{bian24indexed} show that LinIMED-2 consistently performs between these two algorithms. Therefore, evaluating LinIMED-1 and LinIMED-3 sufficiently captures both ends of the performance spectrum. Additionally, LinIMED-3 has a parameter $C$, which we set to $30$, following~\citet{bian24indexed}. We also use a modified EXP2 algorithm (based on rewards instead of losses), presented in Algorithm \ref{app-algo:exp2-algorithm}, and refer to it simply as EXP2 for clarity.
		
		Furthermore, it is noteworthy that, throughout our experiments, we select either the frequentist (Lin-TS-Freq) \citep{agrawal14thompson} or Bayesian version (Lin-TS-Bayes) \citep{russo14learning} of LinTS, or both. However, whenever we choose only one version, it implies that the selected version significantly outperforms the omitted one. We use LinTS to refer to both Lin-TS-Freq and Lin-TS-Bayes.

		\textbf{Large gap instance.}
		Our algorithm achieves an instance-dependent regret bound of $O(\log^2n)$ with respect to $n$ omitting $\log\left(\log(n)\right)$ terms. This is much better than EXP2 and SpannerIGW, both of which have an instance-dependent lower bound in the order of $\Omega(\sqrt{n})$ with respect to $n$.  Our instance-dependent regret bound also shows a $\log(n)$ factor improvement over the original analysis of OFUL \citep{ay11improved}.
		The instance-dependent regret bounds for LinTS, LinIMED-1, and LinIMED-3 are not known to our knowledge.
		
		The experimental setup of this scenario is as follows: $\mathcal{A} = \{(1,0), (0,1)\}$ and $\theta^* = (1,0)$. 
		The noise follows a normal distribution $\mathcal{N}(0,\sigma_*^2)$ with $ \sigma^2 = \sigma_{*}^2 = 1$. The time horizon for each trial is $n=10,000$ and conduct $10$ such independent trials. We compare our algorithm against SpannerIGW \citep{zhu22contextual}, SpannerIGW-Anytime, LinIMED-1, LinIMED-3 \citep{bian24indexed}, OFUL \citep{ay11improved}, Lin-TS-Bayes (Bayesian version)\citep{russo14learning}, Lin-TS-Freq (Frequentest version) \citep{agrawal14thompson}, and EXP2 \citep{bubeck12towards}.
		
		Our simulations indicate that our algorithm outperforms SpannerIGW, SpannerIGW-Anytime, Lin-TS-Freq, EXP2, and LinIMED-1.
        Furthermore, our algorithm demonstrates performance that is sufficiently close to that of OFUL \citep{ay11improved}, LinIMED-3, and Lin-TS-Bayes. Figure \ref{main-subfigure:inst-dep-sims-subfigure-1} presents the primary plot of our results, while \ref{main-subfigure:inst-dep-sims-subfigure-2} displays the same data, with SpannerIGW, SpannerIGW-Anytime, and Lin-TS-Freq removed for a more precise comparison of the remaining algorithms. Furthermore, close visual inspection confirms the instance-dependent regret lower bound of $\Omega(\sqrt{n})$ that we proved in Section \ref{main-section:inst-dep-low-bound-spannerigw-exp2-section} for both EXP2 and SpannerIGW.
\looseness=-1

		\textbf{End of Optimism instance}
		The ``end of optimism instance'' \cite{lattimore17theend} is often cited as a pitfall for optimism-based algorithms such as OFUL. Inspired by the end of optimism-based simulations conducted by \citet{bian24indexed}, we perform similar experiments to evaluate the performance of our algorithm in comparison to OFUL, LinIMED-1, LinIMED-3, and LinTS.
		In this context, OFUL and LinTS are classified as optimistic algorithms, while LinIMED-1 and LinIMED-3 are minimum empirical divergence-based deterministic algorithms. We set the number of arms $K=3$ and dimension $d=2$ and $\mathcal{A} = \{a_1 = (1,0), a_2 = (0,1),a_3 = (1 - \eps, 2\eps)\}$ where $\eps \in \{0.005,0.01,0.02\}$ and $\theta^* = (1,0)$. The noise follows $\mathcal{N}(0, \sigma_*^2)$ with $\sigma_* = 0.1$.  The time horizon for each trial is $n=1000,000$ and conduct $20$ such independent trials. Furthermore, we conduct experiments for the cases where $\sigma^2 = \sigma_*^2 $ and $\sigma^2 = 2\cdot \sigma_*^2 $.
		
		Optimism-based algorithms typically identify the optimal arm ($a_1$) and the near-optimal arm ($a_3$) as the optimistic choices, frequently pulling these two arms. This behavior limits their ability to pull the highly sub-optimal arm ($a_2$), which provides a crucial piece of information for distinguishing between the optimal and near-optimal arms. As a result, optimistic algorithms struggle to differentiate effectively between these two arms, often incurring a small regret from repeatedly selecting the near-optimal arm ($a_3$) for an extended period.
  \looseness=-1
		
		In contrast, algorithms that do not follow optimistic principles explore adequately in the direction of the highly sub-optimal arm as well. Consequently, the trend of our algorithm reveals that it initially incurs significant regret by choosing the highly sub-optimal arm but ultimately converges on the optimal arm as a consistent choice. 
		
		From Figure \ref{main-figure:end-of-opt-figure}, it is evident that LinIMED-3 and LinMED ($\alpha_{\mathrm{emp}}=0.99$) perform very well and significantly out performs LinTS whereas LinMED ($\alpha_{\mathrm{emp}}=0.90$) and LinMED ($\alpha_{\mathrm{emp}}=0.50$) exhibit comparable performance. However, LinIMED-1 and OFUL start with good performance, but their effectiveness deteriorates over time, especially when $\eps$ is too small. This effect is amplified when $\sigma^2 = 2\cdot \sigma^2_*$. When the noise is over-specified, the performance of Lin-TS-Freq deteriorates significantly due to oversampling. We present detailed results in Appendix \ref{app-subsection:EOPT-eval-exp-subsection}.

	\section{CONCLUSION}\label{main-section:future-works-section}

        Our proposed algorithm LinMED posesses many intriguing properties and shows excellent empirical performance, which opens up exciting avenues for future research.
        First, it would be interesting to explore ways to generalize the noise model to exponential family (generalized linear models) or generalize the linear class to generic hypothesis class.
        Identifying fundamental limits of adapting to the unknown sub-Gaussian noise level would be interesting and important.
        Second, we believe the challenge of coping with the unknown noise level is an important problem that has received less attention in the literature. 
        Relatedly, \citet{jun24noiseadaptive} have shown that adapting to the unknown sub-Gaussian parameter $\sig_*^2$ is possible when it is overspecified.
        Finally, it would be intriguing to develop pure exploration or Bayesian optimization version of LinMED and explore the potential of the MED principle.
        
		
\subsubsection*{Acknowledgements}
Kapilan Balagopalan and Kwang-Sung Jun were supported in part by the National Science Foundation under grant CCF-2327013.
	
\bibliographystyle{abbrvnat_lastname_first_overleaf}
\bibliography{library-shared}


\clearpage
\section*{Checklist}

\begin{enumerate}

	\item For all models and algorithms presented, check if you include:
	\begin{enumerate}
		\item A clear description of the mathematical setting, assumptions, algorithm, and/or model. [Yes]
		\item An analysis of the properties and complexity (time, space, sample size) of any algorithm. [Yes]
		\item (Optional) Anonymized source code, with specification of all dependencies, including external libraries. [No]
	\end{enumerate}

	\item For any theoretical claim, check if you include:
	\begin{enumerate}
		\item Statements of the full set of assumptions of all theoretical results. [Yes]
		\item Complete proofs of all theoretical results. [Yes]
		\item Clear explanations of any assumptions. [Yes]     
	\end{enumerate}

	\item For all figures and tables that present empirical results, check if you include:
	\begin{enumerate}
		\item The code, data, and instructions needed to reproduce the main experimental results (either in the supplemental material or as a URL). [No]
		\item All the training details (e.g., data splits, hyperparameters, how they were chosen). [Not Applicable]
		\item A clear definition of the specific measure or statistics and error bars (e.g., with respect to the random seed after running experiments multiple times). [Not Applicable]
		\item A description of the computing infrastructure used. (e.g., type of GPUs, internal cluster, or cloud provider). [Not Applicable]
	\end{enumerate}
	
	\item If you are using existing assets (e.g., code, data, models) or curating/releasing new assets, check if you include:
	\begin{enumerate}
		\item Citations of the creator If your work uses existing assets. [Yes]
		\item The license information of the assets, if applicable. [Not Applicable]
		\item New assets either in the supplemental material or as a URL, if applicable. [Not Applicable]
		\item Information about consent from data providers/curators. [Not Applicable]
		\item Discussion of sensible content if applicable, e.g., personally identifiable information or offensive content. [Not Applicable]
	\end{enumerate}
	
	\item If you used crowdsourcing or conducted research with human subjects, check if you include:
	\begin{enumerate}
		\item The full text of instructions given to participants and screenshots. [Not Applicable]
		\item Descriptions of potential participant risks, with links to Institutional Review Board (IRB) approvals if applicable. [Not Applicable]
		\item The estimated hourly wage paid to participants and the total amount spent on participant compensation. [Not Applicable]
	\end{enumerate}
	
\end{enumerate}

\onecolumn
\appendix
\aistatstitle{Supplementary Materials}

\fancypagestyle{plain}{
\fancyfoot[C]{\thepage}}
\pagestyle{plain}
\setlength{\footskip}{20pt}	

\part{Appendix} 

\parttoc 

\graphicspath{{Figures/}} 
\clearpage
\section{NOTATIONS}
Let us define the relevant quantities: While some of these have already been introduced in the main body of the paper, we redefine them here for ease of reference.

\begin{itemize}
	\item $a^*_t := \argmax_{a \in \mathcal{A}_{t}} \langle \theta^*, a \rangle$
	\item $\eps_{\ell} := 2^{-2\ell}\cdot \eps$ where $\eps$ is a parameter to be determined later
	\item $\eps_{2,\ell} := 2^{-\ell}\cdot \eps_2$  where $\eps_2$ is a parameter to be determined later
	\item $\Delta_{a,t} := \langle \theta^*, a^*_t \rangle - \langle \theta^*, a \rangle$
	\item $\Delta_{t} := \Delta_{A_t,t}$
	\item $\Delta := \min_{t \in [n], a \in \cA_t: \Delta_{a,t} > 0 } \Delta_{a,t} $
	\item $\overline{ \Delta}_{a,t} := B \wedge \Delta_{a,t} $
	\item $\overline{ \Delta}_{t} := B \wedge \Delta_{A_t,t} $
	\item $\sqrt{\beta_{t-1}(\delta_{t-1})} := \sigma \sqrt{ 2 \log\left(\frac{\det\left( V_{t-1} \right)^{\frac{1}{2}} \det\left( \lambda I \right)^{-\frac{1}{2}} }{\delta_{t-1}}\right)} + \sqrt{\lambda}S$
	\item $\sqrt{\beta^{*}_{t-1}(\delta_{t-1})} := \sigma_{*} \sqrt{ 2 \log\left(\frac{\det\left( V_{t-1} \right)^{\frac{1}{2}} \det\left( \lambda I \right)^{-\frac{1}{2}} }{\delta_{t-1}}\right)} + \sqrt{\lambda}S_*$
	\item $H_{\mathrm{max}} := \max_{t \in [n]}\exp \left(\frac{\beta^{*}_{t-1}(\delta_{t-1})}{\beta_{t-1}(\delta_{t-1})}\right) $
	\item $V(p_t) := \sum_{a \in \mathcal{A}_t} p_t(a) aa^{\T}$
	\item $ \overline{V}(p_t) := \sum_{a \in \overline{ \mathcal{A}}_{(t)}} p_t(a) (\bar{a}_{(t)})(\bar{a}_{(t)})^{\T}$ where $\bar{a}_{(t)} = \sqrt{f_{t}(a)} \cd a $
\end{itemize}
Let $\mathcal{F}_t$ be the $\sigma$-algebra generated by $\left( A_1, Y_1, A_2, Y_2,......A_t,Y_t \right)$. We define the following shortcuts:
\begin{itemize}
\item 	$\PP_{t}(\mathcal{E}) := \PP(\mathcal{E} \mid \mathcal{F}_t)$ 
\item 	$\EE_{t}\sbr{\mathcal{E}} := \EE\sbr{\mathcal{E} \mid \mathcal{F}_t}$  
\item  	$\one_{\hat{a}_t}(a) := \one\cbr{a = \hat{a}_t}$
\end{itemize}

\section{ASSUMPTIONS}
We would like to remind you of the Assumptions \ref{main-assump:env-assumption},  \ref{main-assump:opt-lev-scr-assumption}, \ref{main-assump:opt-cardinality-assumption}

\section{PROOFS}\label{app-section:gmain-proof}

\subsection{Good event}
The following "good event" is derived from the work of \citet{ay11improved} and occurs with a probability of at least $1 - \delta_{t-1}$, as established in Lemma \ref{app-lemma:OFUL-conf-bound-lemma}.
\begin{align}\label{app-event:good-event-1}
    \mathcal{G}_1 = \vast \{ \lVert \theta^* - \hat{\theta}_{t-1}\rVert_{V_{t-1}}^2 \leq \beta^{*}_{t-1}(\delta_{t-1}), \spacex \forall t \geq 1\vast \}. 
\end{align}

\subsection{Conditioning events}
In the main body of the paper, we presented a high-level proof sketch. Here, we provide a more detailed analysis by breaking down and evaluating the regret case by case, according to the following events:
\begin{align*}
    \mathcal{U}_{t-1,\ell}(a) &= \cbr{ \lVert a \rVert^2_{V_{t-1}^{-1}} \geq \eps_{\ell}}\\
    \mathcal{U}_{t-1}(a) &= \cbr{ \lVert a \rVert^2_{V_{t-1}^{-1}} \geq 1}\\
    \mathcal{E}_{t}  &= \cbr{ \lvert \mathcal{B}_t \rvert >  0}\\
    \mathcal{V}_{t-1}(a) &= \cbr{\hat{\Delta}_{a,t} \geq \frac{\Delta_{a,t}}{1+c}}\\
    \mathcal{W}_{t-1,\ell}  &= \cbr{ \max_{a^{'} \in \mathcal{A}_{t}} \langle \hat{\theta}_{t-1}, a^{'} \rangle \geq \langle \theta^*, a^*_t \rangle - \eps_{2, \ell} }\\
    \mathcal{D}_{t,\ell}(a) &= \cbr{ B \cdot 2^{-\ell} < \overline{ \Delta}_{a,t}  \leq B  \cdot 2^{-\ell + 1}} \tag{$\overline{ \Delta}_{a,t} := B \wedge \Delta_{a,t} $}\\
    \overline{\mathcal{D}}_{t,L}(a) &=  \cbr{ \overline{ \Delta}_{a,t}  \leq B \cdot 2^{-L}}.
\end{align*}

\subsection{Proof of Lemma \ref{app-lemma:regretlemma}} \label{app-subsection:regret-lemma-subsection}
In this section, we present the fundamental lemma underlying our regret analysis, which ultimately leads to the theorems and corollaries concerning both the instance-dependent and minimax bounds.
\begin{lemma}[Regret Bound]\label{app-lemma:regretlemma}
	Under Assumptions \ref{main-assump:env-assumption}, \ref{main-assump:opt-lev-scr-assumption}, and \ref{main-assump:opt-cardinality-assumption}, with $\delta_t = \frac{1}{t+1}$, LinMED satisfies, $\forall n \geq 1$,
	\begin{align*}
		\Reg_n &\leq   6dB \left(3 + \frac{1}{\alpha_{\mathrm{emp}}^2}\right)\log \left(1 + \frac{2}{\lambda}\right) + 2B \log (n+1) +   \frac{192\beta_{n}(\delta)  \log (n) d}{2^{-L}B}  \left(1 +  \frac{1}{\alpha_{\mathrm{emp}}^2}\right) \log \left( 1 + \frac{32 \beta_{n}(\delta)  \log (n) }{\lambda 2^{-2L} B^2 }\right)\\
		&\spacex + \frac{192  \beta^{*}_{{n}}(\delta)d}{B \cdot 2^{-L}} \log\left( 1 + \frac{32\beta^{*}_{{n}}(\delta)}{\lambda B^2 \cdot 2^{-2L}}\right) + \frac{512 H_{\mathrm{max}}\cdot C_{\mathrm{opt}}\cdot d\log(d)}{\alpha_{\mathrm{opt}} B \cdot 2^{-L}} \left( \frac{\lambda (S_*)^2}{2} + \sigma_*^2 d \log \left( 1 + \frac{n}{d\lambda}\right) \right)\\
		&\spacex + \BII + \frac{ 4B}{\alpha_{\mathrm{emp}}} .
	\end{align*}
\end{lemma}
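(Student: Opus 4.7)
\textbf{Proof proposal for Lemma \ref{app-lemma:regretlemma}.} The plan is to follow the four-term decomposition sketched in Section~\ref{main-section:thm-statemnt-section}, but combined with a gap-peeling argument at scales $B\cdot 2^{-\ell}$ for $\ell=0,1,\ldots,L$ that lets us trade off the leverage-score threshold $\eps_\ell=2^{-2\ell}\eps$ and the confidence-gap threshold $\eps_{2,\ell}=2^{-\ell}\eps_2$ per layer. Arms whose clipped gap $\barDelta_{a,t}$ is below $B\cdot 2^{-L}$ are thrown into the residual bucket which contributes at most $n\cdot B\cdot 2^{-L}\cdot \one\{B\cdot 2^{-L}>\Delta\}$ (the $\BII$ term). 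The event $\cE_t=\{|\cB_t|>0\}$, handled by the safety clause of the algorithm (lines computing $p_t$ when $\cB_t\neq\emptyset$), is absorbed into the argument by noting that whenever it triggers we have an arm with leverage score exceeding one, so it occurs in at most $O(d\log(1+2/\lam))$ rounds by the elliptical potential count lemma (Lemma \ref{app-lemma:epc-lemma}); this produces the $6dB(3+1/\alpha_\emp^2)\log(1+2/\lam)$ prefactor.

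\textbf{Term 1 and Term 3 (elliptical potentials).} For Term 1, on $\cU_{t-1,\ell}(A_t)=\{\|A_t\|_{V_{t-1}^{-1}}^2\ge \eps_\ell\}$ the count of such rounds is $O(d\log(1+2/(\lam \eps_\ell)))$ by EPC, so summing over peeling layers $\ell\le L$ and multiplying by the per-round regret $B\cdot 2^{-\ell+1}$ yields the $\frac{192\beta_n(\delta)\log(n)d}{2^{-L}B}(1+1/\alpha_\emp^2)\log(\cdots)$ contribution (the $1/\alpha_\emp^2$ enters via an indirect bound on how often the empirical-best route can be suboptimal). For Term 3, on $\cW_{t-1,\ell}$ there exists a witness $a'\in\cA_t$ with $\langle\hat\th_{t-1},a'\rangle\ge\langle\th^*,a^*_t\rangle-\eps_{2,\ell}$; on the good event $\cG_1$ this forces the confidence width of $a'$ to shrink like $\eps_{2,\ell}/\sqrt{\beta^*_{t-1}}$, which I plug into EPC to obtain the $\frac{192\beta^*_n(\delta)d}{B\cdot 2^{-L}}\log(\cdots)$ term.

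\textbf{Term 2 (small leverage, large estimated gap).} On $\bar\cU_{t-1,\ell}(A_t)\cap\cV_{t-1}(A_t)$, the exponent in $f_t(A_t)$ satisfies
\[
\frac{\hat\Delta_{A_t,t}^2}{\beta_{t-1}\|\hat a_t-A_t\|_{V_{t-1}^{-1}}^2}\;\gtrsim\;\frac{\Delta_{A_t,t}^2}{\beta_{t-1}\eps_\ell},
\]
so $f_t(A_t)$ is exponentially small; since $q_t(\hat a_t)\ge\alpha_\emp$ and $f_t(\hat a_t)=1$, the denominator in $p_t'$ is at least $\alpha_\emp$, hence $p_t(A_t)\lesssim f_t(A_t)/\alpha_\emp$. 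Summing $\Delta_{A_t,t}\,p_t(A_t)$ across layers using the sub-Gaussian-type bound $\sum_a \exp(-c\Delta_a^2)\cdot\Delta_a$ produces the $4B/\alpha_\emp$ residual.

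\textbf{Term 4 (empirical best too pessimistic) — the hard part.} This is where the G-optimal design enters crucially. On $\bar\cW_{t-1,\ell}$ every arm satisfies $\langle\hat\th_{t-1},a\rangle<\langle\th^*,a^*_t\rangle-\eps_{2,\ell}$, in particular $\hat\Delta_{a^*_t,t}>\eps_{2,\ell}$, while by the good event the confidence set still contains $\th^*$, so $\|a^*_t\|_{V_{t-1}^{-1}}$ must be at least of order $\eps_{2,\ell}/\sqrt{\beta^*_{t-1}}$. I then compare the empirical covariance $V_{t-1}$ against $V(q^\opt_t)$ on the augmented set $\wbar\cA_{(t)}$: by Assumption \ref{main-assump:opt-lev-scr-assumption} and the fact that $q_t$ puts mass $\ge\alpha_\opt$ on $q^\opt_t$, every arm in $\wbar\cA_{(t)}$ has $V(q^\opt_t)$-norm at most $C_\opt d\log d$. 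The price for switching between the $f_t$-weighted inner product and the plain one is exactly the ratio $\exp(\beta^*_{t-1}/\beta_{t-1})\le H_\tmax$, producing the $\frac{512 H_\tmax C_\opt d\log d}{\alpha_\opt B\cdot 2^{-L}}(\cdots)$ term. Summing over layers and invoking $\sum_{t}\dt_{t-1}\le O(\log n)$ (since $\dt_t=1/(t+1)$) together with the worst-case regret bound $B$ on the failure event $\bar\cG_1$ gives the remaining $2B\log(n+1)$ term.

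\textbf{Main obstacle.} The delicate step is Term 4: translating the failure of $\cW_{t-1,\ell}$ into a useful lower bound on some leverage score, then using the G-optimal design assignment to upper bound it, while simultaneously paying the $H_\tmax$ price for under-specified $\sig_*^2$ versus $\sig^2$. Bridging $V_{t-1}$ and $V(q^\opt_t)$ in the weighted arm set, and coupling it with the ridge-estimator concentration under the good event, is what forces the $\alpha_\opt$ in the denominator and the $C_\opt d\log d$ factor; getting the right exponent on $H_\tmax$ inside the summation (rather than a crude worst-case bound) is the calculation that requires the most care.
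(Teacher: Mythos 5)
Your decomposition is the same as the paper's (split on $\mathcal{E}_t$, peel the clipped gap at scales $B\cdot 2^{-\ell}$, split on $\mathcal{U}_{t-1,\ell}$, $\mathcal{V}_{t-1}$, $\mathcal{W}_{t-1,\ell}$ and the good event $\mathcal{G}_1$, and use the elliptical potential count for the leverage-score terms), and your Term 2 argument — denominator lower-bounded by $\alpha_{\mathrm{emp}}$, numerator $f_t(A_t)$ exponentially small, geometric sum over layers — is essentially the paper's bound on $F_{11}$. But there is a genuine gap in your Term 4. You propose to deduce from $\overline{\mathcal{W}}_{t-1,\ell}$ and $\mathcal{G}_1$ that $\|a^*_t\|_{V_{t-1}^{-1}}$ is at least of order $\eps_{2,\ell}/\sqrt{\beta^*_{t-1}}$ and then to count such rounds. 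The elliptical potential count only controls $\sum_t \one\{\|A_t\|^2_{V_{t-1}^{-1}}\ge \cdot\}$ for the arms that are actually pulled; $a^*_t$ need not be pulled on these rounds (indeed Term 4 is precisely the regime where the algorithm is failing to favor it), so no potential argument applies to $\|a^*_t\|_{V_{t-1}^{-1}}$. The paper instead runs Cauchy--Schwarz in the one-step design norm, $\eps_{2,\ell}\le\|\theta^*-\hat\theta_{t-1}\|_{V(p_t)}\cdot\|a^*_t\|_{V(p_t)^{-1}}$ with $V(p_t)=\sum_a p_t(a)aa^{\T}$, bounds $\|a^*_t\|^2_{V(p_t)^{-1}}\le \frac{2}{\alpha_{\mathrm{opt}}}H_{\mathrm{max}}C_{\mathrm{opt}}d\log(d)$ via the optimal design (this is where your $H_{\mathrm{max}}$ and $\alpha_{\mathrm{opt}}$ correctly appear), and then — the ingredient your sketch omits entirely — controls $\EE\sum_{t}\|\theta^*-\hat\theta_{t-1}\|^2_{V(p_t)}=\EE\sum_t \EE_{t-1}\bigl[\langle\theta^*-\hat\theta_{t-1},A_t\rangle^2\bigr]$ by Markov's inequality together with an online ridge-regression regret identity (Lemmas \ref{sketch-lemma:regret-equality-modified} and \ref{sketch-lemma:online-learning-modified-lemma}), which yields the bound $2\lambda S_*^2+4\sigma_*^2 d\log(1+\tfrac{n}{d\lambda})$. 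That identity is the only source of the factor $\bigl(\tfrac{\lambda S_*^2}{2}+\sigma_*^2 d\log(1+\tfrac{n}{d\lambda})\bigr)$ in the stated bound; your proposal provides no mechanism that produces it.

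A secondary issue: your Term 3 argument uses only $\mathcal{W}_{t-1,\ell}$ and $\mathcal{G}_1$, claiming a witness arm's "confidence width must shrink." Those two events together are entirely consistent with an accurate estimator and imply nothing about leverage scores. The paper's bound on $F_{21}$ needs the conjunction with $\overline{\mathcal{V}}_{t-1}(A_t)$: the estimated gap of the \emph{chosen} arm being much smaller than its true gap, combined with $\mathcal{W}_{t-1,\ell}$, forces $\langle\hat\theta_{t-1}-\theta^*,A_t\rangle\ge \tfrac{B\cdot2^{-\ell}}{4}$, and only then do Cauchy--Schwarz and $\mathcal{G}_1$ give $\|A_t\|^2_{V_{t-1}^{-1}}\ge \tfrac{B^2 2^{-2\ell}}{16\beta^*_{t-1}}$, a statement about the pulled arm to which the elliptical potential count legitimately applies. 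You would need to repair both of these steps for the stated constants and, more importantly, the stated dependence on $\sigma_*^2$, $S_*$ and $H_{\mathrm{max}}$ to come out.
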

\begin{proof}

First, we decompose the proof based on the occurrence of the event $\mathcal{E}_t$. This decomposition is crucial because, if the event occurs, we select an arm arbitrarily from the set $\mathcal{B}_t$ with a probability of at-least one-half, as described in Algorithm \ref{main-algo:LinMED-algo}.  

\begin{align*}
    \Reg_n &= \EE\sbr{\sum_{t=1}^{n}\langle \theta^* , a^*_t \rangle - \langle \theta^* , A_t\rangle}\\
    &= \EE\sbr{\sum_{t=1}^{n} \Delta_{t} \langle \theta^* , a^*_t \rangle - \langle \theta^* , A_t\rangle} \tag{$\Delta_{t} := \Delta_{A_t,t}$}\\
    &\leq \EE\sbr{\sum_{t=1}^{n}\left(B \wedge \Delta_{t}\right) \langle \theta^* , a^*_t \rangle - \langle \theta^* , A_t\rangle}\\
    &= \EE\sbr{\sum_{t=1}^{n}\overline{ \Delta}_{t} \onec{ A_t \neq a^*_t}} \tag{ $\overline{ \Delta}_{t} := B \wedge \Delta_{A_t,t} $}\\
    &=  \EE\sbr{\sum_{t=1}^{n}\overline{ \Delta}_{t} \onec{ A_t \neq a^*_t} \left( \one\cbr{ \overline{\mathcal{E}}_t} + \one \cbr{ \mathcal{E}_t } \right)} \\
    &\leq  \underbrace{\EE\sbr{\sum_{t=1}^{n}\overline{ \Delta}_{t} \onec{ A_t \neq a^*_t}\one \cbr{ \overline{\mathcal{E}}_t } }}_{A_1}  + \underbrace{\EE\sbr{\sum_{t=1}^{n}\overline{ \Delta}_{t} \onec{ A_t \neq a^*_t} \one\cbr{ \mathcal{E}_t} } }_{A_2}.
 \end{align*} 
 
Consequently, if the selected arm $A_t$ is in $\mathcal{B}_t$, it directly implies that $ \lVert A_t \rVert_{V_{t-1}^{-1} }^2  > 1$, in accordance with the definition of the set $\mathcal{B}_t$. Moreover, the expected number of occurrences of the event $ \lVert A_t \rVert_{V_{t-1}^{-1} }^2  > 1$ can be managed using the Elliptical Potential Count (EPC), as demonstrated in Lemma \ref{app-lemma:epc-lemma}.

 \begin{align*}
 	A_2 &= \EE\sbr{\sum_{t=1}^{n}\overline{ \Delta}_{t} \onec{ A_t \neq a^*_t} \one\cbr{ \mathcal{E}_t }} \\
 	&= 2 \EE\sbr{\sum_{t=1}^{n}\overline{ \Delta}_{t} \onec{ A_t \neq a^*_t} \one\cbr{ \mathcal{E}_t } \frac{1}{2} } \\
 	&\leq 2 \EE\sbr{\sum_{t=1}^{n}\overline{ \Delta}_{t} \onec{ A_t \neq a^*_t} \one\cbr{ \mathcal{E}_t } \PP\left( A_t \in \mathcal{B}_t\right) } \\
 	&= 2 \EE\sbr{\sum_{t=1}^{n}\overline{ \Delta}_{t} \onec{ A_t \neq a^*_t} \one\cbr{ \mathcal{E}_t } \PP\left(  \lVert A_t \rVert_{V_{t-1}^{-1} }^2  > 1  \right) } \\
 	&= 2 \EE\sbr{\sum_{t=1}^{n}\overline{ \Delta}_{t} \onec{ A_t \neq a^*_t} \one\cbr{ \mathcal{E}_t }  \EE_{t-1} \sbr{ \one\cbr{ \lVert A_t \rVert_{V_{t-1}^{-1} }^2  > 1  } } } \\
 	&\leq  2\EE\sbr{\sum_{t=1}^{n}\overline{ \Delta}_{t} \onec{ A_t \neq a^*_t} \one\cbr{  \lVert A_t \rVert_{V_{t-1}^{-1} }^2  > 1   } } \\
 	&\leq 2B \cdot \EE\sbr{ \sum_{t=1}^{n} \one\cbr{  \lVert A_t \rVert_{V_{t-1}^{-1} }^2  > 1   } } \\
 	&\leq  2B \cdot 3d \log \left( 1 + \frac{2}{\lambda}\right) \tag{by lemma \ref{app-lemma:epc-lemma}} \\
 	&= 6Bd  \log \left( 1 + \frac{2}{\lambda}\right). 
 \end{align*}
This concludes the bounding of the term $A_2$.

Moving to $A_1$, we utilize the peeling technique on $\overline{ \Delta}_{t}$ to enhance the precision of our analysis. This involves decomposing $\overline{ \Delta}_{t}$ into piecewise ranges defined as $B \cdot 2^{-\ell} <\overline{ \Delta}_{t} \leq B  \cdot 2^{-\ell + 1}$, facilitating a more granular analysis of regret. From a broader perspective, this approach can be likened to approximating the area under a graph using rectangles; the smaller the area of the rectangle, the more precise and accurate the resulting analysis becomes. In our context, this technique significantly reduces the looseness of the analysis by a factor of $\frac{1}{\Delta}$.

 \begin{align*}
    A_1  &\leq \underbrace{ \EE\sbr{\sum_{t=1}^{n}\sum_{\ell = 1}^{L}\overline{ \Delta}_{t} \one\cbr{ \mathcal{D}_{t,\ell}(A_t)} \onec{ A_t \neq a^*_t} \one \cbr{ \overline{\mathcal{E}}_t } }}_{B_1} + \underbrace{\EE\sbr{\sum_{t=1}^{n} \overline{ \Delta}_{t} \one\cbr{\overline{\mathcal{D}}_{t,L}(a) }  \onec{ A_t \neq a^*_t}\one \cbr{ \overline{\mathcal{E}}_t }}}_{B_2}.
\end{align*}
The term $B_2$ can be bounded as follows: We maintain the analysis variable $L$ unchanged throughout the lemma, as it serves as the critical parameter in deriving both instance-dependent and minimax regret bounds from this lemma.
\begin{align*}
    B_2 &\leq \EE\sbr{\sum_{t=1}^{n}\overline{ \Delta}_{t} \one\cbr{\overline{\mathcal{D}}_{t,L}(a)} \one \cbr{ \overline{\mathcal{E}}_t } } \\
    &\leq \EE\sbr{\sum_{t=1}^{n}\overline{ \Delta}_{t} \one\cbr{\overline{\mathcal{D}}_{t,L}(a)}  } \\
    &= \EE\sbr{\sum_{t=1}^{n}\overline{ \Delta}_{t} \one \cbr{\overline{ \Delta}_{t} < B \cdot 2^{-L}} \onec{B\cdot 2^{-L} \leq \Delta}} \\
    &\spacex  + \EE\sbr{\sum_{t=1}^{n} \overline{ \Delta}_{t} \one \cbr{\overline{ \Delta}_{t} < B\cdot 2^{-L}} \onec{B\cdot 2^{-L} >\Delta}} \\
    &= 0 + n B 2^{-L} \cdot \onec{B\cdot2^{-L} >\Delta} \\
    &= \BII.
\end{align*}
This concludes the bounding of the term $B_2$. 

The first term $B_1$ can be further split into separate terms based on the condition $ \mathcal{U}_{t-1,\ell}$ as follows:

\begin{align*}
    B_1 &= \EE\sbr{\sum_{t=1}^{n}\sum_{\ell = 1}^{L}\overline{ \Delta}_{t} \one\cbr{ \mathcal{D}_{t,\ell}(A_t)} \onec{ A_t \neq a^*_t} \one \cbr{ \overline{\mathcal{E}}_t } }\\
    &=  \underbrace{\EE\sbr{\sum_{t=1}^{n}\sum_{\ell = 1}^{L}\overline{ \Delta}_{t} \one\cbr{ \mathcal{D}_{t,\ell}(A_t)} \onec{ A_t \neq a^*_t}  \one\cbr{\overline{\mathcal{U}}_{t-1,\ell}(A_t)}\one \cbr{ \overline{\mathcal{E}}_t } }}_{D_1} \\
    &\spacex + \underbrace{\EE\sbr{\sum_{t=1}^{n}\sum_{\ell = 1}^{L}\overline{ \Delta}_{t} \one\cbr{\mathcal{D}_{t,\ell}(A_t)} \onec{ A_t \neq a^*_t}  \one\cbr{\mathcal{U}_{t-1,\ell} (A_t)}\one \cbr{ \overline{\mathcal{E}}_t } }}_{D_2}.
\end{align*}

The term $D_2$ can be bounded using Elliptical Potential Count (Lemma \ref{app-lemma:epc-lemma}) as follows:
\begin{align*}
	D_2 &= \EE\sbr{\sum_{t=1}^{n}\sum_{\ell = 1}^{L}\overline{ \Delta}_{t} \one\cbr{\mathcal{D}_{t,\ell}(A_t)} \onec{ A_t \neq a^*_t}  \one\cbr{\mathcal{U}_{t-1,\ell} (A_t)} \one \cbr{ \overline{\mathcal{E}}_t } } \\
	&\leq \EE\sbr{\sum_{t=1}^{n}\sum_{\ell = 1}^{L}\overline{ \Delta}_{t} \one\cbr{\mathcal{D}_{t,\ell}(A_t)} \onec{ A_t \neq a^*_t}  \one\cbr{\mathcal{U}_{t-1,\ell} (A_t)}  } \\
	&= \EE\sbr{\sum_{t=1}^{n}\sum_{\ell = 1}^{L}\overline{ \Delta}_{t}  \one\cbr{\mathcal{D}_{t,\ell}(A_t)}  \onec{ A_t \neq a^*_t} \one\cbr{ \lVert A_t \rVert^2_{V_{t-1}^{-1}} \geq \eps_{\ell}}} \\
	&\leq\EE\sbr{\sum_{t=1}^{n}\sum_{\ell = 1}^{L}\overline{ \Delta}_{t}  \one\cbr{\mathcal{D}_{t,\ell}(A_t)} \one\cbr{ \lVert A_t \rVert^2_{V_{t-1}^{-1}} \geq \eps_{\ell}}} \\
	&= \EE\sbr{\sum_{t=1}^{n}\sum_{\ell = 1}^{L}\overline{ \Delta}_{t}  \one \cbr{ B \cdot 2^{-\ell} \leq\overline{ \Delta}_{t} \leq B \cdot 2^{-\ell + 1}} \one\cbr{ \lVert A_t \rVert^2_{V_{t-1}^{-1}} \geq \eps_{\ell}}} \\
	&\leq \EE\sbr{\sum_{t=1}^{n}\sum_{\ell = 1}^{L} B \cdot2^{-\ell + 1}\cdot \one\cbr{ \lVert A_t \rVert^2_{V_{t-1}^{-1}} \geq \eps_{\ell}}} \\
	&\leq \EE\sbr{\sum_{t=1}^{n}\sum_{\ell = 1}^{L} B \cdot2^{-\ell + 1}\cdot \one\cbr{ \lVert A_t \rVert^2_{V_{t-1}^{-1}} \geq 2^{-2\ell} \eps}} \\
	&\leq \sum_{\ell = 1}^{L} B \cdot2^{-\ell + 1}\cdot \EE\sbr{\sum_{t=1}^{n} \one\cbr{ \lVert A_t \rVert^2_{V_{t-1}^{-1}} \geq 2^{-2\ell} \eps}}.
\end{align*}
We can apply the EPC from Lemma \ref{app-lemma:epc-lemma} directly only when  $ 2^{-2\ell} \eps < 1$. Consequently, we must analyze it in two cases, as follows:

\begin{align*}
	D_2 &\leq \sum_{\ell = 1}^{L} B \cdot 2^{-\ell + 1} \cdot \EE\sbr{\sum_{t=1}^{n} \one\cbr{ \lVert A_t \rVert^2_{V_{t-1}^{-1}} \geq 2^{-2\ell} \eps}}\\
	&=  \sum_{\ell = 1}^{L} B\cdot 2^{-\ell + 1} \cdot\EE\sbr{\sum_{t=1}^{n} \one\cbr{ \lVert A_t \rVert^2_{V_{t-1}^{-1}} \geq 2^{-2\ell} \eps} \left(\one\cbr{2^{-2\ell}\eps \leq 1 } +\one\cbr{2^{-2\ell}\eps > 1 } \right) }\\
	&\leq  \sum_{\ell = 1}^{L} B\cdot 2^{-\ell + 1} \cdot \EE\sbr{\sum_{t=1}^{n} \one\cbr{ \lVert A_t \rVert^2_{V_{t-1}^{-1}} \geq 2^{-2\ell} \eps}\one\cbr{2^{-2\ell}\eps \leq 1 }}\\
	&\spacex + \sum_{\ell = 1}^{L} B\cdot 2^{-\ell + 1} \cdot \EE\sbr{\sum_{t=1}^{n} \one\cbr{ \lVert A_t \rVert^2_{V_{t-1}^{-1}} \geq 2^{-2\ell} \eps}\one\cbr{2^{-2\ell}\eps> 1 }}\\
	&\leq  \sum_{\ell = 1}^{L} B\cdot 2^{-\ell + 1}\cdot 3 \frac{d}{2^{-2\ell }\eps} \log \left( 1 + \frac{2}{\lambda 2^{-2\ell}\eps } \right)\tag{by lemma \ref{app-lemma:epc-lemma}}\\
	&\spacex +  \sum_{\ell = 1}^{L} B\cdot 2^{-\ell + 1}\EE\sbr{\sum_{t=1}^{n} \one\cbr{ \lVert A_t \rVert^2_{V_{t-1}^{-1}} \geq 1 }}\\
	&\leq \sum_{\ell = 1}^{L} B\cdot 2^{-\ell + 1} \cdot 3 \frac{d}{2^{-2\ell }\eps} \log \left( 1 + \frac{2}{\lambda 2^{-2\ell}\eps } \right) + \sum_{\ell = 1}^{L} B \cdot 2^{-\ell + 1} \cdot 3d \log \left( 1 + \frac{2}{\lambda} \right) \tag{by lemma \ref{app-lemma:epc-lemma}}\\
	&\leq \DII.
\end{align*}
This concludes the bounding of the term $D_2$.

Moving on to $D_1$, we can write $D_1$ into 3 terms based on the conditions $\mathcal{V}_{t-1}$ and $\mathcal{W}_{t-1,\ell}$ as follows:

\begin{align*}
    D_1 &= \EE\sbr{\sum_{t=1}^{n}\sum_{\ell = 1}^{L}\overline{ \Delta}_{t} \one\cbr{\mathcal{D}_{t,\ell}(A_t)} \onec{ A_t \neq a^*_t}  \one\cbr{\overline{\mathcal{U}}_{t-1,\ell}(A_t)} \one \cbr{ \overline{\mathcal{E}}_t } }\\
    &= \EE\sbr{\sum_{t=1}^{n}\sum_{\ell = 1}^{L}\overline{ \Delta}_{t} \one\cbr{\mathcal{D}_{t,\ell}(A_t)} \onec{ A_t \neq a^*_t}  \one\cbr{\overline{\mathcal{U}}_{t-1,\ell}(A_t)} \one\cbr{\mathcal{V}_{t-1}(A_t)}\one \cbr{ \overline{\mathcal{E}}_t }} \\
    &\spacex + \EE\sbr{\sum_{t=1}^{n}\sum_{\ell = 1}^{L}\overline{ \Delta}_{t} \one\cbr{\mathcal{D}_{t,\ell}(A_t)} \onec{ A_t \neq a^*_t}  \one\cbr{\overline{\mathcal{U}}_{t-1,\ell}(A_t)}\one\cbr{\overline{\mathcal{V}}_{t-1}(A_t)}  \one \cbr{ \overline{\mathcal{E}}_t }}\\
    &= \underbrace{\EE\sbr{\sum_{t=1}^{n}\sum_{\ell = 1}^{L}\overline{ \Delta}_{t} \one\cbr{\mathcal{D}_{t,\ell}(A_t)} \onec{ A_t \neq a^*_t}  \one\cbr{\overline{\mathcal{U}}_{t-1,\ell}(A_t)} \one\cbr{\mathcal{V}_{t-1}(A_t)}\one \cbr{ \overline{\mathcal{E}}_t }}}_{F_1} \\
    &\spacex + \underbrace{\EE\sbr{\sum_{t=1}^{n}\sum_{\ell = 1}^{L}\overline{ \Delta}_{t} \one\cbr{\mathcal{D}_{t,\ell}(A_t)} \onec{ A_t \neq a^*_t}  \one\cbr{\overline{\mathcal{U}}_{t-1,\ell}(A_t)} \one\cbr{\overline{\mathcal{V}}_{t-1}(A_t)} \one\cbr{\mathcal{W}_{t-1,\ell}}\one \cbr{ \overline{\mathcal{E}}_t }}}_{F_2}\\
    &\spacex + \underbrace{\EE\sbr{\sum_{t=1}^{n}\sum_{\ell = 1}^{L}\overline{ \Delta}_{t} \one\cbr{\mathcal{D}_{t,\ell}(A_t)} \onec{ A_t \neq a^*_t}  \one\cbr{\overline{\mathcal{U}}_{t-1,\ell}(A_t)} \one\cbr{\overline{\mathcal{V}}_{t-1}(A_t)}\one\cbr{\overline{\mathcal{W}}_{t-1,\ell}}\one \cbr{ \overline{\mathcal{E}}_t }}}_{F_3}.
\end{align*}

The term $F_1$ is conditioned on following events:
\begin{enumerate}
	\item  $\overline{\mathcal{U}}_{t-1,\ell}(a) = \cbr{ \lVert a \rVert^2_{V_{t-1}^{-1}} < \eps_{\ell}}$, which will be partly useful for upper bounding the denominator of $f_{t}(a)$. Intuitively, this condition indicates that arm $a$ has been sufficiently explored.
	\item $\mathcal{V}_{t-1}(a) = \cbr{\hat{\Delta}_{a,t} \geq \frac{\Delta_{a,t}}{1+c}}$,which will provide a lower bound for the numerator of $f_t(a)$. This condition suggests that the empirical gap is larger than the true gap, thereby ensuring that arm $a$ is appropriately distinguished as sub-optimal.
\end{enumerate}
Thus, we must be able to control the probability of selecting a sub-optimal arm as follows:

\begin{align*}
    F_1 &=\EE\sbr{\sum_{t=1}^{n}\sum_{\ell = 1}^{L}\overline{ \Delta}_{t} \one\cbr{\mathcal{D}_{t,\ell}(A_t)} \onec{ A_t \neq a^*_t}  \one\cbr{\overline{\mathcal{U}}_{t-1,\ell}(A_t)} \one\cbr{\mathcal{V}_{t-1}(A_t)}\one \cbr{ \overline{\mathcal{E}}_t }}\\
    &\leq \EE\sbr{\sum_{t=1}^{n}\sum_{\ell = 1}^{L}\overline{ \Delta}_{t} \one\cbr{\mathcal{D}_{t,\ell}(A_t)} \onec{ A_t \neq a^*_t}  \one\cbr{\overline{\mathcal{U}}_{t-1,\ell}(A_t)} \one\cbr{\mathcal{V}_{t-1}(A_t)}}\\
    &\leq \EE\sbr{\sum_{t=1}^{n}\sum_{\ell = 1}^{L} \sum_{a \in \mathcal{A}_{t}}\overline{ \Delta}_{a,t}  \one\cbr{\mathcal{D}_{t,\ell}(a)} \onec{ A_t = a }  \one\cbr{\overline{\mathcal{U}}_{t-1,\ell}(a)} \one\cbr{\mathcal{V}_{t-1}(a)}}\\
    &= \EE\sbr{\EE_{t-1}\sbr{\sum_{t=1}^{n}\sum_{\ell = 1}^{L} \sum_{a \in \mathcal{A}_{t}}\overline{ \Delta}_{a,t}  \one\cbr{\mathcal{D}_{t,\ell}(a)} \onec{ A_t = a }  \one\cbr{\overline{\mathcal{U}}_{t-1,\ell}(a)} \one\cbr{\mathcal{V}_{t-1}(a)}}}\tag{tower rule}\\
    &= \EE\sbr{\sum_{t=1}^{n}\sum_{\ell = 1}^{L} \sum_{a \in \mathcal{A}_{t}}\overline{ \Delta}_{a,t}  \one\cbr{\mathcal{D}_{t,\ell}(a)} \EE_{t-1}\sbr{ \onec{ A_t = a } }  \one\cbr{\overline{\mathcal{U}}_{t-1,\ell}(a)} \one\cbr{\mathcal{V}_{t-1}(a)}}\\
    &= \EE\sbr{\sum_{t=1}^{n}\sum_{\ell = 1}^{L} \sum_{a \in \mathcal{A}_{t}}\overline{ \Delta}_{a,t}  \one\cbr{\mathcal{D}_{t,\ell}(a)} p_{t}(a) \one\cbr{\overline{\mathcal{U}}_{t-1,\ell}(a)} \one\cbr{\mathcal{V}_{t-1}(a)}}\\
    &= \EE\sbr{\sum_{t=1}^{n}\sum_{\ell = 1}^{L} \sum_{a \in \mathcal{A}_{t}}\overline{ \Delta}_{a,t}  \one\cbr{\mathcal{D}_{t,\ell}(a)} q_{t}(a)\frac{f_{t}(a)}{\sum_{b \in \mathcal{A}_{t}} q_t(b) f_{t}(b)} \one\cbr{\overline{\mathcal{U}}_{t-1,\ell}(a)} \one\cbr{\mathcal{V}_{t-1}(a)}}\\
    &\leq \EE\sbr{\sum_{t=1}^{n}\sum_{\ell = 1}^{L} \sum_{a \in \mathcal{A}_{t}}\overline{ \Delta}_{a,t}  \one\cbr{\mathcal{D}_{t,\ell}(a)}  q_{t}(a)f_{t}(a)  \one\cbr{\overline{\mathcal{U}}_{t-1,\ell}(a)} \one\cbr{\mathcal{V}_{t-1}(a)}}\frac{1}{\alpha_{\text{emp}}} \tag{by lemma \ref{sketch-lemma:deno-lemma}}\\
    &= \underbrace{\EE\sbr{\sum_{t=1}^{n}\sum_{\ell = 1}^{L} \sum_{a \in \mathcal{A}_{t}}\overline{ \Delta}_{a,t}  \one\cbr{\mathcal{D}_{t,\ell}(a)}  q_{t}(a)f_{t}(a)  \one\cbr{\overline{\mathcal{U}}_{t-1,\ell}(a)}\one\cbr{\overline{\mathcal{U}}_{t-1,\ell}(\hat{a}_{t})} \one\cbr{\mathcal{V}_{t-1}(a)}}}_{F_{11}}\frac{1}{\alpha_{\text{emp}}} \\
    &\spacex +  \underbrace{\EE\sbr{\sum_{t=1}^{n}\sum_{\ell = 1}^{L} \sum_{a \in \mathcal{A}_{t}}\overline{ \Delta}_{a,t}  \one\cbr{\mathcal{D}_{t,\ell}(a)}  q_{t}(a)f_{t}(a)  \one\cbr{\overline{\mathcal{U}}_{t-1,\ell}(a)}\one\cbr{\mathcal{U}_{t-1,\ell}(\hat{a}_{t})} \one\cbr{\mathcal{V}_{t-1}(a)}}}_{F_{12}}\frac{1}{\alpha_{\text{emp}}}. 
\end{align*}

Since the denominator of $f_{t}(a)$ includes the Mahalanobis norm of the empirical best arm, it is essential to establish a bound on $\lVert \hat{a}_{t}\rVert_{V_{t-1}^{-1}}^2$. To achieve this, we further decompose the term $F_1$ into $F_{11}$ and $F_{12}$. The term $F_{11}$ comprises both the denominator and numerator as expected. Therefore, a modest application of algebra, alongside the triangle inequality, should yield the necessary bound. We will defer the tuning of $\eps$ until the conclusion to obtain a desirable bound.

\begin{align*}
    F_{11} &=\EE \vast[\sum_{t=1}^{n}\sum_{\ell = 1}^{L} \sum_{a \in \mathcal{A}_{t}, a \neq \hat{a}_{t}}\overline{ \Delta}_{a,t}  \one\cbr{\mathcal{D}_{t,\ell}(a)}  q_{t}(a)f_{t}(a)  \one\cbr{\overline{\mathcal{U}}_{t-1,\ell}(a)}\one\cbr{\overline{\mathcal{U}}_{t-1,\ell}(\hat{a}_{t})}\\
    &\spacex\spacex  \one\cbr{\mathcal{V}_{t-1}(a)}\vast]\\
    &\spacex + \EE \vast[\sum_{t=1}^{n}\sum_{\ell = 1}^{L} \overline{ \Delta}_{\hat{a}_{t} ,t} \one\cbr{\mathcal{D}_{t,\ell}(\hat{a}_{t})}  q_{t}(\hat{a}_{t})f_{t}(\hat{a}_{t})  \one\cbr{\overline{\mathcal{U}}_{t-1,\ell}(\hat{a}_{t})}\one\cbr{\overline{\mathcal{U}}_{t-1,\ell}(\hat{a}_{t})}\\
    &\spacex\spacex \one\cbr{\mathcal{V}_{t-1}(\hat{a}_{t})}\vast]\\
    &=\EE \vast[\sum_{t=1}^{n}\sum_{\ell = 1}^{L} \sum_{a \in \mathcal{A}_{t}, a \neq \hat{a}_{t}}\overline{ \Delta}_{a,t}  \one\cbr{\mathcal{D}_{t,\ell}(a)}  q_{t}(a)f_{t}(a)  \one\cbr{\overline{\mathcal{U}}_{t-1,\ell}(a)}\one\cbr{\overline{\mathcal{U}}_{t-1,\ell}(\hat{a}_{t})}\\
    &\spacex\spacex \one\cbr{\mathcal{V}_{t-1}(a)}\vast] \tag{$\one\cbr{\mathcal{V}_{t-1}(\hat{a}_{t})} =0$ }\\
    &\leq \EE\vast[ \sum_{t=1}^{n}\sum_{\ell = 1}^{L} \sum_{a \in \mathcal{A}_{t}}\overline{ \Delta}_{a,t}  \one\cbr{\mathcal{D}_{t,\ell}(a)}  q_{t}(a) \exp\left(- \frac{\hat{\Delta}^2_{a,t}}{2 \beta_{{t-1}}(\delta_{t-1})  \left(\lVert \hat{a}_{t}\rVert_{V_{t-1}^{-1}}^2 + \lVert a\rVert_{V_{t-1}^{-1}}^2 \right)}\right)  \one\cbr{\lVert a \rVert^2_{V_{t-1}^{-1}} < \eps_{\ell}} \tag{ from lemma \ref{sketch-lemma:denominator-expansion-lemma}}\\
    &\spacex\spacex \one\cbr{\lVert \hat{a}_{t} \rVert^2_{V_{t-1}^{-1}} < \eps_{\ell}} \one\cbr{\mathcal{V}_{t-1}(a)}\vast]\\
    &\leq \EE\sbr{\sum_{t=1}^{n}\sum_{\ell = 1}^{L} \sum_{a \in \mathcal{A}_{t}}\overline{ \Delta}_{a,t}  \one\cbr{\mathcal{D}_{t,\ell}(a)}  q_{t}(a) \exp\left(- \frac{\hat{\Delta}^2_{a,t}}{ 2\beta_{{t-1}}(\delta_{t-1}) \left( \eps_{\ell} + \eps_{\ell} \right)}\right)  \one\cbr{\mathcal{V}_{t-1}(a)}} \\
    &= \EE\sbr{\sum_{t=1}^{n}\sum_{\ell = 1}^{L} \sum_{a \in \mathcal{A}_{t}}\overline{ \Delta}_{a,t}  \one\cbr{\mathcal{D}_{t,\ell}(a)}  q_{t}(a) \exp\left(- \frac{\hat{\Delta}^2_{a,t}}{ 4\beta_{{t-1}}(\delta_{t-1})   \eps_{\ell}}\right)  \one\cbr{\hat{\Delta}_{a,t} \geq \frac{\Delta_{a,t}}{1+c}}}\\
    &\leq \EE\sbr{\sum_{t=1}^{n}\sum_{\ell = 1}^{L} \sum_{a \in \mathcal{A}_{t}}\overline{ \Delta}_{a,t}  \one\cbr{\mathcal{D}_{t,\ell}(a)}  q_{t}(a) \exp\left(- \frac{\left(\frac{\Delta_{a,t}}{1+c}\right)^2}{4 \beta_{{t-1}}(\delta_{t-1})   \eps_{\ell} }\right)  }.
\end{align*}
This is where the peeling technique proves beneficial. By applying the peeling technique to $\overline{ \Delta}_{a,t}$, we can effectively cancel out the denominator and numerator as follows:
\begin{align*}  
    F_{11} &\leq \EE\sbr{\sum_{t=1}^{n}\sum_{\ell = 1}^{L} \sum_{a \in \mathcal{A}_{t}}\overline{ \Delta}_{a,t}  \one\cbr{B\cdot 2^{-\ell} \leq\overline{ \Delta}_{a,t}  \leq B\cdot2^{-\ell + 1}}  q_{t}(a) \exp\left(- \frac{\left(\frac{\Delta_{a,t}}{1+c}\right)^2}{4 \beta_{{t-1}}(\delta_{t-1})   \eps_{\ell}}\right)  }\\
    &\leq \EE\sbr{\sum_{t=1}^{n}\sum_{\ell = 1}^{L} \sum_{a \in \mathcal{A}_{t}} B \cdot 2^{-\ell + 1} q_{t}(a) \exp\left(- \frac{\left(\frac{B \cdot 2^{-\ell}}{1+c}\right)^2}{ 4 \beta_{{t-1}}(\delta_{t-1})   \eps_{\ell}}\right)  } \\
    &= \EE\sbr{\sum_{t=1}^{n}\sum_{\ell = 1}^{L} \left( \sum_{a \in \mathcal{A}_{t}} q_{t}(a) \right)  B \cdot  2^{-\ell + 1} \exp\left(- \frac{\left(\frac{B \cdot 2^{-\ell}}{1+c}\right)^2}{ 4  \beta_{{t-1}}(\delta_{t-1})   \eps_{\ell}}\right)  }\\
    &= \EE\sbr{\sum_{t=1}^{n}\sum_{\ell = 1}^{L} B \cdot 2^{-\ell + 1} \exp\left(- \frac{\left(\frac{B \cdot 2^{-\ell}}{1+c}\right)^2}{4\beta_{{t-1}}(\delta_{t-1}) \eps_{\ell}}\right)  } \\
    &= \sum_{t=1}^{n}\sum_{\ell = 1}^{L} B \cdot 2^{-\ell + 1} \exp\left(- \frac{B^2}{ 4\eps \cdot \beta_{{t-1}}(\delta_{t-1})  (1+c)^2}\right) \\
    &= B \cdot  \sum_{t=1}^{n} \exp\left(- \frac{B^2}{ 4\eps \cdot \beta_{{t-1}}(\delta_{t-1}) (1+c)^2}\right) \sum_{\ell = 1}^{L} 2^{-\ell + 1} \\
    &= 2B \cdot \sum_{t=1}^{n} \exp\left(- \frac{B^2}{ 4\eps \cdot \beta_{{t-1}}(\delta_{t-1})  (1+c)^2}\right) \sum_{\ell = 1}^{L} 2^{-\ell} \\
    &\leq 2B \cdot \sum_{t=1}^{n} \exp\left(- \frac{B^2}{ 4\eps \cdot \beta_{{t-1}}(\delta_{t-1})  (1+c)^2}\right) \sum_{\ell = 1}^{\infty} 2^{-\ell} \\
    &= 4B \sum_{t=1}^{n} \exp\left(- \frac{B^2}{ 4\eps \cdot  \beta_{{t-1}}(\delta_{t-1})  (1+c)^2}\right). \tag{ geometric sum }
\end{align*}
Since $\beta_{{t-1}}(\delta_{t-1}) $ is an increasing function in $t$, we can upper bound $\beta_{{t-1}}(\delta_{t-1})$ with $\beta_{{n}}(\delta_n)$ which leads to,
\begin{align*}
    F_{11} &\leq 4B \sum_{t=1}^{n} \exp\left(- \frac{B^2}{ 4\eps \cdot \beta_{{n}}(\delta_n)  (1+c)^2}\right) \\
    &\leq 4B n\exp\left(- \frac{B^2}{ 4\eps \cdot \beta_{{n}}(\delta_n)  (1+c)^2}\right)  \\
    &\leq \FIFI. \tag{choose $c=1$} 
\end{align*}
This concludes the bounding of the term $F_{11}$.

The term $F_{12}$ presents a challenge. Unlike $F_{11}$, we cannot bound the probability directly because  $\lVert \hat{a}_{t}\rVert_{V_{t-1}^{-1}}^2$ is unbounded. However, we know that $\hat{a}_{t}$ is assigned a probability greater than the constant $\alpha_{\text{emp}}$  of being chosen at each round. Additionally, the elliptical potential count provides a bound on the number of times $\lVert A_t \rVert_{V_{t-1}^{-1}}^2$ can exceed $\eps_{\ell}$. Since the empirical best arm is chosen with significant probability, the elliptical potential count also indirectly limits the number of occurrences where$ \lVert \hat{a}_{t}\rVert_{V_{t-1}^{-1}}^2 \geq \eps_{\ell}$ (See Claim \ref{app-lemma:epc-q1-abstract-lemma}). Therefore,
\begin{align*}
    F_{12} &=\EE\sbr{\sum_{t=1}^{n}\sum_{\ell = 1}^{L} \sum_{a \in \mathcal{A}_{t}}\overline{ \Delta}_{a,t}  \one\cbr{\mathcal{D}_{t,\ell}(a)}  q_{t}(a)f_{t}(a)  \one\cbr{\overline{\mathcal{U}}_{t-1,\ell}(a)}\one\cbr{\mathcal{U}_{t-1,\ell}(\hat{a}_{t})} \one\cbr{\mathcal{V}_{t-1}(a)}} \\
    &\leq \EE\sbr{\sum_{t=1}^{n}\sum_{\ell = 1}^{L} \sum_{a \in \mathcal{A}_{t}}\overline{ \Delta}_{a,t}  \one\cbr{\mathcal{D}_{t,\ell}(a)}  q_{t}(a)f_{t}(a) \one\cbr{\mathcal{U}_{t-1,\ell}(\hat{a}_{t})} } \\
    &\leq \EE\sbr{\sum_{t=1}^{n}\sum_{\ell = 1}^{L} \sum_{a \in \mathcal{A}_{t}}\overline{ \Delta}_{a,t}  \one\cbr{\mathcal{D}_{t,\ell}(a)}  q_{t}(a) \one\cbr{\mathcal{U}_{t-1,\ell}(\hat{a}_{t})} } \tag{$f_{t}(a) \leq 1$}\\
    &= \EE\sbr{\sum_{t=1}^{n}\sum_{\ell = 1}^{L} \sum_{a \in \mathcal{A}_{t}}\overline{ \Delta}_{a,t}  \one\cbr{B \cdot 2^{-\ell} \leq\overline{ \Delta}_{a,t}\leq B \cdot 2^{-\ell + 1}}  q_{t}(a) \one\cbr{\mathcal{U}_{t-1,\ell}(\hat{a}_{t})} } \\
    &\leq B \EE\sbr{\sum_{t=1}^{n}\sum_{\ell = 1}^{L} \sum_{a \in \mathcal{A}_{t}} 2^{-\ell + 1} q_{t}(a) \one\cbr{\mathcal{U}_{t-1,\ell}(\hat{a}_{t})} } \\
    &= B\EE\sbr{\sum_{t=1}^{n}\sum_{\ell = 1}^{L} \left( \sum_{a \in \mathcal{A}_{t}}  q_{t}(a) \right) 2^{-\ell + 1} \one\cbr{\mathcal{U}_{t-1,\ell}(\hat{a}_{t})} }  \\
    &= B\EE\sbr{\sum_{t=1}^{n}\sum_{\ell = 1}^{L}  2^{-\ell + 1} \one\cbr{\mathcal{U}_{t-1,\ell}(\hat{a}_{t})} } \\
    &= B\EE\sbr{\sum_{t=1}^{n}\sum_{\ell = 1}^{L}  2^{-\ell + 1} \one\cbr{ \lVert \hat{a}_{t} \rVert^2_{V_{t-1}^{-1}} \geq \eps_{\ell}} } \\
    &= B \sum_{\ell = 1}^{L}  2^{-\ell + 1}  \EE\sbr{  \sum_{t=1}^{n} \one\cbr{ \lVert \hat{a}_{t} \rVert^2_{V_{t-1}^{-1}} \geq \eps_{\ell}} } \\
    &= \frac{B}{\alpha_{\text{emp}}} \sum_{\ell = 1}^{L}  2^{-\ell + 1}  \EE\sbr{  \sum_{t=1}^{n} \one\cbr{ \lVert A_t \rVert^2_{V_{t-1}^{-1}} \geq \eps_{\ell}} }. \tag{by Claim \ref{app-lemma:epc-q1-abstract-lemma}}
\end{align*}
From this point onward, the remaining results follow directly from $D_2$ after applying  Lemma~\ref{app-lemma:epc-lemma},
\begin{align*}
    F_{12} &\leq \FIFII.
\end{align*}
This concludes the bounding of the term $F_{12}$.

By combining the bounds obtained for $F_{11}$ and $F_{12}$, we can now establish a bound for $F_1$.
\begin{align*}
    F_1 &\leq \frac{1}{\alpha_{\text{emp}}}F_{11} + \frac{1}{\alpha_{\text{emp}}}F_{12}\\
    &\leq  \FI.
\end{align*}

This concludes the bounding of the term $F_{1}$.

We now proceed to analyze the term $F_2$. Specifically, we will further decompose $F_{2}$ into two components, $F_{21}$ and $F_{22}$, based on the occurrence of the favorable event $\mathcal{G}_1$ as follows:

\begin{align*}
    F2 &=\EE\sbr{\sum_{t=1}^{n}\sum_{\ell = 1}^{L}\overline{ \Delta}_{t} \one\cbr{\mathcal{D}_{t,\ell}(A_t)} \onec{ A_t \neq a^*_t}  \one\cbr{\overline{\mathcal{U}}_{t-1,\ell}(A_t)} \one\cbr{\overline{\mathcal{V}}_{t-1}(A_t)} \one\cbr{\mathcal{W}_{t-1,\ell}}\one \cbr{ \overline{\mathcal{E}}_t }}\\
    &\leq \EE\sbr{\sum_{t=1}^{n}\sum_{\ell = 1}^{L}\overline{ \Delta}_{t} \one\cbr{\mathcal{D}_{t,\ell}(A_t)} \onec{ A_t \neq a^*_t}  \one\cbr{\overline{\mathcal{U}}_{t-1,\ell}(A_t)} \one\cbr{\overline{\mathcal{V}}_{t-1}(A_t)} \one\cbr{\mathcal{W}_{t-1,\ell}}}\\
    &=\underbrace{\EE\sbr{\sum_{t=1}^{n}\sum_{\ell = 1}^{L}\overline{ \Delta}_{t} \one\cbr{\mathcal{D}_{t,\ell}(A_t)} \onec{ A_t \neq a^*_t}  \one\cbr{\mathcal{G}_1}\one\cbr{\overline{\mathcal{U}}_{t-1,\ell}(A_t)} \one\cbr{\overline{\mathcal{V}}_{t-1}(A_t)} \one\cbr{\mathcal{W}_{t-1,\ell}}}}_{F_{21}}\\
    &\spacex +  \underbrace{\EE\sbr{\sum_{t=1}^{n}\sum_{\ell = 1}^{L}\overline{ \Delta}_{t} \one\cbr{\mathcal{D}_{t,\ell}(A_t)} \onec{ A_t \neq a^*_t} \one\cbr{\overline{\mathcal{G}}_1} \one\cbr{\overline{\mathcal{U}}_{t-1,\ell}(A_t)} \one\cbr{\overline{\mathcal{V}}_{t-1}(A_t)} \one\cbr{\mathcal{W}_{t-1,\ell}}}}_{F_{22}}.
\end{align*}
In the term $F_{22}$, the favorable event $\mathcal{G}_1$ does not occur. Given the low probability of this occurrence, we can bound $F_{22}$ quite straightforwardly as follows:
\begin{align*}
    F_{22} &= \EE\sbr{\sum_{t=1}^{n}\sum_{\ell = 1}^{L}\overline{ \Delta}_{t} \one\cbr{\mathcal{D}_{t,\ell}(A_t)} \onec{ A_t \neq a^*_t} \one\cbr{\overline{\mathcal{G}}_1} \one\cbr{\overline{\mathcal{U}}_{t-1,\ell}(A_t)} \one\cbr{\overline{\mathcal{V}}_{t-1}(A_t)} \one\cbr{\mathcal{W}_{t-1,\ell}}}\\
    &\leq B\EE\sbr{\sum_{t=1}^{n} \one\cbr{\overline{\mathcal{G}}_1}  }\\
    &\leq B\sum_{t=1}^{n} \EE\sbr{\one\cbr{\overline{\mathcal{G}}_1}  }\\
    &\leq B\sum_{t=1}^{n} \PP\left(\overline{\mathcal{G}}_1  \right)\\
     &\leq B\sum_{t=1}^{n} \delta_t \tag{Lemma \ref{app-lemma:OFUL-conf-bound-lemma}}\\
     &=  B\sum_{t=1}^{n} \frac{1}{t+1} \tag{$\delta_t = \frac{1}{t+1}$} \\
    &\leq  \FIIFII.  
\end{align*}
This concludes the bounding of the term $F_{22}$.

However, the term $F_{21}$ is more complex. It encompasses the following primary events: 
It has the following main events:
\begin{enumerate}
	\item $\mathcal{W}_{t-1,\ell}  = \cbr{ \max_{a^{'} \in \mathcal{A}_{t}} \langle \hat{\theta}_{t-1}, a^{'} \rangle \geq \langle \theta^*, a^*_t \rangle - \eps_{2, \ell} }$. Intuitively, this event signifies that the estimated reward is sufficiently close to the maximum achievable reward. 
	\item $\mathcal{V}_{t-1}(a) = \cbr{\hat{\Delta}_{a,t} \geq \frac{\Delta_{a,t}}{1+c}}$.
	\item $\mathcal{G}_1$.
\end{enumerate} 
Through a series of algebraic manipulations and parameter tuning, we demonstrate that the occurrence of all three events is contingent upon the condition $\lVert A_t \rVert_{V_{t-1}^{-1}}^2 \geq  \frac{2^{-2\ell}}{ 16 \beta_{t-1}(\delta_{t-1})}$. This condition can be effectively managed using the elliptical potential count.

\begin{align*}
    F_{21} &= \EE\sbr{\sum_{t=1}^{n}\sum_{\ell = 1}^{L}\overline{ \Delta}_{t} \one\cbr{\mathcal{D}_{t,\ell}(A_t)} \onec{ A_t \neq a^*_t}  \one\cbr{\mathcal{G}_1}\one\cbr{\overline{\mathcal{U}}_{t-1,\ell}(A_t)} \one\cbr{\overline{\mathcal{V}}_{t-1}(A_t)} \one\cbr{\mathcal{W}_{t-1,\ell}}}\\
    &\leq \EE\sbr{\sum_{t=1}^{n}\sum_{\ell = 1}^{L}\overline{ \Delta}_{t} \one\cbr{\mathcal{D}_{t,\ell}(A_t)}  \one\cbr{\mathcal{G}_1} \one\cbr{\overline{\mathcal{V}}_{t-1}(A_t)} \one\cbr{\mathcal{W}_{t-1,\ell}}}\\
    &= \EE\sbr{\sum_{t=1}^{n}\sum_{\ell = 1}^{L}\overline{ \Delta}_{t} \one\cbr{\mathcal{D}_{t,\ell}(A_t)}   \one\cbr{\mathcal{G}_1}\one\cbr{\hat{\Delta}_{A_t,t} < \frac{\Delta_{A_t,t}}{1+c}} \one\cbr{\mathcal{W}_{t-1,\ell}} }.
\end{align*}
From this point onward, we adopt a proof style in which, if the occurrence of event $A$ implies the occurrence of event $B$ that is,
\begin{align*}
	A \implies B,
\end{align*}
then, we have
\begin{align*}
	\EE\sbr{\onec{A}}  \leq \EE\sbr{\onec{B}} ~.
\end{align*}
Similarly, if the occurrence of 2 events $A,B$ implies the occurrence of a third event $C$ that is,
\begin{align*}
	A , B \implies C,
\end{align*}
then, we have
\begin{align*}
	\EE\sbr{\onec{A}\onec{B}}  \leq \EE\sbr{\onec{C}}.
\end{align*} 
This approach to writing mathematical expressions facilitates a reduction in verbosity within the proof. Now, moving on to $F_{21}$,
\begin{align*}
   F_{21} &= \EE\bigg[\sum_{t=1}^{n}\sum_{\ell = 1}^{L}\overline{ \Delta}_{t} \one\cbr{\mathcal{D}_{t,\ell}(A_t)} \one\cbr{\mathcal{G}_1}\one\cbr{\max_{a^{'} \in \mathcal{A}_{t}} \langle \hat{\theta}_{t-1}, a^{'} \rangle - \langle \hat{\theta}_{t-1}, A_t \rangle < \frac{\Delta_{A_t,t}}{1+c} } \\
    &\spacex\spacex	\one\cbr{\max_{a^{'} \in \mathcal{A}_{t}} \langle \hat{\theta}_{t-1}, a^{'} \rangle \geq \langle \theta^*, a^*_t \rangle - \eps_{2, \ell} }\bigg]\\
    &\leq \EE\sbr{\sum_{t=1}^{n}\sum_{\ell = 1}^{L}\overline{ \Delta}_{t} \one\cbr{\mathcal{D}_{t,\ell}(A_t)} \one\cbr{\mathcal{G}_1}\one\cbr{\langle \theta^*, a^*_t \rangle - \eps_{2, \ell}  - \langle \hat{\theta}_{t-1}, A_t \rangle  < \frac{\Delta_{A_t,t}}{1+c} } }\\
    &= \EE\sbr{\sum_{t=1}^{n}\sum_{\ell = 1}^{L}\overline{ \Delta}_{t} \one\cbr{\mathcal{D}_{t,\ell}(A_t)} \one\cbr{\mathcal{G}_1}\one\cbr{\langle \theta^*, a^*_t \rangle - \langle \theta^*, A_t \rangle - \eps_{2, \ell} + \langle \theta^*, A_t \rangle  - \langle \hat{\theta}_{t-1}, A_t \rangle  < \frac{\Delta_{A_t,t}}{1+c} } }\\
    &= \EE\sbr{\sum_{t=1}^{n}\sum_{\ell = 1}^{L}\overline{ \Delta}_{t} \one\cbr{\mathcal{D}_{t,\ell}(A_t)} \one\cbr{\mathcal{G}_1}\one\cbr{\Delta_{A_t,t} - \eps_{2, \ell} + \langle \theta^*, A_t \rangle  - \langle \hat{\theta}_{t-1}, A_t \rangle  < \frac{\Delta_{A_t,t}}{1+c} } }\\
    &= \EE\sbr{\sum_{t=1}^{n}\sum_{\ell = 1}^{L}\overline{ \Delta}_{t} \one\cbr{\mathcal{D}_{t,\ell}(A_t)} \one\cbr{\mathcal{G}_1}\one\cbr{ \frac{c\Delta_{A_t,t}}{1+c} - \eps_{2, \ell}   <  \langle \hat{\theta}_{t-1} - \theta^* , A_t \rangle  } }\\
    &= \EE\sbr{\sum_{t=1}^{n}\sum_{\ell = 1}^{L}\overline{ \Delta}_{t} \one\cbr{\mathcal{D}_{t,\ell}(A_t)} \one\cbr{\mathcal{G}_1}\one\cbr{ \frac{\Delta_{A_t,t}}{2} - \eps_{2} \cdot 2^{-\ell}   <  \langle \hat{\theta}_{t-1} - \theta^* , A_t \rangle  } } \tag{choose $c=1$}\\
    &= \EE\sbr{\sum_{t=1}^{n}\sum_{\ell = 1}^{L}\overline{ \Delta}_{t} \one\cbr{B \cdot 2^{-\ell} \leq\overline{ \Delta}_{t} \leq B \cdot 2^{-\ell + 1}} \one\cbr{\mathcal{G}_1}\one\cbr{ \frac{\Delta_{A_t,t}}{2} - \eps_{2} \cdot 2^{-\ell}   <  \langle \hat{\theta}_{t-1} - \theta^* , A_t \rangle  } } \\
    &\leq \EE\sbr{\sum_{t=1}^{n}\sum_{\ell = 1}^{L} B \cdot 2^{-\ell + 1} \one\cbr{B \cdot 2^{-\ell} \leq\overline{ \Delta}_{t} \leq B \cdot 2^{-\ell + 1}} \one\cbr{\mathcal{G}_1}\one\cbr{ \frac{\Delta_{A_t,t}}{2} - \eps_{2} \cdot 2^{-\ell}   <  \langle \hat{\theta}_{t-1} - \theta^* , A_t \rangle  } } \\
    &\leq \EE\sbr{\sum_{t=1}^{n}\sum_{\ell = 1}^{L} B \cdot2^{-\ell + 1}\one\cbr{\mathcal{G}_1}\one\cbr{ \frac{B \cdot 2^{-\ell}}{2} - \eps_{2} \cdot 2^{-\ell}   <  \langle \hat{\theta}_{t-1} - \theta^* , A_t \rangle  } } \\
    &= \EE\sbr{\sum_{t=1}^{n}\sum_{\ell = 1}^{L} B \cdot2^{-\ell + 1}\one\cbr{\mathcal{G}_1}\one\cbr{ \frac{B \cdot 2^{-\ell}}{4}  <  \langle \hat{\theta}_{t-1} - \theta^* , A_t \rangle  } } \tag{choose $\eps_{2} = \frac{B}{4}$}\\
    &\leq \EE\sbr{\sum_{t=1}^{n}\sum_{\ell = 1}^{L} B \cdot2^{-\ell + 1}\one\cbr{\mathcal{G}_1}\one\cbr{ \frac{B^2 \cdot 2^{-2\ell}}{16}  < \lVert \hat{\theta}_{t-1} - \theta^* \rVert_{V_{t-1}}^2\lVert A_t \rVert_{V_{t-1}^{-1}}^2  } } \tag{Cauchy-Schwartz}\\
    &= \EE\sbr{\sum_{t=1}^{n}\sum_{\ell = 1}^{L} B \cdot2^{-\ell + 1}\one\cbr{\lVert \theta^* - \hat{\theta}_{t-1}\rVert_{V_{t-1}}^2 \leq \beta^{*}_{t-1}(\delta_{t-1})}\one\cbr{ \frac{B^2 \cdot 2^{-2\ell}}{16}  < \lVert \hat{\theta}_{t-1} - \theta^* \rVert_{V_{t-1}}^2\lVert A_t \rVert_{V_{t-1}^{-1}}^2  } } \\
    &\leq \EE\sbr{\sum_{t=1}^{n}\sum_{\ell = 1}^{L} B \cdot2^{-\ell + 1}\one\cbr{\lVert A_t \rVert_{V_{t-1}^{-1}}^2 \geq  \frac{B^2 \cdot2^{-2\ell}}{ 16  \beta^{*}_{t-1}(\delta_{t-1})}  } } \\
    &= \sum_{\ell = 1}^{L} B \cdot 2^{-\ell + 1} \EE\sbr{\sum_{t=1}^{n} \one\cbr{\lVert A_t \rVert_{V_{t-1}^{-1}}^2 \geq  \frac{B^2 \cdot 2^{-2\ell}}{ 16 \beta^{*}_{t-1}(\delta_{t-1})}  } }.
\end{align*}

We can apply the EPC from Lemma \ref{app-lemma:epc-lemma} directly only when  $\frac{B^2 \cdot2^{-2\ell}}{ 16 \beta^{*}_{t-1}(\delta_{t-1})} \leq 1$. Consequently, we must analyze it in two cases as follows:

\begin{align*}
	F_{21} &\leq \sum_{\ell = 1}^{L} B \cdot 2^{-\ell + 1} \EE\sbr{\sum_{t=1}^{n} \one\cbr{\lVert A_t \rVert_{V_{t-1}^{-1}}^2 \geq  \frac{B^2 \cdot2^{-2\ell}}{ 16 \beta^{*}_{t-1}(\delta_{t-1})}  } } \\
	&= \sum_{\ell = 1}^{L}B \cdot 2^{-\ell + 1} \EE\sbr{\sum_{t=1}^{n} \one\cbr{\lVert A_t \rVert_{V_{t-1}^{-1}}^2 \geq  \frac{B^2 \cdot 2^{-2\ell}}{ 16 \beta^{*}_{t-1}(\delta_{t-1})}  } \left( \one\cbr{\frac{B^2 \cdot 2^{-2\ell}}{ 16 \beta^{*}_{t-1}(\delta_{t-1})}  \leq 1 } + \one\cbr{\frac{B^2 \cdot 2^{-2\ell}}{ 16 \beta^{*}_{t-1}(\delta_{t-1})}  >  1 } \right)} \\
	&= \sum_{\ell = 1}^{L} B \cdot 2^{-\ell + 1} \EE\sbr{\sum_{t=1}^{n} \one\cbr{\lVert A_t \rVert_{V_{t-1}^{-1}}^2 \geq  \frac{B^2 \cdot 2^{-2\ell}}{ 16 \beta^{*}_{t-1}(\delta_{t-1})}  }  \one\cbr{\frac{B^2 \cdot 2^{-2\ell}}{ 16 \beta^{*}_{t-1}(\delta_{t-1})}  \leq 1 }} \\
	&\spacex + \sum_{\ell = 1}^{L} B \cdot 2^{-\ell + 1} \EE\sbr{\sum_{t=1}^{n} \one\cbr{\lVert A_t \rVert_{V_{t-1}^{-1}}^2 \geq  \frac{B^2 \cdot 2^{-2\ell}}{ 16 \beta^{*}_{t-1}(\delta_{t-1})}  }  \one\cbr{\frac{B^2 \cdot 2^{-2\ell}}{ 16 \beta^{*}_{t-1}(\delta_{t-1})}  > 1 }} \\
	&\leq  \sum_{\ell = 1}^{L} B \cdot 2^{-\ell + 1} \cdot 3 \frac{d}{\frac{B^2 \cdot 2^{-2\ell}}{ 16 \beta^{*}_{t-1}(\delta_{t-1})} } \log \left( 1 + \frac{2}{\lambda \frac{B^2 \cdot 2^{-2\ell}}{ 16 \beta^{*}_{t-1}(\delta_{t-1})} }\right) \tag{by lemma \ref{app-lemma:epc-lemma}}\\
	&\spacex + \sum_{\ell = 1}^{L} B \cdot 2^{-\ell + 1} \EE\sbr{\sum_{t=1}^{n} \one\cbr{\lVert A_t \rVert_{V_{t-1}^{-1}}^2 \geq 1  } } \\
	&\leq \frac{192  \beta^{*}_{{t-1}}(\delta_{t-1})d}{B \cdot 2^{-L}} \log\left( 1 + \frac{32\beta^{*}_{{t-1}}(\delta_{t-1})}{\lambda B^2 \cdot 2^{-2L}}\right) +  \sum_{\ell = 1}^{L} B \cdot 2^{-\ell + 1} \cdot 3d \log \left( 1 + \frac{2}{\lambda}\right) \tag{by lemma \ref{app-lemma:epc-lemma}}\\
	&\leq \FIIFI.
\end{align*}
This concludes the bounding of the term $F_{21}$.

By combining the bounds obtained for $F_{21}$ and $F_{22}$, we can now establish a bound for $F_2$.

\begin{align*}
    F_2 &= F_{21} + F_{22}\\
    &\leq  \FII.
\end{align*}

This concludes the bounding of the term $F_{2}$.

We now proceed to analyze the term $F_3$. Similar to $F_2$, we will further decompose $F_{3}$ into two components, $F_{31}$ and $F_{32}$, based on the occurrence of the favorable event $\mathcal{G}_1$ as follows:

\begin{align*}
    F_3 &= \EE\sbr{\sum_{t=1}^{n}\sum_{\ell = 1}^{L}\overline{ \Delta}_{t} \one\cbr{\mathcal{D}_{t,\ell}(A_t)} \onec{ A_t \neq a^*_t}  \one\cbr{\overline{\mathcal{U}}_{t-1,\ell}(A_t)} \one\cbr{\overline{\mathcal{V}}_{t-1}(A_t)}\one\cbr{\overline{\mathcal{W}}_{t-1,\ell}}\one \cbr{ \overline{\mathcal{E}}_t }}\\
    &=\underbrace{\EE\sbr{\sum_{t=1}^{n}\sum_{\ell = 1}^{L}\overline{ \Delta}_{t} \one\cbr{\mathcal{D}_{t,\ell}(A_t)} \onec{ A_t \neq a^*_t}\one\cbr{\mathcal{G}_1}  \one\cbr{\overline{\mathcal{U}}_{t-1,\ell}(A_t)} \one\cbr{\overline{\mathcal{V}}_{t-1}(A_t)}\one\cbr{\overline{\mathcal{W}}_{t-1,\ell}}\one \cbr{ \overline{\mathcal{E}}_t }}}_{F_{31}}\\
    &\spacex + \underbrace{\EE\sbr{\sum_{t=1}^{n}\sum_{\ell = 1}^{L}\overline{ \Delta}_{t} \one\cbr{\mathcal{D}_{t,\ell}(A_t)} \onec{ A_t \neq a^*_t}\one\cbr{\overline{\mathcal{G}}_1}  \one\cbr{\overline{\mathcal{U}}_{t-1,\ell}(A_t)} \one\cbr{\overline{\mathcal{V}}_{t-1}(A_t)}\one\cbr{\overline{\mathcal{W}}_{t-1,\ell}}\one \cbr{ \overline{\mathcal{E}}_t }}}_{F_{32}}.
\end{align*}

Analogous to $F_{22}$, we can bound $F_{32}$ with relative ease; therefore, we shall omit the details.

\begin{align*}
    F_{32} &\leq \FIIIFII. \tag{$\delta_t = \frac{1}{t+1}$}
\end{align*}

The analysis of the term  $F_{31}$ is the most complex and intricate among the terms. Initially, we exclude certain terms that are not pertinent to the specific analysis approach we will employ for $F_{31}$

\begin{align*}
    F_{31} &= \EE\sbr{\sum_{t=1}^{n}\sum_{\ell = 1}^{L}\overline{ \Delta}_{t} \one\cbr{\mathcal{D}_{t,\ell}(A_t)} \onec{ A_t \neq a^*_t}\one\cbr{\mathcal{G}_1}  \one\cbr{\overline{\mathcal{U}}_{t-1,\ell}(A_t)} \one\cbr{\overline{\mathcal{V}}_{t-1}(A_t)}\one\cbr{\overline{\mathcal{W}}_{t-1,\ell}}\one \cbr{ \overline{\mathcal{E}}_t } } \\
    &\leq \EE\sbr{\sum_{t=1}^{n}\sum_{\ell = 1}^{L}\overline{ \Delta}_{t} \one\cbr{\mathcal{D}_{t,\ell}(A_t)} \one\cbr{\mathcal{G}_1} \one\cbr{\overline{\mathcal{W}}_{t-1,\ell}}\one \cbr{ \overline{\mathcal{E}}_t }}.
\end{align*}

In the derived simplified version above, it is evident that below 3 events are occurring, each serving a significant purpose in the analysis.
\begin{enumerate}
	\item $ \mathcal{G}_1$
	\item $\overline{\mathcal{W}}_{t-1,\ell}$
	\item $\overline{\mathcal{E}}_t $
\end{enumerate}
Our primary objective is to bound the probability of the event $\overline{\mathcal{W}}_{t-1,\ell}$ occurring in conjunction with the other two events.
\begin{align*}
 F_{31}	&\leq \EE\sbr{\sum_{t=1}^{n}\sum_{\ell = 1}^{L}\overline{ \Delta}_{t} \one\cbr{2^{-\ell} \leq\overline{ \Delta}_{t} \leq 2^{-\ell + 1}} \one\cbr{\mathcal{G}_1} \one\cbr{\overline{\mathcal{W}}_{t-1,\ell}}\one \cbr{ \overline{\mathcal{E}}_t } }\\
    &\leq \EE\sbr{\sum_{t=1}^{n}\sum_{\ell = 1}^{L} B \cdot 2^{-\ell + 1} \one\cbr{\mathcal{G}_1} \one\cbr{\overline{\mathcal{W}}_{t-1,\ell}}\one \cbr{ \overline{\mathcal{E}}_t }}\\
    &= \EE\sbr{\sum_{t=1}^{n}\sum_{\ell = 1}^{L} B\cdot 2^{-\ell + 1} \one\cbr{\mathcal{G}_1} \one\cbr{ \max_{a^{'} \in \mathcal{A}_{t}} \langle \hat{\theta}_{t-1}, a^{'} \rangle < \langle \theta^*, a^*_t \rangle - \eps_{2, \ell}}\one \cbr{ \overline{\mathcal{E}}_t }}\\
    &\leq \EE\sbr{\sum_{t=1}^{n}\sum_{\ell = 1}^{L} B \cdot2^{-\ell + 1} \one\cbr{\mathcal{G}_1} \one\cbr{ \langle \hat{\theta}_{t-1}, a^*_t \rangle < \langle \theta^*, a^*_t \rangle - \eps_{2, \ell}} \one \cbr{ \overline{\mathcal{E}}_t } }\\
    &= \EE\sbr{\sum_{t=1}^{n}\sum_{\ell = 1}^{L} B \cdot2^{-\ell + 1} \one\cbr{\mathcal{G}_1} \one\cbr{ \eps_{2, \ell} \leq \langle \theta^*-\hat{\theta}_{t-1}, a^*_t \rangle  } \one \cbr{ \overline{\mathcal{E}}_t } }\\
    &\leq \EE\sbr{\sum_{t=1}^{n}\sum_{\ell = 1}^{L} B \cdot2^{-\ell + 1} \one\cbr{\mathcal{G}_1} \one\cbr{ \eps_{2, \ell} \leq \lVert \theta^*-\hat{\theta}_{t-1} \rVert_{V(p_t)} \lVert a^{*} \rVert_{V(p_t)^{-1}}  } \one \cbr{ \overline{\mathcal{E}}_t } } \tag{Cauchy-Schwartz}.
\end{align*} 
In the aforementioned derivation, it can be anticipated that the terms $\lVert \theta^*-\hat{\theta}_{t-1} \rVert_{V(p_t)}$ and $ \lVert a^{*} \rVert_{V(p_t)^{-1}} $ cannot assume significantly large values due to the following reasons:
\begin{enumerate}
	\item $\lVert \theta^*-\hat{\theta}_{t-1} \rVert_{V(p_t)}$ - The online learning paradigm necessitates that $\hat{\theta}_{t-1}$ remains sufficiently close to $\theta^*$.
	\item $ \lVert a^{*} \rVert_{V(p_t)^{-1}} $ - The $\mathrm{ApproxDesign}()$ ensures that exploration is adequately conducted in all relevant directions.
\end{enumerate}

To facilitate easy understanding of the proof, we first bound $ \lVert a^{*} \rVert_{V(p_t)^{-1}} $ by leveraging guarantees from the $\mathrm{ApproxDesign}()$, which is established in detail in Lemma \ref{sketch-lemma:leverage-score-lemma}.
\begin{align*}
    F_{31} &\leq \EE\sbr{\sum_{t=1}^{n}\sum_{\ell = 1}^{L} B \cdot 2^{-\ell + 1} \one\cbr{\mathcal{G}_1} \one\cbr{ \eps_{2, \ell} \leq \lVert \theta^*-\hat{\theta}_{t-1} \rVert_{V(p_t)} \cdot \sqrt{\frac{2}{\alpha_{\mathrm{opt}}} \exp \left(\frac{ \lVert \hat{\theta}_{t-1} - \theta^* \rVert_{V_{t-1}}^2}{\beta_{t-1}(\delta_{t-1}) }\right)\cdot C_{\mathrm{opt}}\cdot d\log(d)}}\one \cbr{ \overline{\mathcal{E}}_t } }. \tag{by Lemma \ref{sketch-lemma:leverage-score-lemma}}
\end{align*}
Next, we utilize the fact that $\mathcal{G}_1 = \vast \{ \lVert \theta^* - \hat{\theta}_{t-1}\rVert_{V_{t-1}}^2 \leq \beta^{*}_{t-1}(\delta_{t-1}), \spacex \forall t \geq 1\vast \}$ to simplify the analysis further as follows:
\begin{align*}
    F_{31}  &\leq \EE\sbr{\sum_{t=1}^{n}\sum_{\ell = 1}^{L} B \cdot 2^{-\ell + 1} \one\cbr{ \eps_{2, \ell} \leq \lVert \theta^*-\hat{\theta}_{t-1} \rVert_{V(p_t)} \cdot \sqrt{\frac{2}{\alpha_{\mathrm{opt}}} \exp \left(\frac{\beta^{*}_{t-1}(\delta_{t-1})}{\beta_{t-1}(\delta_{t-1})}\right)\cdot C_{\mathrm{opt}}\cdot d\log(d) }}  \one \cbr{ \overline{\mathcal{E}}_t }} \\
    &=  \sum_{\ell = 1}^{L} B \cdot 2^{-\ell + 1} \EE\sbr{\sum_{t=1}^{n}\one\cbr{ \eps_{2, \ell} \leq \lVert \theta^*-\hat{\theta}_{t-1} \rVert_{V(p_t)} \cdot \sqrt{\frac{2}{\alpha_{\mathrm{opt}}} \exp \left(\frac{\beta^{*}_{t-1}(\delta_{t-1})}{\beta_{t-1}(\delta_{t-1})}\right)\cdot C_{\mathrm{opt}}\cdot d\log(d)} }\one \cbr{ \overline{\mathcal{E}}_t } } \\
    &=  \sum_{\ell = 1}^{L} B \cdot 2^{-\ell + 1} \sum_{t=1}^{n}\EE\sbr{ \one\cbr{ \eps_{2, \ell} \leq \lVert \theta^*-\hat{\theta}_{t-1} \rVert_{V(p_t)} \cdot \sqrt{\frac{2}{\alpha_{\mathrm{opt}}} \exp \left(\frac{\beta^{*}_{t-1}(\delta_{t-1})}{\beta_{t-1}(\delta_{t-1})}\right) \cdot C_{\mathrm{opt}}\cdot d\log(d)} } \one \cbr{ \overline{\mathcal{E}}_t } } \\
    &=  \sum_{\ell = 1}^{L} B \cdot 2^{-\ell + 1} \sum_{t=1}^{n}\EE\sbr{ \one\cbr{\lVert \theta^*-\hat{\theta}_{t-1} \rVert_{V(p_t)}^2  \geq \frac{\eps_{2, \ell}^2 }{ \frac{2}{\alpha_{\mathrm{opt}}} \exp \left(\frac{\beta^{*}_{t-1}(\delta_{t-1})}{\beta_{t-1}(\delta_{t-1})}\right) \cdot C_{\mathrm{opt}}\cdot d\log(d)}  } \one \cbr{ \overline{\mathcal{E}}_t } }.
\end{align*}
By the definition of $V(p_t)$, we have $\lVert \theta^*-\hat{\theta}_{t-1} \rVert_{V(p_t)}^2 =\EE_{A_t \sim p_t} \sbr{ \left( \left( \theta^* - \hat{\theta}_{t-1}\right)A_t^{\T}\right)^2} $.
\begin{align*}
     F_{31}  &\leq  \sum_{\ell = 1}^{L} B \cdot 2^{-\ell + 1} \sum_{t=1}^{n}\EE\sbr{ \one\cbr{ \EE_{A_t \sim p_t} \sbr{ \left( \left( \theta^* - \hat{\theta}_{t-1}\right)A_t^{\T}\right)^2} \geq \frac{\eps_{2, \ell}^2 }{ \frac{2}{\alpha_{\mathrm{opt}}} \exp \left(\frac{\beta^{*}_{t-1}(\delta_{t-1})}{\beta_{t-1}(\delta_{t-1})}\right) \cdot C_{\mathrm{opt}}\cdot d\log(d)}  } \one \cbr{ \overline{\mathcal{E}}_t } } \tag{Claim \ref{app-claim:confident-interval-equi-expectation-lemma}}\\
    &=  \sum_{\ell = 1}^{L} B \cdot 2^{-\ell + 1} \sum_{t=1}^{n}\EE \vast[ \one\cbr{ \sum_{a \in \mathcal{A}_{t}} p_t(a)  \sbr{ \left( \left( \theta^* - \hat{\theta}_{t-1}\right)a^{\T}\right)^2} \geq \frac{\eps_{2, \ell}^2 }{ \frac{2}{\alpha_{\mathrm{opt}}} \exp \left(\frac{\beta^{*}_{t-1}(\delta_{t-1})}{\beta_{t-1}(\delta_{t-1})}\right) \cdot C_{\mathrm{opt}}\cdot d\log(d)}  } \\
    &\spacex \one \cbr{ \forall a \in \mathcal{A}_t , \lVert a\rVert_{V_{t-1}^{-1}}^2 \leq 1  } \vast].
\end{align*}
The event  $\overline{\mathcal{E}}_t $ implies that, for all the arms in $\mathcal{A}_t$ their leverage score is bounded above by $1$. Consequently, we can confidently incorporate the index function $\one\cbr{\lVert a\rVert_{V_{t-1}^{-1}}^2 \leq 1  }$ within the summation ($\sum_{a \in \mathcal{A}_{t}} $) without affecting the analysis.
\begin{align*}
    F_{31}  &\leq  \sum_{\ell = 1}^{L} B \cdot 2^{-\ell + 1} \sum_{t=1}^{n}\EE \sbr{\one\cbr{ \sum_{a \in \mathcal{A}_{t}} p_t(a)  \sbr{ \left( \left( \theta^* - \hat{\theta}_{t-1}\right)a^{\T}\right)^2} \one\cbr{\lVert a\rVert_{V_{t-1}^{-1}}^2 \leq 1  }\geq \frac{\eps_{2, \ell}^2 }{ \frac{2}{\alpha_{\mathrm{opt}}} \exp \left(\frac{\beta^{*}_{t-1}(\delta_{t-1})}{\beta_{t-1}(\delta_{t-1})}\right) \cdot C_{\mathrm{opt}}\cdot d\log(d)}  } }.
\end{align*}
Furthermore, it is evident that
\begin{align*}
	 \sum_{a \in \mathcal{A}_{t}} p_t(a)  \sbr{ \left( \left( \theta^* - \hat{\theta}_{t-1}\right)a^{\T}\right)^2} \one\cbr{\lVert a\rVert_{V_{t-1}^{-1}}^2 \leq 1  } = \EE_{A_t \sim p_t}  \sbr{ \left( \left( \theta^* - \hat{\theta}_{t-1}\right)A_t^{\T}\right)^2 \one\cbr{\lVert A_t \rVert_{V_{t-1}^{-1}}^2 \leq 1  } }.
\end{align*}
Hence,
\begin{align*}
    F_{31}  &\leq \sum_{\ell = 1}^{L} B \cdot 2^{-\ell + 1} \sum_{t=1}^{n}\EE \sbr{\one\cbr{ \EE_{A_t \sim p_t}  \sbr{ \left( \left( \theta^* - \hat{\theta}_{t-1}\right)A_t^{\T}\right)^2 \one\cbr{\lVert A_t \rVert_{V_{t-1}^{-1}}^2 \leq 1  } }\geq \frac{\eps_{2, \ell}^2 }{ \frac{2}{\alpha_{\mathrm{opt}}} \exp \left(\frac{\beta^{*}_{t-1}(\delta_{t-1})}{\beta_{t-1}(\delta_{t-1})}\right) \cdot C_{\mathrm{opt}}\cdot d\log(d)}  } } \\
    &=  \sum_{\ell = 1}^{L} B \cdot 2^{-\ell + 1} \sum_{t=1}^{n}\EE\sbr{  \one\cbr{ \EE_{t-1} \sbr{ \left( \left( \theta^* - \hat{\theta}_{t-1}\right)A_t^{\T}\right)^2 \one\cbr{\lVert A_t \rVert_{V_{t-1}^{-1}}^2 \leq 1  }  } \geq \frac{\eps_{2, \ell}^2 }{ \frac{2}{\alpha_{\mathrm{opt}}} \exp \left(\frac{\beta^{*}_{t-1}(\delta_{t-1})}{\beta_{t-1}(\delta_{t-1})}\right)  \cdot C_{\mathrm{opt}}\cdot d\log(d)}  } } \\
    &=  \sum_{\ell = 1}^{L} B \cdot  2^{-\ell + 1} \sum_{t=1}^{n}\PP \left( \EE_{t-1} \sbr{ \left( \left( \theta^* - \hat{\theta}_{t-1}\right)A_t^{\T}\right)^2 \one\cbr{\lVert A_t \rVert_{V_{t-1}^{-1}}^2 \leq 1  }  } \geq \frac{\eps_{2, \ell}^2 }{ \frac{2}{\alpha_{\mathrm{opt}}} \exp \left(\frac{\beta^{*}_{t-1}(\delta_{t-1})}{\beta_{t-1}(\delta_{t-1})}\right) \cdot C_{\mathrm{opt}}\cdot d\log(d)}  \right).
\end{align*}
Applying Markov's inequality to the expression above, we can derive the following bound:
\begin{align*}   
    F_{31}  &\leq  \sum_{\ell = 1}^{L} B \cdot  2^{-\ell + 1} \sum_{t=1}^{n} \frac{ \EE \sbr{\EE_{t-1} \sbr{\left( \left( \theta^* - \hat{\theta}_{t-1}\right)A_t^{\T}\right)^2 \one\cbr{\lVert A_t \rVert_{V_{t-1}^{-1}}^2 \leq 1  } }}}{ \frac{\eps_{2, \ell}^2 }{ \frac{2}{\alpha_{\mathrm{opt}}}\exp \left(\frac{\beta^{*}_{t-1}(\delta_{t-1})}{\beta_{t-1}(\delta_{t-1})}\right) \cdot C_{\mathrm{opt}}\cdot d\log(d)}  } \tag{Markov's Inequality}\\
    &\leq \sum_{\ell = 1}^{L} B \cdot  2^{-\ell + 1} \frac{ \frac{2}{\alpha_{\mathrm{opt}}} H_{\mathrm{max}}\cdot C_{\mathrm{opt}}\cdot d\log(d)}{\eps_{2, \ell}^2} \EE \sbr{  \sum_{t=1}^{n}  \left( \left( \theta^* - \hat{\theta}_{t-1}\right)A_t^{\T}\right)^2  \one\cbr{\lVert A_t \rVert_{V_{t-1}^{-1}} \leq 1 }}.
 \end{align*}
This section marks a pivotal moment in our analysis, as we leverage the Online Learning Equality established in Lemma \ref{sketch-lemma:regret-equality-modified} to derive Lemma \ref{sketch-lemma:online-learning-modified-lemma}. This derived lemma subsequently enables us to bound the expression $ \EE \sbr{  \sum_{t=1}^{n}  \left( \left( \theta^* - \hat{\theta}_{t-1}\right)A_t^{\T}\right)^2  \one\cbr{\lVert A_t \rVert_{V_{t-1}^{-1}} \leq 1 }}$ as follows:

 \begin{align*}  
    F_{31}  &\leq  \sum_{\ell = 1}^{L} B \cdot  2^{-\ell + 1} \frac{ \frac{2}{\alpha_{\mathrm{opt}}} H_{\mathrm{max}}\cdot C_{\mathrm{opt}}\cdot d\log(d)}{\eps_{2, \ell}^2} \left( 2\lambda \lVert \theta^*\rVert_2^2 + 4\sigma_*^2 d \log \left( 1 + \frac{n}{d\lambda}\right) \right)\tag{by Lemma \ref{sketch-lemma:online-learning-modified-lemma}}\\
    &\leq \sum_{\ell = 1}^{L}  B \cdot 2^{-\ell + 1} \frac{ \frac{2}{\alpha_{\mathrm{opt}}} H_{\mathrm{max}}\cdot C_{\mathrm{opt}}\cdot d\log(d)}{\eps_{2, \ell}^2} \left( 2 \lambda (S_*)^2 + 4\sigma_*^2 d \log \left( 1 + \frac{n}{d\lambda}\right) \right)\\
    &= \FIIIFI. \tag{$\eps_{2} = \frac{B}{4}$}.
\end{align*}

This concludes the bounding of the term $F_{31}$.

By combining the bounds obtained for $F_{31}$ and $F_{32}$, we can now establish a bound for $F_3$.
\begin{align*}
	F_3 &= F_{31} + F_{32}\\
	&\leq \FIII. 
\end{align*}

In summary, by consolidating all the bounds derived throughout our analysis, we arrive at the final regret bound articulated as follows:
\begin{align*}
	\Reg_n &\leq \AII \\
	&\spacex +  \FI \\
	&\spacex  + \FII \\
	&\spacex + \FIII \\
	&\spacex  +  \DII + \BII \\
	&\leq  6dB \left(4 + \frac{1}{\alpha_{\mathrm{emp}}^2}\right)\log \left(1 + \frac{2}{\lambda}\right) + 2B \log (n+1) +   \frac{12dB}{2^{-L}\eps}  \left(1 +  \frac{1}{\alpha_{\mathrm{emp}}^2}\right) \log \left( 1 + \frac{2}{\lambda 2^{-2L}\eps } \right)\\
	&\spacex + \frac{192  \beta^{*}_{{n}}(\delta)d}{B \cdot 2^{-L}} \log\left( 1 + \frac{32\beta^{*}_{{n}}(\delta)}{\lambda B^2 \cdot 2^{-2L}}\right) + \frac{512 H_{\mathrm{max}}\cdot C_{\mathrm{opt}}\cdot d\log(d)}{\alpha_{\mathrm{opt}} B \cdot 2^{-L}} \left( \frac{\lambda (S_*)^2}{2} + \sigma_*^2 d \log \left( 1 + \frac{n}{d\lambda}\right) \right)\\
	&\spacex + \BII + \frac{1}{\alpha_{\mathrm{emp}}}\cdot 4B n\exp\left(- \frac{B^2}{ 16\eps \cdot  \beta_{{n}}(\delta_n)  }\right).
\end{align*}
Furthermore, by tuning $\eps = \frac{B^2}{16  \beta_{n}(\delta)  \log n}$ we can derive the final bound as follows:
\begin{align*}
	\Reg_n &\leq   6dB \left(3 + \frac{1}{\alpha_{\mathrm{emp}}^2}\right)\log \left(1 + \frac{2}{\lambda}\right) + 2B \log (n+1) +   \frac{192\beta_{n}(\delta)  \log (n) d}{2^{-L}B}  \left(1 +  \frac{1}{\alpha_{\mathrm{emp}}^2}\right) \log \left( 1 + \frac{32 \beta_{n}(\delta)  \log (n) }{\lambda 2^{-2L} B^2 }\right)\\
	&\spacex + \frac{192  \beta^{*}_{{n}}(\delta)d}{B \cdot 2^{-L}} \log\left( 1 + \frac{32\beta^{*}_{{n}}(\delta)}{\lambda B^2 \cdot 2^{-2L}}\right) + \frac{512 H_{\mathrm{max}}\cdot C_{\mathrm{opt}}\cdot d\log(d)}{\alpha_{\mathrm{opt}} B \cdot 2^{-L}} \left( \frac{\lambda (S_*)^2}{2} + \sigma_*^2 d \log \left( 1 + \frac{n}{d\lambda}\right) \right)\\
	&\spacex + \BII + \frac{ 4B}{\alpha_{\mathrm{emp}}} .
\end{align*}

This bound encapsulates the cumulative effects of the various factors we have considered, providing a comprehensive measure of the regret associated with our algorithm. The implications of this bound are significant, as they delineate the performance guarantees of our approach under the specified conditions, ultimately contributing to a deeper understanding of the theoretical foundations underpinning our work.

Proof concludes.
\end{proof}
\clearpage
\subsection{Proof for augmenting the arm set by eliminating highly sub-optimal arms (version 1)}  \label{app-subsection:regret-lemma-elim-subsection}
We eliminate all the arms for which $f_{t}(a)  <  \frac{1}{e}$, where $f_{t}(a)$ is the quantity defined in Equation \eqref{eq:main-f_tm1}. The title of this section may be slightly misleading. In fact, Version 1 eliminates arms that exhibit one or both of the following characteristics: 
	\begin{enumerate} 
	\item A large estimated sub-optimality gap, $\hat\Delta_{a,t}^2$ and/or
	\item Arms that have already been sufficiently explored, as indicated by the bound  $ \lVert \hat{a}_{t} - a \rVert^2_{V_{t-1}^{-1}} \leq 2\left(\lVert \hat{a}_{t}\rVert^2_{V_{t-1}^{-1}} + \lVert a \rVert^2_{V_{t-1}^{-1}}  \right)$
\end{enumerate}
This approach is intuitively appealing, as it ensures that we avoid incurring regret by assigning probability to highly sub-optimal arms. Additionally, there is no need to allocate probability to directions that have already been sufficiently explored.

Before proceeding with the proof, it is important to note that this version of the augmenting arm set is ineffective when $\sigma_*^2$ and $S_*$ are under-specified. We require $\sigma_*^2 \leq \sigma^2$ and $S_* \leq S$.

The majority of the proof for Version 0 applies to this version as well; however, we derive a different bound for  $\lVert a^{*}_t \rVert_{V(p_t)^{-1}}^2$ compared to the one presented in Lemma \ref{sketch-lemma:leverage-score-lemma}
\begin{align*}
	\overline{\mathcal{A}}_{(t)} = \{a \in \mathcal{A}_t : f_{t}(a) \geq \frac{1}{e}\}.
\end{align*}

\textbf{Step 1:} We prove that with high probability $\forall t$, $a_t^{*} \in \overline{\mathcal{A}}_{(t)} $
\begin{align*}
	f_{t}(a_t^{*}) &= \exp \del[4]{- \frac{\hat\Delta_{a_t^*,t}^2}{\beta_{t-1}(\delta_{t-1})  \lVert \hat{a}_{t} - a_t^*\rVert^2_{V_{t-1}^{-1}} }}. \\
\end{align*}
Furthermore,
\begin{align*}
	\hat\Delta_{a_t^{*},t}  &= \langle \hat{a}_{t}, \hat{\theta}_{t-1} \rangle - \langle a^{*}_{t}, \hat{\theta}_{t-1} \rangle\\
	&= \langle \hat{a}_{t}, \hat{\theta}_{t-1} \rangle -  \langle \hat{a}_{t},\theta^* \rangle + \langle \hat{a}_{t},\theta^* \rangle  - \langle a^{*}_{t}, \hat{\theta}_{t-1} \rangle\\
	&\leq \langle \hat{a}_{t}, \hat{\theta}_{t-1} \rangle -  \langle \hat{a}_{t},\theta^* \rangle + \langle  a^{*}_{t},\theta^* \rangle  - \langle a^{*}_{t}, \hat{\theta}_{t-1} \rangle\\
	&= \langle \hat{a}_{t}, \hat{\theta}_{t-1} - \theta^* \rangle - \langle a^{*}_{t}, \hat{\theta}_{t-1} - \theta^* \rangle\\
	&\leq \lVert \hat{a}_{t} -a^{*}_{t} \rVert_{V_{t-1}^{-1}}  \cdot  \lVert \hat{\theta}_{t-1} - \theta^* \rVert_{V_{t-1}} \tag{Cauchy-Schwartz}  \\
	\hat\Delta_{a_t^{*},t}^2 &\leq  \lVert \hat{a}_{t} -a^{*}_{t} \rVert_{V_{t-1}^{-1}}^2  \cdot  \lVert \hat{\theta}_{t-1} - \theta^* \rVert_{V_{t-1}}^2.
\end{align*}
Combining the two displays above, we obtain the following result:

\begin{align*}
	f_{t}(a_t^{*}) &\geq  \exp \del[4]{- \frac{ \lVert \hat{a}_{t} -a^{*}_{t} \rVert_{V_{t-1}^{-1}}^2  \cdot  \lVert \hat{\theta}_{t-1} - \theta^* \rVert_{V_{t-1}}^2}{\beta_{t-1}(\delta_{t-1})  \lVert \hat{a}_{t} - a_t^*\rVert^2_{V_{t-1}^{-1}} }} \\
	&= \exp \del[4]{- \frac{ \lVert \hat{\theta}_{t-1} - \theta^* \rVert_{V_{t-1}}^2}{\beta_{t-1}(\delta_{t-1}) }}.
\end{align*}
Moreover, with a probability of at least $1-\delta_t$,
\begin{align*}
	\forall t, \spacex \lVert \hat{\theta}_{t-1} - \theta^* \rVert_{V_{t-1}}^2 \leq \beta_{t-1}(\delta_{t-1}).
\end{align*}
Hence we can conclude that, with a probability of at least $1-\delta_t$,
\begin{align*}
		f_{t}(a_t^{*}) &\geq \exp(-1) = \frac{1}{e} \implies a_t^{*} \in \overline{\mathcal{A}}_{(t)}.
\end{align*}
This also implies that we can apply the guarantees $\mathrm{ApproxDesign}()$ on $a_t^*$

\textbf{Step 2:}  Bounding $\lVert a^{*}_t \rVert_{V(p_t)^{-1}}^2$

This proof is analogous to the one presented in Lemma \ref{sketch-lemma:leverage-score-lemma}.
\begin{align*}
	V(p_t) &= \sum_{a \in \overline{\mathcal{A}}_{(t)}} p_t(a) aa^{\T}\\
	&\succeq \frac{1}{2}\sum_{a \in \overline{\mathcal{A}}_{(t)}} p^{'}_t(a) aa^{\T}\\
	&= \frac{1}{2} \sum_{a \in \overline{\mathcal{A}}_{(t)}}\frac{ q_t(a) f_{t}(a) }{\sum_{b \in \mathcal{A}_t} q_t(b)f_{t}(b)}aa^{\T}\\
	&\succeq  \frac{1}{2} \sum_{a \in \overline{\mathcal{A}}_{(t)}}  q_t(a) f_{t}(a) aa^{\T} \tag{by Lemma \ref{sketch-lemma:deno-lemma}}\\
	&\succeq \frac{1}{2} \sum_{a \in \overline{\mathcal{A}}_{(t)}}  \alpha_{\mathrm{opt}} \cdot q_t^{\mathrm{opt}}(a) f_{t}(a) aa^{\T}\\
	&\succeq \frac{1}{2e} \sum_{a \in \overline{\mathcal{A}}_{(t)}}  \alpha_{\mathrm{opt}} \cdot q_t^{\mathrm{opt}}(a) aa^{\T} \tag{$\forall a \in  \overline{\mathcal{A}}_{(t)}$, $f_{t}(a) \geq \frac{1}{e}$}\\
	&= \frac{\alpha_{\mathrm{opt}}}{2e} \sum_{a \in \overline{\mathcal{A}}_{(t)}} q_t^{\mathrm{opt}}(a)  aa^T\\
	&= \frac{\alpha_{\mathrm{opt}}}{2e} V(q_t^{\mathrm{opt}}).
\end{align*}

In conclusion, we obtain the following bound:
\begin{align*}
	\lVert a^{*}_t \rVert_{V(p_t)^{-1}}^2 \leq \frac{2e}{\alpha_{\mathrm{opt}}} \cdot  C_{\mathrm{opt}}\cdot d\log(d).
\end{align*}
With the exception of the aforementioned two steps, the remainder of the proof closely follows that of Version 0; therefore, we omit the details.
\clearpage

\subsection{Proof of Theorem \ref{main-thm:inst-dep-reg-bound-theorem}}\label{app-subsection:inst-dep-reg-bound-proof-subsection}
\newtheorem*{app-thm:inst-dep-reg-bound-theorem}{Theorem \ref{main-thm:inst-dep-reg-bound-theorem}}
\begin{app-thm:inst-dep-reg-bound-theorem}[Instance-dependent bound]\label{app-thm:inst-dep-reg-bound-theorem}
	Under Assumptions \ref{main-assump:env-assumption}, \ref{main-assump:opt-lev-scr-assumption}, and \ref{main-assump:opt-cardinality-assumption}, with $\delta_t = \frac{1}{t+1}$, LinMED satisfies, $\forall n \geq 1$,
\begin{align*}
	\EE\Reg_n &= O\bigg( \frac{1}{\Delta} d \log (n) \bigg(\left(\sigma^2 d\log(n) + \lambda S^2 \right)\log \left(\log n  \right) + \left(\sigma_*^2 d\log(n) + \lambda S_*^2 \right) H_{\mathrm{max} } \bigg) \bigg).
\end{align*}
\end{app-thm:inst-dep-reg-bound-theorem}
\begin{proof}
	
	From Lemma \ref{app-lemma:regretlemma} we have
\begin{align*}
	\Reg_n &\leq  \REGLEMMAFOUR.
\end{align*}
	Where $2^{-L}$ is an analysis variable we introduced, such that 
	\begin{align*}
		\mathcal{D}_{t,L}(a) &=  \cbr{ \Delta_{a,t} \leq B \cdot 2^{-L}}.
	\end{align*} 
	By choosing $L$ such that $ \frac{\Delta}{2}\leq B \cdot 2^{-L} \leq \Delta  $, we can show that,
	\begin{align*}
		\onec{ B \cdot 2^{-L} >\Delta} &= \onec{\Delta >\Delta} = 0.
	\end{align*}
	Hence,
	\begin{align*}
		\Reg_n &\leq 6dB \left(3 + \frac{1}{\alpha_{\mathrm{emp}}^2}\right)\log \left(1 + \frac{2}{\lambda}\right) + 2B \log (n+1) +   \frac{384\beta_{n}(\delta)  \log (n) d}{\Delta}  \left(1 +  \frac{1}{\alpha_{\mathrm{emp}}^2}\right) \log \left( 1 + \frac{128 \beta_{n}(\delta)  \log (n) }{\lambda \Delta^2 }\right)\\
		&\spacex + \frac{384  \beta^{*}_{{n}}(\delta)d}{\Delta} \log\left( 1 + \frac{128\beta^{*}_{{n}}(\delta)}{\lambda \Delta^2 }\right) + \frac{1024 H_{\mathrm{max}}\cdot C_{\mathrm{opt}}\cdot d \log(d)}{\alpha_{\mathrm{opt}} \Delta} \left( \frac{\lambda (S_*)^2}{2} + \sigma_*^2 d \log \left( 1 + \frac{n}{d\lambda}\right) \right) + \frac{ 4B}{\alpha_{\mathrm{emp}}} \\
		&= O\left( \frac{1}{\Delta} d \log (n) \left(\left(\sigma^2 d\log(n) + \lambda S^2 \right)\log \left(\log n  \right) +   \left(\sigma_*^2 d\log(n) + \lambda S_*^2 \right) H_{\mathrm{max} } \right) \right).
	\end{align*}
	Proof concludes.
\end{proof}
\clearpage
\subsection{Proof of Theorem \ref{main-thm:mini-max-reg-bound-theorem}} \label{app-subsection:mini-max-reg-bound-proof-subsection}
\newtheorem*{app-thm:mini-max-reg-bound-theorem}{Theorem \ref{main-thm:mini-max-reg-bound-theorem}}

\begin{app-thm:mini-max-reg-bound-theorem}[Minimax bound]\label{app-thm:mini-max-reg-bound-theorem}
	Under Assumptions \ref{main-assump:env-assumption}, \ref{main-assump:opt-lev-scr-assumption}, and \ref{main-assump:opt-cardinality-assumption}, with $\delta_t = \frac{1}{t+1}$, LinMED satisfies, $\forall n \geq 1$, 
\begin{align*}
	\EE\Reg_n &= O\bigg( \sqrt{n} \bigg( \log^{\frac{1}{2}}(n) \left(d \sigma \log(n) + \frac{\lambda S}{\sigma} \right) + \frac{ H_{\mathrm{max}}  }{  \sigma \log^{\frac{3}{2}}(n) } \left( d \sigma_*^2 \log(n) + \lambda S_*^2 \right) \bigg)   \bigg).
\end{align*}
\end{app-thm:mini-max-reg-bound-theorem}
\begin{proof}
	From Lemma \ref{app-lemma:regretlemma} we have
\begin{align*}
	\Reg_n &\leq  \REGLEMMAFOUR.
\end{align*}
	Where $2^{-L}$ is an analysis variable we introduced, such that 
	\begin{align*}
		\mathcal{D}_{t,L}(a) &=  \cbr{ \Delta_{a,t} \leq B \cdot 2^{-L}}.
	\end{align*} 
	Case 1  : $n \leq 4 \sigma^2 \left(\frac{d}{B}\right)^2 \log^3(n)$
	
	This is a trivial case. We can show that
	\begin{align*}
		\Reg_n &\leq n\\
		&= \sqrt{n} \cdot \sqrt{n}\\
		& \leq 2 \sigma \log^\frac{3}{2}(n)\frac{d}{B} \sqrt{n}.
	\end{align*}
	Case 2  : $n > 4\sigma^2 \left(\frac{d}{B}\right)^2\log^3(n) $
	
	We can set $ 2^{-(L+1)} \leq \frac{\sigma d\log^\frac{3}{2}(n)}{B\sqrt{n}} \leq 2^{-L} <  \frac{1}{2}$  $\spacex$ ($L \geq 1$ will be assured).
	
	Also,
	\begin{align*}
		\onec{B \cdot 2^{-L} >\Delta} \leq 1.
	\end{align*}
	Hence the regret bound is
	\begin{align*}
		\Reg_n &\leq 6dB \left( 3 + \frac{1}{\alpha_{\mathrm{emp}}^2}\right)\log \left(1 + \frac{2}{\lambda}\right) + 2B \log (n+1) +   \frac{192\beta_{n}(\delta) \sqrt{n}}{\sigma  \log^{\frac{1}{2}}(n)}  \left(1 +  \frac{1}{\alpha_{\mathrm{emp}}^2}\right) \log \left( 1 + \frac{32 \beta_{n}(\delta) n  }{\sigma^2 \lambda d^2 \log^2 (n)}\right)\\
		&\spacex + \frac{192  \beta^{*}_{{n}}(\delta)\sqrt{n}}{ \sigma \log^{\frac{3}{2} } (n)} \log\left( 1 + \frac{32\beta^{*}_{{n}}(\delta) n}{\lambda \sigma^2 d^2 \log^{3}(n) }\right) + \frac{512 \sqrt{n} H_{\mathrm{max}}\cdot C_{\mathrm{opt} } \log(d) }{ \sigma \log^{\frac{3}{2}}(n)\alpha_{\mathrm{opt}} } \left( \frac{\lambda (S_*)^2}{2} + \sigma_*^2 d \log \left( 1 + \frac{n}{d\lambda}\right) \right)\\
		&\spacex + 2\sigma  d \sqrt{n} \log^{\frac{3}{2}} (n)+ \frac{ 4B}{\alpha_{\mathrm{emp}}} \\
		&= O\left( \sqrt{n} \left( \log^{\frac{1}{2}}(n) \left(d \sigma \log(n) + \frac{\lambda S^2}{\sigma} \right) + \frac{ H_{\mathrm{max}}  }{  \sigma \log^{\frac{3}{2}}(n) } \left( d \sigma_*^2 \log(n) + \lambda S_*^2 \right) \right)   \right).
	\end{align*}
	Proof concludes.
\end{proof}
\clearpage
\subsection{Proof of Corollary \ref{main-cor:inst-dep-reg-bound-corollary}} \label{app-subsection:inst-dep-reg-bound-proof-cor-subsection}
\newtheorem*{app-cor:inst-dep-reg-bound-theorem}{Corollary \ref{main-cor:inst-dep-reg-bound-corollary}}
\begin{app-cor:inst-dep-reg-bound-theorem}[Instance-dependent bound]\label{cor-thm:instance-dep-reg-bound-theorem}
	Under Assumptions \ref{main-assump:env-assumption}, \ref{main-assump:opt-lev-scr-assumption}, and \ref{main-assump:opt-cardinality-assumption}, assuming $\sigma^2 \geq \sigma_*^2$, $S \geq S_*$ with $\lambda = \frac{\sigma^2}{S^2}$ and $\delta_t = \frac{1}{t+1}$,  LinMED satisfies, $\forall n \geq 1$,
\begin{align*}
	&\EE\Reg_n = O\left( \sigma^2 \frac{d^2}{\Delta}  \log^2 (n) \log \left(\log n\right)\right) .
\end{align*}
\end{app-cor:inst-dep-reg-bound-theorem}
\begin{proof}
	The proof of this corollary follows directly from Theorem \ref{main-thm:inst-dep-reg-bound-theorem} and is straightforward. Given that $\sigma^2 \geq \sigma_*^2$ and $S \geq S_*$, it follows that $H_{\mathrm{max}} \leq \exp(1)$, resulting in the dominance of the first term over the second. Further substitution of $\lambda = \frac{\sigma^2}{S^2}$ yields the final result.
\end{proof}

\subsection{Proof of Corollary \ref{main-cor:mini-max-reg-bound-corollary}} \label{app-subsection:mini-max-reg-bound-proof-cor2-subsection}
\newtheorem*{app-cor:mini-max-reg-bound-theorem}{Corollary \ref{main-cor:mini-max-reg-bound-corollary}}
\begin{app-cor:mini-max-reg-bound-theorem}[Minimax bound]\label{cor-thm:mini-max-reg-bound-theorem}
	Under Assumptions \ref{main-assump:env-assumption}, \ref{main-assump:opt-lev-scr-assumption}, and \ref{main-assump:opt-cardinality-assumption}, assuming $\sigma^2 \geq \sigma_*^2$, $S \geq S_*$ and with $\lambda = \frac{\sigma^2}{S^2}$ and $\delta_t = \frac{1}{t+1}$, LinMED satisfies, $\forall n \geq 1$,
	\begin{align*}
		&\EE\Reg_n = O\left( \sig d\sqrt{n}\log^{\frac{3}{2}}(n)  \right).
	\end{align*}
\end{app-cor:mini-max-reg-bound-theorem}
\begin{proof}
	The proof of this corollary follows directly from Theorem \ref{main-thm:mini-max-reg-bound-theorem} and is straightforward. Given that $\sigma^2 \geq \sigma_*^2$ and $S \geq S_*$, it follows that $H_{\mathrm{max}} \leq \exp(1)$, resulting in the dominance of the first term over the second. Further substitution of $\lambda = \frac{\sigma^2}{S^2}$ yields the final result.
\end{proof}

\subsection{Proof of Corollary \ref{main-cor:mini-max-reg-bound-corollary-under-est}} \label{app-subsection:mini-max-reg-bound-proof-cor2-underspec-subsection}
\newtheorem*{app-cor:mini-max-underspec-reg-bound-theorem}{Corollary \ref{main-cor:mini-max-reg-bound-corollary-under-est}}
\begin{app-cor:mini-max-underspec-reg-bound-theorem}[Minimax bound]\label{cor-thm:mini-max-reg-bound-underspec-theorem}
		Under Assumptions \ref{main-assump:env-assumption}, \ref{main-assump:opt-lev-scr-assumption}, and \ref{main-assump:opt-cardinality-assumption}, assuming $\sigma^2 < \sigma_*^2$, $S \geq S_*$ and with $\lambda = \frac{\sigma^2}{S^2}$ and $\delta_t = \frac{1}{t+1}$, $\forall n \geq 1$, LinMED satisfies
	\begin{align*}
		\EE\Reg_n =
		O \del[4]{\frac{\sigma d  \sqrt{n} }{ \log^{\frac{1}{2}}(n) }\del[3]{ \log^{2}(n) + \frac{\sigma_{*}^2}{\sigma^2} \exp\del[2]{\frac{\sigma_*^2}{\sigma^2}} } }.
	\end{align*}
\end{app-cor:mini-max-underspec-reg-bound-theorem}
\begin{proof}
	We can bound $\beta_{t}^{*}(\delta_t)$ as follows : 
	\begin{align*}
		\beta_{t}^{*}(\delta_t) &= \left(\sigma_* \sqrt{\log \left(\frac{\det V_{t}}{\det V_0} \right) + 2\log\frac{1}{\delta_t}} + \sqrt{\lambda}S_*\right)^2 \\
		&\leq \left(\sigma_* \sqrt{\log \left(\frac{\det V_{t}}{\det V_0} \right) + 2\log\frac{1}{\delta_t}} + \sqrt{\lambda}S\right)^2 \\
		&\leq \left(\sigma_* \sqrt{\log \left(\frac{\det V_{t}}{\det V_0} \right) + 2\log\frac{1}{\delta_t}} + \sigma \right)^2 \\
		&= \sigma_*^2 \left( \sqrt{\log \left(\frac{\det V_{t}}{\det V_0} \right) + 2\log\frac{1}{\delta_t}} + \frac{\sigma }{ \sigma_* }\right)^2 \\
		&\leq \sigma_*^2 \left( \sqrt{\log \left(\frac{\det V_{t}}{\det V_0} \right) + 2\log\frac{1}{\delta_t}} +1 \right)^2 .
	\end{align*}
	Similarly,
	\begin{align*}
		\beta_{t}(\delta_t) &= \sigma^2 \left( \sqrt{\log \left(\frac{\det V_{t}}{\det V_0} \right) + 2\log\frac{1}{\delta_t}} + 1 \right)^2 .
	\end{align*}
	Hence, we can conclude that,
	\begin{align*}
	 H_{\mathrm{max}} \leq \exp\left(\frac{\sigma_{*}^2}{\sigma^2}\right).
	\end{align*}
	By bounding $H_{\mathrm{max}}$ with the aforementioned quantity and $S_*$ with $S$, and subsequently substituting $\lambda = \frac{\sigma^2}{S^2}$ into Theorem \ref{main-thm:mini-max-reg-bound-theorem}, the final results are obtained.
\end{proof} 
\clearpage

\section{LEMMATA} \label{app-section:necessary-lemmas-section}
\begin{lemma}\label{sketch-lemma:denominator-expansion-lemma}
	Let $f_t(a)$ be the quantity defined in Equation \eqref{eq:main-f_tm1} where $\hat{a}_{t} \neq a $. Then $\forall t > 1$,
	\begin{align*}
		f_{t}(a)   \leq \exp \del[4]{- \frac{\hat\Delta_{a,t}^2}{ 2  \beta_{t-1}(\delta_{t-1})  \left(\lVert \hat{a}_{t}\rVert^2_{V_{t-1}^{-1}} + \lVert  a \rVert^2_{V_{t-1}^{-1}} \right)}}.
	\end{align*}
\end{lemma}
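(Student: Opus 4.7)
The plan is to reduce the claim to a one-line Mahalanobis analogue of the parallelogram inequality and then exponentiate. Specifically, since both $\hat\Delta_{a,t}^2 \geq 0$ and $\beta_{t-1}(\delta_{t-1}) > 0$, the claim is equivalent to
\begin{align*}
    \frac{1}{\lVert \hat a_t - a\rVert^2_{V_{t-1}^{-1}}}
    \;\geq\;
    \frac{1}{2\bigl(\lVert \hat a_t\rVert^2_{V_{t-1}^{-1}} + \lVert a\rVert^2_{V_{t-1}^{-1}}\bigr)},
\end{align*}
which in turn is equivalent to the inequality
\begin{align*}
    \lVert \hat a_t - a\rVert^2_{V_{t-1}^{-1}}
    \;\leq\;
    2\bigl(\lVert \hat a_t\rVert^2_{V_{t-1}^{-1}} + \lVert a\rVert^2_{V_{t-1}^{-1}}\bigr).
\end{align*}
Note that the assumption $\hat a_t \neq a$ together with $V_{t-1}^{-1} \succ 0$ (which holds because $V_{t-1} = \lambda I + \sum_{s=1}^{t-1} A_s A_s^\T$ with $\lambda>0$) guarantees that $\lVert \hat a_t - a\rVert^2_{V_{t-1}^{-1}} > 0$, so the division above is well-defined.

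First I would expand the Mahalanobis norm as an inner product: $\lVert \hat a_t - a\rVert^2_{V_{t-1}^{-1}} = \lVert \hat a_t\rVert^2_{V_{t-1}^{-1}} - 2\, \hat a_t^\T V_{t-1}^{-1} a + \lVert a\rVert^2_{V_{t-1}^{-1}}$. Then I would apply Cauchy--Schwarz in the inner product induced by the PSD matrix $V_{t-1}^{-1}$ to get $\lvert \hat a_t^\T V_{t-1}^{-1} a\rvert \leq \lVert \hat a_t\rVert_{V_{t-1}^{-1}}\lVert a\rVert_{V_{t-1}^{-1}}$, followed by AM-GM, $\lVert \hat a_t\rVert_{V_{t-1}^{-1}}\lVert a\rVert_{V_{t-1}^{-1}} \leq \tfrac{1}{2}\bigl(\lVert \hat a_t\rVert^2_{V_{t-1}^{-1}} + \lVert a\rVert^2_{V_{t-1}^{-1}}\bigr)$. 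Combining yields $-2\,\hat a_t^\T V_{t-1}^{-1} a \leq \lVert \hat a_t\rVert^2_{V_{t-1}^{-1}} + \lVert a\rVert^2_{V_{t-1}^{-1}}$, and substituting back into the expansion gives the desired $\lVert \hat a_t - a\rVert^2_{V_{t-1}^{-1}} \leq 2\bigl(\lVert \hat a_t\rVert^2_{V_{t-1}^{-1}} + \lVert a\rVert^2_{V_{t-1}^{-1}}\bigr)$.

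Finally, I would multiply both sides by $-\hat\Delta_{a,t}^2/\beta_{t-1}(\delta_{t-1}) \leq 0$, which flips the inequality (and preserves validity of the divisions by the positivity observed above), and then apply the monotonicity of $\exp(\cdot)$ to obtain the lemma directly from the definition of $f_t(a)$. There is no real obstacle here: the only point requiring a small amount of care is the well-definedness of the reciprocal, handled by the assumption $\hat a_t \neq a$ and the positive definiteness of $V_{t-1}^{-1}$.
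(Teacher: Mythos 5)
Your proof is correct and follows essentially the same route as the paper: the paper applies the triangle inequality for the $V_{t-1}^{-1}$-Mahalanobis norm followed by $(x+y)^2\le 2(x^2+y^2)$, which is exactly your expansion plus Cauchy--Schwarz plus AM--GM unpacked. The extra care about well-definedness of the reciprocal (via $\hat a_t\neq a$ and $V_{t-1}\succ 0$) is a welcome but minor addition.
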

\begin{proof}
	\begin{align*}
		f_{t}(a) &= \exp \del[4]{- \frac{\hat\Delta_{a,t}^2}{ \beta_{t-1}(\delta_{t-1})  \lVert \hat{a}_{t} - a \rVert^2_{V_{t-1}^{-1}} }}\\
		&\leq \exp \del[4]{- \frac{\hat\Delta_{a,t}^2}{   \beta_{t-1}(\delta_{t-1})  \left(\lVert \hat{a}_{t}\rVert_{V_{t-1}^{-1}} + \lVert  a \rVert_{V_{t-1}^{-1}} \right)^2}} \tag{triangle inequality}\\
		&\leq \exp \del[4]{- \frac{\hat\Delta_{a,t}^2}{ 2  \beta_{t-1}(\delta_{t-1})  \left(\lVert \hat{a}_{t}\rVert^2_{V_{t-1}^{-1}} + \lVert  a \rVert^2_{V_{t-1}^{-1}} \right)}}. \tag{AM-GM}
	\end{align*}
\end{proof}

\begin{lemma}\label{sketch-lemma:deno-lemma}
	Let $f_t(a)$ be the quantity defined in Equation \eqref{eq:main-f_tm1} where $\hat{a}_{t} \neq a $ and $q_t(a)$ be the quantity defined in Equation \eqref{main-eq:q_t}.
	Then $\forall t > 1$,
	\begin{align*}
		1 \geq	\sum_{b \in \mathcal{A}_t} q_t(b) f_{t}(b) &\geq \alpha_{\text{emp}}.
	\end{align*}
\end{lemma}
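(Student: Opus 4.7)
The plan is to prove the two inequalities separately, both of which follow from the structural definition of $q_t$ and $f_t$.

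For the upper bound, I would observe that $q_t$ is a convex combination (specifically, a probability distribution) over $\mathcal{A}_t$, since $q^{\opt}_t$ is a distribution from $\ApproxDesign$, $\one\{a = \hat{a}_t\}$ defines a point-mass, and $\frac{1}{|\mathcal{A}_t|}$ defines the uniform distribution, and the three weights $\alpha_\opt, \alpha_{\emp}, 1-\alpha_\opt-\alpha_{\emp}$ sum to $1$. Meanwhile, from its definition in \eqref{eq:main-f_tm1}, $f_t(a) = \exp(-x)$ for some $x \geq 0$, so $f_t(a) \leq 1$ for every $a$. Hence $\sum_{b \in \mathcal{A}_t} q_t(b) f_t(b) \leq \sum_{b \in \mathcal{A}_t} q_t(b) = 1$.

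For the lower bound, I would drop all but the empirical-best term in $q_t$:
\begin{align*}
    \sum_{b \in \mathcal{A}_t} q_t(b) f_t(b) \;\geq\; \alpha_{\emp} \cdot f_t(\hat{a}_t).
\end{align*}
The key observation is that $f_t(\hat{a}_t) = 1$. Indeed, plugging $a = \hat{a}_t$ into \eqref{eq:main-f_tm1}, both the numerator $\hat{\Delta}_{\hat{a}_t,t}^2 = \langle \hat{\theta}_{t-1}, \hat{a}_t - \hat{a}_t\rangle^2 = 0$ and the denominator factor $\|\hat{a}_t - \hat{a}_t\|_{V_{t-1}^{-1}}^2 = 0$. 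By the convention $\tfrac{0}{0} = 0$ stated after \eqref{eq:main-f_tm1}, the argument of $\exp$ is $0$, so $f_t(\hat{a}_t) = \exp(0) = 1$. Combining gives $\sum_{b} q_t(b) f_t(b) \geq \alpha_{\emp}$.

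Neither direction presents any real obstacle; the only subtlety is the $\tfrac{0}{0} = 0$ convention needed to handle $f_t(\hat{a}_t)$, which is explicitly stipulated in the algorithm. The lemma serves as a crucial ingredient in bounding $p'_t(a)$ from above (cancelling the denominator in \eqref{eq:p_tm1}), which is why the $\tfrac{1}{\alpha_{\emp}}$ factor appears in the main regret decomposition of Lemma \ref{app-lemma:regretlemma}.
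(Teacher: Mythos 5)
Your proposal is correct and follows essentially the same route as the paper: the upper bound comes from $q_t$ being a probability distribution and $f_t\le 1$, and the lower bound from dropping all terms except $b=\hat a_t$ and using $q_t(\hat a_t)\ge\alpha_{\mathrm{emp}}$ together with $f_t(\hat a_t)=1$. Your explicit invocation of the $\tfrac{0}{0}=0$ convention to justify $f_t(\hat a_t)=\exp(0)=1$ is in fact slightly more careful than the paper's argument, which only notes $\hat\Delta_{\hat a_t,t}=0$.
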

\begin{proof}
	\begin{align*}
		\sum_{b \in \mathcal{A}_t} q_t(b) f_{t}(b) &\geq q_t(\hat{a}_{t}) f_{t}(\hat{a}_{t}).
	\end{align*}
	
	Furthermore, $f_{t}(\hat{a}_{t}) = \exp(0) = 1 $ because $\hat{\Delta}_{\hat{a}_{t},t} = 0$, which leads to 
	\begin{align*}
		\sum_{b \in \mathcal{A}_t } q_t(b) f_{t}(b) &\geq  q_t(\hat{a}_{t})\\
		&\geq \alpha_{\text{emp}}\onec{\hat{a}_{t} = \hat{a}_{t} }\\
		&= \alpha_{\text{emp}}. 
	\end{align*}
	Hence,
	\begin{align*}
		\sum_{b \in \mathcal{A}_t} q_t(b) f_{t}(b) &\geq \alpha_{\text{emp}}.
	\end{align*}
	Also, note that
	\begin{align*}
		\sum_{b \in \mathcal{A}_t} q_t(b) f_{t}(b) \leq \sum_{b \in \mathcal{A}_t} q_t(b) 
		\leq 1 \tag{Because, $f_{t}(b) \leq 1$, $\forall b$}.
	\end{align*}
\end{proof}
\begin{lemma}\label{sketch-lemma:emp-best-arm-prob-lemma}
In the context of the LinMED algorithm, $\forall t > 1$, the probability of choosing the empirical best arm by LinMED algorithm satisfies	$ \PP\left(A_t = \hat{a}_{t}\right) \geq \alpha_{\mathrm{emp}}$.
\end{lemma}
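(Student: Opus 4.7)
The plan is to directly unpack the definitions in Algorithm \ref{main-algo:LinMED-algo} and apply the already-established Lemma \ref{sketch-lemma:deno-lemma}. The starting point is the observation that $\hat{a}_t$ enjoys two guaranteed ``boosts'' on the numerator side: the mixture weight $q_t$ always assigns it at least $\alpha_{\mathrm{emp}}$ (this is visible directly from \eqref{main-eq:q_t}, since the third summand $\alpha_{\mathrm{emp}}\cdot \one\{a=\hat{a}_t\}$ contributes exactly $\alpha_{\mathrm{emp}}$ when $a=\hat{a}_t$, and the other summands are nonnegative), and $f_t(\hat{a}_t)=1$ because $\hat{\Delta}_{\hat{a}_t,t}=\max_{a'}\langle\hat{\theta}_{t-1},a'\rangle-\langle\hat{\theta}_{t-1},\hat{a}_t\rangle=0$ by the definition of $\hat{a}_t$, so $f_t(\hat{a}_t)=\exp(0)=1$.

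Next, I would combine these with the denominator control. By Lemma \ref{sketch-lemma:deno-lemma}, $\sum_{b\in\mathcal{A}_t}q_t(b)f_t(b)\le 1$. Therefore,
\begin{align*}
p'_t(\hat{a}_t)=\frac{q_t(\hat{a}_t)f_t(\hat{a}_t)}{\sum_{b\in\mathcal{A}_t}q_t(b)f_t(b)}\ge \frac{\alpha_{\mathrm{emp}}\cdot 1}{1}=\alpha_{\mathrm{emp}}.
\end{align*}
In the branch where $|\mathcal{B}_t|=0$, the algorithm sets $p_t=p'_t$, and the claim $\PP(A_t=\hat{a}_t)=p_t(\hat{a}_t)\ge\alpha_{\mathrm{emp}}$ follows immediately. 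In the branch $|\mathcal{B}_t|>0$, the algorithm sets $p_t(\hat{a}_t)=\tfrac{1}{2}p'_t(\hat{a}_t)+\tfrac{1}{2}\one\{\hat{a}_t=B_t\}\ge \tfrac{1}{2}\alpha_{\mathrm{emp}}$; if $\hat{a}_t\in\mathcal{B}_t$ we may pick $B_t=\hat{a}_t$, restoring the full lower bound $\alpha_{\mathrm{emp}}$.

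The only subtle point—and the thing that would be the ``main obstacle'' if any—is the $|\mathcal{B}_t|>0$ case, where the one-half deflation from the defensive mixing with an arbitrary arm in $\mathcal{B}_t$ costs a factor of two. The clean way to handle this in the global regret decomposition is to note that by Lemma \ref{app-lemma:epc-lemma} the total number of rounds with $|\mathcal{B}_t|>0$ is bounded via the elliptical potential count, so this event contributes only the $O(dB\log(1+2/\lambda))$ term already absorbed into the bound on $A_2$ in Lemma \ref{app-lemma:regretlemma}; everywhere else we are in the $|\mathcal{B}_t|=0$ regime where $p_t(\hat{a}_t)\ge\alpha_{\mathrm{emp}}$ holds with the stated constant. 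I would therefore present the lemma as a two-line calculation under the implicit assumption $|\mathcal{B}_t|=0$ (which is the only place downstream claims such as Claim \ref{app-lemma:epc-q1-abstract-lemma} invoke it), and remark separately that the $|\mathcal{B}_t|>0$ case is handled by the EPC argument.
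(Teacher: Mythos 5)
Your core argument is exactly the paper's: $f_t(\hat{a}_t)=\exp(0)=1$, $q_t(\hat{a}_t)\ge\alpha_{\mathrm{emp}}$ from \eqref{main-eq:q_t}, and the denominator bound $\sum_b q_t(b)f_t(b)\le 1$ from Lemma \ref{sketch-lemma:deno-lemma}, giving $p'_t(\hat{a}_t)\ge\alpha_{\mathrm{emp}}$. The one place you go beyond the paper is the $\lvert\mathcal{B}_t\rvert>0$ branch: the paper's proof silently writes $p_t(\hat{a}_t)=q_t(\hat{a}_t)f_t(\hat{a}_t)/\sum_b q_t(b)f_t(b)$, i.e., it assumes $p_t=p'_t$, whereas you correctly observe that when $B_t\neq\hat{a}_t$ the defensive mixing only yields $p_t(\hat{a}_t)\ge\tfrac12\alpha_{\mathrm{emp}}$, so the lemma as literally stated needs either the restriction to $\overline{\mathcal{E}}_t$, a choice of $B_t=\hat{a}_t$ when possible, or a factor-of-two degradation (which is harmless downstream, e.g.\ in Claim \ref{app-lemma:epc-q1-abstract-lemma}, where it only inflates a constant). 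That extra care is warranted and makes your write-up strictly more complete than the paper's.
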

\begin{proof}
	\begin{align*}
		\PP\left(A_t = \hat{a}_{t}\right) &= p_t(\hat{a}_{t})\\
		&= q_t(\hat{a}_{t}) \cdot  \frac{f_{t}(\hat{a}_{t})}{\sum_{b \in \mathcal{A}} q_t(b) f_{t}(b)} \\
		&\geq q_t(\hat{a}_{t}) \cdot f_{t}(\hat{a}_{t}) \tag{by lemma \ref{sketch-lemma:deno-lemma}}\\
		&= q_t(\hat{a}_{t}) \\
		&\geq \alpha_{\mathrm{emp}} \cdot \onec{\hat{a}_{t} = \hat{a}_{t} }\\
		&= \alpha_{\mathrm{emp}}.
	\end{align*}
\end{proof}
\begin{lemma}\label{sketch-lemma:leverage-score-lemma}
	In the context of the LinMED algorithm, we have
	\begin{align*}
		\lVert a^{*}_t \rVert_{V(p_t)^{-1}}^2 \leq \frac{2}{\alpha_{\mathrm{opt}}} \exp \left(\frac{ \lVert \hat{\theta}_{t-1} - \theta^* \rVert_{V_{t-1}}^2}{ \beta_{t-1}(\delta_{t-1}) }\right)\cdot C_{\mathrm{opt}}\cdot d\log(d).
	\end{align*}
	where $a^{*}_t$ is true best arm at time $t$.
\end{lemma}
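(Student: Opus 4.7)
\textbf{Proof plan for Lemma \ref{sketch-lemma:leverage-score-lemma}.} The plan is to (i) lower-bound the sampling covariance $V(p_t)$ by the G-optimal design covariance over the augmented arm set, (ii) invoke Assumption \ref{main-assump:opt-lev-scr-assumption} to control the leverage of the rescaled arm $\sqrt{f_t(a_t^*)}\,a_t^*$, and (iii) remove the $f_t(a_t^*)$ factor by showing it is not too small, using that $\hat\Delta_{a_t^*,t}$ can be related to $\lVert \hat\theta_{t-1}-\theta^*\rVert_{V_{t-1}}$ by a Cauchy--Schwarz argument.

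\textbf{Step 1 (covariance comparison).} By construction of $p_t$ in Algorithm \ref{main-algo:LinMED-algo}, we always have $p_t(a)\ge \tfrac12\,p'_t(a)=\tfrac12\,\frac{q_t(a)f_t(a)}{\sum_b q_t(b)f_t(b)}$, and by Lemma \ref{sketch-lemma:deno-lemma} the denominator is at most $1$. Dropping all but the optimal-design component of $q_t$ from \eqref{main-eq:q_t}, this gives $p_t(a)\ge \tfrac{\alpha_\opt}{2}q_t^{\opt}(a)f_t(a)$. Therefore
\begin{align*}
V(p_t)=\sum_{a\in\cA_t}p_t(a)aa^\T \;\succeq\; \frac{\alpha_\opt}{2}\sum_{a\in\cA_t} q_t^{\opt}(a)f_t(a)\,aa^\T \;=\;\frac{\alpha_\opt}{2}\,V(q_t^{\opt}),
\end{align*}
where $V(q_t^{\opt})$ is the covariance of the G-optimal design on the augmented set $\overline{\cA}_{(t)}=\{\sqrt{f_t(a)}\,a:a\in\cA_t\}$. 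Inverting the Löwner order yields $\lVert a_t^*\rVert_{V(p_t)^{-1}}^2 \le \tfrac{2}{\alpha_\opt}\lVert a_t^*\rVert_{V(q_t^{\opt})^{-1}}^2$.

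\textbf{Step 2 (applying the design guarantee).} Since $a_t^*\in\cA_t$, its rescaled version $\sqrt{f_t(a_t^*)}\,a_t^*$ lies in $\overline{\cA}_{(t)}$, so Assumption \ref{main-assump:opt-lev-scr-assumption} gives $f_t(a_t^*)\lVert a_t^*\rVert_{V(q_t^{\opt})^{-1}}^2 \le C_\opt\, d\log d$, i.e.
\begin{align*}
\lVert a_t^*\rVert_{V(q_t^{\opt})^{-1}}^2 \;\le\; \frac{C_\opt\, d\log d}{f_t(a_t^*)}.
\end{align*}
It therefore remains to lower-bound $f_t(a_t^*)$.

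\textbf{Step 3 (lower bound on $f_t(a_t^*)$).} From $\hat a_t=\argmax_{a'}\langle \hat\theta_{t-1},a'\rangle$ and $\langle \theta^*,\hat a_t\rangle\le \langle\theta^*,a_t^*\rangle$ we get
\begin{align*}
\hat\Delta_{a_t^*,t}=\langle \hat\theta_{t-1},\hat a_t-a_t^*\rangle \;\le\; \langle \hat\theta_{t-1}-\theta^*,\hat a_t-a_t^*\rangle\;\le\; \lVert \hat a_t-a_t^*\rVert_{V_{t-1}^{-1}}\cdot\lVert \hat\theta_{t-1}-\theta^*\rVert_{V_{t-1}},
\end{align*}
by Cauchy--Schwarz. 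Squaring and plugging into the definition \eqref{eq:main-f_tm1} of $f_t$ cancels the $\lVert \hat a_t-a_t^*\rVert_{V_{t-1}^{-1}}^2$ factor and yields $f_t(a_t^*)\ge \exp\!\big(-\lVert \hat\theta_{t-1}-\theta^*\rVert_{V_{t-1}}^2/\beta_{t-1}(\delta_{t-1})\big)$. Chaining Steps 1--3 gives the claimed bound.

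\textbf{Anticipated obstacle.} The only delicate point is that the G-optimal design is computed on the \emph{augmented} arm set rather than on $\cA_t$ itself, so its guarantee is about $\sqrt{f_t(a_t^*)}\,a_t^*$ and not about $a_t^*$; inverting this introduces the potentially large factor $1/f_t(a_t^*)$. Step 3 is what makes this harmless, and relies crucially on the fact that the denominator $\lVert \hat a_t-a_t^*\rVert_{V_{t-1}^{-1}}^2$ appearing in $f_t$ is precisely the quantity produced by Cauchy--Schwarz on $\hat\Delta_{a_t^*,t}$. This is the only non-mechanical step; everything else is algebra on the definitions.
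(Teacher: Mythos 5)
Your proposal is correct and follows essentially the same route as the paper's proof: the same Löwner-order comparison $V(p_t)\succeq\frac{\alpha_\opt}{2}\overline V(q_t^\opt)$, the same application of the design guarantee to the rescaled arm $\sqrt{f_t(a_t^*)}\,a_t^*$, and the same Cauchy--Schwarz bound on $\hat\Delta_{a_t^*,t}$ that cancels the $\lVert\hat a_t-a_t^*\rVert^2_{V_{t-1}^{-1}}$ factor in the exponent. No gaps.
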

\begin{proof}
	\begin{align*}
		V(p_t) &= \sum_{a \in \mathcal{A}_t} p_t(a) aa^{\T}\\
		 &\succeq \frac{1}{2}\sum_{a \in \mathcal{A}_t} p^{'}_t(a) aa^{\T}\\
		&= \frac{1}{2} \sum_{a \in \mathcal{A}_t}\frac{ q_t(a) f_{t}(a) }{\sum_{b \in \mathcal{A}_t} q_t(b)f_{t}(b)}aa^{\T}\\
		&\succeq  \frac{1}{2} \sum_{a \in \mathcal{A}_t}  q_t(a) f_{t}(a) aa^{\T} \tag{by Lemma \ref{sketch-lemma:deno-lemma}}\\
		&\succeq \frac{1}{2} \sum_{a \in \mathcal{A}_t}  \alpha_{\mathrm{opt}} \cdot q_t^{\mathrm{opt}}(a) f_{t}(a) aa^{\T}\\
		&= \frac{\alpha_{\mathrm{opt}}}{2} \sum_{a \in \mathcal{A}_t} q_t^{\mathrm{opt}}(a)  \left( \sqrt{ f_{t}(a)}a \right)\left( \sqrt{ f_{t}(a)}a \right)^{\T}\\
		&=  \frac{\alpha_{\mathrm{opt}}}{2} \sum_{a \in \mathcal{A}_t} q_t^{\mathrm{opt}}(a) (\bar{a}_{(t)})(\bar{a}_{(t)})^{\T}\\
		&= \frac{\alpha_{\mathrm{opt}}}{2} \overline{V}(q_t^{\mathrm{opt}}).
	\end{align*}
	Here, note that both $V(p_t)$ and $\overline{V}(q_t^{\mathrm{opt}})$ are invertible.
	\begin{align*}
		\lVert a^{*}_t \rVert_{V(p_t)^{-1}}^2 &\leq \frac{2}{\alpha_{\mathrm{opt}}} \lVert a^{*}_t \rVert_{\overline{V}(q_t^{\mathrm{opt}})^{-1}}^2\\
		&\leq \frac{2}{\alpha_{\mathrm{opt}}}\frac{1}{f_{t}(a_t^{*})}  \lVert \overline{a^{*}}_t \rVert_{\overline{V}(q_t^{\mathrm{opt}})^{-1}}^2\\
		&\leq  \frac{2}{\alpha_{\mathrm{opt}}} \frac{1}{f_{t}(a_t^{*})}C_{\mathrm{opt}} \cdot d \log(d).  \tag{Assumption \ref{main-assump:opt-lev-scr-assumption}}
	\end{align*}
	\begin{align*}
		\frac{1}{f_{t}(a_t^{*})} &= \exp \del[4]{\frac{\hat\Delta_{a_t^{*},t}^2}{ \beta_{t-1}(\delta_{t-1})  \lVert \hat{a}_{t} - a_t^{*} \rVert^2_{V_{t-1}^{-1}} }}.
	\end{align*}
	\begin{align*}
		\hat\Delta_{a_t^{*},t}  &= \langle \hat{a}_{t}, \hat{\theta}_{t-1} \rangle - \langle a^{*}_{t}, \hat{\theta}_{t-1} \rangle\\
		&= \langle \hat{a}_{t}, \hat{\theta}_{t-1} \rangle -  \langle \hat{a}_{t},\theta^* \rangle + \langle \hat{a}_{t},\theta^* \rangle  - \langle a^{*}_{t}, \hat{\theta}_{t-1} \rangle\\
		&\leq \langle \hat{a}_{t}, \hat{\theta}_{t-1} \rangle -  \langle \hat{a}_{t},\theta^* \rangle + \langle  a^{*}_{t},\theta^* \rangle  - \langle a^{*}_{t}, \hat{\theta}_{t-1} \rangle\\
		&= \langle \hat{a}_{t}, \hat{\theta}_{t-1} - \theta^* \rangle - \langle a^{*}_{t}, \hat{\theta}_{t-1} - \theta^* \rangle\\
		&\leq \lVert \hat{a}_{t} -a^{*}_{t} \rVert_{V_{t-1}^{-1}}  \cdot  \lVert \hat{\theta}_{t-1} - \theta^* \rVert_{V_{t-1}} \tag{Cauchy-Schwartz}  \\
		\hat\Delta_{a_t^{*},t}^2 &\leq  \lVert \hat{a}_{t} -a^{*}_{t} \rVert_{V_{t-1}^{-1}}^2  \cdot  \lVert \hat{\theta}_{t-1} - \theta^* \rVert_{V_{t-1}}^2.
	\end{align*}
	\begin{align*}
		\lVert a^{*}_t \rVert_{V(p_t)^{-1}}^2 \leq \frac{2}{\alpha_{\mathrm{opt}}} \exp \left(\frac{\lVert \hat{a}_{t} -a^{*}_{t} \rVert_{V_{t-1}^{-1}}^2  \cdot  \lVert \hat{\theta}_{t-1} - \theta^* \rVert_{V_{t-1}}^2}{ \beta_{t-1}(\delta_{t-1})  \lVert \hat{a}_{t} - a_t^{*} \rVert^2_{V_{t-1}^{-1}}}\right)\cdot C_{\mathrm{opt}}\cdot d\log(d).
	\end{align*}
\end{proof}

\begin{claim} \label{app-lemma:epc-q1-abstract-lemma}
In the context of the LinMED algorithm, we have
 \begin{align*}
 	 \mathcal{U}_{t-1,\ell}(a) = \cbr{ \lVert a \rVert^2_{V_{t-1}^{-1}} \geq \eps_{\ell}}.
 \end{align*}
 Then,
	\begin{align*}
		&\EE \sbr{ \sum_{t=1}^{n} \one \cbr{\overline{\mathcal{U}}_{t,\ell}(\hat{a}_{t})}   } \leq  \frac{1}{\alpha_{\mathrm{emp}}} \EE \sbr{ \sum_{t=1}^{n} \one \cbr{\overline{\mathcal{U}}_{t-1,\ell}( A_t)}   }.
	\end{align*}
\end{claim}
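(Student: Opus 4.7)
The plan is a one-step conditioning argument leveraging the fact that the empirical-best arm $\hat a_t$ is $\mathcal{F}_{t-1}$-measurable and is assigned a floor probability of $\alpha_{\mathrm{emp}}$ by the LinMED sampling rule. I will work with the per-round inequality and then sum over $t$, so the global statement in the claim will follow from linearity of expectation without any extra care.

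First, I would observe that the event $\mathcal{U}_{t-1,\ell}(a) = \{\lVert a \rVert^2_{V_{t-1}^{-1}} \geq \varepsilon_\ell\}$ depends on its argument $a$ only through the pair $(a, V_{t-1})$, and $V_{t-1}$ is $\mathcal{F}_{t-1}$-measurable. Since $\hat a_t = \arg\max_{a'\in\mathcal{A}_t}\langle \hat\theta_{t-1},a'\rangle$ is also $\mathcal{F}_{t-1}$-measurable, the indicator $\one\{\mathcal{U}_{t-1,\ell}(\hat a_t)\}$ (and similarly its complement) is $\mathcal{F}_{t-1}$-measurable. Hence I can treat it as a constant when conditioning on $\mathcal{F}_{t-1}$.

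Next, by Lemma \ref{sketch-lemma:emp-best-arm-prob-lemma}, $\PP(A_t = \hat a_t \mid \mathcal{F}_{t-1}) \geq \alpha_{\mathrm{emp}}$. I would then write
\begin{align*}
\EE\sbr{\one\cbr{\mathcal{U}_{t-1,\ell}(A_t)} \mid \mathcal{F}_{t-1}}
&\geq \EE\sbr{\one\cbr{\mathcal{U}_{t-1,\ell}(A_t)}\one\cbr{A_t = \hat a_t} \mid \mathcal{F}_{t-1}}\\
&= \one\cbr{\mathcal{U}_{t-1,\ell}(\hat a_t)}\,\PP(A_t = \hat a_t \mid \mathcal{F}_{t-1})\\
&\geq \alpha_{\mathrm{emp}}\,\one\cbr{\mathcal{U}_{t-1,\ell}(\hat a_t)},
\end{align*}
where the equality uses the $\mathcal{F}_{t-1}$-measurability noted above and that on $\{A_t=\hat a_t\}$ the two indicators agree. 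Rearranging gives the per-round bound $\one\cbr{\mathcal{U}_{t-1,\ell}(\hat a_t)} \leq \frac{1}{\alpha_{\mathrm{emp}}}\EE[\one\cbr{\mathcal{U}_{t-1,\ell}(A_t)} \mid \mathcal{F}_{t-1}]$.

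Finally, I would take total expectation (tower rule) and sum over $t=1,\dots,n$ to obtain the claimed inequality. The analogous statement with the complements $\overline{\mathcal{U}}_{t-1,\ell}$ (as literally written in the claim) follows by the identical argument applied to $\one\cbr{\overline{\mathcal{U}}_{t-1,\ell}(\cdot)}$, which is still $\mathcal{F}_{t-1}$-measurable when evaluated at $\hat a_t$. There is no real obstacle here; the only subtlety to flag is verifying the measurability of $\hat a_t$ and of the event so that the conditional expectation can pull the indicator out as a constant.
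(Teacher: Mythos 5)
Your proposal is correct and is essentially the paper's own argument: the paper likewise multiplies the indicator at $\hat a_t$ by $\alpha_{\mathrm{emp}} \le \PP(A_t = \hat a_t \mid \mathcal{F}_{t-1})$ (Lemma~\ref{sketch-lemma:emp-best-arm-prob-lemma}), applies the tower rule, and uses that on $\{A_t = \hat a_t\}$ the indicator at $\hat a_t$ equals the indicator at $A_t$; you have merely written the same chain in the reverse direction via a per-round conditional-expectation inequality. The measurability points you flag are exactly the (implicit) justifications the paper relies on, so there is no gap.
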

\begin{proof}
	\begin{align*}
		&\EE \sbr{ \sum_{t=1}^{n} \one \cbr{\overline{\mathcal{U}}_{t-1,\ell}(\hat{a}_{t})}   } =  \frac{1}{\alpha_{\mathrm{emp}}}\EE \sbr{ \sum_{t=1}^{n} \alpha_{\mathrm{emp}} \cdot  \one \cbr{\overline{\mathcal{U}}_{t-1,\ell}(\hat{a}_{t})}   } \\
		&\leq  \frac{1}{\alpha_{\mathrm{emp}}} \EE \sbr{ \sum_{t=1}^{n}\PP\left( A_t = \hat{a}_{t}\right)  \cdot  \one \cbr{\overline{\mathcal{U}}_{t-1,\ell}(\hat{a}_{t})}   }  \tag{by lemma \ref{sketch-lemma:emp-best-arm-prob-lemma}}\\
		&=  \frac{1}{\alpha_{\mathrm{emp}}} \EE \sbr{ \sum_{t=1}^{n} \EE_{t-1} \sbr{ \one \cbr{ A_t = \hat{a}_{t}}} \cdot  \one \cbr{\overline{\mathcal{U}}_{t-1,\ell}(\hat{a}_{t})}   }  \\
		&=  \frac{1}{\alpha_{\mathrm{emp}}} \EE \sbr{ \sum_{t=1}^{n}  \one \cbr{ A_t = \hat{a}_{t}} \cdot  \one \cbr{\overline{\mathcal{U}}_{t-1,\ell}(\hat{a}_{t})}   } \tag{tower rule}\\
		&=  \frac{1}{\alpha_{\mathrm{emp}}} \EE \sbr{ \sum_{t=1}^{n} \one \cbr{\overline{\mathcal{U}}_{t-1,\ell}( A_t)}   }.
	\end{align*}
\end{proof}
\begin{lemma}[from Lemma 7.1 A Modern Introduction to Online Learning]\label{sketch-lemma:regret-equality}
	Let $\Theta \subseteq \mathbb{R}^d$ be closed and non-empty. Denote by $F_t(\theta) = \psi_t(\theta) + \sum_{s=1}^{t-1} \ell_{s}(\theta)$. Assume that $\argmin_{\theta \in \Theta} F_t(\theta)$ is not empty and set $\hat{\theta}_{t-1} \in \argmin_{\theta \in \Theta} F_t(\theta) $. Then for any $\theta^* \in \mathbb{R}^d$, we have
	\begin{align*}
		\sum_{t=1}^{n} \left( \ell_t(\hat{\theta}_{t-1}) - \ell_t(\theta^*) \right) &= \psi_{n+1}(\theta^*) - \min_{\theta \in \Theta } \psi_{1}(\theta) + \sum_{t=1}^{n} \sbr{ F_t(\hat{\theta}_{t-1}) -  F_{t+1}(\hat{\theta}_{t}) + \ell_t(\hat{\theta}_{t-1})} \\
		\spacex &+ F_{n+1}(\hat{\theta}_n) - F_{n+1}(\theta^*).
	\end{align*}
\end{lemma}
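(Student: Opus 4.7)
The plan is to establish this identity by purely algebraic manipulation and telescoping, with no analysis or inequalities needed (this is an equality, not a bound). I would proceed as follows.

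First, I would isolate the $\theta^*$-dependent terms. Using the definition $F_{n+1}(\theta^*) = \psi_{n+1}(\theta^*) + \sum_{s=1}^{n}\ell_s(\theta^*)$, I get the bookkeeping identity
\begin{align*}
-\sum_{t=1}^{n}\ell_t(\theta^*) = \psi_{n+1}(\theta^*) - F_{n+1}(\theta^*).
\end{align*}
Adding $\sum_t \ell_t(\hat\theta_{t-1})$ to both sides rewrites the left-hand side of the target equation as
\begin{align*}
\sum_{t=1}^{n}\bigl(\ell_t(\hat\theta_{t-1})-\ell_t(\theta^*)\bigr) = \psi_{n+1}(\theta^*) - F_{n+1}(\theta^*) + \sum_{t=1}^{n}\ell_t(\hat\theta_{t-1}).
\end{align*}
So after cancelling the common terms $\psi_{n+1}(\theta^*)$ and $F_{n+1}(\theta^*)$ with those on the right-hand side of the claimed equality, the problem reduces to showing
\begin{align*}
\sum_{t=1}^{n}\ell_t(\hat\theta_{t-1}) = -\min_{\theta\in\Theta}\psi_1(\theta) + \sum_{t=1}^{n}\bigl[F_t(\hat\theta_{t-1}) - F_{t+1}(\hat\theta_t) + \ell_t(\hat\theta_{t-1})\bigr] + F_{n+1}(\hat\theta_n).
\end{align*}
After cancelling $\sum_t \ell_t(\hat\theta_{t-1})$ from both sides, this is equivalent to the purely $F$-based identity
\begin{align*}
0 = -\min_{\theta\in\Theta}\psi_1(\theta) + \sum_{t=1}^{n}\bigl[F_t(\hat\theta_{t-1}) - F_{t+1}(\hat\theta_t)\bigr] + F_{n+1}(\hat\theta_n).
\end{align*}

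Second, I would collapse the middle sum by telescoping. Splitting the sum and re-indexing the second piece yields
\begin{align*}
\sum_{t=1}^{n}\bigl[F_t(\hat\theta_{t-1}) - F_{t+1}(\hat\theta_t)\bigr] = F_1(\hat\theta_0) - F_{n+1}(\hat\theta_n).
\end{align*}
Substituting this back cancels the $F_{n+1}(\hat\theta_n)$ terms and leaves the single identity $F_1(\hat\theta_0) = \min_{\theta\in\Theta}\psi_1(\theta)$ to verify.

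Third, I would close the argument by invoking the definitions: since $F_1(\theta) = \psi_1(\theta) + \sum_{s=1}^{0}\ell_s(\theta) = \psi_1(\theta)$ (the inner sum is empty) and $\hat\theta_0 \in \argmin_{\theta\in\Theta} F_1(\theta) = \argmin_{\theta\in\Theta}\psi_1(\theta)$, we indeed have $F_1(\hat\theta_0) = \min_{\theta\in\Theta}\psi_1(\theta)$, closing the chain of equalities. There is no real obstacle here; the only thing to be careful about is the index bookkeeping in the telescoping step and the fact that $F_1=\psi_1$ because the loss sum is empty at $t=1$. The lemma is a standard FTRL identity and its proof is pure algebra with no convexity, Bregman divergence, or optimality condition needed at this level — those would come into play only when one later bounds the per-round terms $F_t(\hat\theta_{t-1}) - F_{t+1}(\hat\theta_t) + \ell_t(\hat\theta_{t-1})$.
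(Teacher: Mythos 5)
Your proof is correct and complete: the reduction to the purely $F$-based identity, the telescoping of $\sum_{t=1}^{n}\bigl[F_t(\hat\theta_{t-1}) - F_{t+1}(\hat\theta_t)\bigr]$ to $F_1(\hat\theta_0) - F_{n+1}(\hat\theta_n)$, and the observation that $F_1 = \psi_1$ (empty loss sum) so that $F_1(\hat\theta_0) = \min_{\theta\in\Theta}\psi_1(\theta)$ are exactly what is needed, and you correctly note that the optimality of $\hat\theta_{t-1}$ for $t\ge 2$ plays no role in the equality itself. The paper does not reprove this lemma — it imports it verbatim from Lemma 7.1 of Orabona's monograph — and your argument is precisely the standard telescoping proof given there, so there is nothing to reconcile.
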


\begin{lemma}\label{sketch-lemma:regret-equality-modified}
	Let $\Theta \subseteq \mathbb{R}^d$ be closed and non-empty, $\ell_t(\theta) = \frac{1}{2} \left( A_t^{\T}\theta - y_t \right)^2$  and $F_t(\theta) = \lambda \lVert \theta \lVert_2^2 + \sum_{s=1}^{t-1} \ell_{s}(\theta)$.  Assume that $\argmin_{\theta \in \Theta} F_t(\theta)$ is not empty and set $\hat{\theta}_{t-1} \in \argmin_{\theta \in \Theta} F_t(\theta) $.  Then for any $\theta^* \in \mathbb{R}^d$, we have
	\begin{align*}
		\sum_{t=1}^{n} \left( \ell_t(\hat{\theta}_{t-1}) - \ell_t(\theta^*) \right) &= \frac{\lambda}{2} \lVert \theta^* \rVert_2^2 + \sum_{t=1}^{n} \ell_t(\hat{\theta}_{t-1}) \lVert A_t \rVert^2_{V_{t}^{-1}} - \frac{1}{2} \lVert \hat{\theta}_n - \theta^* \rVert^2_{V_n}.
	\end{align*}
\end{lemma}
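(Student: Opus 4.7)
The plan is to apply Lemma \ref{sketch-lemma:regret-equality} (the standard FTRL-style regret equality) with the quadratic regularizer $\psi_t(\theta) = \tfrac{\lambda}{2}\|\theta\|_2^2$ (this is consistent with the Hessian $V_t = \lambda I + \sum_{s=1}^{t} A_s A_s^\T$; the statement's $\lambda \|\theta\|_2^2$ appears to be a typo and I interpret it as $\tfrac{\lambda}{2}\|\theta\|_2^2$). Under this identification, $F_t$ is a strongly convex quadratic with Hessian $V_{t-1}$ whose unique minimizer is the ridge estimator $\hat\theta_{t-1}$. The three pieces in Lemma \ref{sketch-lemma:regret-equality} are then matched to the three terms on the right-hand side of the claim.

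First I would identify the boundary piece: $\psi_{n+1}(\theta^*) - \min_\theta \psi_1(\theta) = \tfrac{\lambda}{2}\|\theta^*\|_2^2$, giving the first term. Next, since $F_{n+1}$ is a quadratic with Hessian $V_n$ minimized at $\hat\theta_n$, Taylor expansion gives $F_{n+1}(\hat\theta_n) - F_{n+1}(\theta^*) = -\tfrac{1}{2}\|\hat\theta_n - \theta^*\|_{V_n}^2$, which is the third term. The bulk of the work is the per-round stability term
\begin{align*}
F_t(\hat\theta_{t-1}) - F_{t+1}(\hat\theta_t) + \ell_t(\hat\theta_{t-1}) = F_{t+1}(\hat\theta_{t-1}) - F_{t+1}(\hat\theta_t),
\end{align*}
using $F_{t+1} = F_t + \ell_t$. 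Since $F_{t+1}$ is quadratic with Hessian $V_t$ and minimizer $\hat\theta_t$, this equals $\tfrac{1}{2}\|\hat\theta_{t-1} - \hat\theta_t\|_{V_t}^2$.

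To finish, I would compute the ridge update in closed form. From the optimality conditions $V_t \hat\theta_t = \sum_{s=1}^t y_s A_s$ and $V_{t-1}\hat\theta_{t-1} = \sum_{s=1}^{t-1} y_s A_s$, subtracting gives $V_t(\hat\theta_t - \hat\theta_{t-1}) = A_t(y_t - A_t^\T \hat\theta_{t-1})$, and therefore
\begin{align*}
\tfrac{1}{2}\|\hat\theta_{t-1} - \hat\theta_t\|_{V_t}^2 = \tfrac{1}{2}(y_t - A_t^\T \hat\theta_{t-1})^2 A_t^\T V_t^{-1} A_t = \ell_t(\hat\theta_{t-1})\,\|A_t\|_{V_t^{-1}}^2,
\end{align*}
which is exactly the middle summand. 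Summing over $t=1,\ldots,n$ and combining with the boundary terms yields the stated identity.

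\textbf{Main obstacle.} The computation is essentially routine, but the delicate step is getting all factors of $\tfrac{1}{2}$ correct so that the squared-Mahalanobis update $\tfrac{1}{2}\|\hat\theta_{t-1}-\hat\theta_t\|_{V_t}^2$ combines cleanly with the definition $\ell_t(\theta) = \tfrac{1}{2}(A_t^\T\theta - y_t)^2$ to produce the coefficient $\ell_t(\hat\theta_{t-1})$ (rather than $\tfrac{1}{2}$ or $2\cdot$ it) in front of $\|A_t\|_{V_t^{-1}}^2$. This is what forces the interpretation of the regularizer as $\tfrac{\lambda}{2}\|\theta\|_2^2$ in order to make $V_t$ the exact Hessian of $F_{t+1}$.
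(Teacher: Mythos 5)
Your proposal follows essentially the same route as the paper's proof: invoke Lemma \ref{sketch-lemma:regret-equality}, evaluate the boundary term $\psi_{n+1}(\theta^*)-\min\psi_1$, use the exact quadratic Taylor expansion at the minimizer for $F_{n+1}(\hat\theta_n)-F_{n+1}(\theta^*)$, reduce the stability term to $\tfrac12\|\hat\theta_{t-1}-\hat\theta_t\|^2_{V_t}$, and convert it to $\ell_t(\hat\theta_{t-1})\|A_t\|^2_{V_t^{-1}}$ via the ridge normal equations $\hat\theta_{t-1}-\hat\theta_t=V_t^{-1}\nabla\ell_t(\hat\theta_{t-1})$. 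Your observation that the regularizer must be read as $\tfrac{\lambda}{2}\|\theta\|_2^2$ for the Hessian to equal $V_{t-1}$ is correct and is implicitly what the paper's computation assumes as well.
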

\begin{proof}
	By Lemma \ref{sketch-lemma:regret-equality}, we have
	\begin{align*}
		\sum_{t=1}^{n} \left( \ell_t(\hat{\theta}_{t-1}) - \ell_t(\theta^*) \right) &= \psi_{n+1}(\theta^*) - \min_{\theta \in \Theta } \psi_{1}(\theta) + \sum_{t=1}^{n} \sbr{ F_t(\hat{\theta}_{t-1}) -  F_{t+1}(\hat{\theta}_{t}) + \ell_t(\hat{\theta}_{t-1})} \\
		\spacex &+ F_{n+1}(\hat{\theta}_n) - F_{n+1}(\theta^*).
	\end{align*}
	\begin{align*}
		\nabla \ell_t (\theta) &=  \left( A_t^{\T}\theta - y_t \right)\cdot A_t = \sqrt{2\ell_t (\theta) } \cdot A_t\\
		\nabla^2 \ell_t (\theta)&=   A_t  A_t^{\T}.
	\end{align*}
	\begin{align*}
		\nabla F_t(\theta) &=  \lambda \theta +  \sum_{s=1}^{t-1} \left( A_s^{\T}\theta - y_s \right)\cdot A_s.\\
		\nabla^2 F_t(\theta) &=  \lambda  +    \sum_{s=1}^{t-1} A_s  A_s^{\T} = V_{t-1}.
	\end{align*}
	Using the Taylor's theorem for a quadratic polynomial,
	\begin{align*}
		F_{n+1}(\theta^*) &= F_{n+1}(\hat{\theta}_n) + \left(\nabla F_{n+1}(\hat{\theta}_n)\right)^{\T}\left(\theta^* -\hat{\theta}_{n}\right)   +  \frac{1}{2}\left(\theta^* -\hat{\theta}_{n}\right)^{T}  \nabla^2 F_{n+1}(\hat{\theta}_n) \left(\theta^* -\hat{\theta}_{n}\right) \\
		&= F_{n+1}(\hat{\theta}_n) + 0  + \frac{1}{2}\left(\theta^* -\hat{\theta}_{n}\right)^{T} V_n \left(\theta^* -\hat{\theta}_{n}\right) \tag{second term is $0$ by the optimality condition}\\
		F_{n+1}(\hat{\theta}_n) - F_{n+1}(\theta^*) &= -\frac{1}{2}\lVert \hat{\theta}_n - \theta^* \rVert^2_{V_n}.
	\end{align*}
	Then,
	\begin{align*}
		\sum_{t=1}^{n} \sbr{ F_t(\hat{\theta}_{t-1}) -  F_{t+1}(\hat{\theta}_{t}) + \ell_t(\hat{\theta}_{t-1})} &= \sum_{t=1}^{n} \sbr{ F_{t+1}(\hat{\theta}_{t-1}) -  F_{t+1}(\hat{\theta}_{t}) } \\
		&= \sum_{t=1}^{n} \frac{1}{2} \lVert \hat{\theta}_{t} - \hat{\theta}_{t-1} \rVert^2_{V_{t}}.
	\end{align*}
	Similarly,
	\begin{align*}
		\sum_{t=1}^{n} \sbr{ F_t(\hat{\theta}_{t-1}) -  F_{t+1}(\hat{\theta}_{t}) + \ell_t(\hat{\theta}_{t-1})} &= \sum_{t=1}^{n} \sbr{ F_{t}(\hat{\theta}_{t-1}) -  F_{t}(\hat{\theta}_{t}) + \ell_t(\hat{\theta}_{t-1}) - \ell_t(\hat{\theta}_{t}) } \\
		&= \sum_{t=1}^{n} \sbr{ -\left(F_{t}(\hat{\theta}_{t})-   F_{t}(\hat{\theta}_{t-1})  \right)   + \ell_t(\hat{\theta}_{t-1}) - \ell_t(\hat{\theta}_{t}) } \\
		&= \sum_{t=1}^{n} - \frac{1}{2} \lVert \hat{\theta}_{t} - \hat{\theta}_{t-1} \rVert^2_{V_{t-1} } - \left(  \ell_t(\hat{\theta}_{t})  - \ell_t(\hat{\theta}_{t-1}) \right)  \\
		&= \sum_{t=1}^{n} - \frac{1}{2} \lVert \hat{\theta}_{t} - \hat{\theta}_{t-1} \rVert^2_{V_{t-1} } - \Bigg( \left( \hat{\theta}_{t} - \hat{\theta}_{t-1}\right)^{\T}\nabla \ell_t(\hat{\theta}_{t-1}) \\
		&\spacex + \frac{1}{2}\left( \hat{\theta}_{t} - \hat{\theta}_{t-1}\right)^{\T} \nabla^2 \ell_t(\hat{\theta}_{t-1})  \left( \hat{\theta}_{t} - \hat{\theta}_{t-1}\right) \Bigg)  \\
		&= \sum_{t=1}^{n} - \frac{1}{2} \lVert \hat{\theta}_{t} - \hat{\theta}_{t-1} \rVert^2_{V_{t-1} } -  \Bigg( \left( \hat{\theta}_{t} - \hat{\theta}_{t-1}\right)^{\T}\nabla\ell_t(\hat{\theta}_{t-1})\\
		&\spacex  + \frac{1}{2}\left( \hat{\theta}_{t} - \hat{\theta}_{t-1}\right)^{\T}A_tA_t^{\T}  \left( \hat{\theta}_{t} - \hat{\theta}_{t-1}\right)  \Bigg) \\
		&= \sum_{t=1}^{n} - \frac{1}{2} \lVert \hat{\theta}_{t} - \hat{\theta}_{t-1} \rVert^2_{V_{t} } -  \left( \hat{\theta}_{t} - \hat{\theta}_{t-1}\right)^{\T}\nabla \ell_t(\hat{\theta}_{t-1}).
	\end{align*}
	Hence,
	\begin{align*}
		\lVert \hat{\theta}_{t} - \hat{\theta}_{t-1} \rVert^2_{V_{t} } &=  \left( \hat{\theta}_{t-1} - \hat{\theta}_{t}\right)^{\T}\nabla\ell_t(\hat{\theta}_{t-1})\\
	\iff	\left( \hat{\theta}_{t-1} - \hat{\theta}_{t}\right)^{\T} V_t \left( \hat{\theta}_{t-1} - \hat{\theta}_{t}\right) &= \left( \hat{\theta}_{t-1} - \hat{\theta}_{t}\right)^{\T}\nabla\ell_t(\hat{\theta}_{t-1})\\
	\implies	\hat{\theta}_{t-1} - \hat{\theta}_{t} &= V_t^{-1} \nabla \ell_t(\hat{\theta}_{t-1})\\
		&=  \sqrt{2 \ell_t(\hat{\theta}_{t-1})} V_t^{-1}A_t\\
		\frac{1}{2} \lVert \hat{\theta}_{t} - \hat{\theta}_{t-1} \rVert^2_{V_{t}} &= \ell_t(\hat{\theta}_{t-1})  \lVert V_t^{-1}A_t \rVert^2_{V_{t}}\\
		&=  \ell_t(\hat{\theta}_{t-1})  \lVert A_t \rVert^2_{V_{t}^{-1}}.
	\end{align*}
	
	Finally, we have
	\begin{align*}
		\sum_{t=1}^{n} \sbr{ F_t(\hat{\theta}_{t-1}) -  F_{t+1}(\hat{\theta}_{t}) + \ell_t(\hat{\theta}_{t-1})} &=  \ell_t(\hat{\theta}_{t-1})  \lVert A_t \rVert^2_{V_{t}^{-1}}.
	\end{align*}
	
	Also, trivially we have
	\begin{align*}
		\min_{\theta \in \Theta } \psi_{1}(\theta)  &= 0.
	\end{align*}
	Putting everything together, we have
	\begin{align*}
		\sum_{t=1}^{n} \left( \ell_t(\hat{\theta}_{t-1}) - \ell_t(\theta^*) \right) &= \frac{\lambda}{2} \lVert \theta^* \rVert_2^2 + \sum_{t=1}^{n} \ell_t(\hat{\theta}_{t-1}) \lVert A_t \rVert^2_{V_{t}^{-1}} - \frac{1}{2} \lVert \hat{\theta}_n - \theta^* \rVert^2_{V_n}.
	\end{align*}
\end{proof}

\begin{lemma}\label{sketch-lemma:online-learning-lemma}
	In the context of the LinMED algorithm, we have
	\begin{align*}
		\EE \sbr{ \sum_{t=1}^{n} \left( A_t^{\T}(\hat{\theta}_{t-1} - \theta^*) \right)^2 \left(1 - \lVert A_t \rVert_{V_{t}^{-1}}^2 \right) } \leq \lambda \lVert \theta^* \rVert_2^2 + 2\sigma_*^2 d \log \left( 1 + \frac{n}{d \lambda}\right). 
	\end{align*}
	where $\sigma_*^2$ is sub-gaussian parameter of the noise.
\end{lemma}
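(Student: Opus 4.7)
The plan is to start from the online-learning equality established in Lemma~\ref{sketch-lemma:regret-equality-modified} with $\ell_t(\theta) = \tfrac{1}{2}(A_t^{\T}\theta - Y_t)^2$, substitute the model $Y_t = A_t^{\T}\theta^* + \eta_t$, and manipulate to expose the quantity of interest $\sum_t z_t^2(1 - \|A_t\|_{V_t^{-1}}^2)$ where $z_t := A_t^{\T}(\hat\theta_{t-1} - \theta^*)$. The remaining terms will then be controlled by (i) a martingale argument that kills the cross term $z_t \eta_t$, (ii) the sub-Gaussian variance bound $\EE[\eta_t^2 \mid \mathcal{F}_{t-1}, A_t] \leq \sigma_*^2$, and (iii) the standard elliptical potential lemma $\sum_{t=1}^n \|A_t\|_{V_t^{-1}}^2 \leq 2d \log(1 + n/(d\lambda))$.

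First I would write $\ell_t(\hat\theta_{t-1}) - \ell_t(\theta^*) = \tfrac{1}{2}(z_t^2 - 2 z_t \eta_t)$ and $2\ell_t(\hat\theta_{t-1}) = (z_t - \eta_t)^2$, then plug these into the equality of Lemma~\ref{sketch-lemma:regret-equality-modified} to obtain
\begin{align*}
\sum_{t=1}^n z_t^2 - 2 \sum_{t=1}^n z_t \eta_t = \lambda \|\theta^*\|^2 + \sum_{t=1}^n (z_t - \eta_t)^2 \|A_t\|_{V_t^{-1}}^2 - \|\hat\theta_n - \theta^*\|_{V_n}^2.
\end{align*}
Expanding $(z_t - \eta_t)^2$ and rearranging terms containing $z_t^2$ gives the central identity
\begin{align*}
\sum_{t=1}^n z_t^2 (1 - \|A_t\|_{V_t^{-1}}^2) = 2 \sum_{t=1}^n z_t \eta_t (1 - \|A_t\|_{V_t^{-1}}^2) + \lambda \|\theta^*\|^2 + \sum_{t=1}^n \eta_t^2 \|A_t\|_{V_t^{-1}}^2 - \|\hat\theta_n - \theta^*\|_{V_n}^2.
\end{align*}

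Next I would take expectations on both sides. Since $\hat\theta_{t-1}$ is $\mathcal{F}_{t-1}$-measurable and $A_t$ is independent of $\eta_t$ given $\mathcal{F}_{t-1}$, the quantity $z_t(1 - \|A_t\|_{V_t^{-1}}^2)$ is measurable with respect to the $\sigma$-algebra generated by $\mathcal{F}_{t-1}$ and $A_t$; thus $\EE[z_t \eta_t (1 - \|A_t\|_{V_t^{-1}}^2)] = 0$ by the tower rule together with $\EE[\eta_t \mid \mathcal{F}_{t-1}, A_t] = 0$. For the $\eta_t^2$ term, sub-Gaussianity yields $\EE[\eta_t^2 \mid \mathcal{F}_{t-1}, A_t] \leq \sigma_*^2$, so $\EE[\sum_t \eta_t^2 \|A_t\|_{V_t^{-1}}^2] \leq \sigma_*^2 \EE[\sum_t \|A_t\|_{V_t^{-1}}^2] \leq 2 \sigma_*^2 d \log(1 + n/(d\lambda))$ by the elliptical potential lemma. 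Dropping the nonnegative term $\EE\|\hat\theta_n - \theta^*\|_{V_n}^2$ on the right-hand side gives exactly the claimed bound.

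I do not expect a serious obstacle here; the proof is essentially an algebraic rewriting followed by three standard tools. The mildly delicate point is to verify the measurability structure carefully so that the cross term $z_t \eta_t (1 - \|A_t\|_{V_t^{-1}}^2)$ is genuinely a martingale difference, i.e., that the weight $(1 - \|A_t\|_{V_t^{-1}}^2)$ and the factor $z_t$ both depend on $A_t$ but not on $\eta_t$. Once that is settled, the remainder is a short computation.
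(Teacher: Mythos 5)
Your proposal is correct and follows essentially the same route as the paper's proof: apply Lemma~\ref{sketch-lemma:regret-equality-modified}, substitute $Y_t = A_t^{\T}\theta^* + \eta_t$ to rewrite both $\ell_t(\hat\theta_{t-1})-\ell_t(\theta^*)$ and $\ell_t(\hat\theta_{t-1})$ in terms of $z_t$ and $\eta_t$, isolate $\sum_t z_t^2(1-\|A_t\|_{V_t^{-1}}^2)$, and finish by taking expectations with the martingale cross-term vanishing, the sub-Gaussian variance bound, and Lemma~\ref{app-lemma:epl-lemma}. If anything, your treatment of the measurability of the cross term (conditioning on $\sigma(\mathcal{F}_{t-1}, A_t)$ rather than $\mathcal{F}_{t-1}$ alone, since $z_t(1-\|A_t\|_{V_t^{-1}}^2)$ depends on $A_t$) is slightly more careful than the paper's.
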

\begin{proof}
	Let $r_t = A_t^{\T}(\hat{\theta}_{t-1} - \theta^*)$, $D_t = \lVert A_t \rVert^2_{V_{t}^{-1}}$.
	\begin{align*}
		\ell_t(\hat{\theta}_{t-1}) - \ell_t(\theta^*)  &= \frac{1}{2} \left( \left(A_t^T\hat{\theta}_{t-1} - y_t\right)^2 - \left(A_t^T\theta^* - y_t\right)^2 \right)\\
		&= \frac{1}{2} \left( \left(A_t^T\left(\hat{\theta}_{t-1} - \theta^*\right) - \eta_t\right)^2 - \eta_t^2 \right) \tag{because $y_t = A_t^{\T} \theta^* + \eta_t$}\\
		&=  \frac{1}{2} \left( A_t^T\left(\hat{\theta}_{t-1} - \theta^*\right) \right)^2 - A_t^T\left(\hat{\theta}_{t-1} - \theta^*\right) \cdot \eta_t \\
		&=  \frac{1}{2}r_t^2 - \eta_t r_t .
	\end{align*}
	Hence,
	\begin{align*}
		\sum_{t=1}^{n} \left( \ell_t(\hat{\theta}_{t-1}) - \ell_t(\theta^*) \right) &= \frac{1}{2}\sum_{t=1}^{n}r_t^2  - \sum_{t=1}^{n} \eta_t r_t.
	\end{align*}
	By Lemma \ref{sketch-lemma:regret-equality-modified}, we have
	\begin{align*}
		\sum_{t=1}^{n} \left( \ell_t(\hat{\theta}_{t-1}) - \ell_t(\theta^*) \right) &= \frac{\lambda}{2} \lVert \theta^* \rVert_2^2 + \sum_{t=1}^{n} \ell_t(\hat{\theta}_{t-1}) \lVert A_t \rVert_{V_{t}^{-1}} - \frac{1}{2} \lVert \hat{\theta}_n - \theta^* \rVert^2_{V_n}\\
		&\leq \frac{\lambda}{2} \lVert \theta^* \rVert_2^2 + \sum_{t=1}^{n} \ell_t(\hat{\theta}_{t-1}) \lVert A_t \rVert^2_{V_{t}^{-1}}.
	\end{align*}
	Hence,
	\begin{align*}
		\frac{1}{2}\sum_{t=1}^{n}r_t^2  - \sum_{t=1}^{n} \eta_t r_t  &\leq \frac{\lambda}{2} \lVert \theta^* \rVert_2^2 + \sum_{t=1}^{n} \ell_t(\hat{\theta}_{t-1}) \lVert A_t \rVert^2_{V_{t}^{-1}} \\
		\frac{1}{2}\sum_{t=1}^{n}r_t^2  - \sum_{t=1}^{n} \eta_t r_t  &\leq  \frac{\lambda}{2} \lVert \theta^* \rVert_2^2 + \sum_{t=1}^{n} \ell_t(\hat{\theta}_{t-1})D_t.
	\end{align*}
	Also note that $\ell_t(\hat{\theta}_{t-1})$ can be expanded as follows,
	\begin{align*}
		\ell_t(\hat{\theta}_{t-1}) &= \frac{1}{2} \left( A_t^{\T} \hat{\theta}_{t-1} -y_t \right)^2\\
		&=  \frac{1}{2} \left( A_t^{\T} \hat{\theta}_{t-1} - A_t^{\T} \theta^* - \eta_t \right)^2\\
		&= \frac{1}{2} \left( A_t^{\T} \left(\hat{\theta}_{t-1} - \theta^* \right) - \eta_t \right)^2\\
		&= \frac{1}{2} \left( r_t - \eta_t \right)^2\\
		&= \frac{1}{2}  \left( r_t^2 - 2r_t \eta_t + \eta_t^2 \right).
	\end{align*}
	Hence,
	\begin{align*}
		\frac{1}{2}\sum_{t=1}^{n}r_t^2  - \sum_{t=1}^{n} \eta_t r_t  &\leq  \frac{\lambda}{2} \lVert \theta^* \rVert_2^2 + \sum_{t=1}^{n} \frac{1}{2}  \left( r_t^2 - 2r_t \eta_t + \eta_t^2 \right) D_t\\
		\frac{1}{2}\sum_{t=1}^{n}r_t^2 \left(1 - D_t \right) &\leq   \frac{\lambda}{2} \lVert \theta^* \rVert_2^2 +  \frac{1}{2}  \sum_{t=1}^{n} \eta_t^2 D_t + \frac{1}{2}  \sum_{t=1}^{n}r_t \eta_t  \left(1 - D_t \right) .
	\end{align*}
	We can take expectation both sides,
	\begin{align*}
		\EE \sbr{\sum_{t=1}^{n}r_t^2 \left(1 - D_t \right) } &\leq \lambda  \lVert \theta^* \rVert_2^2 + \EE \sbr{  \sum_{t=1}^{n} \eta_t^2 D_t  } + \EE \sbr{ \sum_{t=1}^{n}r_t \eta_t  \left(1 - D_t \right) }\\
		&=  \lambda  \lVert \theta^* \rVert_2^2  +  \EE \sbr{ \sum_{t=1}^{n} \EE_{t-1}\sbr{\eta_t^2} D_t  }  + \EE \sbr{  \sum_{t=1}^{n}r_t\EE_{t-1}\sbr{\eta_t} \left(1 - D_t \right) }\\
		&= \lambda  \lVert \theta^* \rVert_2^2  + \sigma_*^2 \EE \sbr{ \sum_{t=1}^{n}  D_t  } + 0\\
		&= \lambda  \lVert \theta^* \rVert_2^2  + \sigma_*^2 \EE \sbr{ \sum_{t=1}^{n}  \lVert A_t \rVert^2_{V_{t}^{-1}} } \\
		&\leq \lambda \lVert \theta^* \rVert_2^2  + 2 d \sigma_*^2 \log \left( 1 + \frac{n}{d\lambda}\right). \tag{Lemma \ref{app-lemma:epl-lemma}} \\ 
	\end{align*}
	Thus proved.
\end{proof}
\begin{lemma}\label{sketch-lemma:matrix-inversion-lemma}
	In the context of the LinMED algorithm, we have
	\begin{align*}
		\lVert A_t \rVert_{V_{t-1}^{-1} }^2 \leq 1 \iff	\lVert A_t \rVert_{V_{t}^{-1} }^2 \leq \frac{1}{2}.
	\end{align*}
	
\end{lemma}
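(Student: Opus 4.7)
The plan is to use the Sherman–Morrison rank-one update formula to get an explicit relationship between $\|A_t\|^2_{V_{t-1}^{-1}}$ and $\|A_t\|^2_{V_t^{-1}}$, and then observe that this relationship is monotone, so a threshold on one side corresponds to a threshold on the other.

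Concretely, since $V_t = V_{t-1} + A_t A_t^\T$, Sherman–Morrison gives
\begin{equation*}
V_t^{-1} = V_{t-1}^{-1} - \frac{V_{t-1}^{-1} A_t A_t^\T V_{t-1}^{-1}}{1 + A_t^\T V_{t-1}^{-1} A_t}.
\end{equation*}
Quadratic-forming both sides with $A_t$ and simplifying yields the standard identity
\begin{equation*}
\lVert A_t \rVert^2_{V_t^{-1}} \;=\; \frac{\lVert A_t \rVert^2_{V_{t-1}^{-1}}}{1 + \lVert A_t \rVert^2_{V_{t-1}^{-1}}}.
\end{equation*}

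Now I would set $x := \lVert A_t \rVert^2_{V_{t-1}^{-1}}$ and $y := \lVert A_t \rVert^2_{V_t^{-1}}$, so that $y = x/(1+x)$ with $x \geq 0$. The map $x \mapsto x/(1+x)$ is strictly increasing on $[0, \infty)$ and sends $x = 1$ to $y = 1/2$, so $x \leq 1 \iff y \leq 1/2$. This immediately gives the claimed equivalence $\lVert A_t \rVert^2_{V_{t-1}^{-1}} \leq 1 \iff \lVert A_t \rVert^2_{V_t^{-1}} \leq 1/2$.

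There is essentially no obstacle here; the only step worth double-checking is the algebraic simplification
\begin{equation*}
A_t^\T V_{t-1}^{-1} A_t - \frac{(A_t^\T V_{t-1}^{-1} A_t)^2}{1 + A_t^\T V_{t-1}^{-1} A_t} = \frac{A_t^\T V_{t-1}^{-1} A_t}{1 + A_t^\T V_{t-1}^{-1} A_t},
\end{equation*}
which is a one-line manipulation. So the lemma is a direct corollary of Sherman–Morrison plus monotonicity of $x \mapsto x/(1+x)$.
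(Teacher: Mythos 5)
Your proposal is correct and follows essentially the same route as the paper: both apply Sherman--Morrison to obtain $\lVert A_t \rVert^2_{V_t^{-1}} = \lVert A_t \rVert^2_{V_{t-1}^{-1}}/(1+\lVert A_t \rVert^2_{V_{t-1}^{-1}})$ and then read off the equivalence at the threshold $x=1 \mapsto y=1/2$. Your monotonicity observation packages the two directions that the paper checks separately, but the substance is identical.
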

\begin{proof}
	Sherman-Morrison formula says,
	\begin{align*}
		\left(A + uv^{\T} \right)^{-1} &= A^{-1} - \frac{A^{-1} uv^{\T} A^{-1}}{1 + v^{\T}A^{-1}u}.
	\end{align*}
	replace $A = V_{t-1}$ and $u=v=A_t$,
	\begin{align*}
		\left(V_{t-1} + A_tA_t^{\T} \right)^{-1} &= V_{t-1}^{-1} -\frac{V_{t-1}^{-1} A_tA_t^{\T} V_{t-1}^{-1}}{1 + A_t^{\T}V_{t-1}^{-1}A_t}\\ 
		V_t^{-1} &= V_{t-1}^{-1} -\frac{V_{t-1}^{-1} A_tA_t^{\T} V_{t-1}^{-1}}{1 + \lVert A_t \rVert_{V_{t-1}^{-1} }^2}\\ 
		A_t^{\T}V_t^{-1}A_t &= A_t^{\T}V_{t-1}^{-1}A_t - \frac{A_t^{\T}V_{t-1}^{-1} A_tA_t^{\T} V_{t-1}^{-1}A_t}{1 + \lVert A_t \rVert_{V_{t-1}^{-1} }^2}\\ 
		\lVert A_t \rVert_{V_{t}^{-1} }^2 &= \lVert A_t \rVert_{V_{t-1}^{-1} }^2 - \frac{\lVert A_t \rVert_{V_{t-1}^{-1} }^4}{1 + \lVert A_t \rVert_{V_{t-1}^{-1} }^2}\\
		&= \lVert A_t \rVert_{V_{t-1}^{-1} }^2 \left( 1 -\frac{\lVert A_t \rVert_{V_{t-1}^{-1} }^2}{1 + \lVert A_t \rVert_{V_{t-1}^{-1} }^2} \right)\\
		&= \lVert A_t \rVert_{V_{t-1}^{-1} }^2 \left( \frac{1}{1 + \lVert A_t \rVert_{V_{t-1}^{-1} }^2} \right)\\
		&= 1 - \frac{1}{1 + \lVert A_t \rVert_{V_{t-1}^{-1} }^2}.
	\end{align*}
	If $\lVert A_t \rVert_{V_{t-1}^{-1} }^2 \leq 1$,
	\begin{align*}
		\lVert A_t \rVert_{V_{t}^{-1} }^2&\leq 1 - \frac{1}{2} \tag{ $\lVert A_t \rVert_{V_{t-1}^{-1} }^2 \leq 1$}\\
		&=\frac{1}{2}.
	\end{align*}
	Hence,
	\begin{align*}
		\lVert A_t \rVert_{V_{t-1}^{-1} }^2 \leq 1 \implies	\lVert A_t \rVert_{V_{t}^{-1} }^2 \leq \frac{1}{2}.
	\end{align*}
	If $\lVert A_t \rVert_{V_{t}^{-1} }^2 \leq \frac{1}{2}$
	\begin{align*}
		1 - \frac{1}{1 + \lVert A_t \rVert_{V_{t-1}^{-1} }^2} &\leq \frac{1}{2}\\
		\lVert A_t \rVert_{V_{t-1}^{-1} }^2 &\leq 1.
	\end{align*}
	Hence,
	\begin{align*}
		\lVert A_t \rVert_{V_{t}^{-1} }^2 \leq \frac{1}{2} \implies	\lVert A_t \rVert_{V_{t-1}^{-1} }^2 \leq 1.
	\end{align*}
	Thus proved.
\end{proof}
\begin{lemma}\label{sketch-lemma:online-learning-modified-lemma}
	In the context of the LinMED algorithm, we have
	\begin{align*}
		\EE \sbr{ \sum_{t=1}^{n} \left( A_t^{\T}(\hat{\theta}_{t-1}- \theta^*) \right)^2 \one\cbr{\lVert A_t \rVert_{V_{t-1}^{-1} }^2 \leq 1 } } \leq 2 \lambda \lVert \theta^* \rVert_2^2 + 4\sigma_*^2 d \log \left( 1 + \frac{n}{d \lambda}\right). 
	\end{align*}
	where $\sigma_*^2$ is sub-gaussian parameter of the noise.
\end{lemma}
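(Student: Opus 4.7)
The plan is to reduce this lemma to Lemma \ref{sketch-lemma:online-learning-lemma} by leveraging the equivalence in Lemma \ref{sketch-lemma:matrix-inversion-lemma}. The idea is that the indicator $\one\cbr{\lVert A_t \rVert_{V_{t-1}^{-1}}^2 \leq 1}$ forces the weight $(1 - \lVert A_t \rVert_{V_t^{-1}}^2)$ appearing in Lemma \ref{sketch-lemma:online-learning-lemma} to be bounded below by a constant, so we can pay a factor of $2$ to swap the indicator for the weight.

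First, I would invoke Lemma \ref{sketch-lemma:matrix-inversion-lemma}, which shows that $\lVert A_t \rVert_{V_{t-1}^{-1}}^2 \leq 1$ is equivalent to $\lVert A_t \rVert_{V_t^{-1}}^2 \leq \tfrac{1}{2}$. Consequently, on the event $\{\lVert A_t \rVert_{V_{t-1}^{-1}}^2 \leq 1\}$ we have $1 - \lVert A_t \rVert_{V_t^{-1}}^2 \geq \tfrac{1}{2}$, which rearranges to the pointwise bound
\begin{align*}
\one\cbr{\lVert A_t \rVert_{V_{t-1}^{-1}}^2 \leq 1} \leq 2\left(1 - \lVert A_t \rVert_{V_t^{-1}}^2\right)\one\cbr{\lVert A_t \rVert_{V_{t-1}^{-1}}^2 \leq 1}.
\end{align*}

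Next, I would argue that the indicator on the right-hand side may be dropped without inflating the bound. By the Sherman--Morrison computation inside the proof of Lemma \ref{sketch-lemma:matrix-inversion-lemma}, $\lVert A_t \rVert_{V_t^{-1}}^2 = 1 - \tfrac{1}{1 + \lVert A_t \rVert_{V_{t-1}^{-1}}^2} < 1$ always, so the factor $(1 - \lVert A_t \rVert_{V_t^{-1}}^2)$ is strictly positive. Multiplying by the nonnegative quantity $(A_t^\T(\hth_{t-1} - \th^*))^2$ and summing, we obtain
\begin{align*}
\EE\sbr{\sum_{t=1}^{n}\left(A_t^\T(\hth_{t-1} - \th^*)\right)^2 \one\cbr{\lVert A_t \rVert_{V_{t-1}^{-1}}^2 \leq 1}} \leq 2\,\EE\sbr{\sum_{t=1}^{n}\left(A_t^\T(\hth_{t-1} - \th^*)\right)^2\left(1 - \lVert A_t \rVert_{V_t^{-1}}^2\right)}.
\end{align*}

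Finally, I would apply Lemma \ref{sketch-lemma:online-learning-lemma} to the right-hand side, which yields the claimed bound $2\lam \lVert \th^*\rVert_2^2 + 4\sig_*^2 d \log\left(1 + \tfrac{n}{d\lam}\right)$. There is no real obstacle here: the only subtlety is ensuring the dropped indicator does not destroy the inequality, which is handled by the Sherman--Morrison observation that $\lVert A_t\rVert_{V_t^{-1}}^2 < 1$ unconditionally. The lemma is essentially a corollary of Lemmas \ref{sketch-lemma:online-learning-lemma} and \ref{sketch-lemma:matrix-inversion-lemma} chained together.
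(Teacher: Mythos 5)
Your proposal is correct and follows essentially the same route as the paper's proof: invoke Lemma \ref{sketch-lemma:matrix-inversion-lemma} to convert the indicator into the condition $\lVert A_t \rVert_{V_t^{-1}}^2 \le \tfrac12$, pay a factor of $2$ to replace the indicator by the weight $(1-\lVert A_t\rVert_{V_t^{-1}}^2)$, drop the indicator using nonnegativity of that weight, and conclude with Lemma \ref{sketch-lemma:online-learning-lemma}. Your explicit Sherman--Morrison justification that the weight is always positive is a step the paper leaves implicit, but the argument is the same.
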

\begin{proof}
	\begin{align*}
		\lVert A_t \rVert_{V_{t-1}^{-1} }^2 \leq 1 \iff	\lVert A_t \rVert_{V_{t}^{-1} }^2 \leq \frac{1}{2}. \tag{by Lemma \ref{sketch-lemma:matrix-inversion-lemma}}
	\end{align*}
	Hence,
	\begin{align*}
		\one\cbr{\lVert A_t \rVert_{V_{t-1}^{-1} }^2 \leq 1 } = \one\cbr{ \lVert A_t \rVert_{V_{t}^{-1} }^2 \leq \frac{1}{2}}.
	\end{align*}
	\begin{align*}
		&\EE \sbr{ \sum_{t=1}^{n} \left( A_t^{\T}(\hat{\theta}_{t-1} - \theta^*) \right)^2 \left(1 - \lVert A_t \rVert_{V_{t}^{-1}}^2 \right) } \\
		&\spacex\geq \EE \sbr{ \sum_{t=1}^{n} \left( A_t^{\T}(\hat{\theta}_{t-1} - \theta^*) \right)^2 \left(1 - \lVert A_t \rVert_{V_{t}^{-1}}^2 \right)\one\cbr{\lVert A_t \rVert_{V_{t-1}^{-1} }^2 \leq 1 }\one\cbr{\lVert A_t \rVert_{V_{t-1}^{-1} }^2 \leq 1 } }\\
		&\spacex=\EE \sbr{ \sum_{t=1}^{n} \left( A_t^{\T}(\hat{\theta}_{t-1} - \theta^*) \right)^2 \left(1 - \lVert A_t \rVert_{V_{t}^{-1}}^2 \right)\one\cbr{ \lVert A_t \rVert_{V_{t}^{-1} }^2 \leq \frac{1}{2}}\one\cbr{\lVert A_t \rVert_{V_{t-1}^{-1} }^2 \leq 1 } }\\
		&\spacex\geq \frac{1}{2}\EE \sbr{ \sum_{t=1}^{n} \left( A_t^{\T}(\hat{\theta}_{t-1} - \theta^*) \right)^2 \one\cbr{\lVert A_t \rVert_{V_{t-1}^{-1} }^2 \leq 1 } }.
	\end{align*}
	Hence,
	\begin{align*}
		\EE \sbr{ \sum_{t=1}^{n} \left( A_t^{\T}(\hat{\theta}_{t-1} - \theta^*) \right)^2 \one\cbr{\lVert A_t \rVert_{V_{t-1}^{-1} }^2 \leq 1 } } &\leq 	2\EE \sbr{ \sum_{t=1}^{n} \left( A_t^{\T}(\hat{\theta}_{t-1} - \theta^*) \right)^2 \left(1 - \lVert A_t \rVert_{V_{t}^{-1}}^2 \right) }\\
		&\leq 2\lambda\lVert \theta^* \rVert_2^2 + 4\sigma_*^2 d \log \left( 1 + \frac{n}{d \lambda}\right). \tag{by Lemma \ref{sketch-lemma:online-learning-lemma}}
	\end{align*}
\end{proof}
\begin{claim}\label{app-claim:confident-interval-equi-expectation-lemma}
	In the context of the LinMED algorithm, we have
		 \begin{align*}
		 	 \sum_{t=1}^{n} \lVert \theta^* - \hat{\theta}_{t-1} \rVert_{V(p_t)}^2  &=  \EE_{A_t \sim p_t} \sbr{ \left( \left( \theta^* - \hat{\theta}_{t-1}\right) A_t^{\T}\right)^2 }. \\
		 \end{align*}
\end{claim}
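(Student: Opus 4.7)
The claim is essentially an unpacking of the definition of the Mahalanobis norm when the weighting matrix is a second-moment matrix of a distribution. My plan is to work pointwise in $t$ (treating the stated identity as the per-$t$ equality, which is all that is used subsequently in the bound on $F_{31}$), and then observe that the summation form is an immediate consequence.

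First I would recall that by definition
\[
V(p_t) \;=\; \sum_{a \in \mathcal{A}_t} p_t(a)\, a a^{\T},
\]
so that for any vector $u \in \mathbb{R}^d$ (and in particular $u = \theta^* - \hat{\theta}_{t-1}$, which is $\mathcal{F}_{t-1}$-measurable and hence fixed once we condition on the past),
\[
\lVert u \rVert_{V(p_t)}^2 \;=\; u^{\T} V(p_t)\, u \;=\; \sum_{a \in \mathcal{A}_t} p_t(a)\, u^{\T} a a^{\T} u \;=\; \sum_{a \in \mathcal{A}_t} p_t(a)\, (a^{\T} u)^2.
\]
Next I would identify the right-hand side as the expectation of $(A_t^{\T} u)^2$ under the conditional law $A_t \sim p_t$, since $u$ is constant with respect to this randomness:
\[
\sum_{a \in \mathcal{A}_t} p_t(a)\, (a^{\T} u)^2 \;=\; \mathbb{E}_{A_t \sim p_t}\!\left[\, (A_t^{\T} u)^2 \,\right].
\]
Specializing $u = \theta^* - \hat{\theta}_{t-1}$ gives the per-$t$ identity claimed. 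Summing the resulting equality over $t=1,\dots,n$ yields the displayed statement as written.

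There is no real obstacle: the only subtlety worth flagging is that the identity holds because $\theta^* - \hat{\theta}_{t-1}$ is deterministic conditionally on $\mathcal{F}_{t-1}$, so it passes freely through the expectation over $A_t$; writing $\hat{\theta}_{t-1}$ as if it were random inside the expectation would be incorrect. With that caveat noted, the claim is a one-line rewriting of the quadratic form.
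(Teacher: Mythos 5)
Your proposal is correct and matches the paper's own proof essentially verbatim: both simply expand $\lVert \theta^*-\hat{\theta}_{t-1}\rVert_{V(p_t)}^2$ as the quadratic form $\sum_{a\in\mathcal{A}_t}p_t(a)\,(a^{\T}(\theta^*-\hat{\theta}_{t-1}))^2$ and identify it as the conditional expectation over $A_t\sim p_t$. Your observation that the stated $\sum_{t=1}^n$ on the left-hand side is really a per-$t$ identity (the sum is a typo in the claim's statement) is also consistent with how the paper proves and uses it.
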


\begin{proof}
	\begin{align*}
		 \lVert \theta^* - \hat{\theta}_{t-1} \rVert_{V(p_t)}^2 &=  \lVert \theta^* - \hat{\theta}_{t-1} \rVert_{V(p_t)}^2 \\
		&=  \left( \theta^* - \hat{\theta}_{t-1}\right)^{\T} V(p_t) \left( \theta^* - \hat{\theta}_{t-1}\right)   \\
		&=   \left( \theta^* - \hat{\theta}_{t-1}\right)^{\T} \sum_{a \in \mathcal{A}_t} p_t(a)aa^{\T}\left( \theta^* - \hat{\theta}_{t-1}\right)    \\
		&=  \sum_{a \in \mathcal{A}_t}  p_t(a) \left( \left( \theta^* - \hat{\theta}_{t-1}\right) a^{\T}\right) \left( \left( \theta^* - \hat{\theta}_{t-1}\right) a^{\T} \right)^{\T} \one\cbr{\mathcal{U}_{t-1}(A_t)}  \\
		&= \sum_{a \in \mathcal{A}_t}  p_t(a) \left( \left( \theta^* - \hat{\theta}_{t-1}\right) a^{\T}\right)^2  \\
		&=  \EE_{A_t \sim p_t} \sbr{ \left( \left( \theta^* - \hat{\theta}_{t-1}\right) A_t^{\T}\right)^2 }.
	\end{align*}
\end{proof}

\begin{lemma}[Elliptical potential count lemma adapted from Lemma C.2 of \citet{jun24noiseadaptive} ]\label{app-lemma:epc-lemma}
	Let $x_1,x_2,...,x_t \in \mathbb{R}^d$ be a sequence of vectors with $\lVert x_s\rVert_2 \leq 1, \forall s \in [t]$. Let $V_t = \lambda I + \sum_{s=1}^{t}x_sx_s^{\T}$ for some $\lambda > 0$. Let $J = \{ s \in [t] : \lVert x_s\rVert^2_{V_{s-1}^{-1}} \geq L^2 \}$ for some $L^2 \leq 1$. Then,
	\begin{align*}
		 \lvert J \rvert \leq 3 \frac{d}{L^2}\ln \left( 1 + \frac{2}{L^2 \lambda}\right).
	\end{align*}
\end{lemma}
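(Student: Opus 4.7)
The plan is to prove this elliptical potential count bound by passing to the subsequence of rounds in $J$ and then invoking the standard elliptical potential lemma (EPL). Let $s_1 < s_2 < \cdots < s_m$ enumerate $J$ with $m := |J|$, and define the restricted Gram matrices $W_k := \lambda I + \sum_{k'=1}^{k} x_{s_{k'}} x_{s_{k'}}^\T$, with $W_0 = \lambda I$.

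The first step is to show $\|x_{s_k}\|^2_{W_{k-1}^{-1}} \geq L^2$ for every $k \in [m]$. Because $V_{s_k-1}$ accumulates the rank-one updates of all rounds $1,\ldots,s_k-1$, whereas $W_{k-1}$ only retains those whose indices lie in $J$, we have $V_{s_k-1} \succeq W_{k-1}$, hence $V_{s_k-1}^{-1} \preceq W_{k-1}^{-1}$. Therefore $\|x_{s_k}\|^2_{W_{k-1}^{-1}} \geq \|x_{s_k}\|^2_{V_{s_k-1}^{-1}} \geq L^2$ by the definition of $J$.

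The second step is to apply the standard EPL to the restricted sequence $(x_{s_1},\ldots,x_{s_m})$. Combining the telescoping identity $\log(\det W_m / \det W_0) = \sum_{k=1}^{m} \log(1 + \|x_{s_k}\|^2_{W_{k-1}^{-1}})$ with $\min(1,u) \leq 2\log(1+u)$ for $u \geq 0$, together with the AM--GM bound $\det W_m \leq (\mathrm{tr}(W_m)/d)^d \leq (\lambda + m/d)^d$ (here using $\|x_{s_k}\|_2 \leq 1$), yields
\[
m L^2 \;\leq\; \sum_{k=1}^{m} \min\bigl(1,\|x_{s_k}\|^2_{W_{k-1}^{-1}}\bigr) \;\leq\; 2\log\frac{\det W_m}{\det W_0} \;\leq\; 2d\log\!\Bigl(1 + \frac{m}{d\lambda}\Bigr),
\]
where the first inequality uses $L^2 \leq 1$ so that $\min(1,L^2) = L^2$.

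The final step is to unfold the implicit inequality $m L^2 \leq 2d\log(1 + m/(d\lambda))$ into the closed-form bound. Using a standard ``$x \leq a \log(1+bx) \Rightarrow x \leq C a \log(1+Cab)$'' argument (a direct substitution/contradiction check with $a = 2d/L^2$, $b = 1/(d\lambda)$) produces $m \leq (3d/L^2)\log(1 + 2/(L^2\lambda))$ after tuning the constant. Steps 1 and 2 are conceptually clean; the main obstacle is Step 3, whose only subtlety is tracking the exact constant $3$ when inverting the implicit inequality, and which relies crucially on the hypothesis $L^2 \leq 1$ invoked in Step 2.
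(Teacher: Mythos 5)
The paper never proves this lemma---it is imported as Lemma C.2 of \citet{jun24noiseadaptive}---so your self-contained argument is the relevant object to check. Your Steps 1 and 2 are correct and are the standard route: passing to the subsequence indexed by $J$ is legitimate because $V_{s_k-1}\succeq W_{k-1}$ implies $\|x_{s_k}\|^2_{W_{k-1}^{-1}}\ge \|x_{s_k}\|^2_{V_{s_k-1}^{-1}}\ge L^2$, and the telescoping determinant identity together with the trace/AM--GM bound gives $(1+L^2)^m\le (1+\tfrac{m}{d\lambda})^d$.

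The gap is in Step 3, and it is not merely a matter of ``tuning the constant.'' Your relaxation $\min(1,u)\le 2\log(1+u)$ (equivalently $\log(1+L^2)\ge L^2/2$) yields the implicit inequality $mL^2\le 2d\log(1+\tfrac{m}{d\lambda})$, and this inequality does \emph{not} imply $m\le \tfrac{3d}{L^2}\ln(1+\tfrac{2}{L^2\lambda})$. Substituting $m^*=\tfrac{3d}{L^2}\ln(1+y)$ with $y:=\tfrac{2}{L^2\lambda}$, the contradiction check reduces to $(1+y)^{3/2}\ge 1+\tfrac{3y}{2}\ln(1+y)$, which fails for $y$ roughly between $7$ and $12$ (at $y=10$ the left side is about $36.5$ versus $37.0$ on the right), so the positive root of $m\mapsto mL^2-2d\log(1+\tfrac{m}{d\lambda})$ can strictly exceed $m^*$ and no choice of algebra recovers the stated constants from that starting point. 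The repair is to use the sharper elementary bound $\log(1+u)\ge u\log 2$ for $u\in[0,1]$ (valid because $\log(1+u)/u$ is decreasing on $(0,1]$ and equals $\log 2$ at $u=1$); this is also where the hypothesis $L^2\le 1$ is genuinely needed. One then gets $mL^2\log 2\le d\log(1+\tfrac{m}{d\lambda})$, the same substitution now requires $(1+y)^{3\log 2}\ge 1+\tfrac{3y}{2}\ln(1+y)$, which holds for all $y>0$, and (after checking that $m^*$ lies to the right of the minimizer of $m\mapsto mL^2\log 2-d\log(1+\tfrac{m}{d\lambda})$, so that function is increasing there) your argument delivers the lemma as stated.
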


\begin{lemma}[Elliptical potential lemma adapted from Proposition 2 of \citet{abeille17linear} ]\label{app-lemma:epl-lemma}
	Let $x_1,x_2,...,x_t \in \mathbb{R}^d$ be a sequence of vectors with $\lVert x_s\rVert_2 \leq 1, \forall s \in [t]$. Let $V_t = \lambda I + \sum_{s=1}^{t}x_sx_s^{\T}$ for some $\lambda > 0$. Then,
	\begin{align*}
	 	\sum_{s=1}^{t} \lVert x_s \rVert^2_{V_{s}^{-1}} \leq 2d \log (1 + \frac{t}{d \lambda}).
	\end{align*}
\end{lemma}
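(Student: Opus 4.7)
The plan is to prove this standard elliptical potential bound by reducing the sum of quadratic norms to a telescoping log-determinant sum. First, I would apply the Sherman--Morrison identity (already derived in Lemma~\ref{sketch-lemma:matrix-inversion-lemma} above) to obtain the exact expression
\[
\lVert x_s \rVert^2_{V_s^{-1}} \;=\; \frac{\lVert x_s \rVert^2_{V_{s-1}^{-1}}}{1 + \lVert x_s \rVert^2_{V_{s-1}^{-1}}},
\]
which immediately shows that each summand is bounded above both by $1$ and by $\lVert x_s \rVert^2_{V_{s-1}^{-1}}$, hence $\lVert x_s \rVert^2_{V_s^{-1}} \le \min\bigl\{1,\, \lVert x_s \rVert^2_{V_{s-1}^{-1}}\bigr\}$.

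Next, I would invoke the elementary inequality $\min\{1,y\} \le 2\log(1+y)$ valid for all $y \ge 0$ (checked by splitting on $y \le 1$, where $g(y) := 2\log(1+y)-y$ has $g(0)=0$ and $g'(y) = (1-y)/(1+y) \ge 0$, and on $y > 1$, where $2\log(1+y) \ge 2\log 2 \ge 1$). Applying this with $y = \lVert x_s \rVert^2_{V_{s-1}^{-1}}$ yields the per-step bound
\[
\lVert x_s \rVert^2_{V_s^{-1}} \;\le\; 2\log\!\bigl(1 + \lVert x_s \rVert^2_{V_{s-1}^{-1}}\bigr).
\]

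Then by the matrix determinant lemma, $\det(V_s) = \det(V_{s-1})\bigl(1 + \lVert x_s\rVert^2_{V_{s-1}^{-1}}\bigr)$, so the right-hand side equals $2\log\bigl(\det V_s / \det V_{s-1}\bigr)$, and the sum telescopes to $2\log\bigl(\det V_t / \det(\lambda I)\bigr)$. Finally, I would bound the log-determinant via AM--GM on the eigenvalues of $V_t$: since $\lVert x_s\rVert \le 1$, we have $\operatorname{tr}(V_t) \le d\lambda + t$, and the geometric mean of eigenvalues is at most the arithmetic mean, giving $\det(V_t)^{1/d} \le \lambda + t/d$ and hence $\log(\det V_t / \det(\lambda I)) \le d\log(1 + t/(d\lambda))$. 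Chaining the three inequalities produces the claimed bound $2d\log(1 + t/(d\lambda))$.

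The result is classical (it is exactly Proposition~2 of \citet{abeille17linear}, as cited), so there is no genuine obstacle. The only subtle point is the choice between the $V_s^{-1}$ and $V_{s-1}^{-1}$ versions: had the statement been in terms of $V_{s-1}^{-1}$ with a truncation at $1$, the sharper inequality $\log(1+y) \ge y/(1+y)$ would suffice and the factor $2$ would disappear; here the $V_s^{-1}$ form forces us through the Sherman--Morrison ratio $y/(1+y)$, which is why the $\min\{1,y\} \le 2\log(1+y)$ step (and its extra factor of two) is the right tool.
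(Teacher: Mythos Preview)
Your proof is correct and follows the standard route; the paper itself does not include a proof of this lemma, merely citing it as adapted from Proposition~2 of \citet{abeille17linear}.

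One small remark on your closing commentary: the reasoning there is actually backwards. The $V_s^{-1}$ form gives you exactly the Sherman--Morrison ratio $y/(1+y)$ with $y = \lVert x_s\rVert^2_{V_{s-1}^{-1}}$, and since $y/(1+y) \le \log(1+y)$ holds directly for all $y \ge 0$ (same derivative check), you could have bypassed the $\min\{1,y\}$ detour entirely and obtained the bound $d\log(1 + t/(d\lambda))$ \emph{without} the factor~$2$. It is the $V_{s-1}^{-1}$ version --- where one only has the truncated term $\min\{1,\lVert x_s\rVert^2_{V_{s-1}^{-1}}\}$ to work with --- that genuinely requires the $\min\{1,y\} \le 2\log(1+y)$ step and its extra factor of two. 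Your proof remains valid as written; it just proves slightly more than the stated inequality demands.
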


\begin{lemma}[OFUL confidence bound lemma adapted from Theorem 2 of \citet{ay11improved}]\label{app-lemma:OFUL-conf-bound-lemma} 
	Assume  $\forall s\in [t], \spacex \lVert a_s\rVert \leq 1$, and $\lVert \theta^*\rVert_2 \leq S$, for some fixed S > 0. We also assume $\Delta_a := \max_{a^{'} \in \mathcal{A}_{t}} \langle a^{'}, \theta^*\rangle - \langle a, \theta^*\rangle \leq 1, \spacex \forall a \in \mathcal{A}$
	\begin{align*}
	\forall t \geq 1, \spacex 	\PP \left( \lVert \hat{\theta}_{t-1} - \theta^* \rVert_{V_{t-1}} \leq \sqrt{\beta_{t-1}(\delta_{t-1})} \right) \geq 1 - \delta.
	\end{align*}
\end{lemma}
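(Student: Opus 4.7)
The plan is to follow the standard argument behind Theorem 2 of \cite{ay11improved}, translated into the notation of this paper. The starting point is the closed-form identity $\hat\theta_{t-1} = V_{t-1}^{-1}\sum_{s=1}^{t-1} A_s Y_s$. Substituting $Y_s = \langle A_s, \theta^*\rangle + \eta_s$ and using $\sum_{s=1}^{t-1} A_s A_s^\top = V_{t-1} - \lambda I$ gives the decomposition
\[
\hat\theta_{t-1} - \theta^* \;=\; V_{t-1}^{-1} S_{t-1} \;-\; \lambda\, V_{t-1}^{-1}\theta^*, \qquad S_{t-1} := \sum_{s=1}^{t-1} A_s \eta_s.
\]
Applying the triangle inequality in the $V_{t-1}$-weighted norm and using $V_{t-1} \succeq \lambda I$ together with $\|\theta^*\|_2 \le S$ then produces
\[
\|\hat\theta_{t-1} - \theta^*\|_{V_{t-1}} \;\le\; \|S_{t-1}\|_{V_{t-1}^{-1}} \;+\; \lambda\,\|\theta^*\|_{V_{t-1}^{-1}} \;\le\; \|S_{t-1}\|_{V_{t-1}^{-1}} \;+\; \sqrt{\lambda}\, S.
\]

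The substantive step is to control the self-normalized quantity $\|S_{t-1}\|_{V_{t-1}^{-1}}$ \emph{uniformly} over all $t$. I would invoke the method-of-mixtures super-martingale of \cite{ay11improved} (Theorem 1): for any fixed $\xi \in \mathbb{R}^d$, the sub-Gaussianity of $\eta_t$ conditional on $\mathcal{F}_{t-1}$ implies that $M_t^\xi := \exp\bigl(\xi^\top S_t - \tfrac{\sigma_*^2}{2}\|\xi\|^2_{V_t - \lambda I}\bigr)$ is a non-negative super-martingale. Averaging $M_t^\xi$ against the Gaussian mixing measure $\mathcal{N}(0,(\sigma_*^2\lambda)^{-1} I)$ yields, after Gaussian integration, the closed-form mixture super-martingale $\bar M_t = (\det V_0/\det V_t)^{1/2}\exp\bigl(\tfrac{1}{2\sigma_*^2}\|S_t\|^2_{V_t^{-1}}\bigr)$. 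Applying Doob's maximal inequality to $\bar M_t$ then gives that with probability at least $1-\delta$, simultaneously for all $t \ge 1$,
\[
\|S_{t-1}\|_{V_{t-1}^{-1}} \;\le\; \sigma_*\sqrt{\,2\log\!\left(\tfrac{\det(V_{t-1})^{1/2}\det(\lambda I)^{-1/2}}{\delta}\right)}.
\]
Combining with the triangle-inequality display above reproduces exactly $\sqrt{\beta^*_{t-1}(\delta)}$ on the right-hand side, which matches the definition of $\mathcal{G}_1$ in \eqref{app-event:good-event-1}. (The lemma is stated with $\beta_{t-1}$ rather than $\beta^*_{t-1}$, which we read as shorthand: the conclusion holds verbatim whenever $\sigma \ge \sigma_*$ and $S \ge S_*$, since then $\beta_{t-1} \ge \beta^*_{t-1}$; this is the regime in which the good event is invoked inside $F_{22}$ and $F_{32}$.)

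The only genuinely delicate step is the self-normalized concentration, and for that I would simply cite \cite{ay11improved} rather than reproduce the mixture construction. Every other ingredient---the ridge identity, the triangle split, bounding $\|\theta^*\|_{V_{t-1}^{-1}} \le S/\sqrt{\lambda}$ via $V_{t-1}\succeq \lambda I$, and recognizing the final log-determinant expression as matching the definition of $\beta^*_{t-1}(\delta)$ in \eqref{eq:beta-star}---is a few lines of routine linear algebra that I would execute without further comment. The statement as written gives a union bound over $t$ in a single $\delta$ rather than a sequence $\{\delta_t\}$; the uniform-in-$t$ nature of Doob's inequality is what makes this work, and it is the same feature that allows the downstream proofs to substitute $\delta_t = 1/(t+1)$ inside the sum $\sum_t \delta_t$ when needed.
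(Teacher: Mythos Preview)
Your proposal is correct and follows exactly the standard argument of Theorem~2 in \cite{ay11improved}: the ridge decomposition $\hat\theta_{t-1}-\theta^* = V_{t-1}^{-1}S_{t-1} - \lambda V_{t-1}^{-1}\theta^*$, the triangle split yielding the $\sqrt{\lambda}S$ bias term, and the self-normalized concentration for $\|S_{t-1}\|_{V_{t-1}^{-1}}$ via the method-of-mixtures super-martingale. The paper itself does not supply a proof for this lemma at all---it is simply stated as ``adapted from Theorem~2 of \cite{ay11improved}'' and left uncited beyond that---so your sketch is strictly more than what the paper provides, and your remark about $\beta_{t-1}$ versus $\beta^*_{t-1}$ correctly identifies that the paper's downstream use (the good event $\mathcal{G}_1$ in \eqref{app-event:good-event-1}) actually invokes the $\beta^*$ version.
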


\clearpage
\section{LOWER BOUND ARGUMENTS}\label{app-section:low-bound-args-section}
In this section, we establish an instance-dependent regret lower bound of order $\Omega(\Delta \sqrt{n})$ for the modified version of EXP2 (outlined below) as well as for SpannerIGW \citep{zhu22contextual}.To demonstrate this, we consider the following instance:

Let the arm set $\mathcal{A} \subset \mathbb{R}^2$ be 

\begin{align*}
	\mathcal{A} := \{e_1,e_2\} \tag{where as $e_1 = (1,0)^\T, e_2 = (0,1)^\T$}.
\end{align*}
Let $\theta^* = (1,0)^\T$ and dimension of the arm set and $\theta^*$ be $d=2$
\subsection{Lower bound for EXP2 algorithm (modified version)}\label{app-subsection:sexp2-lower-bound-subsection}
The original EXP2 algorithm \citep{bubeck12towards} was developed for the bounded loss model. However, we introduce a slightly modified version of this algorithm to accommodate the unbounded reward setting. This modified algorithm is essential for establishing the lower bound for regret in such environments.

\begin{algorithm} 
	\textbf{Input:} Finite Arm set $\mathcal{A} \in \mathbb{R}^d$, learning rate $\eta$,  
	exploration distribution $ \pi $, exploration parameter $\gamma$
	Optimal design fraction: $\alpha_{\opt}$, 
	\begin{algorithmic}[1] 	
		\FOR{$t=1,2,\ldots n $}
		\STATE Compute sampling distribution
		\begin{align*}
			P_t(a) = \gamma \pi(a) + (1-\gamma) \frac{\exp\left( \eta \langle a, \sum_{s=1}^{t-1} \hat{\theta}_{s}\rangle\right)}{\sum_{a^{'} \in \mathcal{A}_{t}}\exp\left( \eta \langle a^{'}, \sum_{s=1}^{t-1} \hat{\theta}_{s}\rangle\right)  }. 
		\end{align*}
		\STATE Sample action $$A_t \sim p_t.$$
		\STATE Observe the reward, $$ Y_t = \langle \theta^*, A_t \rangle + \eta_t. $$ 
		\STATE Update $$ \hat{\theta}_t = Q_t^{-1} A_tY_t . $$ where $Q_t = \sum_{a \in \mathcal{A}_{t}} P_t(a) a a^{\T}.$
		\ENDFOR
	\end{algorithmic}
	\caption{EXP2 (Reward Version) }
	\label{app-algo:exp2-algorithm}
\end{algorithm}

\newtheorem*{app-thm:exp2-inst-dep-bound-theorem}{Theorem \ref{main-thm:exp2-inst-dep-bound-theorem}}

\begin{app-thm:exp2-inst-dep-bound-theorem}\label{app-thm:exp2-inst-dep-bound-theorem}
	There exists a linear bandit problem for which the EXP2 algorithm satisfies
	\begin{align*}
		\EE\Reg_n \geq \Omega(\Delta \sqrt{n}).
	\end{align*}
\end{app-thm:exp2-inst-dep-bound-theorem}
\begin{proof}
The EXP2 algorithm samples an arm according to the following probability expression:
\begin{align*}
	P_t(a) &= \gamma \pi(a) + (1-\gamma) \frac{\exp \left( \eta \sum_{s=1}^{t-1} \langle \hat{\theta}_{s} ,a \rangle \right)  }{\sum_{a^{'} \in \mathcal{A} } \exp \left( \eta \sum_{s=1}^{t-1} \langle \hat{\theta}_{s} ,a^{'} \rangle \right) }.
\end{align*}
Moreover, from Lemma \ref{app-lemma:lower-bound-lemma}, we know that, $\pi(e_1) = \pi(e_2) = \frac{1}{2}$ forms a valid G-optimal design~\citep{kiefer60theequivalence}.

Hence,
\begin{align*}
	R_n &=\EE\sbr{ \sum_{t=1}^{n} \langle \theta^*, a^*_t \rangle - \langle \theta^*, A_t \rangle}\\
	&= \EE \sbr{\sum_{t=1}^{n}\overline{ \Delta}_{t} \one \cbr{A_t \neq a^*_t}}\\
	&= \EE \sbr{\sum_{t=1}^{n} \Delta_{e_2} \one \cbr{A_t = e_2 }}\\
	&= \sum_{t=1}^{n} \Delta_{e_2} P_t(e_2)\\
	&\geq  \sum_{t=1}^{n} \Delta_{e_2} \gamma \pi(e_2)\\
	&= \gamma \frac{n}{2} \Delta.  \tag{$\Delta= \min_{a \in \mathcal{A} , a \neq a^*_t } \Delta_a = \Delta_{e_2}$ }
\end{align*}
Furthermore, to achieve an optimal minimax bound, we must appropriately tune the parameter $\gamma$ as follows~\citep{bubeck12towards}: 
\begin{align*}
	\gamma = \sqrt{\frac{ g(\pi)^2 \log K }{ (2 g(\pi) + d) n}}. \tag{$K = \text{ Number of arms } = 2$}
\end{align*}
Hence,
\begin{align*}
	R_n \geq  \Delta \cdot  \sqrt{\frac{n g(\pi)^2 \log K }{ (2 g(\pi) + d)}}  &= \Omega(\Delta \sqrt{n}).
\end{align*}

This concludes the proof.
\end{proof}
\subsection{SpannerIGW} \label{app-subsection:spannerigw-lower-bound-subsection}
\newtheorem*{app-thm:spannerigw-inst-dep-bound-theorem}{Theorem \ref{main-thm:spannerigw-inst-dep-bound-theorem}}
\begin{app-thm:spannerigw-inst-dep-bound-theorem}\label{app-thm:spannerigw-inst-dep-bound-theorem}
	There exists a linear bandit problem for which the SpannerIGW algorithm satisfies
	\begin{align*}
		\EE\Reg_n \geq \Omega(\Delta \sqrt{n}).
	\end{align*}
\end{app-thm:spannerigw-inst-dep-bound-theorem}
\begin{proof}
Let $\overline{\mathcal{A}}_{(t)}$ be the transformed arm set at time $t$, SpannerIGW~\citep{zhu22contextual} calculates an approximate design ($q_t^{\opt}$), similar to G-optimal design. From Lemma \ref{app-lemma:lower-bound-lemma},
\begin{align*}
	q_t^{\opt} (e_2) = \frac{1}{2}.
\end{align*}
\begin{align*}
	q_t(a) &= \frac{1}{2} q_t^{\opt}(a) + \frac{1}{2} \mathbb{I}_{\hat{a}_t}(a)\\
	q_t(e_2) &\geq \frac{1}{4}.
\end{align*}

Let $n \ge 8$.
Recall that $\Delta = 1$.
Let us further assume that there is no noise in the reward: $Y_t = \la A_t,\th^*\ra + \eta_t$ where $\eta_t=0$ with probability 1.
This noise $\eta_t$ can be viewed as having $1$-sub-Gaussian noise, and we assume that the algorithm only knows that the noise is 1-sub-Gaussian and does not know that the actual noise is deterministically 0.

For the regression oracle, let us use the online ridge regression $\hth_t = (\tau I + \sum_{s=1}^t A_s A_s^\T)^{-1}\sum_{s=1}^t A_s Y_s \in \RR^2$ with regularizer $\tau = 1/10$ and the initial parameter $\hth_0 = 0 \in \RR^2$.
This will enjoy a regret bound of $\Reg_\Sq(t) \le c \log(t)$  for some $c$.
Let $N_1$ be the number of times arm $e_1$ has been pulled up to (and including) time step $t-1$.
Then, if $N_1 \ge 1$, then the prediction at time $t$ will be $\hf_{t}(e_1) = \fr{N}{N + \tau}$ and 
\begin{align*}
  0.9 \le  \fr{1}{1 + \tau} \le \hf_t(e_1) \le 1~.
\end{align*}
Furthermore, the prediction for $e_2$ is always $\hf_t(e_2) = 0$ at all time.

Note that once both arms have been pulled at least once up to (and including) time step $t-1$, then the probability of pulling the arm $e_2$ is
\begin{align*}
  \fr{q_t^{\mathrm{opt}} + \fr12 \II_{\ha_t}}{\lam + \eta(\hf_t(\ha_{t}) - \hf_t(e_2))} 
  \ge \fr{\fr12\cd\fr12 + 0}{\lam + \eta\cd (1- 0)} 
  \ge  \fr{\fr14}{1 + \fr{\gam}{C_{\opt} d} } 
  =    \fr{\fr14}{1 + \fr{\gam}{d} } 
  &=    \fr{\fr14}{1 + \fr1d \cd \sqrt{\fr{2n}{c \ln(n) + 32 \log(2/\dt)}} }. 
\end{align*}
where we set $\delta = \fr14$.

Let $J$ be the first time step at the end of which we have pulled both $e_1$ and $e_2$ at least once; i.e.,
\begin{align*}
  J := \min\cbr{t\in \NN_+: \sum_{s=1}^t \onec{A_t = e_1} \ge 1, \sum_{s=1}^t \onec{A_t = e_2} \ge 1}~.
\end{align*}

Then,
\begin{align*}
  \EE[\Reg_n] 
  &\ge \EE[\Reg_n \onec{J \le 4}]
\\&\ge \EE[\onec{J \le 4} \sum_{t=1}^n \onec{A_t = e_2}]
\\&\ge \EE[\onec{J \le 4} \sum_{t=5}^n \onec{A_t = e_2}]
\\&= \EE[\onec{J \le 4} \sum_{t=5}^n \EE[\onec{A_t = e_2} \mid A_1,\ldots,A_{t-1}]]
\\&\ge \EE[\onec{J \le 4} ] \cd  (n-4)\cd \fr{\fr14}{1 + \fr1d \cd \sqrt{\fr{2n}{c \ln(n) + 32 \log(2/\dt)}} } 
\\&\ge \PP(J \le 4) \cd  \Omega(\sqrt{n \log(n)}).
\end{align*}
It remains to show that $\PP(J \le 4)$ is lowerbounded by an absolute constant.
Note that
\begin{align*}
  \PP(J \le 4)
  = 1 - \PP(J \ge 5).
\end{align*}
and
\begin{align*}
  \PP(J \ge 5)
  \le \PP(\forall t\in[4], A_t = e_1) + \PP(\forall t\in[4], A_t = e_2).
\end{align*}
For the first term,
\begin{align*}
  \PP(\forall t\in[4], A_t = e_1) 
  &= \PP(A_1 = e_1) \prod_{t=2}^4 \PP(A_t = e_1 \mid A_{1:t-1} = e_1)
\\&\le \del{\fr14 + \fr12} \cd 1^3
\\&\le \fr34.
\end{align*}
For the second term,
\begin{align*}
  \PP(\forall t\in[4], A_t = e_2) 
   \le (\fr{\fr14}{\lam})^4
  ~\le (\fr{\fr14}{\fr12})^4 
  ~= \fr{1}{16}.
\end{align*}
Thus,
\begin{align*}
  \PP(J \le 4) \ge 1 - \fr{13}{16}  = \fr{3}{16} ~.
\end{align*}
This implies that
\begin{align*}
	 \EE[\Reg_n]  \geq \Omega(\sqrt{n \log(n)}\Delta) ~.
\end{align*}
This concludes the proof.
\end{proof}

\subsection{Lemmata}
The objective of the following lemma \ref{app-lemma:lower-bound-lemma} is to demonstrate that, regardless of the scaling applied to the arm set in the previous section (orthogonal basis of $\mathbb{R}^2$), a probability distribution that assigns equal probability to each arm will still satisfy Assumption \ref{main-assump:opt-lev-scr-assumption} and Assumption \ref{main-assump:opt-cardinality-assumption}. Moreover, such a distribution is a G-optimal design~\citep{kiefer60theequivalence}. This lemma plays a crucial role in establishing the lower bound proof.

\begin{lemma}\label{app-lemma:lower-bound-lemma}
	For an arm set $\mathcal{A} := \{p\cdot e_1, q\cdot e_2\}$ where $ p,q > 0$  and $e_1 = (1,0)^\T, e_2 = (0,1)^\T$. Let $\pi$ be a probability distribution that assigns probability to each arms as follows:
	\begin{align*}
		\pi(a_1) = \pi(a_2) = \frac{1}{2}. \tag{where $\forall i, \spacex a_i$ denotes the $i$-th arm in $\mathcal{A}$}
	\end{align*}
	Then, $\pi$ is G-optimal design.
\end{lemma}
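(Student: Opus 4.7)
The plan is to verify the Kiefer--Wolfowitz equivalence condition directly. Recall that a distribution $\pi^*\in\Delta(\mathcal{A})$ is a G-optimal design if and only if $\max_{a \in \mathcal{A}} \lVert a \rVert_{V(\pi^*)^{-1}}^2 = d$, where $V(\pi) := \sum_{a\in\mathcal{A}} \pi(a)\, a a^{\top}$. So it suffices to compute the two leverage scores under the uniform distribution and check that both equal $d = 2$.

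First I would write down the information matrix explicitly. With $\pi(a_1)=\pi(a_2)=\tfrac12$, we have
\begin{align*}
V(\pi) = \tfrac12 (p e_1)(p e_1)^{\top} + \tfrac12 (q e_2)(q e_2)^{\top} = \operatorname{diag}\!\left(\tfrac{p^2}{2},\tfrac{q^2}{2}\right),
\end{align*}
which is positive definite since $p,q>0$, and its inverse is $V(\pi)^{-1} = \operatorname{diag}(2/p^2,\,2/q^2)$.

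Next I would plug in each arm:
\begin{align*}
\lVert p e_1 \rVert_{V(\pi)^{-1}}^2 = p^2 \cdot \tfrac{2}{p^2} = 2, \qquad \lVert q e_2 \rVert_{V(\pi)^{-1}}^2 = q^2 \cdot \tfrac{2}{q^2} = 2.
\end{align*}
Note how the scale factors $p,q$ cancel --- this is the whole point, and the reason the result is insensitive to the rescaling used in Algorithm~\ref{main-algo:LinMED-sub-routine}. Hence $\max_{a \in \mathcal{A}} \lVert a \rVert_{V(\pi)^{-1}}^2 = 2 = d$, and by the Kiefer--Wolfowitz equivalence theorem~\cite{kiefer60theequivalence} $\pi$ is a G-optimal design. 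As a byproduct this also verifies Assumption~\ref{main-assump:opt-lev-scr-assumption} with $C_{\mathrm{opt}} = O(1)$ and Assumption~\ref{main-assump:opt-cardinality-assumption} since $|\operatorname{supp}(\pi)| = 2 = \tilde{O}(d)$.

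There is essentially no obstacle here; the only thing to be careful about is that the rescaled arm set is still orthogonal, so the covariance matrix is diagonal and can be inverted in closed form. This orthogonality is the structural feature exploited in the lower bound arguments of Theorems~\ref{main-thm:exp2-inst-dep-bound-theorem} and \ref{main-thm:spannerigw-inst-dep-bound-theorem}: the G-optimal design assigns non-negligible mass $\tfrac12$ to the suboptimal arm $e_2$ (or its rescaled counterpart), regardless of how the algorithm has rescaled the arms, which then forces $\Omega(\sqrt{n})$ pulls of the suboptimal arm through the forced-exploration component.
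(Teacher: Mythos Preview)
Your proof is correct and follows essentially the same approach as the paper: both compute the diagonal information matrix $V(\pi)$, invert it, verify that each leverage score equals $2=d$, and invoke the Kiefer--Wolfowitz criterion $g(\pi)=d$ to conclude G-optimality. Your additional commentary on scale-invariance and the role in the lower-bound arguments is a helpful contextual addition not present in the paper's proof.
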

\begin{proof}
	\begin{align*}
		\mathcal{A} := \{p\cdot e_1, q\cdot e_2\}. \tag{where $ p,q > 0$}
	\end{align*}
	Here, dimension $d=2$ and $\lvert \mathcal{A} \rvert = 2$.
	Let be $\pi$ be probability distribution with,
	
	\begin{align*}
		\pi(a_1) = \pi(a_2) = \frac{1}{2}.
	\end{align*}
	\begin{align*}
		\sum_{i=1}^{2} \pi(a_i) a_i a_i^T &= \frac{1}{2} \left( \begin{bmatrix}
			p^2 & 0\\
			0 & 0
		\end{bmatrix} + \begin{bmatrix}
			0 & 0\\
			0 & q^2
		\end{bmatrix}\right)\\
		V(\pi) &:=  \begin{bmatrix}
			\frac{p^2}{2} & 0\\
			0 & \frac{q^2}{2}
		\end{bmatrix}\\
		V^{-1}(\pi) &= \frac{4}{p^2\cdot q^2}   \begin{bmatrix}
			\frac{q^2}{2} & 0\\
			0 & \frac{p^2}{2}
		\end{bmatrix}\\
		&= \frac{1}{p^2\cdot q^2}   \begin{bmatrix}
			2p^2 & 0\\
			0 & 2q^2
		\end{bmatrix}.
	\end{align*}
	\begin{align*}
		\lVert a_1 \rVert_{V^{-1}(\pi)}^2 &= p^2 e_1^T V^{-1}(\pi) e_1\\
		&= 2.
	\end{align*}
	\begin{align*}
		\lVert a_2 \rVert_{V^{-1}(\pi)}^2 &= q^2 e_2^T V^{-1}(\pi) e_2\\
		&= 2.
	\end{align*}
	Let 
	\begin{align*}
		g(\pi) &:= \max_{a \in \mathcal{A} } \lVert a  \rVert_{V^{-1}(\pi)}^2 \\
		&= 2  = d .
	\end{align*}
	However, A G-optimal design~\citep{kiefer60theequivalence} $\pi^*$ should satisfy the following : 
	\begin{align*}
		g(\pi^*) = d.
	\end{align*}
	Hence we can conclude,
	\begin{align*}
		\pi = \pi^*.
	\end{align*}
	Hence, $\pi$ is G-optimal design.
\end{proof}
\clearpage
\section{ALGORITHM FOR APPROXIMATE OPTIMAL EXPERIMENTAL DESIGN}\label{app-section:approx-design-section}
In this section, we present a procedure to obtain a $\mathrm{ApproxDesign}()$ which can satisfy the assumptions \ref{main-assump:opt-lev-scr-assumption} and \ref{main-assump:opt-cardinality-assumption}.

 This procedure consist of two steps. The first step involves implementing the computationally efficient version of the BH sampling algorithm, as presented by \citet{gales22norm} (in appendix Section C.1, Algorithm 5), which refines the original algorithm by \citet{betke93approximating}. We present this in Algorithm~\ref{app-algo:comp-efficient-BH}. This implementation outputs $\mathcal{A}_0 = \{a_1,...,a_{\lvert \mathcal{A}_0 \rvert}\} \subseteq \mathcal{A}$ such that, $\lvert \mathcal{A}_0\rvert \leq 2 d$ and the determinant of the matrix $\overline{V}_{\lvert \mathcal{A}_0 \rvert}$ is sufficiently large, where $\overline{V}_{k} := \sum_{s=1}^{k} a_sa_s^{\T}$.

\kj{in the current form, it is not super clear if Algorithm 4 is the same one as the one shown in Gales et al.}

\begin{algorithm} 
	\textbf{Input:} Original arm set $\mathcal{A} \subset \mathbb{R}^d$ with $\lvert \mathcal{A} \rvert = K$
	\begin{algorithmic}[1] 	
		\IF{$K \leq 2d$}
		\STATE $\mathcal{A}_0 \leftarrow \mathcal{A}$
		\RETURN $\mathcal{A}_0$
		\ENDIF
		\STATE $\Psi \leftarrow \{0\}$, $\mathcal{A}_0 \leftarrow \emptyset $, $i\leftarrow0$, $v_0 \leftarrow (0,\cdots,0)^\T \in \mathbb{R}^{d}$
		\WHILE{ $\mathbb{R}^{d}\setminus \Psi  \neq \emptyset$}
		\STATE $i \leftarrow i +1$
		\IF{$i$ = 1}
		\STATE Set $b_i = e_i$ where $e_i$ is the $i$-th index vector.
		\ELSE 
		\STATE Set $v_{i-1}^{\perp} = v_{i-1} - \sum_{j=0}^{i-2} \frac{\langle v_j^{\perp}, v_{i-1} \rangle}{\langle  v_j^{\perp}, v_j^{\perp} \rangle} v_j^{\perp}$
		\STATE Set $b_i = e_i - \sum_{j=0}^{i-1} \frac{\langle v_j^{\perp}, e_i\rangle}{\langle  v_j^{\perp}, v_j^{\perp} \rangle} v_j^{\perp}$
		\ENDIF
		\STATE $p \leftarrow \argmax_{a \in \mathcal{A}}\langle b_i,a\rangle $
		\STATE $q \leftarrow \argmin_{a \in \mathcal{A}}\langle b_i,a\rangle $
		\STATE $ \mathcal{A}_0 \leftarrow \mathcal{A}_0 \cup \{p\} \cup \{q\}$ \kj{[D ] not a standard way; use $\larrow$}
		\STATE $v_i \leftarrow p -q$
		\STATE $\Psi \leftarrow \mathrm{Span}(\Psi, {v_i})$ \kj{[ D] not a standard way; use $\larrow$}
		\ENDWHILE
		\RETURN $\mathcal{A}_0$
	\end{algorithmic}
	\caption{ Computationally efficient BH algorithm }
	\label{app-algo:comp-efficient-BH}
\end{algorithm}

The second step of the procedure is detailed in Algorithm \ref{app-algo:optimal-design-algorithm}. It takes the output $\mathcal{A}_0$ from the computationally efficient BH sampling algorithm, refines the probability distribution for arms in $\mathcal{A}$.

\begin{algorithm} 
	\textbf{Input:} Original arm set $\mathcal{A} \subset \mathbb{R}^d$
	\begin{algorithmic}[1] 	
		\STATE $\mathcal{A}_0$ $\leftarrow$    Computationally efficient BH algorithm($\mathcal{A}$)
		\STATE $ k \leftarrow \lvert \mathcal{A}_0 \rvert + 1$ 
		\WHILE {$ \max_{a \in \mathcal{A}} \lVert a \rVert_{\overline{V}_{k-1}^{-1}}^2 > 1 $}
		\STATE $a_{k} = \argmax_{a \in \mathcal{A}}  \lVert a \rVert_{\overline{V}_{k-1}^{-1}}^2 $  
		\STATE $k \leftarrow k + 1$
		\ENDWHILE
		\STATE $\tau = k-1$
		\RETURN $\pi$ such that $\forall a \in \mathcal{A}, \spacex \pi(a) = \frac{{C}^{\mathcal{A}}_{\tau}(a)}{\sum_{b \in \mathcal{A}} {C}^{\mathcal{A}}_{\tau}(b)  }$ where $\mathcal{C}^{\mathcal{A}}_k(a) := \sum_{s=1}^{k}  \one \cbr{a_s = a}, \spacex  \forall a \in \mathcal{A}.$
	\end{algorithmic}
	\caption{ ApproxDesign}
	\label{app-algo:optimal-design-algorithm}
\end{algorithm}
Furthermore, Theorem 5 of \citet{gales22norm} shows that,
\begin{align} \label{app-eq:tau-sample-equation}
	\tau = \mathcal{O}(d\log d).
\end{align}

\begin{claim}\label{app-claim:optimality-claim}
	Let $\pi$ be the design from the $\mathrm{ApproxDesign}()$ Algorithm \ref{app-algo:optimal-design-algorithm}, then,
	\begin{align*}
		\lVert a \rVert_{V^{-1}(\pi)}^2 &\leq C_\opt d \log(d), \forall a \in \mathcal{A}.
	\end{align*}
\end{claim}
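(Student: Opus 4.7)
\textbf{Proof plan for Claim \ref{app-claim:optimality-claim}.}

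The plan is to directly relate $V(\pi)$ to the matrix $\overline{V}_\tau = \sum_{s=1}^{\tau} a_s a_s^\T$ built up inside Algorithm \ref{app-algo:optimal-design-algorithm}, and then read off the desired leverage-score bound from the termination condition of the while loop, combined with the sample-count bound in \eqref{app-eq:tau-sample-equation}.

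First, I would unfold the definition of the output distribution. Since $\pi(a) = \mathcal{C}_\tau^{\mathcal{A}}(a)/\sum_{b\in\mathcal{A}}\mathcal{C}_\tau^{\mathcal{A}}(b) = \mathcal{C}_\tau^{\mathcal{A}}(a)/\tau$, I get
\begin{align*}
V(\pi) \;=\; \sum_{a\in\mathcal{A}} \pi(a)\, a a^\T \;=\; \frac{1}{\tau}\sum_{a\in\mathcal{A}} \mathcal{C}_\tau^{\mathcal{A}}(a)\, a a^\T \;=\; \frac{1}{\tau}\sum_{s=1}^{\tau} a_s a_s^\T \;=\; \frac{1}{\tau}\,\overline{V}_\tau,
\end{align*}
so that $V(\pi)^{-1} = \tau\, \overline{V}_\tau^{-1}$ and consequently $\lVert a\rVert_{V(\pi)^{-1}}^2 = \tau\,\lVert a\rVert_{\overline{V}_\tau^{-1}}^2$ for every $a\in\mathcal{A}$.

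Next, I would exploit the exit condition of the while loop in Algorithm \ref{app-algo:optimal-design-algorithm}. The loop terminates at the first index $k$ for which $\max_{a\in\mathcal{A}} \lVert a\rVert_{\overline{V}_{k-1}^{-1}}^2 \le 1$; with $\tau = k-1$, this is precisely the statement that $\lVert a\rVert_{\overline{V}_\tau^{-1}}^2 \le 1$ for all $a \in \mathcal{A}$. (I would briefly note that termination is guaranteed because $\overline{V}_{\lvert\mathcal{A}_0\rvert}$ inherited from the BH step already has full rank, and each additional argmax update strictly increases $\det \overline{V}_k$, so the maximum leverage score eventually drops below 1.) Combining this with the identity from the previous paragraph yields
\begin{align*}
\lVert a\rVert_{V(\pi)^{-1}}^2 \;\le\; \tau \qquad \forall a\in\mathcal{A}.
\end{align*}

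Finally, I would invoke the sample-count estimate \eqref{app-eq:tau-sample-equation}, namely $\tau = \mathcal{O}(d\log d)$, which comes from Theorem 5 of \citet{gales22norm}. Absorbing the implicit absolute constant into $C_{\opt}$ gives $\tau \le C_{\opt}\, d\log d$ and thus $\lVert a\rVert_{V(\pi)^{-1}}^2 \le C_{\opt}\, d\log d$ for every $a \in \mathcal{A}$, which is the claim. I do not expect any essential obstacle here: the only nontrivial ingredient is the $\tau = \mathcal{O}(d\log d)$ guarantee, which is imported from prior work; the rest is an algebraic identification of $V(\pi)$ with a rescaling of $\overline{V}_\tau$ and a direct appeal to the loop's stopping rule.
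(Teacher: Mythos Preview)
Your proposal is correct and follows essentially the same approach as the paper: both identify $\overline{V}_\tau = \tau\, V(\pi)$, invoke the loop's termination condition $\max_{a\in\mathcal{A}}\lVert a\rVert_{\overline{V}_\tau^{-1}}^2 \le 1$, and then apply the bound $\tau = \mathcal{O}(d\log d)$ from \eqref{app-eq:tau-sample-equation} to conclude. Your brief aside on why the loop terminates is a small addition not present in the paper, but otherwise the arguments are the same.
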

\begin{proof}
	Note that, when the algorithm stops,
	\begin{align}\label{app-eq:algo-stop-condition-equation}
		\max_{a \in \mathcal{A}} \lVert a \rVert_{\overline{V}_{{\tau}}^{-1}}^2 &\leq 1 .
	\end{align}
	Furthermore, we have
	\begin{align*}
		\overline{V}_{\tau} &=  \sum_{s=1}^{{\tau}} a_sa_s^{\T}\\
		&= \sum_{a \in \mathcal{A}} {C}^{\mathcal{A}}_{\tau}(a) aa^{\T}\\
		&= \left(\sum_{b \in \mathcal{A}} {C}^{\mathcal{A}}_{\tau}(b) \right)\sum_{a \in \mathcal{A}} \frac{{C}^{\mathcal{A}}_{\tau}(a)}{\sum_{b \in \mathcal{A}} {C}^{\mathcal{A}}_{\tau}(b)} aa^{\T}\\
		&= \left(\sum_{b \in \mathcal{A}} {C}^{\mathcal{A}}_{\tau}(b) \right)\sum_{a \in \mathcal{A}} \pi(a) aa^{\T}\\
		&= \left(\sum_{b \in \mathcal{A}} {C}^{\mathcal{A}}_{\tau}(b) \right)V(\pi)\\
		&= \tau V(\pi)\\
		&= \mathcal{O}(d\log d) V(\pi).\tag{from \eqref{app-eq:tau-sample-equation}}
	\end{align*}
Then,	\begin{align*}
	 \lVert a \rVert_{\overline{V}_{{\tau}}^{-1}}^2 &= 	\frac{1}{\mathcal{O}(d\log d)} \lVert a \rVert_{V^{-1}(\pi)}^2 \\
		\lVert a \rVert_{V^{-1}(\pi)}^2 &\leq  \mathcal{O}(d\log d) \cdot \max_{a \in \mathcal{A}} \lVert a \rVert_{\overline{V}_{{\tau}}^{-1}}^2 \\
		&\leq \mathcal{O}(d\log d) \cdot 1 \tag{from \eqref{app-eq:algo-stop-condition-equation}}\\
		&\leq  C_\opt d \log(d).
	\end{align*}
\end{proof}
\begin{claim}\label{app-claim:optimal-cardinality-claim}
		 \begin{align*}
			\lvert \supp(\pi) \rvert &= \tilde{\mathcal{O}}(d) ~.
		\end{align*}
\end{claim}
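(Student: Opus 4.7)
The plan is to observe that the support of $\pi$ is exactly the set of distinct arms that get counted by $\mathcal{C}^{\mathcal{A}}_\tau$, and then upper-bound this by the total number of selections $\tau$, which is already controlled.

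First I would unpack the definition of $\pi$ from Algorithm \ref{app-algo:optimal-design-algorithm}. By construction, $\pi(a) = \mathcal{C}^{\mathcal{A}}_\tau(a) / \sum_{b \in \mathcal{A}} \mathcal{C}^{\mathcal{A}}_\tau(b)$, and therefore $a \in \supp(\pi)$ if and only if $\mathcal{C}^{\mathcal{A}}_\tau(a) > 0$, i.e.\ $a$ was chosen at least once among $a_1, \ldots, a_\tau$ (either during the computationally efficient BH step that produces $\mathcal{A}_0$, or during the subsequent greedy augmentation that adds $\argmax_{a \in \mathcal{A}} \lVert a\rVert_{\overline{V}_{k-1}^{-1}}^2$). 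This immediately yields the crude bound
\begin{align*}
    \lvert \supp(\pi) \rvert \leq \sum_{a \in \mathcal{A}} \one\cbr{\mathcal{C}^{\mathcal{A}}_\tau(a) > 0} \leq \sum_{a \in \mathcal{A}} \mathcal{C}^{\mathcal{A}}_\tau(a) = \tau.
\end{align*}

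Next I would invoke the bound already cited in the paper, namely equation \eqref{app-eq:tau-sample-equation}, which asserts that $\tau = \mathcal{O}(d \log d)$ by Theorem 5 of \citet{gales22norm}. Combining with the above display gives
\begin{align*}
    \lvert \supp(\pi) \rvert \leq \tau = \mathcal{O}(d \log d) = \tilde{\mathcal{O}}(d),
\end{align*}
which is exactly the claim.

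The proof has essentially no obstacle: the non-trivial work, namely controlling $\tau$, is already done in \citet{gales22norm} and recorded in \eqref{app-eq:tau-sample-equation}; the only thing to verify is the elementary fact $|\supp(\pi)| \leq \tau$, which is immediate from the definition of $\mathcal{C}^{\mathcal{A}}_\tau$. So the proof reduces to a single observation plus citation, and no additional calculation is needed.
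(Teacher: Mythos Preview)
Your proposal is correct and takes essentially the same approach as the paper: bound $|\supp(\pi)| \le \tau$ and then invoke \eqref{app-eq:tau-sample-equation} to get $\tau = \mathcal{O}(d\log d) = \tilde{\mathcal{O}}(d)$. The paper's proof is just the three-line version of exactly what you wrote.
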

\begin{proof}
	\begin{align*}
		\lvert \supp(\pi) \rvert &\leq \tau \\
		&=  \mathcal{O}(d\log d) \tag{from \eqref{app-eq:tau-sample-equation}}\\
		&= \tilde{O}(d).
	\end{align*}
\end{proof}

Hence, it is proved that the Algorithm \ref{app-algo:optimal-design-algorithm} along with  computationally efficient BH sampling Algorithm \ref{app-algo:comp-efficient-BH} outputs a design that satisfies Assumptions \ref{main-assump:opt-lev-scr-assumption} and \ref{main-assump:opt-cardinality-assumption}.

\clearpage
\section{EMPIRICAL STUDIES}
\subsection{End of optimism experiments} \label{app-subsection:EOPT-eval-exp-subsection}
Since this experiment has already been covered in the main body of the paper, we present it concisely here.

\textbf{Experimental setup: } We set the number of arms $K=3$, dimension $d=2$ and $\mathcal{A} = \{a_1 = (1,0)^\T, a_2 = (0,1)^\T,a_3 = (1 - \eps, 2\eps)^\T\}$ where $\eps \in \{0.005,0.01,0.02\}$ and $\theta^* = (1,0)^\T$. The noise follows $\mathcal{N}(0, \sigma_*^2)$.  The time horizon for each trial is $n=1,000,000$ and conduct $20$ such independent trials. Furthermore, we conduct experiments for the cases i) $\sigma^2 = \sigma_*^2 $ where $\sigma_*^2 = 0.1$, ii) $\sigma^2 = 2\cdot \sigma_*^2$ where $\sigma_*^2 = 0.1$, and iii) $\sigma^2 = 0.1 \cdot \sigma_*^2$ where $\sigma_*^2 = 10$ .
 
 \textbf{Algorithms evaluated: }We evaluate the following algorithms: OFUL~\citep{ay11improved}, Lin-TS-Freq (Thompson sampling frequentest version)~\citep{agrawal14thompson}, Lin-TS-Bayes (Thompson sampling Bayesian version)~\citep{russo14learning}, Lin-IMED-1~\citep{bian24indexed}, Lin-IMED-3~\citep{bian24indexed}, LinMED-99 ($\alpha_{\mathrm{opt}} = 0.99$), LinMED-90 ($\alpha_{\mathrm{opt}} = 0.90$), and LinMED-50 ($\alpha_{\mathrm{opt}} = 0.50$). Note that, for Lin-TS-Bayes, we are still evaluating the frequentest regret as we do for every other algorithms. Lin-IMED-3 have a hyper-parameter $C$, which we set to $C= 30$ following~\citet{bian24indexed}.

\def\mygapapp{0.5}
\def\mygapappin{0.6}
\def \hecmmar{-2cm}
\def \hecmgap{-4cm}
	\begin{figure*}[h!]
	\centering
	\hspace{\hecmmar}\begin{subfigure}[b]{\mygapapp\textwidth}
		\centering
		\includegraphics[width=\mygapappin\textwidth]{EOPT_C_0005_CF_CCR.pdf}
		
		\caption{$\sigma^2 = \sigma^2_* $}
	\end{subfigure}
	\hspace{\hecmgap}
	\begin{subfigure}[b]{\mygapapp\textwidth}
		\centering
		\includegraphics[width=\mygapappin\textwidth]{EOPT_W_0005_CF_CCR.pdf}
		\caption{$\sigma^2 = 2\cdot \sigma^2_* $}
	\end{subfigure}
	\hspace{\hecmgap}
		\begin{subfigure}[b]{\mygapapp\textwidth}
		\centering
		\includegraphics[width=\mygapappin\textwidth]{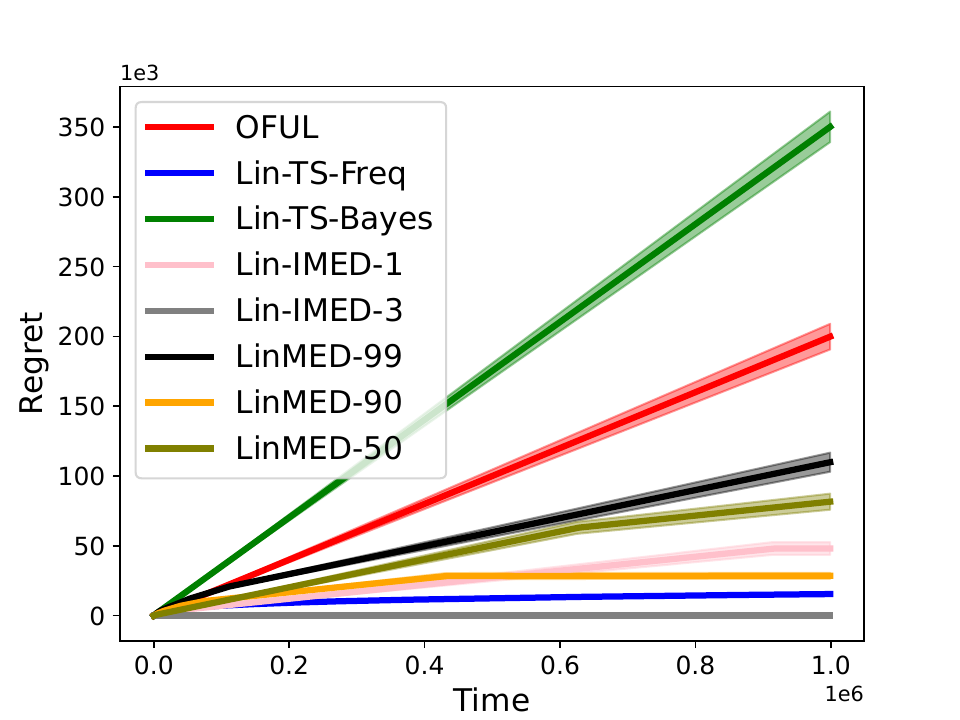}
		\caption{ $\sigma^2 = 0.1 \cdot \sigma^2_* $}
	\end{subfigure}\hspace{\hecmmar}
	\caption{End of optimism experiments $\eps = 0.005$ }
	\label{app-figure:end-of-opt-figure1}
\end{figure*}
\begin{figure*}[h!]
	\centering
	\hspace{\hecmmar}\begin{subfigure}[b]{\mygapapp\textwidth}
		\centering
		\includegraphics[width=\mygapappin\textwidth]{EOPT_C_001_CF_CCR.pdf}
		
		\caption{$\sigma^2 = \sigma^2_* $}
	\end{subfigure}
	\hspace{\hecmgap}
	\begin{subfigure}[b]{\mygapapp\textwidth}
		\centering
		\includegraphics[width=\mygapappin\textwidth]{EOPT_W_001_CF_CCR.pdf}
		
		\caption{$\sigma^2 = 2\cdot \sigma^2_* $}
	\end{subfigure}
	\hspace{\hecmgap}
	\begin{subfigure}[b]{\mygapapp\textwidth}
	\centering
	\includegraphics[width=\mygapappin\textwidth]{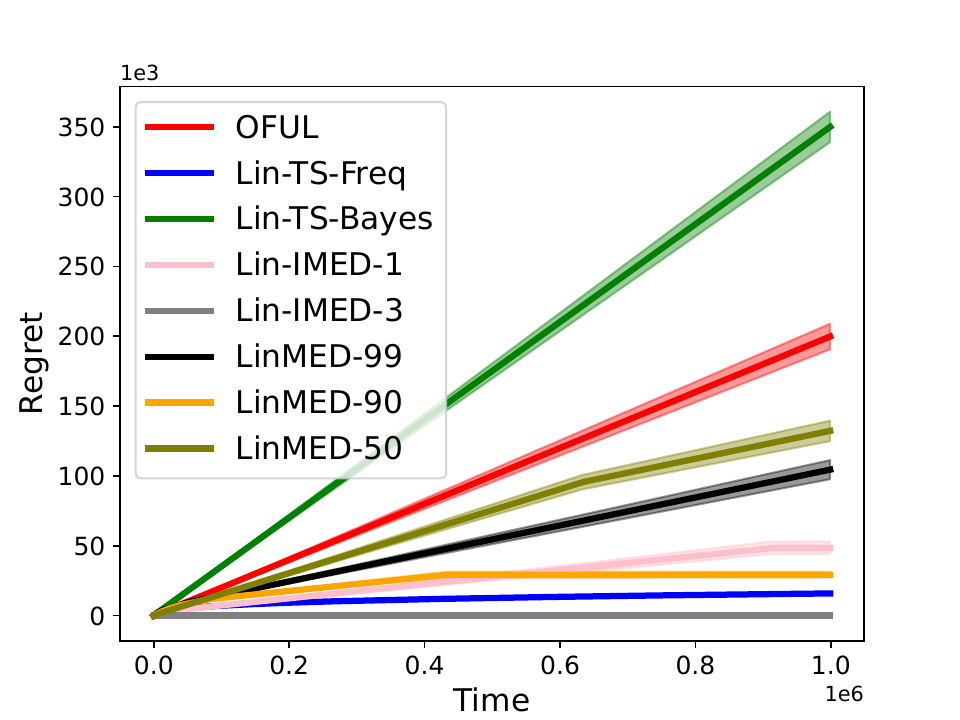}
	\caption{$\sigma^2 = 0.1 \cdot \sigma^2_* $}
	\end{subfigure}\hspace{\hecmmar}
	\caption{End of optimism experiments $\eps = 0.01$ }
	\label{app-figure:end-of-opt-figure2}
\end{figure*}
\begin{figure*}[h]
	\centering
	\hspace{\hecmmar}\begin{subfigure}[b]{\mygapapp\textwidth}
		\centering
		\includegraphics[width=\mygapappin\textwidth]{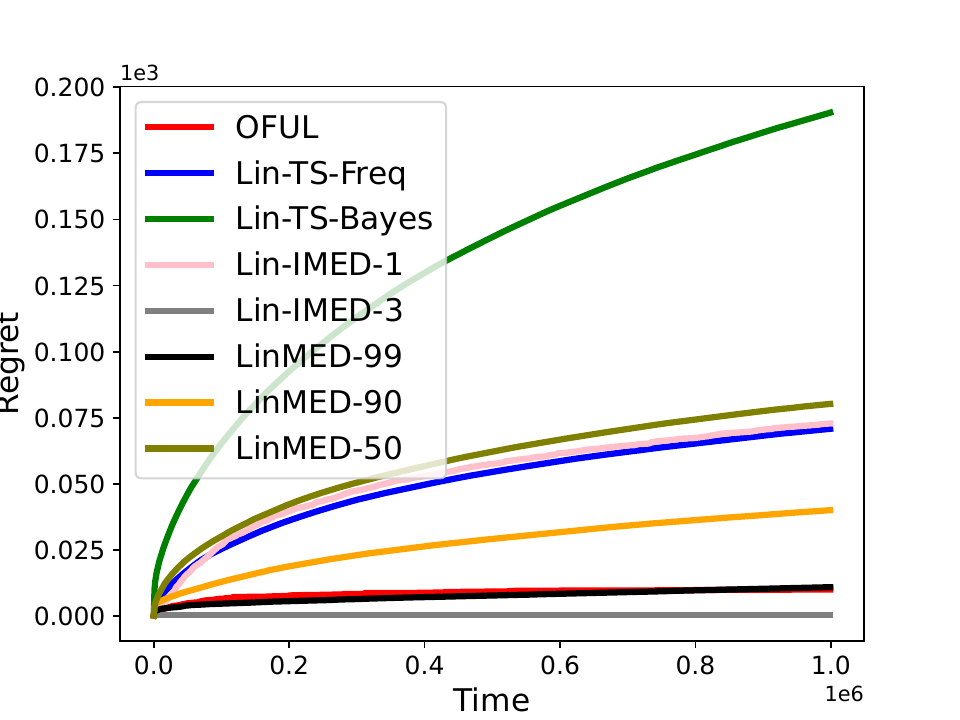}
		\caption{$\eps = 0.02$, $\sigma^2 = \sigma^2_* $}
	\end{subfigure}
	\hspace{\hecmgap}
	\begin{subfigure}[b]{\mygapapp\textwidth}
		\centering
		\includegraphics[width=\mygapappin\textwidth]{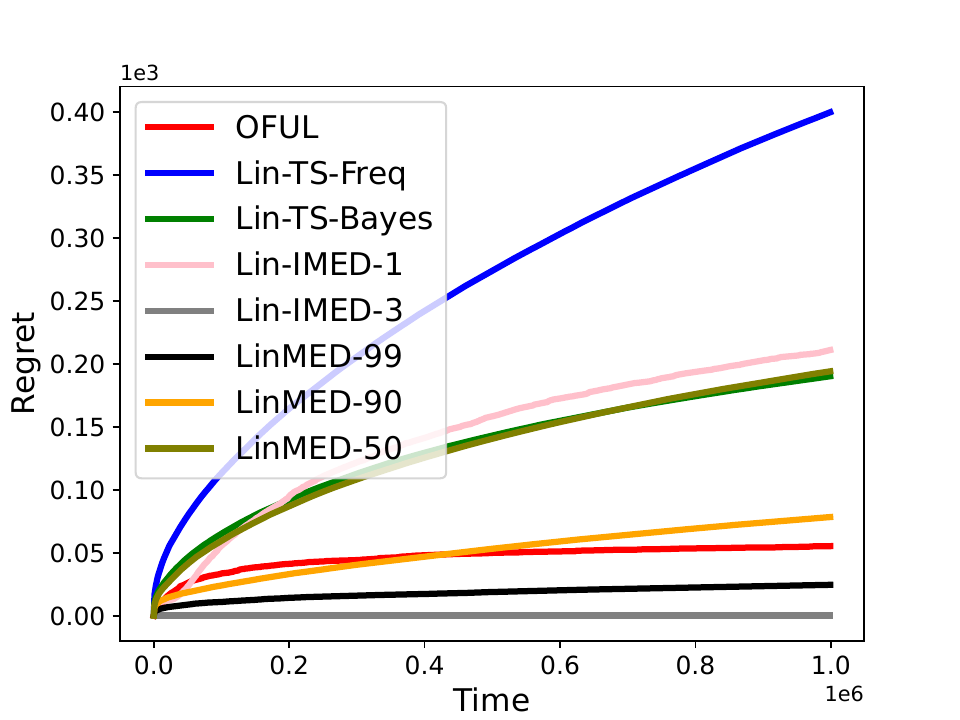}
		\caption{$\eps = 0.02$, $\sigma^2 = 2\cdot \sigma^2_* $}
	\end{subfigure}
	\hspace{\hecmgap}
	\begin{subfigure}[b]{\mygapapp\textwidth}
	\centering
	\includegraphics[width=\mygapappin\textwidth]{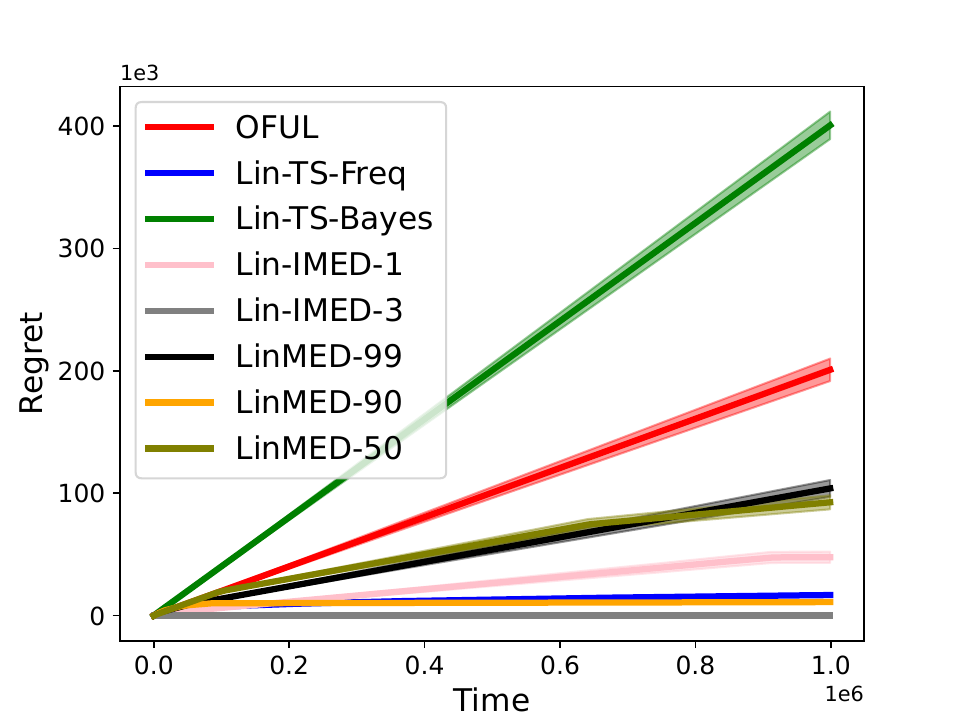}
	\caption{$\sigma^2 = 0.1 \cdot \sigma^2_* $}
	\end{subfigure}\hspace{\hecmmar}
	\caption{End of optimism experiments $\eps = 0.02$ }
	\label{app-figure:end-of-opt-figure3}
\end{figure*}
 \textbf{Remarks :} This experiment has been discussed already in the main body of the paper for  $\sigma^2 = \sigma^2_* $ and  $\sigma^2 = 2\cdot \sigma^2_* $. In addition to that, we observe the following: when the noise is under-specified ($\sigma^2 = 0.1\cdot \sigma^2_* $), the performance of both OFUL and Lin-TS-Bayes degrades significantly, resulting in nearly linear regret. On the other hand, Lin-TS-Freq performs well due to the fact that the degree of oversampling is lesser due to small $\sigma^2$. Among the LinMED variants, LinMED-99 shows a more pronounced deterioration compared to LinMED-90 and LinMED-50, indicating that higher exploration is required in such scenarios. Notably, Lin-IMED-3 maintains strong performance across all conditions.

\subsection{Delayed reward experiments on real-world data set}\label{app-subsection:delayed-reward-exp-subsection}

One of the advantages of randomized algorithms over deterministic algorithms is their superior performance in delayed reward settings, where immediate rewards are not accessible. To investigate this, we utilized the MovieLens real-world dataset. We extracted user and movie features and constructed our own true parameter $\theta^*$ for the reward calculation, as opposed to relying on the reported rewards in the dataset. This approach was necessary due to the sparsity of the reported rewards, where many users not providing ratings for numerous movies. Additionally, we performed Principal Component Analysis (PCA) to isolate the dominant features from both the movie and user datasets. Then we generated 2 dimensional movie feature vectors and 2 dimensional user feature vectors.

During the implementation we first fix randomly chosen $K=10$ movies and for each time step, we randomly select a user as the context and generate arm set of size $K = 10$ by taking outer product of each movie vectors with the user vector. Since we calculated the outer product between 2 dimensional user vector and 2 dimensional movie vector, the resulting dimension of the arm set is $d=4$. We set the noise parameter to $\sigma^2 = \sigma_{*}^2 =  1$ and varied the delay time across the set  $\{0,10,20\}$. The time horizon for each trial is $n=5,000$ and conduct $100$ such independent trials.

\textbf{Algorithms evaluated: }We evaluate the following algorithms: OFUL~\citep{ay11improved}, Lin-TS-Bayes (Thompson sampling Bayesian version)~\citep{russo14learning}, LinMED-99 ($\alpha_{\mathrm{opt}} = 0.99$), LinMED-90 ($\alpha_{\mathrm{opt}} = 0.90$), and LinMED-50 ($\alpha_{\mathrm{opt}} = 0.50$).  Note that, for Lin-TS-Bayes, we are still evaluating the frequentest regret as we do for every other algorithms.
\def\mygapapp{0.5}
\begin{figure*}[h]
	\centering
	\hspace{\hecmmar}\begin{subfigure}[b]{\mygapapp\textwidth}
		\centering
		\includegraphics[width=\mygapappin\textwidth]{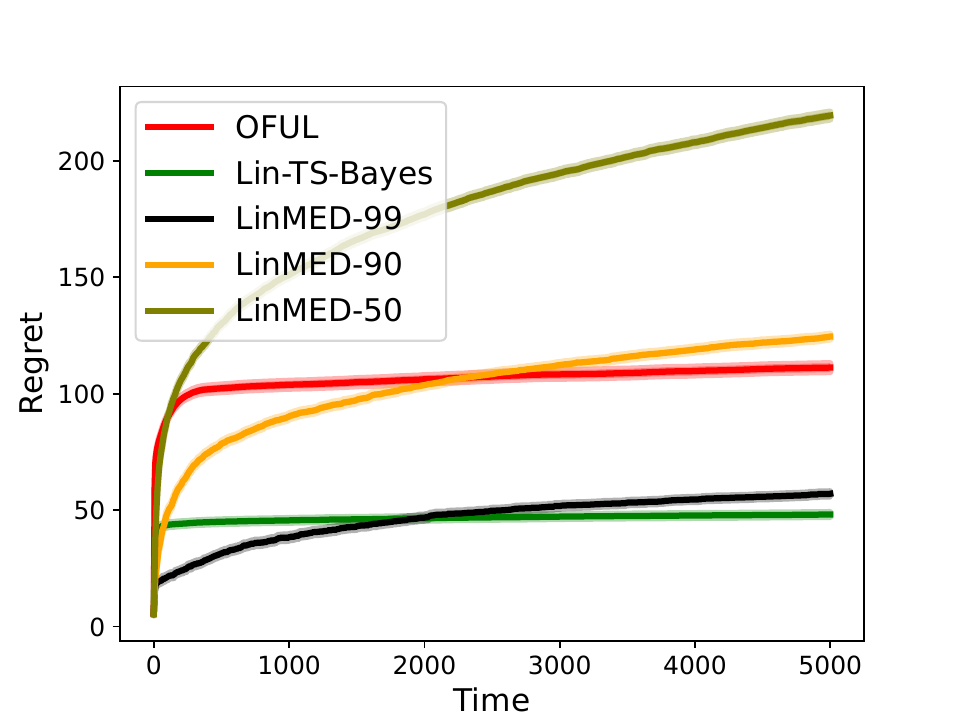}
		\caption{Delay = 0}
	\end{subfigure}
	\hspace{\hecmgap}
	\begin{subfigure}[b]{\mygapapp\textwidth}
		\centering
		\includegraphics[width=\mygapappin\textwidth]{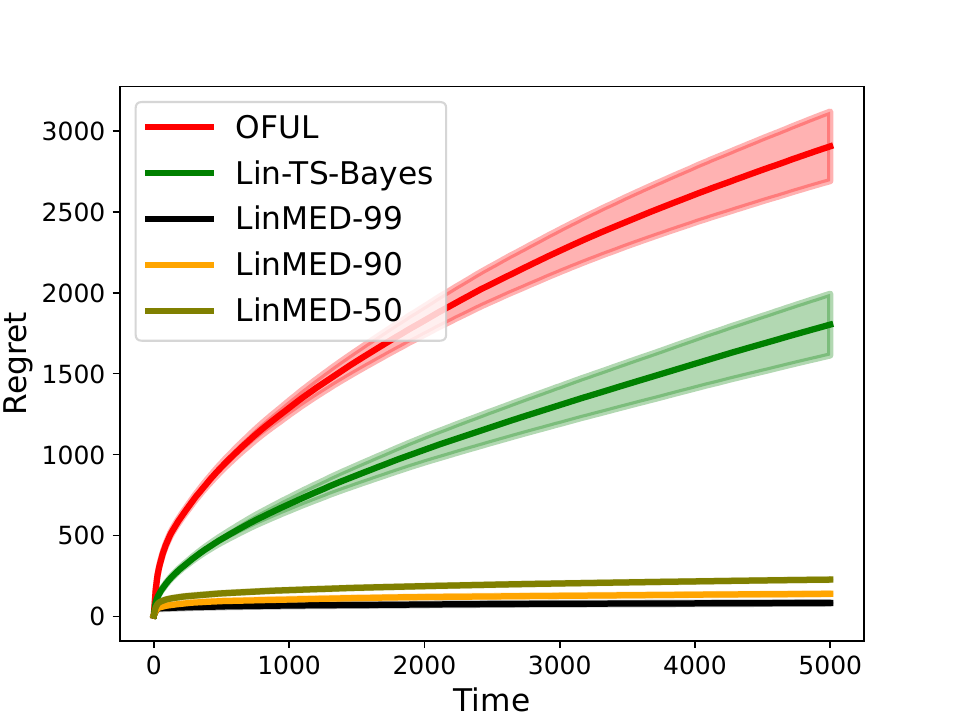}
		\caption{Delay = 10}
	\end{subfigure}
	\hspace{\hecmgap}
	\begin{subfigure}[b]{\mygapapp\textwidth}
		\centering
		\includegraphics[width=\mygapappin\textwidth]{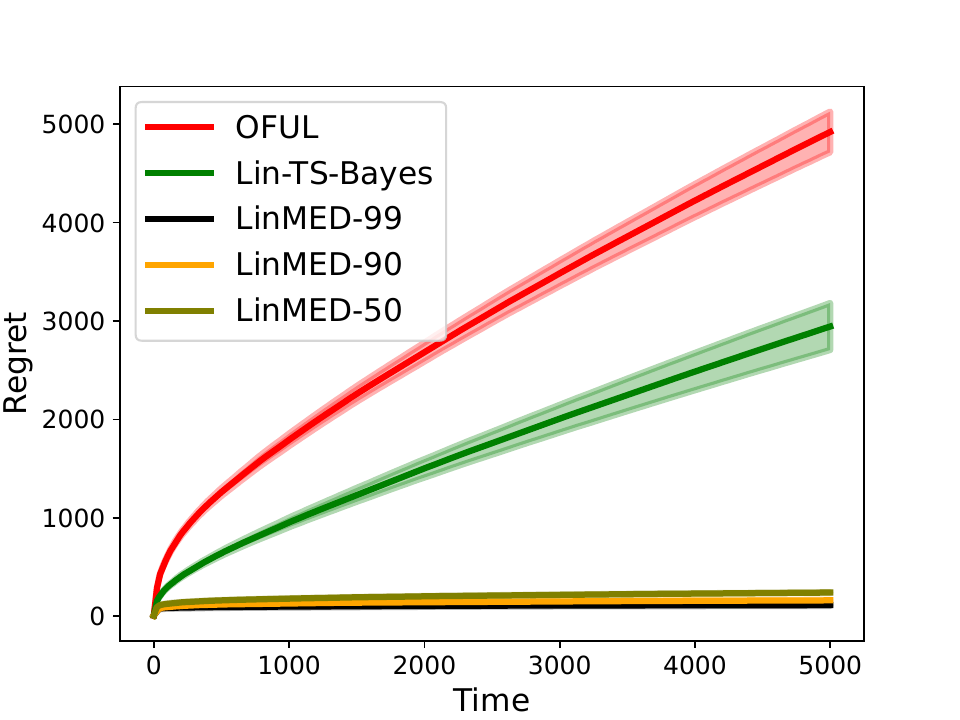}
		\caption{Delay = 20}
	\end{subfigure}\hspace{\hecmmar}
	\caption{Delayed reward experiments}
	\label{app-figure:DRE-figure}
\end{figure*}

As expected, the performance of OFUL significantly deteriorates when rewards are delayed, in contrast to the randomized algorithms, LinMED and Lin-TS-Bayes. Moreover, all three variants of LinMED demonstrate strong performance under these conditions.
\subsection{Offline evaluation experiments}\label{app-subsection:offline-eval-exp-subsection}
	
This section presents our simulation results on offline evaluation using logged data. We utilize the logged data generated by our algorithms, LinMED and Lin-TS-Freq (frequentest version), to estimate the expected reward of a policy (call it Uniform target policy) that selects arms uniformly at random from $\mathcal{A}$ (we use fixed arm set for this experiment). The logged data takes the form $(A_t, p_t(A_t) , Y_t)_{t=0}^{n}$, where $A_t$  represents the chosen arm,  $p_t(A_t)$ denotes the probability (either exact or approximate) of selecting that arm, and $Y_t$ indicates the received reward at time step $t$. We consider the Inverse Propensity Weighting (IPW) estimator to estimate the cumulative reward of the Uniform target policy as follows:
\begin{align}
	\text{IPW score} = \frac{1}{n} \sum_{t=1}^{n} \frac{\frac{1}{\lvert \mathcal{A} \rvert }}{p_t(A_t)}\cdot Y_t
\end{align}
LinMED assigns a closed-form probability to the chosen arm, whereas  Lin-TS-Freq estimates the probability of selecting an arm using Monte Carlo trials. We set the number of arms $K=2$, dimension $d=2$ and $\mathcal{A} = \{a_1 = (1,0)^\T, a_2 = (0.6,0.8)^\T \}$, $\theta^* = (1,0)^\T$ while varying the number of Monte Carlo samples for estimating the probability of arm selection in Lin-TS-Freq across the set $\{10^3,10^4,10^5\}$. The noise is modeled as $\mathcal{N}(0, \sigma_*^2)$, with $\sigma^2 = \sigma_*^2 = 0.1$. Throughout this experiment, we evaluate LinMED-50 ($\alpha_{\mathrm{opt}} = 0.5$). The time horizon for each trial is $n=1,000$ and conduct $5,000$ such independent trials to calculate the histogram representation of IPW scores. See Figure \ref{app-figure:OLE-figure}. 
\def\mygapapp{0.5}
\begin{figure*}[h]
	\centering
	\hspace{\hecmmar}\begin{subfigure}[b]{\mygapapp\textwidth}
		\centering
		\includegraphics[width=\mygapappin\textwidth]{OLE_1_CF_CCR.pdf}
		\caption{Monte-carlo samples = $10^3$}
	\end{subfigure}
	\hspace{\hecmgap}
	\begin{subfigure}[b]{\mygapapp\textwidth}
		\centering
		\includegraphics[width=\mygapappin\textwidth]{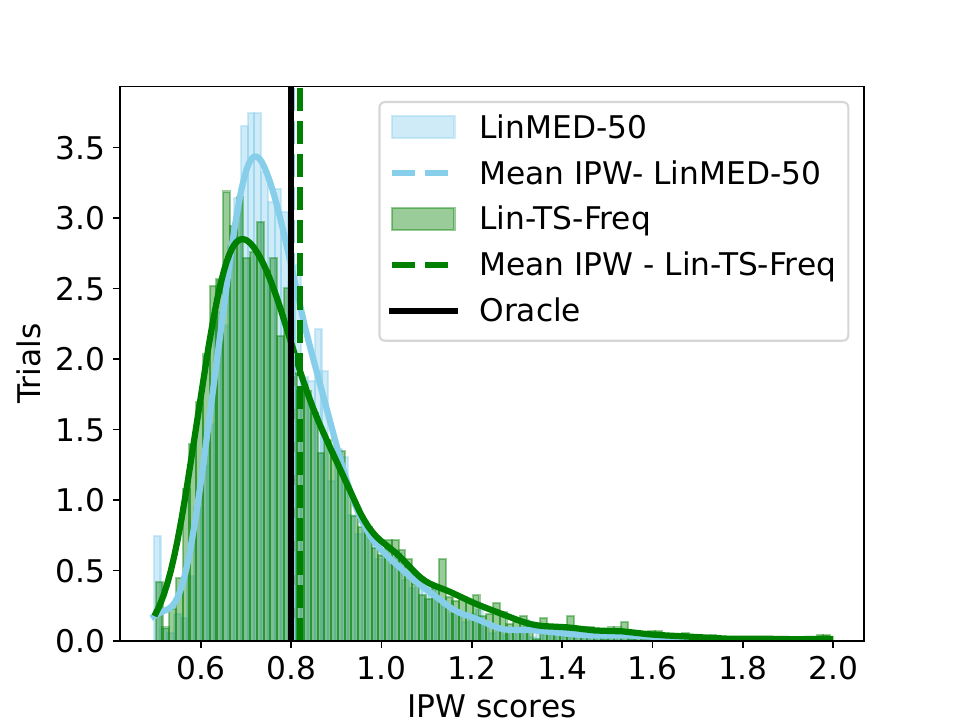}
		\caption{Monte-carlo samples = $10^4$}
	\end{subfigure}
	\hspace{\hecmgap}
	\begin{subfigure}[b]{\mygapapp\textwidth}
		\centering
		\includegraphics[width=\mygapappin\textwidth]{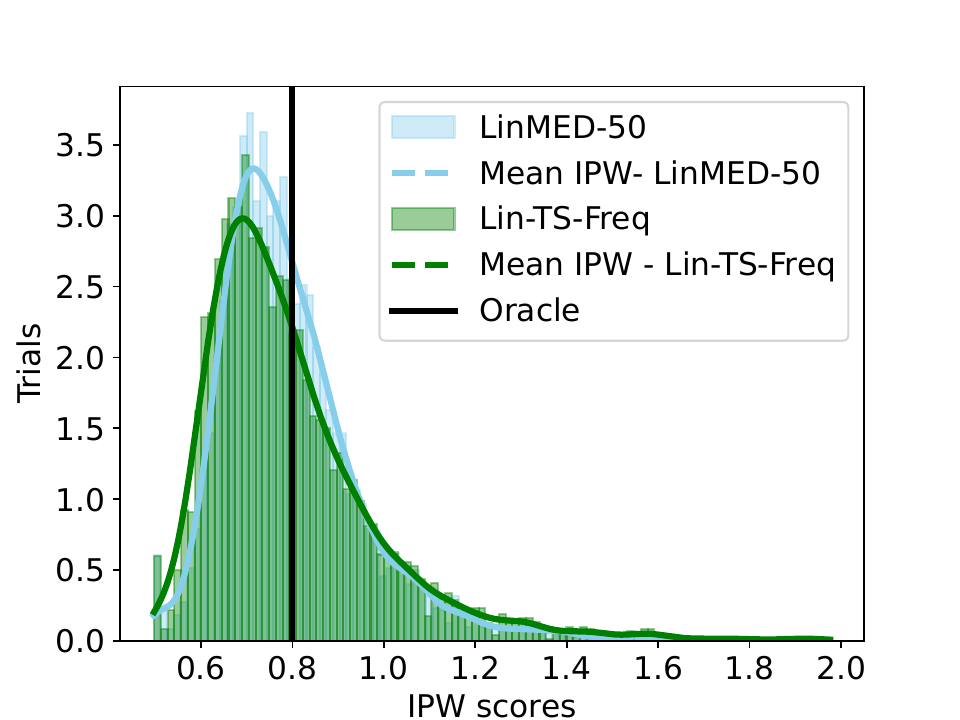}
		\caption{Monte-carlo samples = $10^5$}
	\end{subfigure}\hspace{\hecmmar}
	\caption{Offline evaluation experiments}
	\label{app-figure:OLE-figure}
\end{figure*}

\begin{table}
	\centering
		\begin{tabular}{||c c c c c||} 
			\hline
			Lin-TS-Feq & $M = 10^3 $ & $M = 10^4 $ & $M = 10^5 $ & Oracle \\ [0.5ex] 
			\hline\hline
			Mean & 0.906 & 0.819 & 0.799 & 0.800\\ 
			\hline
			Standard deviation & 0.099 & 0.069 & 0.039 & 0\\
			\hline
		\end{tabular}
		\caption{ Mean and standard deviation of rewards received by the Uniform policy using logged data from Lin-TS-Freq for offline evaluation. Here M stands for number of Monte-carlo samples}
		\label{app-table:LinTS-OLE-table}
\end{table}

In Figure \ref{app-figure:OLE-figure}, the oracle value of $0.8$ represents the expected reward of the Uniform policy when real-time data is used. The mean reward received by the Uniform policy using logged data from LinMED-50 for offline evaluation matches the oracle value to a two-decimal place accuracy, with a standard deviation of approximately $0.03$. The mean and standard deviation of Lin-TS-Freq are provided in Table \ref{app-table:LinTS-OLE-table}. Although the performance of Lin-TS-Freq approaches that of LinMED-50 when the number of Monte Carlo samples reaches $10^5$, there is a significant bias for sample sizes of $10^3$ and $10^4$. However, using $10^5$ samples for estimating probabilities is computationally expensive. Additionally, there are cases where Lin-TS-Freq assigns zero probability to some arms. Due to LinMED's ability to assign a closed-form probability to each arm, LinMED is more suitable than Lin-TS-Freq for offline evaluation.

\subsection{Synthetic unit ball arm set experiments}\label{app-subsection:spher-exp-subsection}

\subsubsection{Fixed number of arms (K), different dimensions (d)}
\textbf{Experimental setup: } We fix the number of arms $K=10$. For different $d \in \{2,20,50\}$, we randomly sample $K = 10$ arms from $d$ dimensional unit ball $S^{d-1}$. The noise follows $\mathcal{N}(0, \sigma_*^2)$ with $\sigma_*^2 = 1$.  The time horizon for each trial is $n=5,000$ and conduct $50$ such independent trials. Furthermore, we conduct experiments for the cases i) $\sigma^2 = \sigma_*^2$, ii) $\sigma^2 = 2\cdot \sigma_*^2$, and iii) $\sigma^2 = 0.1 \cdot \sigma_*^2$.

\textbf{Algorithms evaluated: }We evaluate the following algorithms: OFUL~\citep{ay11improved}, Lin-TS-Freq (Thompson sampling frequentest version)~\citep{agrawal14thompson}, Lin-TS-Bayes (Thompson sampling Bayesian version)~\citep{russo14learning}, Lin-IMED-1~\citep{bian24indexed}, Lin-IMED-3~\citep{bian24indexed}, LinMED-99 ($\alpha_{\mathrm{opt}} = 0.99$), LinMED-90 ($\alpha_{\mathrm{opt}} = 0.90$), LinMED-50 ($\alpha_{\mathrm{opt}} = 0.50$), SpannerIGW~\citep{zhu22contextual}, and SpannerIGW-AT~\citep{zhu22contextual}. Note that, for Lin-TS-Bayes, we are still evaluating the frequentest regret as we do for every other algorithms. Lin-IMED-3 have a hyper-parameter $C$, which we set to $C= 30$ following~\citet{bian24indexed}. Moreover SpannerIGW-AT is the anytime version of SpannerIGW. 

\def\mygapapp{0.5}
\def \hecmmar{-2cm}
\def \hecmgap{-4cm}
\begin{figure*}[h!]
	\centering
	\hspace{\hecmmar}\begin{subfigure}[b]{\mygapapp\textwidth}
		\centering
		\includegraphics[width=\mygapappin\textwidth]{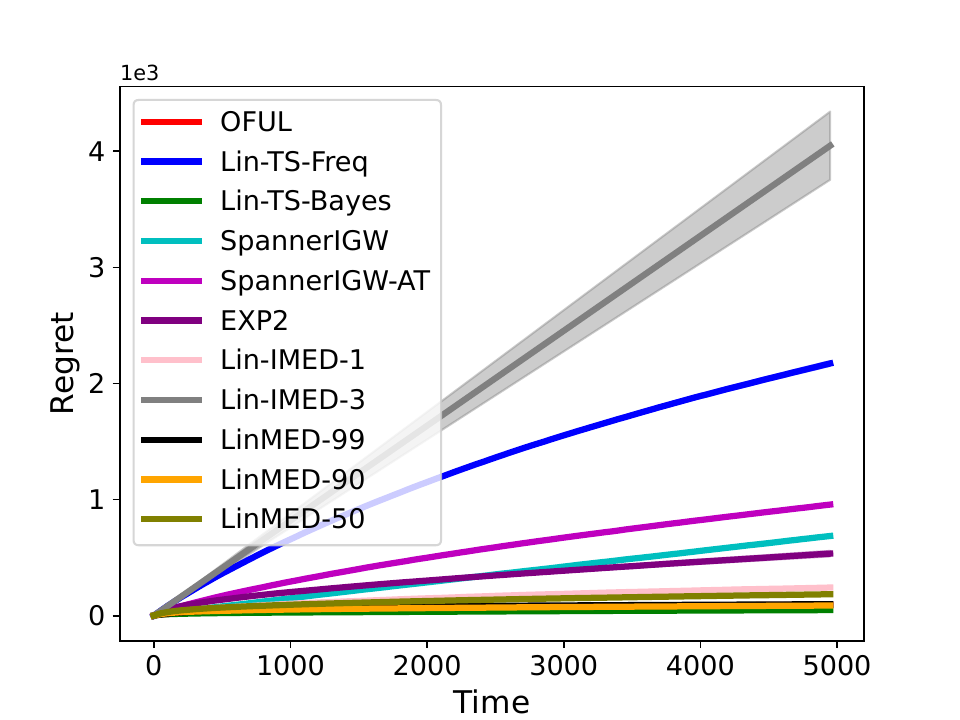}
		\caption{$d = 2$}
	\end{subfigure}
	\hspace{\hecmgap}
	\begin{subfigure}[b]{\mygapapp\textwidth}
		\centering
		\includegraphics[width=\mygapappin\textwidth]{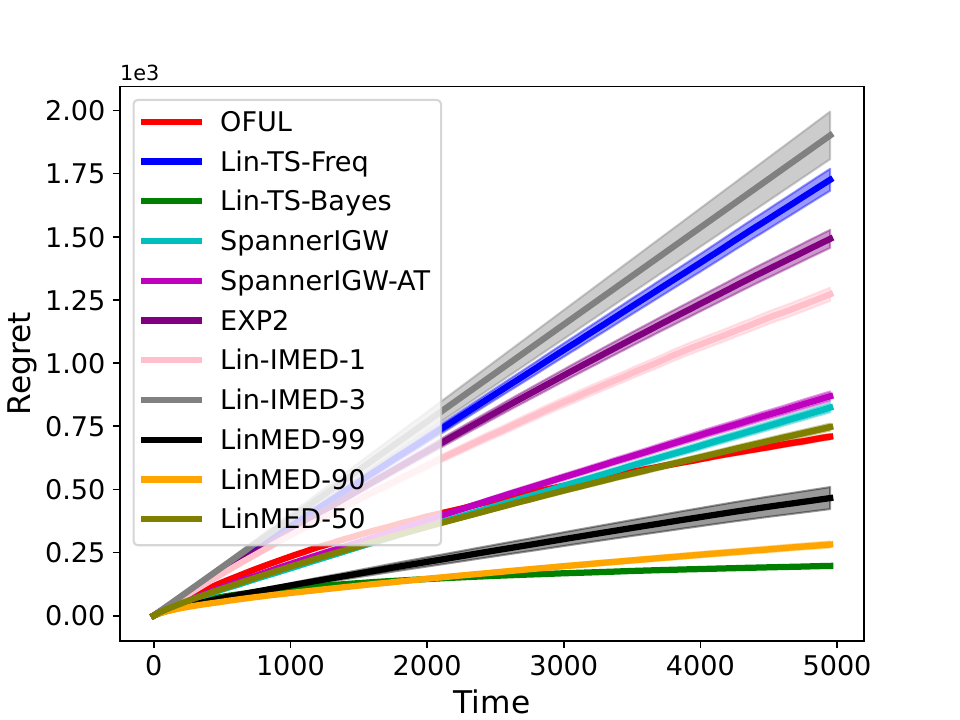}
		\caption{$ d = 20 $}
	\end{subfigure}
	\hspace{\hecmgap}
	\begin{subfigure}[b]{\mygapapp\textwidth}
		\centering
		\includegraphics[width=\mygapappin\textwidth]{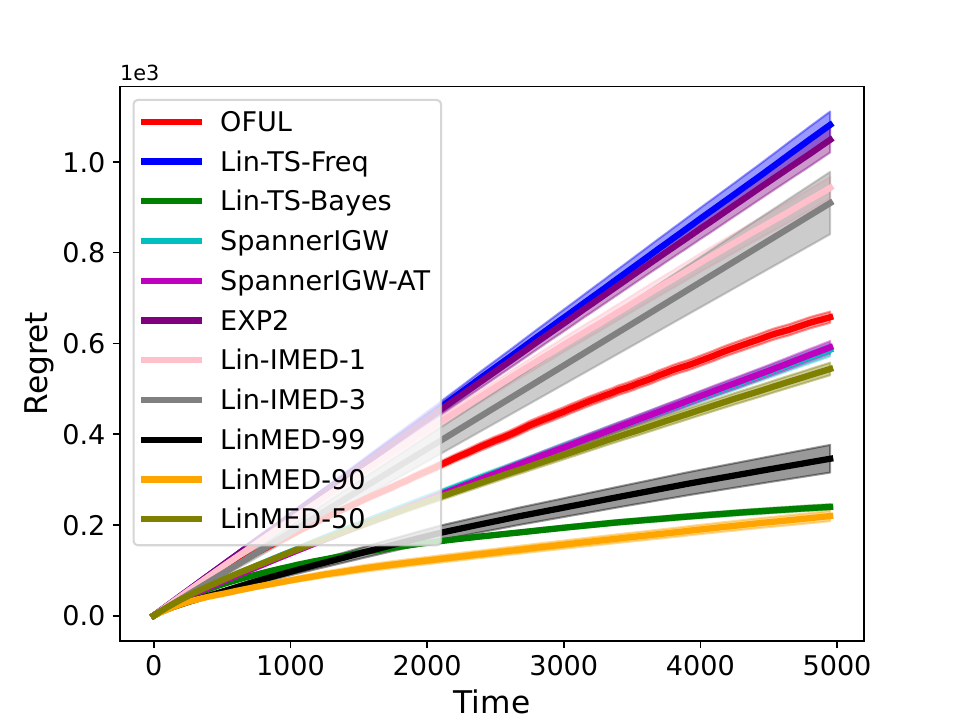}
		\caption{ $ d = 50 $}
	\end{subfigure}\hspace{\hecmmar}
	\caption{ $\sigma^2 = \sigma^2_* $ }
	\label{app-figure:SUBE-figure1}
\end{figure*}
\begin{figure*}[h!]
	\centering
	\hspace{\hecmmar}\begin{subfigure}[b]{\mygapapp\textwidth}
		\centering
		\includegraphics[width=\mygapappin\textwidth]{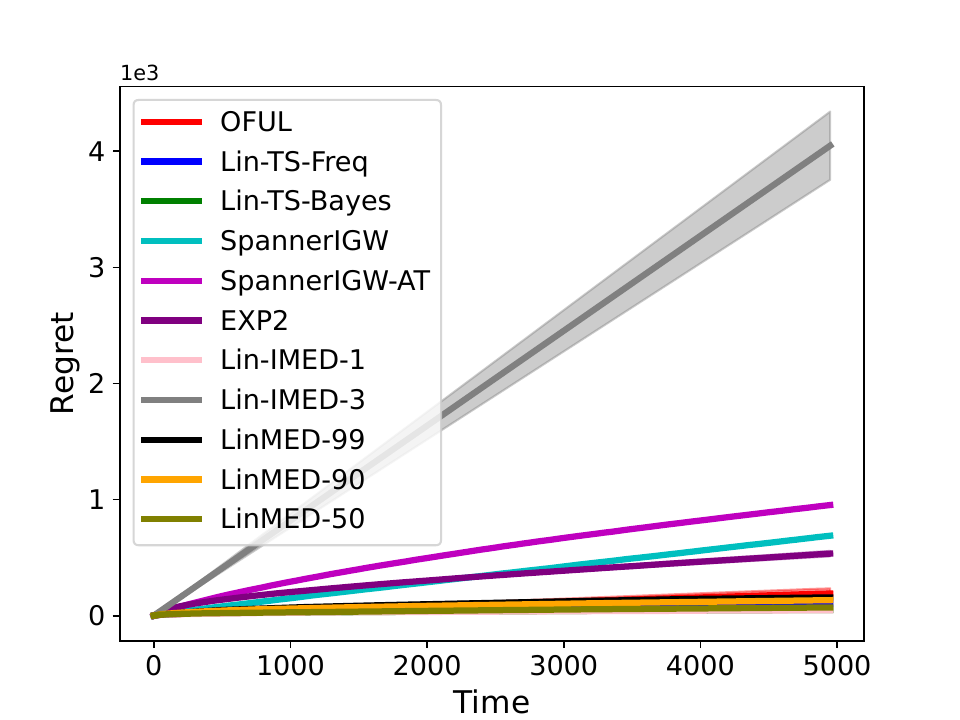}
		\caption{$ d = 2 $}
	\end{subfigure}
	\hspace{\hecmgap}
	\begin{subfigure}[b]{\mygapapp\textwidth}
		\centering
		\includegraphics[width=\mygapappin\textwidth]{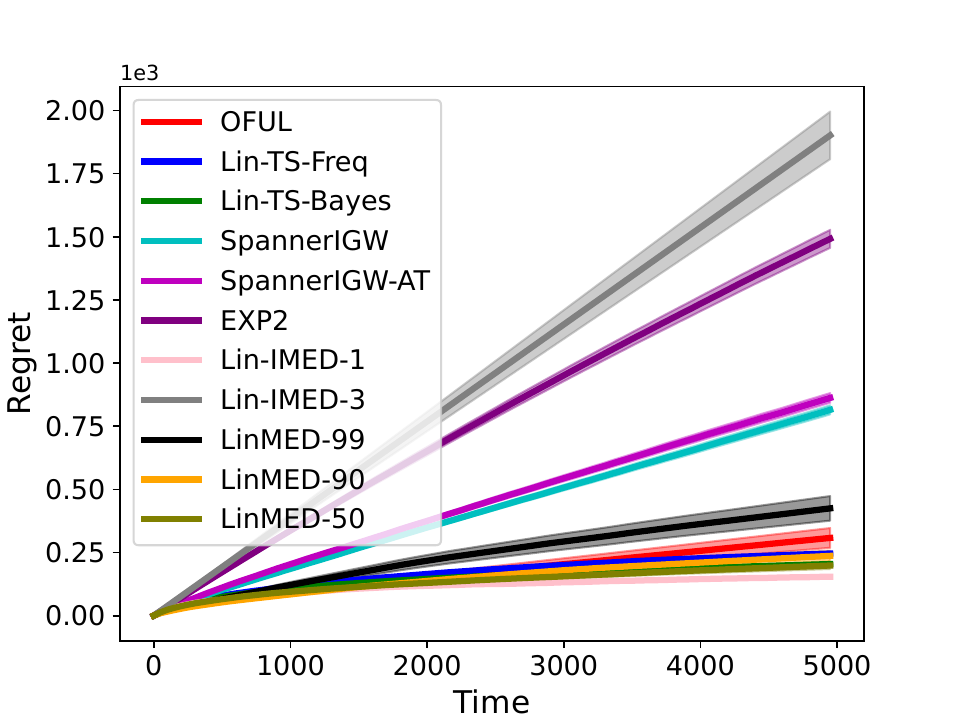}
		\caption{$ d = 20 $}
	\end{subfigure}
	\hspace{\hecmgap}
	\begin{subfigure}[b]{\mygapapp\textwidth}
		\centering
		\includegraphics[width=\mygapappin\textwidth]{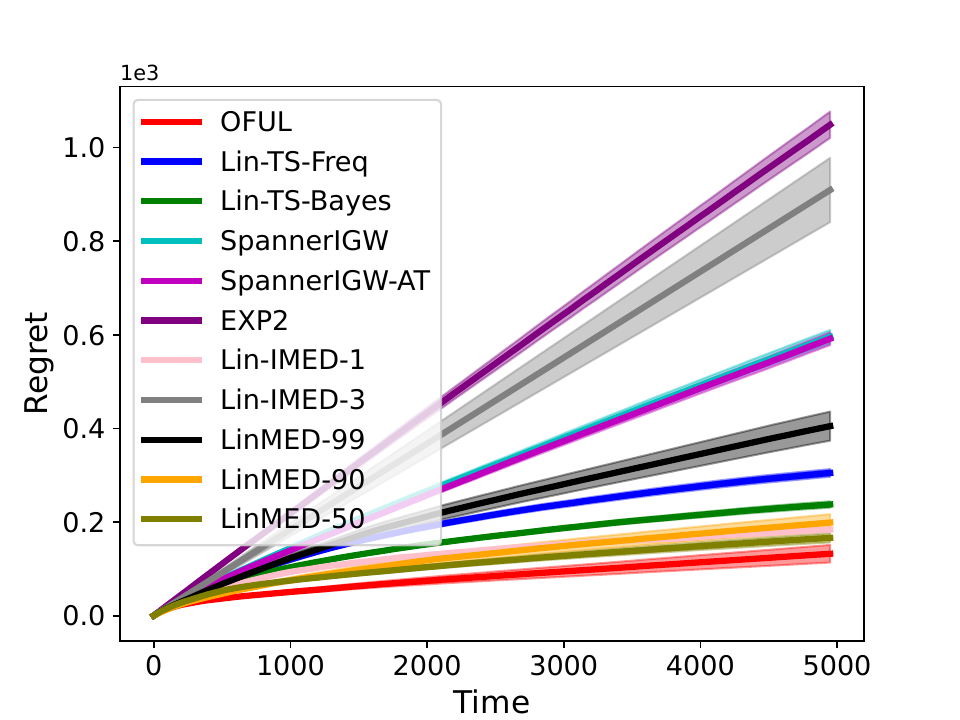}
		\caption{$ d = 50 $}
	\end{subfigure}\hspace{\hecmmar}
	\caption{$\sigma^2 = 0.1 \cdot \sigma^2_* $ }
	\label{app-figure:SUBE-figure2}
\end{figure*}
\begin{figure*}[h!]
	\centering
	\hspace{\hecmmar}\begin{subfigure}[b]{\mygapapp\textwidth}
		\centering
		\includegraphics[width=\mygapappin\textwidth]{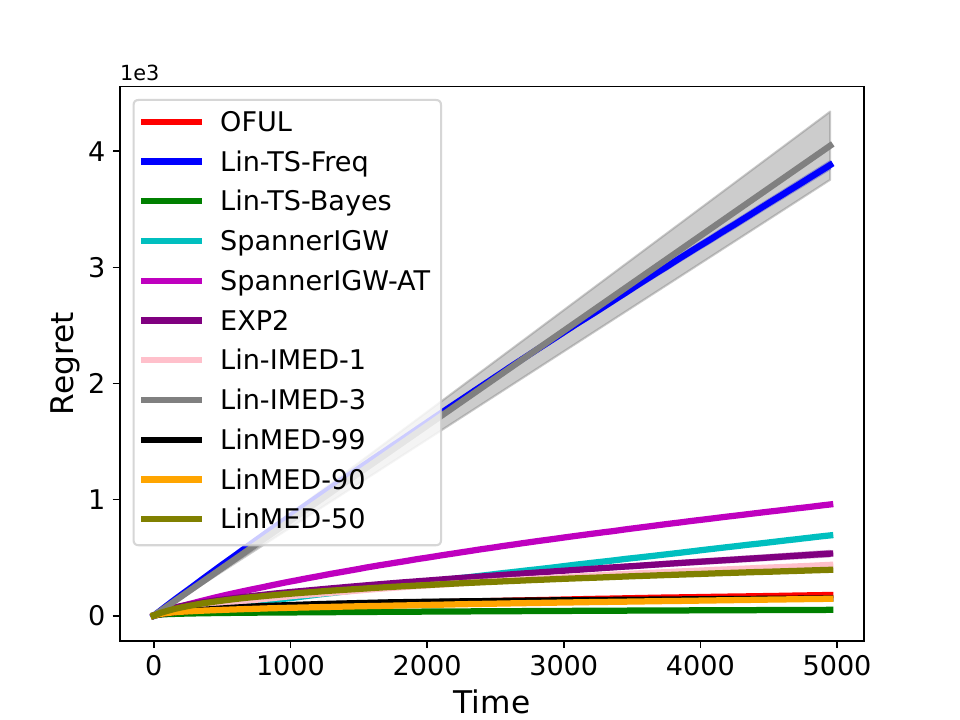}
		\caption{$ d = 2 $}
	\end{subfigure}
	\hspace{\hecmgap}
	\begin{subfigure}[b]{\mygapapp\textwidth}
		\centering
		\includegraphics[width=\mygapappin\textwidth]{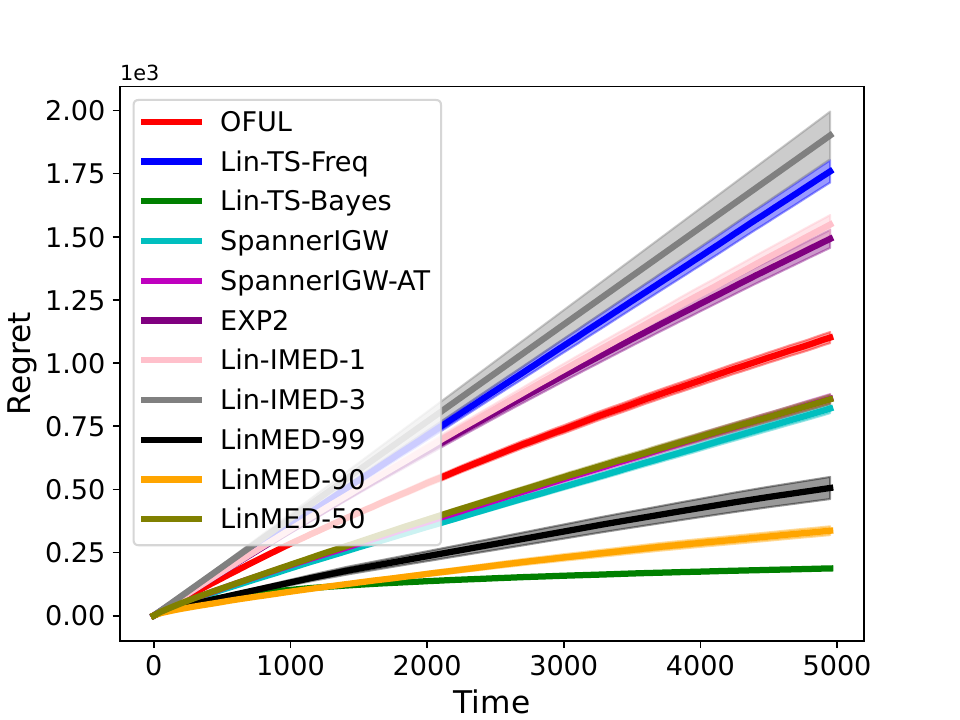}
		\caption{$ d = 20 $}
	\end{subfigure}
	\hspace{\hecmgap}
	\begin{subfigure}[b]{\mygapapp\textwidth}
		\centering
		\includegraphics[width=\mygapappin\textwidth]{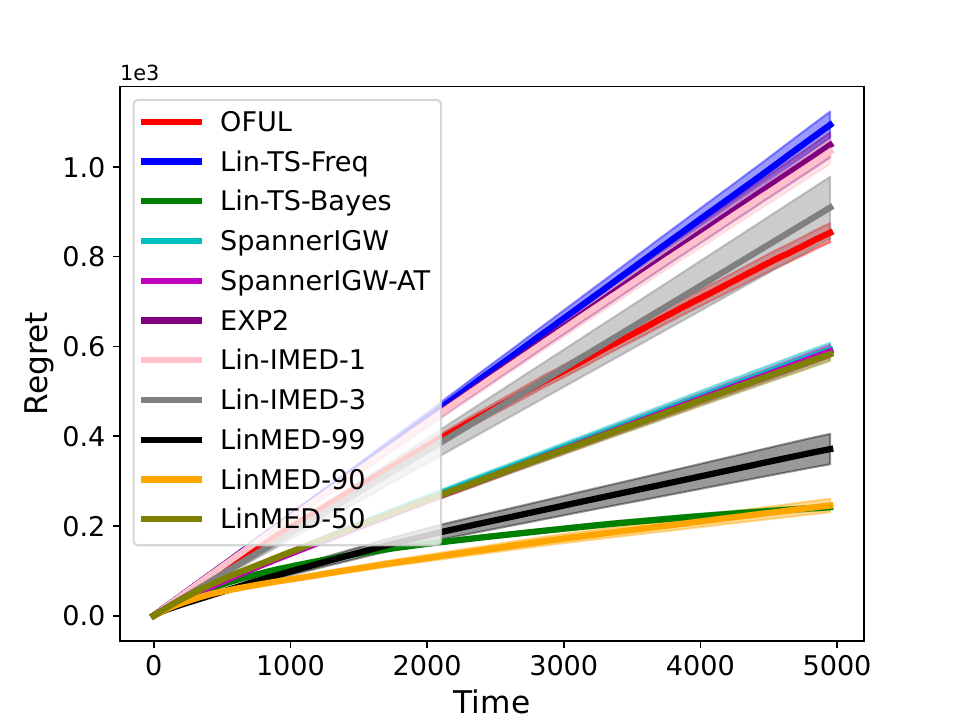}
		\caption{$ d = 50 $}
	\end{subfigure}\hspace{\hecmmar}
	\caption{$\sigma^2 = 2 \cdot \sigma^2_* $ }
	\label{app-figure:SUBE-figure3}
\end{figure*}

\textbf{Remarks: } Firstly, Lin-IMED-3, which demonstrated strong performance in most of the prior experiments, exhibits notably poor performance in this particular experiment. This decline could potentially be attributed to the choice of hyperparameter setting, specifically $C=30$. However, it is reasonable to retain this setting, as the same value was consistently applied in the previous experiments.

Secondly, although the performance of all algorithms deteriorates with increasing dimensionality, the decline in Lin-TS-Freq is particularly pronounced due to the $\sqrt{d}$ oversampling, a factor also reflected in its theoretical regret guarantee. This downward trend in Lin-TS-Freq becomes more severe when the sub-Gaussian parameter of the noise is over-specified. However, an improvement in performance is observed when the sub-Gaussian parameter of the noise is under-specified. The latter trend is expected, as the oversampling rate of Lin-TS-Freq  grows with $\sigma^2$.

OFUL performs well except when the sub-Gaussian parameter of the noise is over-specified.

All variants of LinMED perform competitively compared to other algorithms. Notably, the performance of LinMED is not significantly affected by noise misspecifications. Moreover, LinMED-90 outperforms LinMED-99 in high-dimensional contexts, suggesting that a higher degree of exploration is essential when dealing with such settings.

\subsubsection{Fixed dimension (d), different numbers of arms (K)}
\textbf{Experimental setup: } We fix the dimension $d=2$. For different $K \in \{10,100,500\}$, we randomly samples $K$ arms from $d$ dimensional unit ball $S^{d-1}$. The noise follows $\mathcal{N}(0, \sigma_*^2)$ with $\sigma_*^2 = 1$.  The time horizon for each trial is $n=5000$ and conduct $50$ such independent trials. Furthermore, we conduct experiments for the cases i) $\sigma^2 = \sigma_*^2$, ii) $\sigma^2 = 2\cdot \sigma_*^2$, and iii) $\sigma^2 = 0.1 \cdot \sigma_*^2$.

\textbf{Algorithms evaluated: }We evaluate the following algorithms: OFUL~\citep{ay11improved}, Lin-TS-Freq (Thompson sampling frequentest version)~\citep{agrawal14thompson}, Lin-TS-Bayes (Thompson sampling Bayesian version)~\citep{russo14learning}, Lin-IMED-1~\citep{bian24indexed}, Lin-IMED-3~\citep{bian24indexed}, LinMED-99 ($\alpha_{\mathrm{opt}} = 0.99$), LinMED-90 ($\alpha_{\mathrm{opt}} = 0.90$), LinMED-50 ($\alpha_{\mathrm{opt}} = 0.50$), SpannerIGW~\citep{zhu22contextual}, and SpannerIGW-AT~\citep{zhu22contextual}. Note that, for Lin-TS-Bayes, we are still evaluating the frequentest regret as we do for every other algorithms. Lin-IMED-3 have a hyper-parameter $C$, which we set to $C= 30$ following~\citet{bian24indexed}. Moreover SpannerIGW-AT is the anytime version of SpannerIGW.

\def\mygapapp{0.5}
\def \hecmmar{-2cm}
\def \hecmgap{-4cm}
\begin{figure*}[h]
	\centering
	\hspace{\hecmmar}\begin{subfigure}[b]{\mygapapp\textwidth}
		\centering
		\includegraphics[width=\mygapappin\textwidth]{SUBED2K10_CF_CCR.pdf}
		\caption{$K = 10$}
	\end{subfigure}
	\hspace{\hecmgap}
	\begin{subfigure}[b]{\mygapapp\textwidth}
		\centering
		\includegraphics[width=\mygapappin\textwidth]{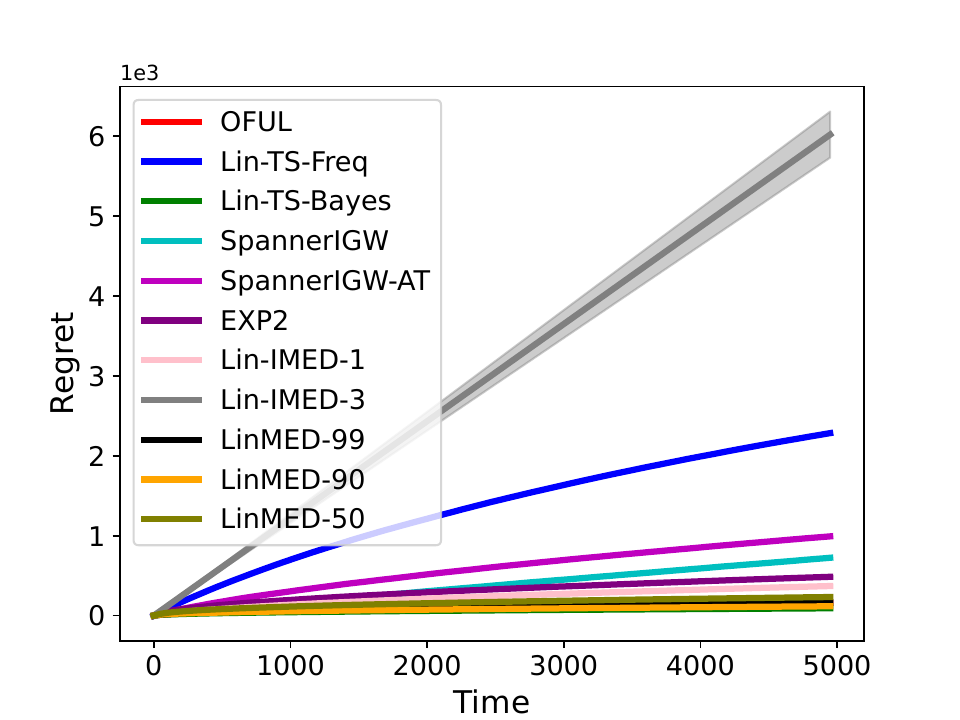}
		\caption{$ K = 100 $}
	\end{subfigure}
	\hspace{\hecmgap}
	\begin{subfigure}[b]{\mygapapp\textwidth}
		\centering
		\includegraphics[width=\mygapappin\textwidth]{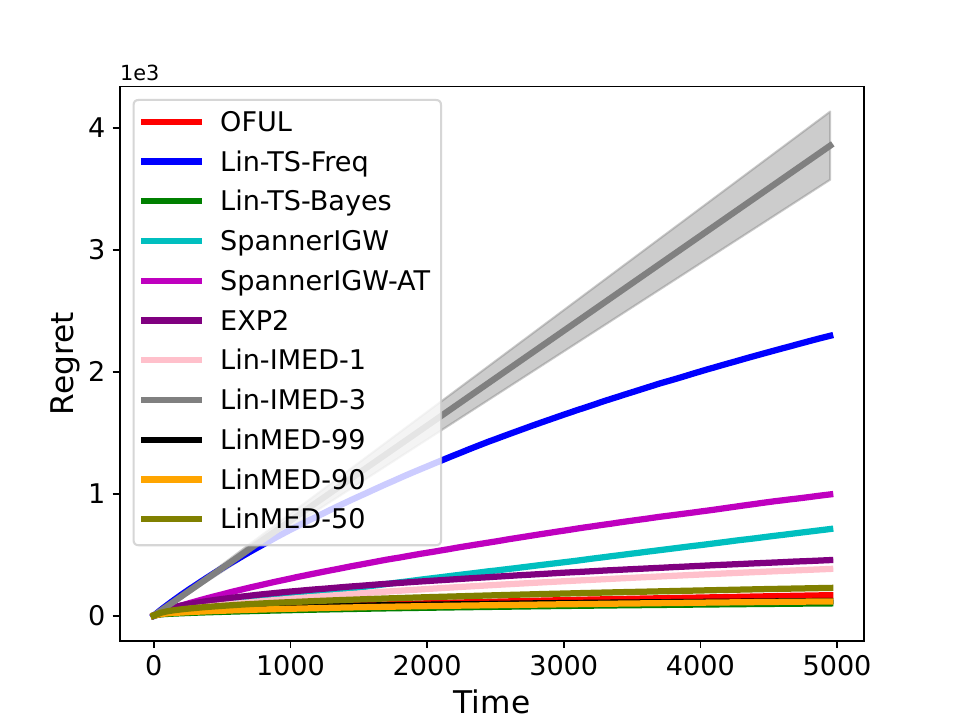}
		\caption{ $ K = 500 $}
	\end{subfigure}\hspace{\hecmmar}
	\caption{ $\sigma^2 = \sigma^2_* $ }
	\label{app-figure:SUBE-figure4}
\end{figure*}
\begin{figure*}[h]
	\centering
	\hspace{\hecmmar}\begin{subfigure}[b]{\mygapapp\textwidth}
		\centering
		\includegraphics[width=\mygapappin\textwidth]{SUBED2K10_US_CF_CCR.pdf}
		\caption{$K = 10$}
	\end{subfigure}
	\hspace{\hecmgap}
	\begin{subfigure}[b]{\mygapapp\textwidth}
		\centering
		\includegraphics[width=\mygapappin\textwidth]{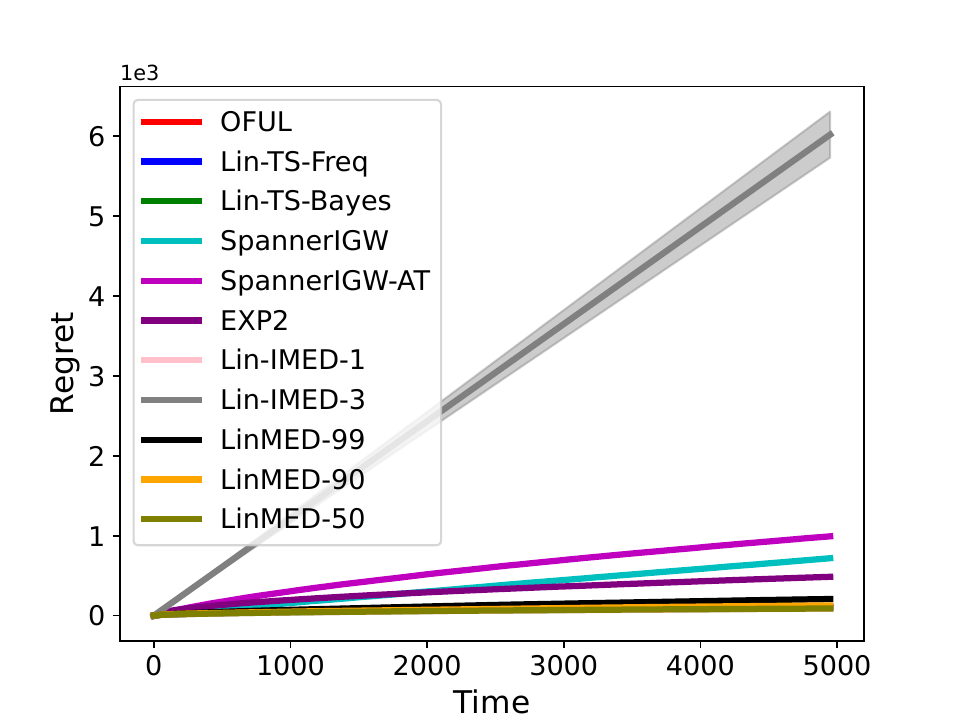}
		\caption{$ K = 100 $}
	\end{subfigure}
	\hspace{\hecmgap}
	\begin{subfigure}[b]{\mygapapp\textwidth}
		\centering
		\includegraphics[width=\mygapappin\textwidth]{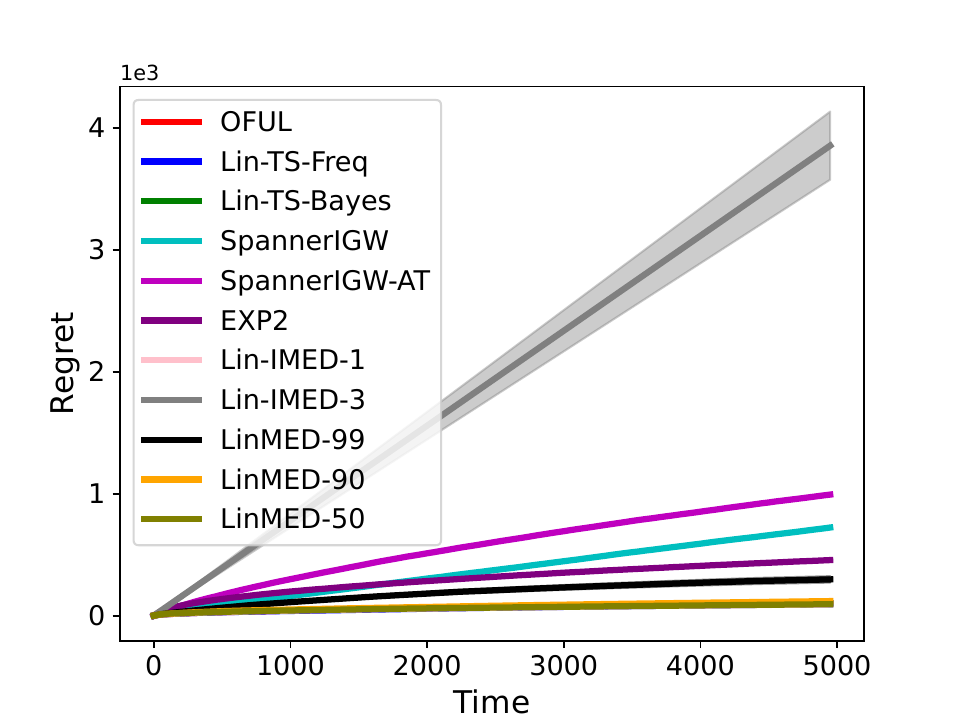}
		\caption{$ K = 500 $}
	\end{subfigure}\hspace{\hecmmar}
	\caption{$\sigma^2 = 0.1 \cdot \sigma^2_* $ }
	\label{app-figure:SUBE-figure5}
\end{figure*}
\begin{figure*}[h]
	\centering
	\hspace{\hecmmar}\begin{subfigure}[b]{\mygapapp\textwidth}
		\centering
		\includegraphics[width=\mygapappin\textwidth]{SUBED2K10_OS_CF_CCR.pdf}
		\caption{$K = 10$}
	\end{subfigure}
	\hspace{\hecmgap}
	\begin{subfigure}[b]{\mygapapp\textwidth}
		\centering
		\includegraphics[width=\mygapappin\textwidth]{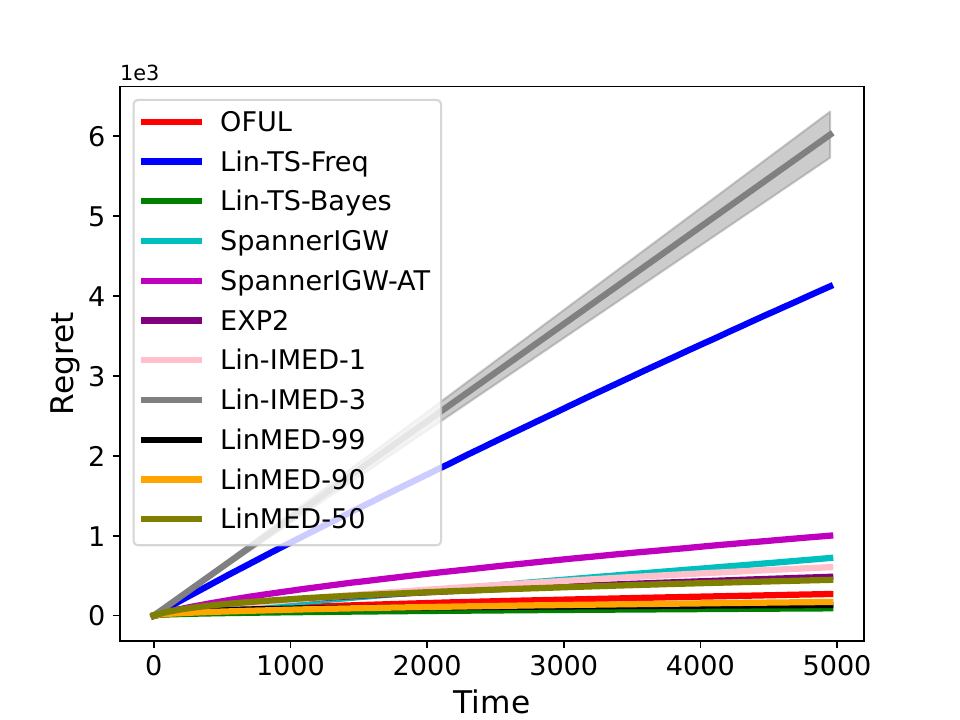}
		\caption{$ K = 100 $}
	\end{subfigure}
	\hspace{\hecmgap}
	\begin{subfigure}[b]{\mygapapp\textwidth}
		\centering
		\includegraphics[width=\mygapappin\textwidth]{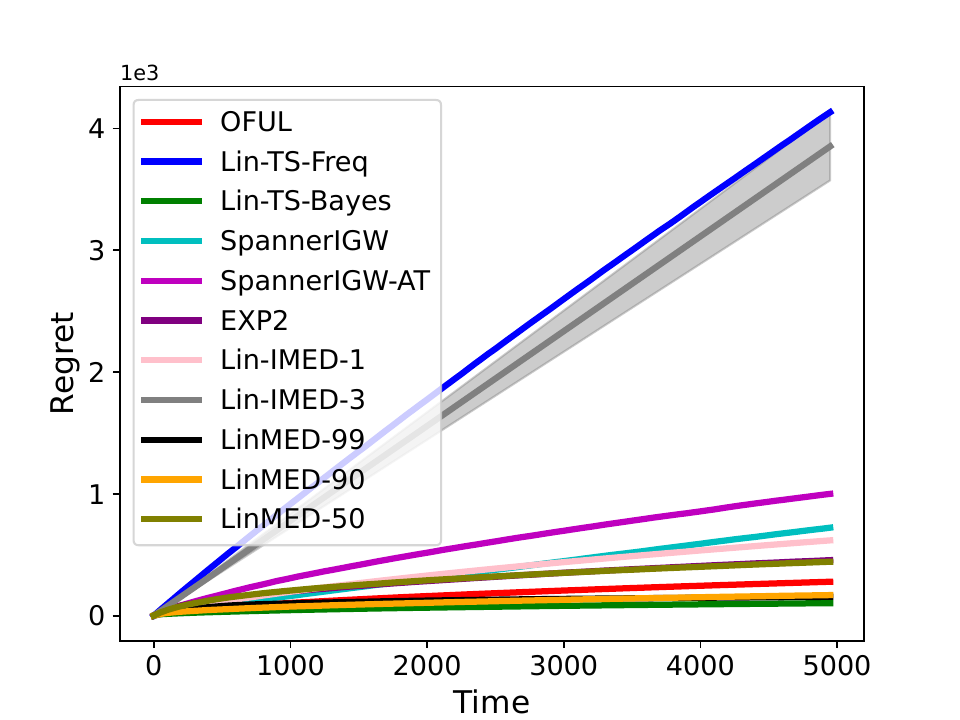}
		\caption{$ K = 500 $}
	\end{subfigure}\hspace{\hecmmar}
	\caption{$\sigma^2 = 2 \cdot \sigma^2_* $ }
	\label{app-figure:SUBE-figure6}
\end{figure*}

\textbf{Remarks: } Similar to the fixed $K$ setting we analyzed previously, Lin-IMED-3 demonstrates noticeably poor performance. Additionally, the performance of Lin-TS-Freq deteriorates when the sub-Gaussian parameter of the noise is over-specified. All variants of LinMED perform competitively compared to other algorithms. Moreover, across most algorithms, there are no substantial variations in performance with respect to $K$, suggesting that the regret does not exhibit dependency on $K$.

\end{document}